\newtheorem{assumption}{Assumption}
\newtheorem{defn}{Definition}
\newtheorem{prop}{Proposition}
\newcommand{\E}{\mathbb{E}}
\newcommand{\N}{\mathbb{N}}
\newcommand{\R}{\mathbb{R}}
\DeclareMathOperator*{\arginf}{arg\,inf}
\newcommand{\argmin}{\text{argmin}}
\newcommand{\one}{\mathbf{1}}
\newcommand{\classifier}{\phi}
\newcommand{\X}{\mathcal{X}}
\newcommand{\Y}{\mathcal{Y}}
\newcommand{\Z}{\mathcal{Z}}
\newcommand{\F}{\mathcal{F}}
\newcommand{\G}{\mathcal{G}}
\newcommand{\T}{\mathbb{T}}
\newcommand{\sample}{\mathcal{D}}
\newcommand{\Prob}{\mathbb{P}}
\newcommand{\supp}{\text{supp}}
\newcommand{\lossFunction}{\mathcal{L}}
\newcommand{\measurableMaps}{\mathcal{M}}
\newcommand{\rademacherComplexityEmpirical}{\hat{\mathfrak{R}}}
\newcommand{\kullbackLeiblerDivergence}{\mathrm{KL}}
\newcommand{\hypothesisClassBound}{\beta}
\newcommand{\probDistribution}{P}
\newcommand{\sampleXY}{\sample_{X,Y}}
\newcommand{\probDistributionZ}{\probDistribution}
\newcommand{\xSequenceSizeN}{{\bm{x}}_{1:n}}
\newcommand{\uSequenceSizeN}{{\bm{u}}_{1:n}}
\newcommand{\zSequenceSizeN}{{\bm{z}}_{1:n}}
\newcommand{\randomZSequenceSizeN}{\sample}
\newcommand{\sigmaSequenceSizeN}{{\bm{\sigma}}_{1:n}}
\newcommand{\lossFunctionBound}{B}
\newcommand{\riskArg}{\mathcal{R}}
\newcommand{\risk}{\riskArg_{\lossFunction,\probDistribution}}
\newcommand{\excessRiskArg}{\mathcal{E}}
\newcommand{\excessRisk}{\excessRiskArg_{\lossFunction,\probDistribution}}
\newcommand{\empiricalRiskArg}{\hat{\mathcal{R}}}
\newcommand{\empiricalRisk}{\empiricalRiskArg_{\lossFunction,\sample}}
\newcommand{\lipschitzConstantLoss}{\Lambda_{\mathrm{Lip}}}
\newcommand{\actionSpace}{\mathcal{V}}
\newcommand{\dist}{{d}}
\newcommand{\coveringNumber}{\mathcal{N}}
\renewcommand{\sampleXY}{\sample}
\renewcommand{\lossFunctionBound}{b}
\newcommand{\variance}{v}
\newcommand{\empiricalProbDistributionRandomZ}{\hat{\probDistribution}_{\randomZSequenceSizeN}}
\newcommand{\empiricalProbDistributionDeterministicZ}{\hat{\probDistribution}_{\zSequenceSizeN}}
\newcommand{\empiricalProbDistributionSample}{\hat{\probDistribution}_{\sample}}
\newcommand{\randomProjectionMeasure}{\nu_k}
\newcommand{\setOfRandomProjections}{\mathcal{A}_k}
\newcommand{\singleRandomProjection}{A}
\newcommand{\randomProjection}{A}
\newcommand{\bernsteinExponent}{\alpha}
\newcommand{\bernsteinConstant}{C_{\mathrm{B}}}
\newcommand{\lowDimensionalFunctionClass}{\mathcal{F}_k}
\newcommand{\Avg}{\textbf{Ens}}
\newcommand{\metricActionSpace}{\dist_{\actionSpace}}
\newcommand{\metricX}{\dist_{\X}}
\newcommand{\metricY}{\dist_{\Y}}
\newcommand{\sign}{\text{sgn}}
\newcommand{\quasiConvexityConstant}{\Lambda_{\mathrm{qc}}}
\newcommand{\coveringNumberConstant}{C_{\mathrm{cn}}}
\newcommand{\logBar}{{\log}_+}
\newcommand{\compressibilityFunction}{\psi}
\newcommand{\functionEstimate}{\hat{\phi}}
\newcommand{\functionGeneral}{\phi}
\newcommand{\oracleFunction}{\phi^*}
\newcommand{\geometricMarginConstant}{C_{\textrm{G}}}
\newcommand{\geometricMarginExponent}{\gamma}
\newcommand{\regressionFunction}{\eta}
\newcommand{\decisionBoundary}{\mathcal{S}_0}
\newcommand{\nearDecisionBoundary}{\mathcal{S}_{\xi}}
\newcommand{\momentConstant}{C_{\textrm{M}}}
\newcommand{\momentExponent}{\rho}
\newcommand{\johnsonLindenstraussConstant}{C_{\mathrm{JL}}}
\newcommand{\tjohnsonLindenstraussConstant}{{c}_{\mathrm{B}}}
\newcommand{\zeroOneLoss}{\lossFunction_{\text{0,1}}}
\newcommand{\sqrLoss}{\lossFunction_{\text{sqr}}}
\newcommand{\klLoss}{\lossFunction_{\text{kl}}}
\newcommand{\pseudoDimension}{\text{PDim}}
\newcommand{\nonDegeneracyConstant}{\varpi}
\newcommand{\nkDeltaComplexityTerm}{\mathfrak{C}_{n,k,\delta}}
\newcommand{\nDeltaDataComplexityTerm}{\hat{\mathfrak{C}}_{n,\delta}(\randomZSequenceSizeN)}
\newcommand{\approxError}{\theta}
\newcommand{\pbad}{{P_{-}}}
\newcommand{\pgood}{{P_{+}}}
\newtheorem{theorem}{Theorem}
\newtheorem{example}{Example}
\newtheorem{lemma}[theorem]{Lemma}
\newtheorem{corollary}[theorem]{Corollary}
\begin{document}

\title{Statistical optimality conditions for compressive ensembles}

\author{Henry W.J. Reeve and Ata Kab\'an}

\maketitle

\begin{abstract}

We present a framework for the theoretical analysis of ensembles of low-complexity empirical risk minimisers trained on independent random compressions of high-dimensional data. First we introduce a general distribution-dependent upper-bound on the excess risk, framed in terms of a natural notion of compressibility. This bound is independent of the dimension of the original data representation, and explains the in-built regularisation effect of the compressive approach. We then instantiate this general bound to classification and regression tasks, considering Johnson-Lindenstrauss mappings as the compression scheme. For each of these tasks, our strategy is to develop a tight upper bound on the compressibility function, and by doing so we discover distributional conditions of geometric nature under which the compressive algorithm attains minimax-optimal rates up to at most poly-logarithmic factors. In the case of compressive classification, this is achieved with a mild geometric margin condition along with a flexible moment condition that is significantly more general than the assumption of bounded domain. In the case of regression with strongly convex smooth loss functions we find that compressive regression is capable of exploiting spectral decay with near-optimal guarantees. In addition, a key ingredient for our central upper bound is a high probability uniform upper bound on the integrated deviation of dependent empirical processes, which may be of independent interest.

\end{abstract}

\section{Introduction}
Compressive learning aims to make use of inexpensive dimensionality reduction or sketching methods to overcome the curse of dimensionality in statistical learning. 
The term was coined by \cite{Calderbank09} in analogy with compressive sensing (CS) \citep{Donoho,Candes}, which established sparsity conditions under which high dimensional signals are recoverable from their low dimensional linear random projection.  
The spectacular advances of CS provide data acquisition devices 
that directly collect random projections of the data without storing the original \citep{Durate}, and most recently, dedicated photonic computing hardware 
became available that can perform random projection in a massively parallel fashion \citep{OPU,GribonvalPhaseRetrieval}. 
Such technologies open new doors for dealing with massive high dimensional data sets, and inspire new research in areas as diverse as numerical analysis \citep{Halko}, statistical methodology \citep{Heinze,cannings2018,Tian}, pattern recognition \citep{Calderbank16}, clustering \citep{Boutsidis,Biau,nips2}, optimisation \citep{PilanciIT,PilanciJMLR,PilanciSIAM,nips20}, search based software engineering
\citep{evoInSW}, imaging \citep{Lusting,YeReview,Styles,Bentley}, medical research \citep{medical}, neuroscience \citep{Arriaga15}, and computer vision \citep{Jiao}. The interested reader may also refer to recent surveys 
\citep{GibsonSurvey}, \citep{CanningsSurvey}, and references therein.

Whilst the theory of compressive sensing and random projection based dimensionality reduction is well understood, the use of these methods in machine learning raises important questions about a theoretical understanding of the  risk of compressive learning. 

Firstly, the goal in statistical learning is very different from both compressive sensing and dimensionality reduction, as we do not aim to recover, or even to approximate the seen data, instead we aim to produce accurate predictions on unseen data. This motivates the search for sufficient conditions for controlling the excess risk of compressive ensembles as a function of natural geometric characteristics of the problem.

The setting we consider is analogous to that of compressive sensing in that the data features are only available in compressive form -- that is, the data features undergo compression before being fed to a learning algorithm. We do not impose any other regularisation to make the high dimensional problem learnable from a finite sample, and are interested in conditions on the unknown data distribution under which this compression alone makes the resulting ensemble of empirical risk minimisers nearly minimax optimal.

Previous work in this setting provided upper bounds on the error for several compressive learning machines -- most often under (a combination of) existing assumptions from statistical learning theory (e.g. large margin, norm constraints, generative model assumptions) or from compressed sensing (e.g. sparse representation, low complexity feature space), or seek to interpret bounds in these terms -- for compressive classification \citep{Arriaga,Balcan,Calderbank09,DurrantKabanKDD10,Calderbank2,
Renna16
}, compressive regression \citep{Maillard,Fard,kaban14,
Thanei,Slawski}, and other learning tasks. 
The work of \cite{Chen} explicitly studied the rate of convergence of the excess risk for compressive regularised kernel-based learning in a reproducing Hilbert space with the least square and hinge losses, obtaining upper bounds with a rate of order $n^{-1/4}$. However, the statistical optimality of these results, e.g. in the minimax sense, has not been established. 
\cite{Reeve} obtained the minimax-optimal rates for the compressive k-nearest neighbour algorithm in cost-sensitive classification where the data support was assumed to have a manifold structure. The assumptions and analysis are very different from those of the present work. Optimality results have also been obtained in other problem settings, where the compression is used purely to speed up a statistically optimal predictor in a way that preserves its optimality -- for instance \cite{yang2017} give matching upper and lower bounds for compressive kernel ridge regression with compression applied to the kernel matrix.

Secondly, ensembles of compressive learners trained on independent random compressions of the high dimensional data have been found to increase performance in practice, as computations can be run in parallel, and random variations are reduced. However, a bottleneck for the theoretical understanding of such ensembles is that, unlike traditional ensembles, which combine predictors from the same function class, here each predictor added to the ensemble corresponds to a new random projection of the data. 
Previous work in machine learning \citep{DurrantMLJ} analysed in detail a special case of compressive Fisher Linear Discriminant ensemble, and found a desirable implicit regularisation effect, which prevents overfitting. However, the analytic methods used there are specific to a particular generative model and it is not clear whether a similar effect occurs more generally. Subsequent work in statistics considered a more general approach in terms of the base learners employed \citep{cannings2018,Slawski}, and  provided upper bounds on the excess risk of random projection ensembles in expectation w.r.t. the training set. Bounds in expectation have also been given very recently for axis-aligned random subspace ensembles \citep{Tian}. However, we are interested in high probability bounds to reveal more information about the worst case behaviour for the excess risk, subject to a failure probability. In another line of research, recent work by \cite{Lopes} determined the asymptotic speed of convergence as the number of predictors in the ensemble grows. While this is informative for very large ensembles, we are interested in non-asymptotic guarantees for ensembles of any given finite size.

\subsection{Overview of contributions}
In this paper we introduce a general framework for the theoretical study of compressive learning. This framework facilitates the discovery of new conditions, specific to the learning task, which allow favourable convergence rates for compressive learning. 
The algorithmic approach we consider throughout this study consists of ensembles of any number of empirical predictors made of compressive empirical risk minimisers (ERM) that are trained in parallel on independent randomised compressions of data sets of arbitrarily many dimensions, with their predictions combined by an averaging-type operation. This simple procedure is presented in Algorithm \ref{RPEERMAlgo}.

Below we summarise our main results.

\begin{itemize}
    \item We introduce the concept of a compressibility function, which quantifies the average excess loss incurred by working with a low-complexity function of compressed features and plays a key role in guiding the analysis of compressive learning. 
    \item We give a general distribution-dependent high probability upper-bound on the excess risk of ERM ensembles of arbitrary size, composed of low-complexity predictors 
(Theorem \ref{ThmMainResult}). Our bound contains three terms: (1) A statistical error that decays with the sample size at a rate depending upon the Bernstein-Tsybakov noise exponent, (2) A term which converges to zero as the size of our ensemble grows and (3) The compressibility function. This reveals an implicit regularisation effect that is exhibited by a wide variety of randomised heterogeneous ensembles.
\end{itemize}
The general form of our upper bound and compressibility function allow us 
to study specific learning problems in a unified framework. Specifically, we are interested in the following question: Under what natural conditions can the compressive ERM ensemble attain minimax-optimal rates of convergence with respect to the sample size? 
To approach this question we restrict attention to Johnson-Lindenstrauss mappings as the compression scheme, which makes it feasible to control the compressibility function, and we demonstrate our approach by instantiating our general upper bound in two fundamental classic learning tasks. 
\begin{itemize}
\item In the case of compressive classification we find that very mild conditions of geometric nature suffice for a high probability upper bound on the excess error (Theorem \ref{thm:geometricMarginThm}). These conditions include a flexible geometric margin condition that differs significantly from previously considered margin assumptions, along with a flexible moment condition that allows for distributions supported on an unbounded domain. For a wide range of parameters we find better rates than the previous upper bounds implied by \citep{Chen} (albeit in a different setting) even in the absence of favourable Tsybakov-margin. In fact, in Theorem \ref{clRLBtext} we show that the upper bound for compressive ensembles given by Theorem \ref{thm:geometricMarginThm} is minimax optimal up to logarithmic factors.
\item In the case of regression with strongly convex loss functions we find that Johnson-Lindenstrauss compressors are capable of exploiting spectral decay with near-minimax optimal guarantees (Theorems \ref{thm:OLSThm}-\ref{regRLBtext}). Our high probability guarantee highlights the role of spectral decay in attaining near-optimality. These results complement recent findings by \cite{Slawski} which give an expectation bound for compressive OLS with fixed design.
\item
Our general upper bound builds on a high probability uniform bound on the integrated deviation for dependent empirical processes that allows to exploit local Rademacher complexities (Theorem \ref{ThmMainUniform}), which may be of independent interest.
\end{itemize}

Our approach provides a framework that places a computationally attractive and empirically successful algorithmic scheme on solid theoretical foundations. Our framework can be extended and used to unearth novel conditions which help gaining more understanding in specific compressive learning problems. 

\section{Problem setting}
\label{statisticalSettingSec}
We shall consider supervised learning. Suppose we have complete separable metric spaces $(\X,\metricX)$, $(\Y,\metricY)$, $(\actionSpace,\metricActionSpace)$, where $\X$ is a feature space, $\Y$ is a target space, and $\actionSpace$ is a prediction space, which may or may not equal $\Y$. In typical applications $(\X,\metricX)$, $(\Y,\metricY)$, $(\actionSpace,\metricActionSpace)$ will be subsets of Euclidean space with their respective Euclidean norms. However, the additional level of generality in this section comes at no expense. We shall assume that there is an unknown Borel probability distribution $\probDistribution$ over random variables $(X,Y)$, where $X$ takes values in $\X$, and $Y$ takes values in $\Y$.  The quality of a prediction for a given target is quantified through a loss function $\lossFunction: \actionSpace \times \Y \rightarrow [0,\infty)$.

Given a pair of measurable spaces $\mathcal{Z}$ and $\mathcal{W}$, we let $\measurableMaps(\Z,\mathcal{W})$ denote the set of measurable functions $g:\mathcal{Z} \rightarrow \mathcal{W}$. For brevity we let $\measurableMaps(\Z)$ denote $\measurableMaps(\Z,\R)$ and for each $\lossFunctionBound>0$ let $\measurableMaps_{\lossFunctionBound}(\Z) := \measurableMaps(\Z,[-\lossFunctionBound,\lossFunctionBound])$, where both $\R$ and $[-\lossFunctionBound,\lossFunctionBound]$ are endowed with the Borel sigma algebra. The goal of the learner is to obtain ${\functionGeneral} \in \measurableMaps(\X,\actionSpace)$ such that the corresponding risk 
\begin{align*}
\risk({\functionGeneral}):= \E_{(X,Y) \sim \probDistribution}[  \lossFunction({\functionGeneral}(X),Y)] = \int \lossFunction(\functionGeneral(x),y)d\probDistribution(x,y)
\end{align*}
is as low as possible. Given a Borel probability distribution $\probDistribution$ we let $\oracleFunction \equiv \oracleFunction_{\probDistribution}\in \measurableMaps(\X,\actionSpace)$ denote the Bayes optimal predictor, satisfying 
\begin{align}\label{defnBayesOptimalPredictor}
 \oracleFunction \in \argmin_{{\functionGeneral} \in \measurableMaps(\X,\actionSpace)}\left\lbrace \risk({\functionGeneral})\right\rbrace.  
\end{align}
For simplicity, we shall assume throughout that $\actionSpace$ is a compact metric space and $\lossFunction$ is continuous in its first argument, which ensures that a Bayes optimal predictor $\oracleFunction$ exists ({Proposition \ref{bayesOptimalExistsWhenLossContinuousActionSpaceCompact}}), 
although it need not be unique. We view the Bayes optimal predictor $\oracleFunction \in \measurableMaps(\X,\actionSpace)$ as the mapping we would select if we knew the distribution $\probDistribution$. 

Of course, in practice the learner does not have direct access to the distribution $\probDistribution$. Instead, the learner selects ${\functionGeneral} \in \measurableMaps(\X,\actionSpace)$ based upon a sample $\sampleXY:=\{(X_j,Y_j)\}_{j \in [n]}$, where $(X_j,Y_j)\sim \probDistribution$ are independent copies of $(X,Y)$. Whilst the true risk $\risk({\functionGeneral})$ cannot be directly observed, the learner does have access to the \emph{empirical risk}  $\empiricalRisk({\functionGeneral},\sample):=n^{-1}\cdot \sum_{j \in [n]}\lossFunction({\functionGeneral}(X_j),Y_j)$. Given ${\functionGeneral} \in \measurableMaps(\X,\actionSpace)$,
we write 
\begin{align}
\excessRisk(\functionGeneral):=\risk({\functionGeneral})-\risk(\oracleFunction) = \risk({\functionGeneral})-\inf_{{\functionGeneral} \in \measurableMaps(\X,\actionSpace)}\left\lbrace \risk({\functionGeneral})\right\rbrace
\end{align}
for the excess risk. For a positive integer $m \in \N$ we shall use the notation $[m]:= \{1,\cdots,m\}$. 
We also define the notation $\logBar(x):=\max\{\log(x),1\}$, where $\log$ is the natural logarithm.

\subsection{Learning from compressive data sketches}

\newcommand{\Xk}{\X_k}
In this work we consider a \emph{high-dimensional} setting where the dimensionality of the feature space $\X$ is arbitrarily large, and working directly with the features themselves becomes computationally and statistically prohibitive. Instead, we work with randomised compressions of the feature representation of the data, for instance via \emph{random projections}. 

Given $k \in \N$ we let $\setOfRandomProjections \subseteq \measurableMaps(\X,\R^k)$ be a set of random feature mappings. Given a data sample $\sampleXY:=\{(X_j,Y_j)\}_{j \in [n]}$ and a mapping $\singleRandomProjection \in \setOfRandomProjections$, we define the corresponding compressed sample ${\singleRandomProjection}(\sampleXY):= \{(\singleRandomProjection(X_j),Y_j)\}_{j \in [n]}$. Let $\randomProjectionMeasure$ be a probability distribution on the set of random feature mappings $\setOfRandomProjections$. Given $m \in \N$, we take $m$ random projections $\randomProjection_1, \cdots, \randomProjection_m$ that are independent and identically distributed with each $\randomProjection_i \sim \randomProjectionMeasure$, and consider $\randomProjection_1(\sampleXY),\cdots,\randomProjection_m(\sampleXY)$, that is $m$ random projections of the data. We let $\lowDimensionalFunctionClass \subseteq \measurableMaps(\R^k,\actionSpace)$ be a set of functions on the transformed feature space $\R^k$. We shall view $\lowDimensionalFunctionClass$ as being of relatively small capacity in a sense that will be made precise in Section \ref{assumptions} (Assumption \ref{logarithmicCoveringNumbersAssumption}). Examples will include sets of linear classifiers on $\R^k$. For each $i \in [m]$, we shall choose $\hat{f}_{i} $ in $\lowDimensionalFunctionClass$ based on the compressed sample $\randomProjection_i(\sampleXY)$ by minimising $\empiricalRisk\left(f_i,\randomProjection_i(\sampleXY)\right)$ over $f \in \lowDimensionalFunctionClass$. At test time, the set of predictions $\{ \hat{f}_{i}(\randomProjection_i(x))\}_{i \in [m]}$ is combined into a voting ensemble through a function $\Avg$:
\begin{align*}
{\functionEstimate}(x)\equiv \Avg\left(\left\lbrace \hat{f}_{i}(\randomProjection_i(x))\right\rbrace_{i \in [m]}\right).
\end{align*}
The appropriate combination rule $\Avg$ depends on the learning task. For example, in the case of classification with the zero-one loss we advocate taking the modal average, and in the case of regression with a squared loss we advocate taking the mean average. The reasons for this will become clear shortly in Section \ref{assumptions}.
The pseudo-code for the procedure that we study in the remainder of this paper is described in Algorithm \ref{RPEERMAlgo}.

\begin{algorithm}[H]
\SetAlgoLined
\SetKwInOut{Input}{Input}
\SetKwInOut{Output}{Output}
\Input{A data sample $\sampleXY$, a number of projections $m$, a distribution over random compressors $\randomProjectionMeasure$, a loss function $\lossFunction$ and a low-dimensional function class $\lowDimensionalFunctionClass$.}
\vspace{1mm}
\For{$i \in [m]$}{
\vspace{1mm}
Sample $\randomProjection_i \sim \randomProjectionMeasure$\;\vspace{1mm}
Compute ${\randomProjection}_i(\sampleXY):= \{(\randomProjection_i(X_j),Y_j)\}_{j \in [n]}$\;
Choose $\hat{f}_{i} \in  \lowDimensionalFunctionClass$ to minimise $\empiricalRisk\left(f,\randomProjection_i(\sampleXY)\right)$\;
}
Combine ${\functionEstimate}(x):= \Avg\left(\left\lbrace \hat{f}_{i}(\randomProjection_i(x))\right\rbrace_{i \in [m]}\right)$\;
\Output{Compressive ensemble predictor ${\functionEstimate}$.}
\caption{Compressive ensemble empirical risk minimisers \label{RPEERMAlgo}}
\end{algorithm}
\newcommand{\optimisationError}{\mathbb{O}}
\newcommand{\averageOptimisationError}{\overline{\mathbb{O}}}

Minimising the empirical risk can often be a challenging optimisation problem \citep{feldman2012agnostic}, and, in general, an exact minimiser need not even exist.  In this work, we focus on the statistical challenge of learning with compressive ensembles and assume that the optimisation error is dominated by the statistical error (see Section \ref{sec:introCompressibility} for details). 

We also remark that strictly speaking $\hat{\phi} \in \measurableMaps((\X\times\Y)^n \times \setOfRandomProjections^m \times \X, \actionSpace)$, since it implicitly depends upon both the random sample $\sample$, which takes values in $(\X\times\Y)^n$, and the sequence of random projections $(A_i)_{i \in [m]}$, which takes values in $\setOfRandomProjections^m$. However, we typically view $\hat{\phi}$ as a random element of $\measurableMaps(\X,\actionSpace)$, suppressing the dependence upon $\sample$ and $(A_i)_{i \in [m]}$ for notational convenience.

\section{General framework and main upper bound}\label{sec:general}
This section presents the main assumptions we employ throughout of this work, along with some illustrative examples. We also introduce a notion of compressibility, which will allow us to state our main result in general terms in the next section, before studying specific instances.

\subsection{Initial assumptions}\label{assumptions}
We begin with two standard assumptions on the loss function.
\begin{assumption}[Bounded loss function]\label{boundedLossFunctionAssumption} We shall assume that the loss function $\lossFunction:\actionSpace\times \Y \rightarrow [0,\infty)$ is bounded by some constant $\lossFunctionBound\geq 1$, so for all $v \in \actionSpace$ and $y \in \Y$ we have $\lossFunction(v,y) \in [0,\lossFunctionBound]$.
\end{assumption}

\begin{assumption}[Lipschitz loss function]\label{lipschitzLossFunctionAssumption} We shall assume that $\actionSpace$ is a compact metric space with metric $\metricActionSpace$ and there exists $\lipschitzConstantLoss \geq 1$ such that $\left|\lossFunction(v_0,y)-\lossFunction(v_1,y)\right| \leq \lipschitzConstantLoss \cdot \metricActionSpace(v_0,v_1)$  for all $v_0,v_1 \in \actionSpace$ and $y \in \Y$.
\end{assumption}
Typically we have $\actionSpace\subseteq \R$, in which case we can take $\metricActionSpace$ to be the standard metric defined by $\metricActionSpace(v_0,v_1)= |v_0-v_1|$. 

The following assumption will connect the excess risk of the ensemble with the average excess risk of its members.
The function $\Avg$ is a measurable map of the form $\Avg:\actionSpace^m \rightarrow \actionSpace$, so $\Avg \in \measurableMaps\left(\actionSpace^m,\actionSpace\right)$. We can extend $\Avg \in \measurableMaps\left(\actionSpace^m,\actionSpace\right)$ to a map $\Avg: \measurableMaps(\X,\actionSpace)^m \rightarrow \measurableMaps(\X,\actionSpace)$ in a point-wise fashion by defining
$\Avg\left( \left\lbrace \phi_i \right\rbrace_{i \in [m]}\right) (x) =  \Avg\left( \left\lbrace \phi_i(x) \right\rbrace_{i \in [m]}\right)$,
for any $\left\lbrace \phi_i \right\rbrace_{i \in [m]} \in \measurableMaps(\X,\actionSpace)^m$ and $x \in \X$.

\begin{assumption}[Quasi-convexity]\label{quasiConvexityAssumption} We shall say that $\lossFunction$ satisfies the \emph{quasi-convexity} assumption with constant $\quasiConvexityConstant\geq 1$ and averaging function $\Avg$ if
\begin{align*}
\excessRisk\left\lbrace \Avg\left( \left\lbrace \phi_i \right\rbrace_{i \in [m]} \right)\right\rbrace\leq  \frac{\quasiConvexityConstant}{m}\sum_{i \in [m]} \excessRisk\left( \phi_i ,\probDistribution\right)
\end{align*}
 for all Borel probability distributions $\probDistribution$ on $\X \times \Y$  and all $\left\lbrace \phi_i \right\rbrace_{i \in [m]} \in \measurableMaps(\X,\actionSpace)^m$.
\end{assumption}

The nomenclature comes from the following consequence of Jensen's inequality (Lemma \ref{convexImpliesQuasiConvex}). However the benefit of the quasi-convexity assumption is that it applies also to the 0-1 loss (Lemma \ref{verifyingQuasiConvexityAssumptionsLemma}).

\begin{lemma}\label{convexImpliesQuasiConvex} Suppose that $\actionSpace$ is a vector space and $\lossFunction$ is convex in its first argument. Then Assumption \ref{quasiConvexityAssumption} holds with $\quasiConvexityConstant=1$ and $\Avg((v_i)_{i \in [m]})=\frac{1}{m}\sum_{i \in [m]}v_i$ for $(v_i)_{i \in [m]} \in \actionSpace^m$.
\end{lemma}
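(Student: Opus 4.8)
The plan is to reduce the claim to a pointwise application of Jensen's inequality and then integrate. Fix an arbitrary Borel probability distribution $\probDistribution$ on $\X \times \Y$ and an arbitrary tuple $\left\lbrace \phi_i \right\rbrace_{i \in [m]} \in \measurableMaps(\X,\actionSpace)^m$, and write $\bar{\phi} := \Avg\left( \left\lbrace \phi_i \right\rbrace_{i \in [m]} \right)$, so that $\bar{\phi}(x) = \frac{1}{m}\sum_{i \in [m]} \phi_i(x)$ for every $x \in \X$. First I would check that $\bar{\phi}$ is a legitimate competitor, i.e. $\bar{\phi} \in \measurableMaps(\X,\actionSpace)$: since $\actionSpace$ is a vector space the convex combination $\frac{1}{m}\sum_i \phi_i(x)$ lies in $\actionSpace$, and measurability of $\bar{\phi}$ follows from measurability of each $\phi_i$ together with the (Borel-)measurability of addition and scalar multiplication on $\actionSpace$.

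Next, the core step: for each fixed $y \in \Y$ the map $v \mapsto \lossFunction(v,y)$ is convex on $\actionSpace$ by hypothesis, so the finite form of Jensen's inequality gives, for every $x \in \X$ and $y \in \Y$,
\begin{align*}
\lossFunction\left(\bar{\phi}(x),y\right) = \lossFunction\!\left(\frac{1}{m}\sum_{i \in [m]} \phi_i(x),\, y\right) \leq \frac{1}{m}\sum_{i \in [m]} \lossFunction\!\left(\phi_i(x),y\right).
\end{align*}
Integrating both sides against $\probDistribution$ over $(x,y)$ — the integrals are well defined in $[0,\infty]$ since $\lossFunction \geq 0$, and finite under Assumption \ref{boundedLossFunctionAssumption} — and using linearity of the integral on the right-hand side yields $\risk(\bar{\phi}) \leq \frac{1}{m}\sum_{i \in [m]} \risk(\phi_i)$.

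Finally I would subtract the common Bayes term. Since $\risk(\oracleFunction)$ is a finite constant (it is the value in \eqref{defnBayesOptimalPredictor}), we may write $\risk(\oracleFunction) = \frac{1}{m}\sum_{i \in [m]} \risk(\oracleFunction)$ and subtract it from both sides of the previous display to obtain
\begin{align*}
\excessRisk\left(\bar{\phi}\right) = \risk(\bar{\phi}) - \risk(\oracleFunction) \leq \frac{1}{m}\sum_{i \in [m]} \bigl( \risk(\phi_i) - \risk(\oracleFunction) \bigr) = \frac{1}{m}\sum_{i \in [m]} \excessRisk(\phi_i,\probDistribution).
\end{align*}
As $\probDistribution$ and $\left\lbrace \phi_i \right\rbrace_{i \in [m]}$ were arbitrary, this is exactly Assumption \ref{quasiConvexityAssumption} with $\quasiConvexityConstant = 1$ and $\Avg$ the arithmetic mean. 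There is no real obstacle here; the only points requiring a line of care are the measurability/membership of $\bar\phi$ in $\measurableMaps(\X,\actionSpace)$ and the bookkeeping that the Bayes term cancels exactly because it appears once on the left and with total weight one on the right.
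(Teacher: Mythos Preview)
Your proof is correct and follows essentially the same route as the paper's own (commented-out) argument: apply Jensen's inequality pointwise in $(x,y)$ using convexity of $v \mapsto \lossFunction(v,y)$, integrate against $\probDistribution$, and subtract the common Bayes risk $\risk(\oracleFunction)$ from both sides. Your version simply adds a line of care about measurability of $\bar\phi$ and finiteness of the integrals, which the paper leaves implicit.
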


Next, we make precise the idea that the set of functions on the low dimensional space $\lowDimensionalFunctionClass \subseteq \measurableMaps\left(\R^k,\actionSpace\right)$ is of low capacity, through the following assumption. First, recall the notion of covering numbers. Given a set $\T$ with metric $\dist_{\T}$ and $\epsilon>0$, the $\epsilon$-covering number $\coveringNumber(\T,\dist_{\T},\epsilon)$ is the cardinality of the smallest subset $\tilde{\T}\subseteq \T$ such that for every $t \in \T$ there exists some $\tilde{t} \in \tilde{\T}$ with $\dist_{\T}(\tilde{t},t)\leq \epsilon$. Given any $n\in \N$ and any $\uSequenceSizeN = \{u_j\}_{j \in [n]} \in (\Xk)^n$ the empirical $\ell_2$ metric ${\dist_{\uSequenceSizeN}^{\actionSpace}}$ on $\lowDimensionalFunctionClass$ is defined for $f_0$, $f_1 \in \lowDimensionalFunctionClass$ by
\begin{align*}
{\dist_{\uSequenceSizeN}^{\actionSpace}}(f_0,f_1):=\sqrt{\frac{1}{n}\sum_{j \in [n]}\metricActionSpace\left(f_0(u_j),f_1(u_j)\right)^2}.
\end{align*}

\begin{assumption}[Covering number condition]\label{logarithmicCoveringNumbersAssumption} We shall say that $\lowDimensionalFunctionClass \subseteq \measurableMaps\left(\R^k,\actionSpace\right)$ satisfies the  \emph{logarithmic covering number} assumption with constant $\coveringNumberConstant \geq 1$ and bound $\hypothesisClassBound\geq 1$ if for every $n, k \in \N$ with $n > k$, $\uSequenceSizeN \in \left(\R^k\right)^n$, $\epsilon>0$,
\begin{align*}
\coveringNumber\left(\lowDimensionalFunctionClass,{\dist_{\uSequenceSizeN}^{\actionSpace}},\epsilon\right) \leq  \left(\frac{\hypothesisClassBound n}{\epsilon }\right)^{\coveringNumberConstant  k}.
\end{align*}
\end{assumption}

Finally, we shall make use of 
the Bernstein-Tsybakov condition, which has been key to obtaining fast rates in the statistical learning literature \cite{tsybakov2004optimal}. 

\begin{assumption}[Bernstein-Tsybakov condition]\label{bernsteinTysbakovMarginTypeAssumption} We shall say that $\lossFunction$ and $\probDistribution$ satisfy the Bernstein-Tsybakov condition with exponent $\bernsteinExponent \in [0,1]$ and constant $\bernsteinConstant \geq 1$ if
\begin{align*}
\E_{(X,Y)\sim\probDistribution}\left[\left\lbrace \lossFunction\left(\functionGeneral(X),Y\right)-\lossFunction\left(\oracleFunction_{\mathrm{P}}(X),Y\right)\right\rbrace^2\right] \leq \bernsteinConstant \cdot\excessRisk\left(\functionGeneral\right)^{\bernsteinExponent},
\end{align*} 
 for all $\functionGeneral \in \measurableMaps\left(\X,\actionSpace\right)$. 
\end{assumption}

\noindent Here $\oracleFunction_{\mathrm{P}} \in \measurableMaps\left(\X,\actionSpace\right)$ denotes a Bayes optimal predictor satisfying \eqref{defnBayesOptimalPredictor}. Note that the Bernstein-Tsybakov condition is not necessarily restrictive, since it always holds with $\bernsteinExponent=0$ and $\bernsteinConstant = \lossFunctionBound^2$. However, faster rates than $n^{-1/2}$ are obtainable whenever Assumption \ref{bernsteinTysbakovMarginTypeAssumption} holds with $\alpha>0$.

\begin{example}[Binary classification with the zero-one loss]\label{e1} Take $\actionSpace=\Y = \{-1,+1\}$ and the \emph{zero-one} loss function $\zeroOneLoss(v,y) = \one\left\lbrace v\neq y\right\rbrace$ for $(v,y) \in \actionSpace\times \Y$, with the class $\lowDimensionalFunctionClass^{\text{0,1}}=\left\lbrace \bm{u} \mapsto \sign\left(\bm{w} \cdot \bm{u}-t\right):\hspace{2mm}\bm{w}\in \R^k,\hspace{1mm}t \in \R\right\rbrace \subseteq \measurableMaps\left(\R^k,\{-1,+1\}\right)$. 
\end{example}

\begin{example}[Bounded regression with the squared loss]\label{e2} Take
$\actionSpace=\Y \subseteq [-\hypothesisClassBound,+\hypothesisClassBound]$ for some $\hypothesisClassBound>0$,
and consider the \emph{squared} loss function $\sqrLoss(v,y) =\left( v- y\right)^2$ for $(v,y) \in \actionSpace\times \Y$, with the class
$\lowDimensionalFunctionClass^{\text{bl}}=\left\lbrace \bm{u} \mapsto \max\left( \min\left( \bm{w} \cdot \bm{u}-t,\hypothesisClassBound\right),-\hypothesisClassBound\right):\hspace{2mm}\bm{w}\in \R^k,\hspace{1mm}t \in \R\right\rbrace \subseteq \measurableMaps_{\hypothesisClassBound}\left(\R^k\right)$.
\end{example}

\begin{example}[Conditional probability estimation with the Kullback-Leibler loss]\label{e3} 
  Take $\Y = \{0,1\}$, define a mapping $\pi:\R\rightarrow [0,1]$ by $\pi(a)=e^a/(1+e^a)$, and consider the {Kullback Leibler} divergence $\text{kl}(v,y) = 
  y\log(y/v)+(1-y)\log((1-y)/(1-v))$
for $(v,y) \in [0,1]\times \Y$. Take $\actionSpace = [-\hypothesisClassBound,\hypothesisClassBound]$ for some $\hypothesisClassBound>0$, and consider the \emph{Kullback Leibler} loss function $\klLoss := \text{kl} \circ \pi$, acting on the class of functions $\lowDimensionalFunctionClass^{\text{bl}}$.
\end{example}

In Appendix \ref{verifying} we verify that Examples \ref{e1}-\ref{e3} satisfy Assumptions 1-5. We remark that Example \ref{e1} satisfies the quasi-convexity condition (Assumption \ref{quasiConvexityAssumption}) with the modal average (i.e. majority voting), Examples \ref{e2} and \ref{e3} satisfy the same condition with the arithmetic average of $\{\functionGeneral_i\}_{i\in[m]} 
\in \lowDimensionalFunctionClass^{\text{bl}}$, 
and 
in the case of Example \ref{e3} this corresponds to the product of experts combination \citep{poe} of the nonlinear probabilistic outputs $\{\pi(\functionGeneral_i)\}_{i\in[m]}$.

\subsection{Compressibility}\label{sec:introCompressibility}
Our main upper bound on the excess risk of Algorithm \ref{RPEERMAlgo} in the next section will be expressed in terms of a compressibility function $\compressibilityFunction_{\probDistribution}:\N \rightarrow [0,\infty)$, defined for each $k \in \N$ as the expected approximation error of the compressive class $\lowDimensionalFunctionClass$:
\begin{align*}
\compressibilityFunction_{\probDistribution}(k):=\E_{\randomProjection \sim \randomProjectionMeasure}\left[ \inf_{f \in \lowDimensionalFunctionClass
}\left\lbrace \excessRisk\left(f \circ \randomProjection\right)\right\rbrace \right].
\end{align*}
The compressibility function $\compressibilityFunction_\probDistribution$ quantifies the average amount of loss incurred by predicting with the best member of the class $\lowDimensionalFunctionClass$ with compressed inputs $\randomProjection(x)$, rather than the Bayes-optimal predictor $\oracleFunction_{\probDistribution}$. In order to focus on the statistical aspects of the problem we shall assume that the optimisation error $\optimisationError:=\frac{1}{m}\sum_{i=1}^m(\empiricalRisk(\hat{f}_i,\randomProjection_i(\sampleXY))-\inf_{f \in \lowDimensionalFunctionClass}\{ \empiricalRisk(f,\randomProjection_i(\sampleXY))\})$ is dominated by the compressibility term $\compressibilityFunction(k)$. The functional form of the compressibility function is specific to the randomisation scheme employed, and the learning task. Examples  will be given in Sections \ref{sec:class} and \ref{sec:reg}.

\subsection{Main upper bound}
\newcommand{\constantForMainHighProbThm}{C_{\ref{ThmMainResult}}}
With our framework in place, 
we give a general upper bound on the worst case excess risk of the empirical predictor returned by Algorithm \ref{RPEERMAlgo}. 

\begin{theorem}
\label{ThmMainResult}  Suppose that Assumptions \ref{boundedLossFunctionAssumption}, \ref{lipschitzLossFunctionAssumption}, \ref{quasiConvexityAssumption}, \ref{logarithmicCoveringNumbersAssumption} and \ref{bernsteinTysbakovMarginTypeAssumption} hold with parameters $\lossFunctionBound$, $\hypothesisClassBound$, $\lipschitzConstantLoss$, $\quasiConvexityConstant$, $\coveringNumberConstant$,  $\bernsteinConstant$ $> 1$, $\bernsteinExponent \in [0,1]$. 
Take $n, k,m \in \N$ and let $\functionEstimate_{n,k,m}$ denote the compressive ensemble predictor from Algorithm \ref{RPEERMAlgo} with a sample size $n$, a projection dimension $k$, an ensemble size $m$. There exists a constant $\constantForMainHighProbThm\geq 1$, depending only upon $\lossFunctionBound$, $\hypothesisClassBound$, $\lipschitzConstantLoss$, $\quasiConvexityConstant$, $\coveringNumberConstant$, $\bernsteinConstant$, $\bernsteinExponent$, 
such that, given $\sample \sim \probDistribution^{\otimes n}$ and $(A_i)_{i \in [m]} \sim \randomProjectionMeasure^{\otimes m}$, the following holds with probability at least  $1-\delta$,
\begin{align*}
\excessRisk\left(\functionEstimate_{n,k,m}\right)
&\leq \constantForMainHighProbThm \left\lbrace
\compressibilityFunction_{\probDistribution}(k)+
\left(\frac{k \cdot \logBar(n)+\logBar(1/\delta)}{ n} \right)^{\frac{1}{2-\bernsteinExponent}}
+\frac{\logBar(1/\delta)}{m} 
\right\rbrace.
\end{align*}
\end{theorem}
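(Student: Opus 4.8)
The plan is to decompose the excess risk of the ensemble into three contributions matching the three terms in the bound, and to control each separately. First, by the quasi-convexity assumption (Assumption \ref{quasiConvexityAssumption}) applied to $\{\hat f_i \circ \randomProjection_i\}_{i \in [m]}$, we have $\excessRisk(\functionEstimate_{n,k,m}) \le \frac{\quasiConvexityConstant}{m}\sum_{i \in [m]} \excessRisk(\hat f_i \circ \randomProjection_i)$, so it suffices to bound the \emph{average} excess risk of the individual compressive ERM predictors. For a single index $i$, write $\excessRisk(\hat f_i \circ \randomProjection_i) = \big[\risk(\hat f_i \circ \randomProjection_i) - \inf_{f \in \lowDimensionalFunctionClass}\risk(f \circ \randomProjection_i)\big] + \inf_{f \in \lowDimensionalFunctionClass}\excessRisk(f \circ \randomProjection_i)$. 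Averaging the second bracket over $i$ and taking expectation over the $\randomProjection_i \sim \randomProjectionMeasure$ gives exactly $\compressibilityFunction_{\probDistribution}(k)$ in expectation; a concentration argument (the $m^{-1}\logBar(1/\delta)$ term — this is where the ensemble-size-dependent term originates) upgrades this to a high-probability statement, since these are $m$ i.i.d.\ bounded-in-$[0,\lossFunctionBound]$ quantities (bounded via Assumption \ref{boundedLossFunctionAssumption}), so a Bernstein/Bennett inequality yields $\frac1m\sum_i \inf_f \excessRisk(f\circ\randomProjection_i) \le 2\compressibilityFunction_{\probDistribution}(k) + C\,b\,\logBar(1/\delta)/m$ with probability $1-\delta/2$, which absorbs into the stated form.

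The heart of the argument is the first bracket: for each $i$, $\risk(\hat f_i\circ\randomProjection_i) - \inf_{f\in\lowDimensionalFunctionClass}\risk(f\circ\randomProjection_i)$ is the estimation error of ERM over the class $\{f\circ\randomProjection_i : f \in \lowDimensionalFunctionClass\}$ on the compressed sample $\randomProjection_i(\sampleXY)$. Conditionally on $\randomProjection_i$, the compressed data $\{(\randomProjection_i(X_j),Y_j)\}_j$ are i.i.d., the loss class inherits the covering-number bound of Assumption \ref{logarithmicCoveringNumbersAssumption} (composition with a fixed map does not increase covering numbers in the empirical $\ell_2$ metric, and Lipschitzness of $\lossFunction$, Assumption \ref{lipschitzLossFunctionAssumption}, transfers it to the loss), and the Bernstein-Tsybakov condition (Assumption \ref{bernsteinTysbakovMarginTypeAssumption}) holds for $\probDistribution$. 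This is precisely a standard local-Rademacher / fixed-point analysis for ERM under a variance-to-risk condition: one uses the uniform deviation bound of Theorem \ref{ThmMainUniform} (the integrated-deviation bound for the relevant empirical processes), computes the Dudley entropy integral for a class with $\log$-covering numbers of order $k\log(bn/\epsilon)$, solves the resulting sub-root fixed-point equation, and obtains an estimation error of order $\big(\frac{k\logBar(n)+\logBar(1/\delta)}{n}\big)^{1/(2-\bernsteinExponent)}$ with probability $1-\delta/(2m)$ per index. A union bound over $i\in[m]$ costs only an extra $\log m$, which is dominated since $m \le$ (roughly) $n$ in the regime of interest, or can be folded into the constant; averaging these identical per-index bounds over $i$ preserves the rate. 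Collecting the two pieces, rescaling $\delta$, and absorbing all problem-dependent constants into $\constantForMainHighProbThm$ yields the claim.

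The main obstacle is the per-index estimation bound under the Bernstein-Tsybakov condition with the correct $\frac{1}{2-\bernsteinExponent}$ exponent and the correct $k\logBar(n)$ dependence — i.e., getting the local-Rademacher fixed-point computation to come out cleanly with the polynomial-covering class and with constants depending only on the listed parameters (not on $k$, $n$, $m$, or the ambient dimension). A secondary subtlety is handling measurability and the conditioning on $\randomProjection_i$ carefully, and ensuring the high-probability statements compose correctly across the $m$ ensemble members and the two error sources via a union bound without losing the rate; but these are routine once the single-predictor bound is in hand, and the heavy lifting is delegated to Theorem \ref{ThmMainUniform}.
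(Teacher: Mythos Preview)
Your overall decomposition (quasi-convexity $\to$ average member excess risk $\to$ approximation plus estimation) and your treatment of the approximation part via Bennett's inequality match the paper exactly (this is Lemma \ref{finitepsi}). The gap is in how you handle the estimation error across the $m$ ensemble members.

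You propose to obtain a per-index estimation bound with failure probability $\delta/(2m)$ and then union-bound over $i\in[m]$, claiming the resulting $\log m$ ``is dominated since $m \le$ (roughly) $n$ in the regime of interest, or can be folded into the constant.'' This does not work: the theorem is stated for \emph{arbitrary} $m\in\N$, with no constraint linking $m$ to $n$. Under your approach the estimation term becomes $\bigl((k\logBar(n)+\logBar(m/\delta))/n\bigr)^{1/(2-\bernsteinExponent)}$, which for fixed $n,k,\delta$ \emph{diverges} as $m\to\infty$, whereas the paper's bound stays bounded (indeed the $m^{-1}$ term vanishes). So the $\log m$ cannot be absorbed into $\constantForMainHighProbThm$, and the union-bound route genuinely fails to deliver the stated theorem.

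This is precisely the obstacle that Theorem \ref{ThmMainUniform} is designed to overcome, and you have its role backwards. The paper does \emph{not} use Theorem \ref{ThmMainUniform} as a per-index local-Rademacher bound; it applies it with $\Omega$ indexing the ensemble members (conditionally on $(A_i)_{i\in[m]}$, with $\nu$ the uniform measure on $[m]$) to bound the \emph{averaged} deviation $\frac{1}{m}\sum_i\bigl|\empiricalProbDistributionSample(g_{A_i})-\probDistribution(g_{A_i})\bigr|$ directly in a single high-probability statement whose failure probability does not scale with $m$. The mechanism is Lemma \ref{highProbExpTailsAveragingLemma}: one bounds the moment generating function of each summand uniformly and then uses Jensen to push the average inside, so the tail of the average is no worse than the tail of a single term. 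Combined with the Bernstein--Tsybakov condition and the self-bounding Lemma \ref{elementarySelfBoundingLemmaKappa}, this yields Proposition \ref{infiniteEnsembleHighProbUpperBoundProp}, and the theorem follows by quasi-convexity plus Lemma \ref{finitepsi}. You should rewrite the estimation step to use Theorem \ref{ThmMainUniform} in this integrated fashion rather than per-index-plus-union-bound.
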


Theorem \ref{ThmMainResult} provides generalisation guarantees in arbitrary learning problems which are independent of the dimensionality of the original feature space. The bound consists of three terms. The first term $\compressibilityFunction_{\probDistribution}(k)$ corresponds to the amount of accuracy lost by working on a $k$-dimensional compression of the original problem. The second term corresponds to the statistical difficulty of learning in the $k$-dimensional setting. The first two terms are in tension: By decreasing $k$ we can reduce the statistical difficulty of our $k$-dimensional problem.  However, this reduction in statistical error comes at the expense of an increase in the compressibility term $\compressibilityFunction_{\probDistribution}(k)$. The reduction in statistical error for low $k$ shows that compressive ERM ensembles in Algorithm \ref{RPEERMAlgo} perform in-built regularisation effect to guard against overfitting. However, for well-behaved distributions with \emph{compressible structure} the term $\compressibilityFunction_{\probDistribution}(k)$ can be made small with modest values of $k$ yielding efficient dimension independent rates. We shall discuss examples of compressible structure for specific learning problems in Sections \ref{sec:class} and \ref{sec:reg}.  

The third term in the bound corresponds to the error contribution from working with a finite ensemble size $m$. Note that the bound is non-asymptotic and holds for any finite ensemble size $m$, and one may set $m$ to be of order $n$. The excess risk guarantee improves as the ensemble size $m$ grows, at a speed that matches the asymptotically optimal rate $m^{-1}$ determined in recent work of \cite{Lopes}.

The proof of Theorem \ref{ThmMainResult} is given in Section \ref{SecProofs}. 
The main bottleneck is to prevent the excess risk probabilities of individual ensemble members from accumulating with the ensemble size. To achieve this,
the starting point is a uniform upper bound on the integrated deviations of dependent empirical processes, which might find applications elsewhere.

So far we have left unspecified the randomised dimensionality compression scheme to be used. 
Indeed the general result presented in this section could potentially be applied to any independently randomised ensemble, including random coordinate projections \citep{Ho,Tian}, and various sketching methods \citep{Cormode,CanningsSurvey}. For the bound to be useful, we need to be able to control the compressibility function $\compressibilityFunction_{\probDistribution}(k)$.
In the next section we instantiate the general bound presented in this section to classification and regression problems,
considering low-distortion compressions, i.e. random projections. Such low-distortion compressions permit bounding the compressibility term to yield generalisation guarantees for ensembles of \emph{any} size, even for a singleton. In contrast, coordinate projections are known to require a sufficiently large ensemble.

\section{Near-minimax optimality conditions for compressive classification and regression with convex losses}\label{sec:app}

In this
section we instantiate the general bound of the previous section (Theorem \ref{ThmMainResult}) in two fundamental learning problems. Throughout this section, $\X$ will be a separable Hilbert space of arbitrary dimension. We shall consider low-distortion Johnson-Lindenstrauss mappings which make it feasible to control the compressibility in terms of distributional conditions of geometric nature.

\begin{assumption}[Johnson-Lindenstrauss property]\label{JLAssumption} We shall say that $(\randomProjectionMeasure)_{k \in \N}$ satisfies the Johnson Lindenstrauss property with constant  $\johnsonLindenstraussConstant\geq 1$ if given any set $\{x_1,\cdots,x_{q}\}\subseteq \X$
of cardinality $q$, any $\epsilon \in (0,1)$, $\delta \in (0,1)$ and $k\geq \johnsonLindenstraussConstant \log(q/\delta)\cdot\epsilon^{-2}$ we have
{\small
\begin{align*}
\randomProjectionMeasure\left( \left\lbrace \randomProjection \in \setOfRandomProjections\hspace{1mm}:\hspace{1mm}\forall j, j' \in [q]\hspace{1mm} (1-\epsilon) \|x_j-x_{j'}\|^2 \leq \|\randomProjection(x_j)-\randomProjection(x_{j'})\|^2 \leq (1+\epsilon) \|x_j-x_{j'}\|^2  \right\rbrace \right) \geq 1- \delta.
\end{align*}
}
\end{assumption}

There are many examples of  Johnson-Lindenstrauss (JL) mappings, and the results of this section hold for any of these. In particular, all subgaussian linear maps satisfy JL \citep{Matousek} - a class that includes the Gaussian random projection \citep{DasguptaGupta}, as well as computation-friendly bit-flip based transforms \citep{Achlioptas}. For practical implementation, typically $\X=\R^d$ is taken, although a separable Hilbert space is sufficient in theory \citep[Theorem 3.1.]{Biau}.

Moreover, Assumption \ref{JLAssumption} is also satisfied by certain structured random matrices that enable efficient computation of the  compressive mapping, most notably the Fast Johnson-Lindenstrauss transform  \citep{ailon2006approximate}, and random matrices that exploit sparse matrix multiplications \citep{KaneNelson}. The quest for developing efficient Johnson-Lindenstrauss transforms is currently an active research area; some constructions require a slightly larger target dimension in exchange of greater savings in computation time \citep{FreksenLarsen}. However, the target dimension of order $\log(q)\epsilon^{-2}$ is known to be optimal in that any transform that satisfies Assumption \ref{JLAssumption} uniformly over any set of $q$ points must have a target dimension of this order \citep{LarsenNelson}.

\newcommand{\setOfMeasures}{\mathcal{P}}
\subsection{Compressive classification with a geometric margin condition}\label{sec:class}

Consider the classification setting discussed in Example \ref{e1}. 
Take $\actionSpace=\Y = \{-1,+1\}$ and the \emph{zero-one} loss function $\zeroOneLoss(v,y) = \one\left\lbrace v\neq y\right\rbrace$. Given a Borel probability distribution $\mathrm{P}$ on $\X \times \Y$ we let ${{\regressionFunction}}: \X \rightarrow [0,1]$ denote the regression function defined by ${{\regressionFunction}}(x):= \E_{(X,Y) \sim \mathrm{P}}\left[Y=1|X\right]$ and let $\mathrm{P}_X$ denote the marginal  distribution over $X$ where $(X,Y) \sim \mathrm{P}$. We let $\|\cdot\|$ denote the Euclidean norm on a Hilbert space $\X$. 
The low complexity function class of interest in this section is $\lowDimensionalFunctionClass=\{z\rightarrow \sign(w^\top z-t) : w\in\R^k, t\in \R\}\subseteq \measurableMaps_1(\R^k)$.

We introduce three distributional assumptions. These capture benign characteristics of the problem that allow a tight bound on the compressibility $\compressibilityFunction(k)$.  Our first assumption is a geometric margin condition.

\begin{assumption}[Geometric margin condition]\label{geometricMarginAssumption} 
We shall say that a distribution $\mathrm{P}$ on $\X \times \{0,1\}$ satisfies the geometric margin condition with exponent $\geometricMarginExponent >0$, constant $\geometricMarginConstant \geq 1$ and approximation error $\approxError \in [0,1]$, if $\X$ is a separable Hilbert space and there exists $(w_{\circ},t_{\circ}) \in \X \times \R$ with $\|w_{\circ}\|=1$ with the following properties:
\begin{enumerate}[label=(\roman*)]
\item The linear classifier ${\classifier^{\circ}}:\X \rightarrow \{-1,+1\}$ defined by ${\classifier^{\circ}}(x):=\sign(w_{\circ}^{\top}x -t_{\circ})$ for $x \in \X$, has excess error $\excessRisk({\classifier^{\circ}})\leq \approxError$.
\item Letting $\nearDecisionBoundary:=\{x \in \X\hspace{0.5mm}:\hspace{0.5mm} |w_{\circ}^{\top}x-t_{\circ}|\leq \xi\}$ we have $\int_{\nearDecisionBoundary} \left|2{{\regressionFunction}}(x)-1\right| d{\mathrm{P}}_X(x)  \leq\geometricMarginConstant \cdot \xi^{\geometricMarginExponent}$  for each $\xi>0$.
\end{enumerate}
\end{assumption}

The reason we call
Assumption \ref{geometricMarginAssumption} a geometric margin condition is that it involves the set of points $\nearDecisionBoundary$ near the $\classifier^{\circ}$-decision boundary $\decisionBoundary:=
\{x \in \X\hspace{0.5mm}:\hspace{0.5mm} w_{\circ}^{\top}x=t_{\circ}\}$. However, observe that Assumption \ref{geometricMarginAssumption}
differs in an essential way from previously considered geometric margin conditions in compressive classification, e.g. by \cite{Balcan, Arriaga} and others, which were inherited directly from traditional data-space classification theory. In particular, assumption \ref{geometricMarginAssumption} does not require the classes to be separable. 
Assumption \ref{geometricMarginAssumption} requires that there is a linear classifier $\classifier^{\circ}$ with low excess error $\approxError$, for which there is not too much mass close to its decision boundary $\decisionBoundary$. Moreover, this mass is weighted by $\left|2{{\regressionFunction}}(\cdot)-1\right|$ so that very little penalty is incurred for difficult to classify points near the decision boundary. Our next assumption controls the tails of our marginal distribution.
\begin{assumption}[Moment condition]\label{momentAssumption} We say that $\probDistribution$ satisfies the moment condition with exponent $\momentExponent \in (0,\infty)$ and constant $\momentConstant \geq 1$ if
\[\int_{\|x\|>s} \left|2{{\regressionFunction}}(x)-1\right|d \mathrm{P}_X(x)  \leq\momentConstant \cdot s^{-\momentExponent}\] for all $s >0$.
\end{assumption}

Assumption \ref{momentAssumption} is a significant relaxation of the assumption of bounded support often utilised within the classification literature. We refer to Assumption \ref{momentAssumption} as a moment condition since it holds whenever  $2\E_{X \sim \mathrm{P}_X}[\|X\|^{\momentExponent}]\leq \momentConstant$, by Markov's inequality.  Finally we shall make use of the classification form of the Tsybakov noise condition  
\citep{mammen1999}. This will ensure the Bernstein-Tsybakov condition (Assumption \ref{bernsteinTysbakovMarginTypeAssumption}) is satisfied. 

\newcommand{\tysbakovNoiseConstant}{C_\mathrm{T}}
\newcommand{\tysbakovNoiseEpsilon}{\epsilon_0}

\begin{assumption}[Noise condition]\label{tsybakovNoiseAssumption} We say that $\mathrm{P}$ satisfies the Tsybakov noise condition with exponent $\bernsteinExponent \in [0,1)$ and constant $\tysbakovNoiseConstant \geq 1$ if $ \mathrm{P}_X\left( x \in \X:\left|{2{\regressionFunction}}(x)-1\right|\leq \epsilon\right) \leq \tysbakovNoiseConstant \cdot \epsilon^{\frac{\bernsteinExponent}{1-\bernsteinExponent}}$ for all $\epsilon \in (0,1)$. 
\end{assumption}

Tysbakov et al. have shown that Assumption \ref{tsybakovNoiseAssumption} implies Assumption \ref{bernsteinTysbakovMarginTypeAssumption} with a suitable choice of $\bernsteinConstant$ depending upon $\bernsteinExponent$ and $\tysbakovNoiseConstant$ \cite[Proposition 1]{tsybakov2004optimal}. We now introduce a class of distributions with compressible structure.

\begin{defn}[Compressive classification measure class]\label{classificationMeasureClass} Given parameters\\ $\Gamma=(  (\geometricMarginExponent,\geometricMarginConstant ),(\momentExponent,\momentConstant),(\bernsteinExponent, \tysbakovNoiseConstant))$ where $\tysbakovNoiseConstant$, $\geometricMarginConstant$, $\momentConstant\geq 1$, $\bernsteinExponent \in [0,1)$, $\geometricMarginExponent$, $\momentExponent \in (0, \infty)$ we let $\setOfMeasures_{\text{0,1}}(\Gamma,\approxError)$ denote the set of all Borel probability $\mathrm{P}$ on $\X \times \Y$ which satisfy Assumption \ref{geometricMarginAssumption} with parameters $(\geometricMarginExponent,\geometricMarginConstant,\approxError)$, Assumption \ref{momentAssumption} with parameters $(\momentExponent,\momentConstant)$, and Assumption \ref{tsybakovNoiseAssumption}  with parameters $(\bernsteinExponent,\tysbakovNoiseConstant)$. 
\end{defn}

\newcommand{\constantForGeometricMarginThm}{C_{\ref{thm:geometricMarginThm}}}
\newcommand{\constantForGeometricMarginLemma}{\tilde{C}_{\ref{thm:geometricMarginThm}}}

\begin{theorem}[Compressive classification upper bound]\label{thm:geometricMarginThm} Let $\zeroOneLoss$ be the zero-one loss function and take distributional parameters $ \Gamma=(  (\geometricMarginExponent,\geometricMarginConstant ),(\momentExponent,\momentConstant),(\bernsteinExponent, \tysbakovNoiseConstant))$ where $\bernsteinConstant$, $\geometricMarginConstant$, $\momentConstant\geq 1$, $\bernsteinExponent \in [0,1)$, $\geometricMarginExponent$, $\momentExponent \in (0, \infty)$. Suppose further that our random projection measures $(\randomProjectionMeasure)_{k \in \N}$ satisfy the Johnson Lindenstrauss property with constant  $\johnsonLindenstraussConstant\geq 1$. There exists a constant $\constantForGeometricMarginThm \geq 1$, depending only on $\Gamma$ and $\johnsonLindenstraussConstant$, such that for any $n,k,m \in \N$, $\approxError\in[0,1]$, 
$\delta \in (0,1)$, and $\probDistribution \in
\setOfMeasures_{\text{0,1}}(\Gamma,\approxError)$, then the following holds with probability at least $1-\delta$ over $\sample \sim \probDistribution^{\otimes n}$ and $(A_i)_{i \in [m]} \sim \randomProjectionMeasure^{\otimes m}$,
\begin{align*}
\excessRiskArg_{\zeroOneLoss,\probDistribution}\left(\functionEstimate_{n,k,m}\right)\leq \constantForGeometricMarginThm \left\lbrace  \left(\frac{\logBar(k)}{k}\right)^{\frac{\geometricMarginExponent\momentExponent}{2(\geometricMarginExponent+\momentExponent)}}+   \left(\frac{k \cdot \logBar(n)+\logBar(1/\delta)}{ n} \right)^{\frac{1}{2-\bernsteinExponent}}+
{\frac{\log(1/\delta)}{ m}} +{\approxError}
\right\rbrace.
\end{align*}
Moreover, if $k=\lceil \left(n/\logBar(n)\right)^{\frac{2(\geometricMarginExponent+\momentExponent)}{2(\geometricMarginExponent+\momentExponent)+\geometricMarginExponent\momentExponent(2-\bernsteinExponent)}} \rceil$ then with probability at least $1-\delta$ we have,
\begin{align*}
\excessRiskArg_{\zeroOneLoss,\probDistribution}\left(\functionEstimate_{n,k,m}\right)\leq \constantForGeometricMarginThm \left\lbrace  \left(\frac{\logBar(n)}{n}\right)^{\frac{\geometricMarginExponent\momentExponent}{2(\geometricMarginExponent+\momentExponent)+\geometricMarginExponent\momentExponent(2-\bernsteinExponent)}}+   \left(\frac{\logBar(1/\delta)}{ n} \right)^{\frac{1}{2-\bernsteinExponent}}+
{\frac{\log(1/\delta)}{ m}} +{\approxError}
\right\rbrace. 
\end{align*}
\end{theorem}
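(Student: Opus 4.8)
The plan is to deduce Theorem~\ref{thm:geometricMarginThm} from the general bound of Theorem~\ref{ThmMainResult} by producing a sharp estimate of the compressibility function $\compressibilityFunction_{\probDistribution}(k)$ for the halfspace class $\lowDimensionalFunctionClass$ under the three distributional assumptions. First I would check the hypotheses of Theorem~\ref{ThmMainResult}: Assumptions~\ref{boundedLossFunctionAssumption}--\ref{quasiConvexityAssumption} hold for the zero-one loss with the modal-average combination rule, and Assumption~\ref{logarithmicCoveringNumbersAssumption} holds for halfspaces on $\R^k$ (this is Example~\ref{e1} together with the verifications in Appendix~\ref{verifying}), while the Tsybakov noise condition (Assumption~\ref{tsybakovNoiseAssumption}) implies the Bernstein--Tsybakov condition (Assumption~\ref{bernsteinTysbakovMarginTypeAssumption}) with a constant $\bernsteinConstant=\bernsteinConstant(\bernsteinExponent,\tysbakovNoiseConstant)$ by \cite[Proposition~1]{tsybakov2004optimal}. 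Granting this, Theorem~\ref{ThmMainResult} already supplies the statistical term $((k\logBar(n)+\logBar(1/\delta))/n)^{1/(2-\bernsteinExponent)}$ and the ensemble term $\logBar(1/\delta)/m$, so the whole task reduces to establishing
\[
\compressibilityFunction_{\probDistribution}(k)\;\lesssim\;\approxError+\Bigl(\tfrac{\logBar(k)}{k}\Bigr)^{\frac{\geometricMarginExponent\momentExponent}{2(\geometricMarginExponent+\momentExponent)}}
\]
with an implied constant depending only on $\Gamma$ and $\johnsonLindenstraussConstant$.

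To bound $\compressibilityFunction_{\probDistribution}(k)$, fix a realisation of $\randomProjection\sim\randomProjectionMeasure$ and let $(w_{\circ},t_{\circ})$, $\|w_{\circ}\|=1$, be the near-optimal direction from Assumption~\ref{geometricMarginAssumption}. I would take as competitor in the compressed space the halfspace $f_{\randomProjection}(z):=\sign(\randomProjection(w_{\circ})^{\top}z-t_{\circ})\in\lowDimensionalFunctionClass$, so that $\inf_{f\in\lowDimensionalFunctionClass}\excessRiskArg_{\zeroOneLoss,\probDistribution}(f\circ\randomProjection)\le\excessRiskArg_{\zeroOneLoss,\probDistribution}(f_{\randomProjection}\circ\randomProjection)$. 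Using that $\excessRiskArg_{\zeroOneLoss,\probDistribution}(\classifier)=\int\one\{\classifier(x)\neq\sign(2\regressionFunction(x)-1)\}\,|2\regressionFunction(x)-1|\,d\probDistribution_X(x)$ for any classifier $\classifier$, together with the triangle inequality on disagreement sets,
\[
\excessRiskArg_{\zeroOneLoss,\probDistribution}(f_{\randomProjection}\circ\randomProjection)\;\le\;\approxError+\int\one\bigl\{f_{\randomProjection}(\randomProjection(x))\neq\classifier^{\circ}(x)\bigr\}\,|2\regressionFunction(x)-1|\,d\probDistribution_X(x),
\]
where $\classifier^{\circ}(x)=\sign(w_{\circ}^{\top}x-t_{\circ})$. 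Split the integral by $\|x\|\le s$ and $\|x\|>s$: the moment condition (Assumption~\ref{momentAssumption}) bounds the tail piece by $\momentConstant s^{-\momentExponent}$. On the ball, $f_{\randomProjection}(\randomProjection(x))\neq\classifier^{\circ}(x)$ forces $|w_{\circ}^{\top}x-t_{\circ}|\le|\randomProjection(w_{\circ})^{\top}\randomProjection(x)-w_{\circ}^{\top}x|$, so I would invoke the Johnson--Lindenstrauss property (Assumption~\ref{JLAssumption}) on the finite set $\{0,w_{\circ},x,w_{\circ}+x\}$ and polarisation to get $|\randomProjection(w_{\circ})^{\top}\randomProjection(x)-w_{\circ}^{\top}x|\le\epsilon\|x\|$ off an event of probability $O(1/k)$, which forces $\epsilon\asymp\sqrt{\johnsonLindenstraussConstant\logBar(k)/k}$. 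Taking $\E_{\randomProjection}$ (Tonelli) and integrating the resulting pointwise tail bound against the $|2\regressionFunction-1|$-weighted measure by a layer-cake argument, the geometric margin condition $\int_{\nearDecisionBoundary}|2\regressionFunction-1|\,d\probDistribution_X\le\geometricMarginConstant\xi^{\geometricMarginExponent}$ applied at scale $\xi\asymp\epsilon s$ controls the near-boundary mass by a term of order $\geometricMarginConstant(\epsilon s)^{\geometricMarginExponent}\asymp\geometricMarginConstant\,s^{\geometricMarginExponent}(\logBar(k)/k)^{\geometricMarginExponent/2}$, plus a negligible $O(1/k)$. Collecting terms gives $\compressibilityFunction_{\probDistribution}(k)\lesssim\approxError+\momentConstant s^{-\momentExponent}+\geometricMarginConstant\,s^{\geometricMarginExponent}(\logBar(k)/k)^{\geometricMarginExponent/2}$, and optimising over $s\asymp(k/\logBar(k))^{\geometricMarginExponent/(2(\geometricMarginExponent+\momentExponent))}$ yields exactly $\approxError+(\logBar(k)/k)^{\geometricMarginExponent\momentExponent/(2(\geometricMarginExponent+\momentExponent))}$.

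Substituting this compressibility estimate into Theorem~\ref{ThmMainResult} gives the first display of Theorem~\ref{thm:geometricMarginThm}. For the second display I would insert $k=\lceil(n/\logBar(n))^{2(\geometricMarginExponent+\momentExponent)/(2(\geometricMarginExponent+\momentExponent)+\geometricMarginExponent\momentExponent(2-\bernsteinExponent))}\rceil$ into the first display: this choice equalises the compressibility exponent $\tfrac{\geometricMarginExponent\momentExponent}{2(\geometricMarginExponent+\momentExponent)}$ with the statistical exponent $\tfrac{1}{2-\bernsteinExponent}$, so that both the $\compressibilityFunction_{\probDistribution}(k)$ term and the $k$-dependent part of the statistical term reduce to order $(\logBar(n)/n)^{\geometricMarginExponent\momentExponent/(2(\geometricMarginExponent+\momentExponent)+\geometricMarginExponent\momentExponent(2-\bernsteinExponent))}$, while the $\logBar(1/\delta)$-dependent part is at most $(\logBar(1/\delta)/n)^{1/(2-\bernsteinExponent)}$ and the terms $\log(1/\delta)/m$ and $\approxError$ are unchanged; the only work is the elementary bookkeeping of these exponents, using that $\logBar(k)$ and $\logBar(n)$ are of the same order for this $k$.

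I expect the compressibility estimate to be the crux, and within it the step converting the Johnson--Lindenstrauss distance-preservation guarantee into usable pointwise control of the inner-product distortion $|\randomProjection(w_{\circ})^{\top}\randomProjection(x)-w_{\circ}^{\top}x|$ as $x$ ranges over an unbounded, possibly infinite-dimensional support, followed by the layer-cake integration that must extract precisely the exponent $\geometricMarginExponent$ from the geometric margin condition. Handling the regime where the distortion bound degrades --- atypically small margins, or $\|x\|$ atypically inflated by $\randomProjection$ --- without sacrificing the rate is the delicate part, whereas the reduction to Theorem~\ref{ThmMainResult} and the final optimisation over $k$ are routine.
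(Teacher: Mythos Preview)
Your proposal is correct and follows essentially the same route as the paper: reduce to Theorem~\ref{ThmMainResult}, verify Assumptions~\ref{boundedLossFunctionAssumption}--\ref{bernsteinTysbakovMarginTypeAssumption} via Example~\ref{e1} and \cite[Proposition~1]{tsybakov2004optimal}, and then bound $\compressibilityFunction_{\probDistribution}(k)$ using the projected halfspace $f_A(z)=\sign(A(w_\circ)^\top z - t_\circ)$, a split by $\|x\|\le s$ versus $\|x\|>s$, the Johnson--Lindenstrauss polarisation trick to control $|A(w_\circ)^\top A(x)-w_\circ^\top x|$, Fubini, and finally optimisation over the free geometric parameters. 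The paper packages the pointwise JL step as Lemma~\ref{lemma:ClassificationPreservationForSinglePointAwayFromMargin} and the compressibility bound as Proposition~\ref{prop:compressibilityTermForGeometricMarginThm}, keeping both $\xi$ and $s$ as free parameters and obtaining the clean three-term bound $3e^{-\xi^2 k/(4\johnsonLindenstraussConstant s^2)}+\geometricMarginConstant\xi^{\geometricMarginExponent}+\momentConstant s^{-\momentExponent}+\approxError$ before optimising; you instead fix the JL failure probability to $O(1/k)$, which pins $\epsilon\asymp\sqrt{\logBar(k)/k}$, and then apply Assumption~\ref{geometricMarginAssumption} at the single scale $\xi=\epsilon s$. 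These are equivalent parametrisations of the same trade-off. One small simplification: the ``layer-cake'' step you anticipate is not actually needed --- once you have $\Prob_A[\text{sign flip at }x]\le \one\{|w_\circ^\top x-t_\circ|\le \epsilon s\}+O(1/k)$ for each $x$ with $\|x\|\le s$, Tonelli and a single application of the geometric margin condition at level $\xi=\epsilon s$ already give $\geometricMarginConstant(\epsilon s)^{\geometricMarginExponent}+O(1/k)$ directly, with no integration over scales.
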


Theorem \ref{thm:geometricMarginThm} 
highlights the dependence of the worst case excess risk on the geometric exponents $\geometricMarginExponent$ and $\momentExponent$ in combination with the statistical Bernstein-Tsybakov exponent $\bernsteinExponent$. Theorem \ref{clRLBtext} below implies that the rate is minimax optimal up to logaritimic factors.

A direct comparison with the rate previously obtained for compressive learning by \cite{Chen} would be difficult to make, as their assumptions are very different in flavour from ours, and they consider regularised models in the reduced space whereas our result highlights the regularisation effect of the compressive approach itself. 
However, whenever $\geometricMarginExponent \momentExponent > \geometricMarginExponent + \momentExponent$, 
the rate in Theorem \ref{thm:geometricMarginThm} is always (i.e. even if $\bernsteinExponent=0$) 
no worse than the rate $n^{-1/4}$ found previously in the analysis of \cite{Chen}, which did not exploit geometric properties of the distribution. 

Theorem \ref{thm:geometricMarginThm} is a consequence of Theorem \ref{ThmMainResult} combined with 
the following proposition.

\begin{prop}[Compressiblity of linear classification]\label{prop:compressibilityTermForGeometricMarginThm}  Let $\zeroOneLoss$ be the zero-one loss function and take distributional parameters $ \Gamma=(  (\geometricMarginExponent,\geometricMarginConstant ),(\momentExponent,\momentConstant),(\bernsteinExponent, \tysbakovNoiseConstant))$ where $\bernsteinConstant$, $\geometricMarginConstant$, $\momentConstant\geq 1$, $\bernsteinExponent \in [0,1]$, $\geometricMarginExponent$, $\momentExponent \in (0, \infty)$. Suppose further that our random projection measures $(\randomProjectionMeasure)_{k \in \N}$ satisfy the Johnson Lindenstrauss property with constant  $\johnsonLindenstraussConstant\geq 1$. There exists a constant $\constantForGeometricMarginLemma \ge 1$, depending only on $\Gamma$, $\johnsonLindenstraussConstant$, such that for all $k\in \N$, and all $\mathrm{P}\in \setOfMeasures_{\text{0,1}}(\Gamma,\approxError)$,
\begin{align}
 \compressibilityFunction_{\mathrm{P}}(k)\leq \constantForGeometricMarginLemma  \left(\frac{\logBar(k)}{k}\right)^{\frac{\geometricMarginExponent \momentExponent}{2(\geometricMarginExponent+\momentExponent)}}+\approxError.
\end{align}
\end{prop}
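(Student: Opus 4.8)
The plan is to exhibit, for each realisation of the random projection $\randomProjection$, a single classifier in $\lowDimensionalFunctionClass$ that tracks a good linear classifier on $\X$, and then average over $\randomProjection\sim\randomProjectionMeasure$. Fix $\mathrm{P}\in\setOfMeasures_{\text{0,1}}(\Gamma,\approxError)$ and let $(w_{\circ},t_{\circ})$ with $\|w_{\circ}\|=1$ be as in Assumption \ref{geometricMarginAssumption}, so the classifier $\classifier^{\circ}(x)=\sign(w_{\circ}^{\top}x-t_{\circ})$ has $\excessRiskArg_{\zeroOneLoss,\mathrm{P}}(\classifier^{\circ})\le\approxError$. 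For a given $\randomProjection$ I take $f_{\randomProjection}(z):=\sign(\randomProjection(w_{\circ})^{\top}z-t_{\circ})\in\lowDimensionalFunctionClass$. Combining $\inf_{f\in\lowDimensionalFunctionClass}\excessRiskArg_{\zeroOneLoss,\mathrm{P}}(f\circ\randomProjection)\le\excessRiskArg_{\zeroOneLoss,\mathrm{P}}(f_{\randomProjection}\circ\randomProjection)$, the standard comparison inequality $\riskArg_{\zeroOneLoss,\mathrm{P}}(\phi_0)-\riskArg_{\zeroOneLoss,\mathrm{P}}(\phi_1)\le\E_X[|2\regressionFunction(X)-1|\,\one\{\phi_0(X)\neq\phi_1(X)\}]$, the bound $\excessRiskArg_{\zeroOneLoss,\mathrm{P}}(\classifier^{\circ})\le\approxError$, and Tonelli's theorem, I obtain
\begin{align*}
\compressibilityFunction_{\mathrm{P}}(k)\;\le\;\approxError+\E_X\!\left[|2\regressionFunction(X)-1|\cdot\Prob_{\randomProjection\sim\randomProjectionMeasure}\!\left(f_{\randomProjection}(\randomProjection(X))\neq\classifier^{\circ}(X)\right)\right].
\end{align*}
The value of this rearrangement is that the inner probability concerns a \emph{single fixed} point of $\X$, so Assumption \ref{JLAssumption} can be invoked on a point set of constant size, regardless of the dimension of $\X$.

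Fix $x\in\X$ with $\|x\|>0$ and set $\hat x:=x/\|x\|$ (the point $x=0$ contributes nothing, as $\randomProjection$ is linear so $\randomProjection(0)=0$). Apply Assumption \ref{JLAssumption} to the four-point set $\{0,w_{\circ},\hat x,w_{\circ}+\hat x\}$ at distortion level $\epsilon\in(0,1)$ with failure probability $\delta_{\epsilon}:=4\exp(-k\epsilon^{2}/\johnsonLindenstraussConstant)$, which is admissible as soon as $k\epsilon^{2}>\johnsonLindenstraussConstant\log 4$. On the corresponding event, linearity of $\randomProjection$, the polarisation identity and $\|w_{\circ}\|=\|\hat x\|=1$ give $|\randomProjection(w_{\circ})^{\top}\randomProjection(\hat x)-w_{\circ}^{\top}\hat x|\le 3\epsilon$, and multiplying through by $\|x\|$ yields $|\randomProjection(w_{\circ})^{\top}\randomProjection(x)-w_{\circ}^{\top}x|\le 3\epsilon\|x\|$. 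Hence $f_{\randomProjection}(\randomProjection(x))\neq\classifier^{\circ}(x)$ can occur on this event only when $|w_{\circ}^{\top}x-t_{\circ}|\le 3\epsilon\|x\|$, so
\begin{align*}
\Prob_{\randomProjection\sim\randomProjectionMeasure}\!\left(f_{\randomProjection}(\randomProjection(x))\neq\classifier^{\circ}(x)\right)\;\le\;\one\{|w_{\circ}^{\top}x-t_{\circ}|\le 3\epsilon\|x\|\}+\delta_{\epsilon}.
\end{align*}

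Plugging this back and using $\E_X[|2\regressionFunction(X)-1|]\le 1$ reduces the task to bounding $\E_X[|2\regressionFunction(X)-1|\,\one\{|w_{\circ}^{\top}X-t_{\circ}|\le 3\epsilon\|X\|\}]$. I split on $\|X\|\le s$ versus $\|X\|>s$: on the first event $|w_{\circ}^{\top}X-t_{\circ}|\le 3\epsilon s$, so Assumption \ref{geometricMarginAssumption}(ii) with $\xi=3\epsilon s$ bounds the contribution by $\geometricMarginConstant(3\epsilon s)^{\geometricMarginExponent}$; on the second, Assumption \ref{momentAssumption} bounds it by $\momentConstant s^{-\momentExponent}$. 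Balancing $\epsilon^{\geometricMarginExponent}s^{\geometricMarginExponent}$ against $s^{-\momentExponent}$ (i.e. $s\asymp\epsilon^{-\geometricMarginExponent/(\geometricMarginExponent+\momentExponent)}$) makes this term of order $\epsilon^{\geometricMarginExponent\momentExponent/(\geometricMarginExponent+\momentExponent)}$; then choosing $\epsilon=(\johnsonLindenstraussConstant\,c\,\logBar(k)/k)^{1/2}$ with $c=c(\Gamma)$ large enough renders $\delta_{\epsilon}$ of strictly smaller polynomial order in $k$, giving $\compressibilityFunction_{\mathrm{P}}(k)\le\constantForGeometricMarginLemma(\logBar(k)/k)^{\geometricMarginExponent\momentExponent/(2(\geometricMarginExponent+\momentExponent))}+\approxError$ with $\constantForGeometricMarginLemma$ depending only on $\Gamma$ and $\johnsonLindenstraussConstant$. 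The bounded range of $k$ for which this $\epsilon$ is inadmissible is absorbed into $\constantForGeometricMarginLemma$ via $\compressibilityFunction_{\mathrm{P}}(k)\le\lossFunctionBound=1$. (Assumption \ref{tsybakovNoiseAssumption} is not used here; it is carried in $\Gamma$ only for Theorem \ref{thm:geometricMarginThm}.)

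I expect the main obstacle to be the \emph{linear}-in-$\|x\|$ distortion bound $|\randomProjection(w_{\circ})^{\top}\randomProjection(x)-w_{\circ}^{\top}x|\lesssim\epsilon\|x\|$: the direct polarisation estimate only gives $\epsilon(1+\|x\|^{2})$, which after the split-and-optimise step would degrade the exponent to $\geometricMarginExponent\momentExponent/(2(2\geometricMarginExponent+\momentExponent))$. The fix — applying Johnson--Lindenstrauss to the normalised point $\hat x=x/\|x\|$ and rescaling — relies on $\randomProjection$ being a linear map (as all standard Johnson--Lindenstrauss constructions are), and one must keep both the constant-size point set and the failure probability $\delta_{\epsilon}$ uniform in $x$ so that the Tonelli step of the first paragraph is legitimate.
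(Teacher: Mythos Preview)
Your proof is correct and follows essentially the same approach as the paper: both use the plug-in classifier $f_{\randomProjection}(z)=\sign(\randomProjection(w_\circ)^\top z-t_\circ)$, both invoke the Johnson--Lindenstrauss property on the \emph{normalised} point $\hat x=x/\|x\|$ (via polarisation) to obtain the linear-in-$\|x\|$ inner-product distortion, and both decompose the resulting integral into a near-boundary part handled by Assumption~\ref{geometricMarginAssumption}, a large-norm tail handled by Assumption~\ref{momentAssumption}, and the JL failure probability. The only differences are cosmetic: the paper parametrises by $(\xi,s)$ with the JL distortion $\epsilon=\xi/(2s)$ implicit, while you parametrise by $(\epsilon,s)$ with $\xi=3\epsilon s$ implicit; the paper uses the three-point set $\{w_\circ,\hat x,-\hat x\}$ with the polarisation $4a^\top b=\|a+b\|^2-\|a-b\|^2$, while you use the four-point set $\{0,w_\circ,\hat x,w_\circ+\hat x\}$ with $2a^\top b=\|a+b\|^2-\|a\|^2-\|b\|^2$; and the paper applies Fubini at the end rather than Tonelli at the start. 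Your explicit identification of the normalisation trick as the crux (and the suboptimal exponent that results without it) matches the paper's implicit use of the same device in its Lemma preceding the proposition.
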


To prove Proposition \ref{prop:compressibilityTermForGeometricMarginThm} we begin with the following consequence of Assumption \ref{JLAssumption}.

\begin{lemma}\label{lemma:ClassificationPreservationForSinglePointAwayFromMargin} Suppose that Assumption \ref{JLAssumption} holds. Take $k \in \N$, $w_\circ \in \X$ with $\|w_\circ\|=1$, $t_\circ\in \R$, $\xi \in (0,1]$ and $s \in [1,\infty)$. Given $x\in \X$ with $|w_\circ^{\top}x-t_\circ| \geq \xi$ and $\|x\|_2 \leq s$ we have,
\begin{align}\label{eq:ProbClassificationPreservationAwayFromMargin}
\randomProjectionMeasure\left(\left\lbrace \randomProjection \in \setOfRandomProjections: \, \sign( \randomProjection(w_\circ)^{\top}\randomProjection(x)-t_\circ) \neq \sign( w_\circ^{\top}x-t_\circ) \right\rbrace\right) \leq 3e^{-\xi^2k/(4\johnsonLindenstraussConstant s^2)}.
\end{align}
\end{lemma}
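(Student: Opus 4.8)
The plan is to reduce the event ``the sign of the linear decision function flips under compression'' to the event ``$\randomProjection$ distorts the inner product $w_\circ^\top x$ by at least $\xi$'', and to control the latter via Assumption \ref{JLAssumption}. The elementary observation driving this is that if $|\randomProjection(w_\circ)^\top\randomProjection(x)-w_\circ^\top x|<\xi$, then since $|w_\circ^\top x-t_\circ|\ge\xi$ the numbers $\randomProjection(w_\circ)^\top\randomProjection(x)-t_\circ$ and $w_\circ^\top x-t_\circ$ are nonzero and of the same sign; hence it suffices to bound $\randomProjectionMeasure\big(\{\randomProjection : |\randomProjection(w_\circ)^\top\randomProjection(x)-w_\circ^\top x|\ge\xi\}\big)$.

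To bound this probability I would exploit that the Johnson--Lindenstrauss maps considered here are linear, reducing to unit vectors. With $\hat x:=x/\|x\|$ (the case $x=0$ being immediate, since then $\randomProjection(w_\circ)^\top\randomProjection(x)=0=w_\circ^\top x$), linearity gives $\randomProjection(w_\circ)^\top\randomProjection(x)=\|x\|\,\randomProjection(w_\circ)^\top\randomProjection(\hat x)$ and $w_\circ^\top x=\|x\|\,w_\circ^\top\hat x$, so it is enough that $\randomProjection$ preserve the inner product of $w_\circ$ and $\hat x$ to within an additive $\epsilon$, as this yields distortion at most $\epsilon\|x\|\le\epsilon s$. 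Applying Assumption \ref{JLAssumption} to the point set $\{w_\circ,\hat x,-\hat x\}$, whose cardinality is $q\le3$, with distortion parameter $\epsilon$, produces (whenever $k\ge\johnsonLindenstraussConstant\log(q/\delta_0)\epsilon^{-2}$) an event of probability at least $1-\delta_0$ on which all pairwise squared image-distances lie within a factor $1\pm\epsilon$ of the originals; combining this with the polarization identity $\randomProjection(w_\circ)^\top\randomProjection(\hat x)=\tfrac14\big(\|\randomProjection(w_\circ)-\randomProjection(-\hat x)\|^2-\|\randomProjection(w_\circ)-\randomProjection(\hat x)\|^2\big)$ (valid since $\randomProjection(-\hat x)=-\randomProjection(\hat x)$) and the parallelogram law $\|w_\circ+\hat x\|^2+\|w_\circ-\hat x\|^2=4$ bounds the inner-product distortion by $\epsilon$ on that event. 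The degenerate cases $w_\circ=\pm\hat x$ only simplify the argument, as then $\randomProjection(w_\circ)^\top\randomProjection(\hat x)=\pm\|\randomProjection(w_\circ)\|^2$ and preservation of $\|w_\circ\|$ alone suffices.

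It remains to calibrate the parameters: take $\epsilon:=\xi/(2s)$, which lies in $(0,1)$ as $\xi\le1\le s$, so that the additive error $\epsilon s=\xi/2<\xi$ as required, and set $\delta_0:=3e^{-\xi^2 k/(4\johnsonLindenstraussConstant s^2)}$. If $\delta_0\ge1$ the claimed bound is vacuous, the left-hand side being a probability; otherwise $\delta_0\in(0,1)$ and $\log(q/\delta_0)\le\log(3/\delta_0)=\xi^2 k/(4\johnsonLindenstraussConstant s^2)$, so $\johnsonLindenstraussConstant\log(q/\delta_0)\epsilon^{-2}\le\johnsonLindenstraussConstant\cdot\tfrac{\xi^2 k}{4\johnsonLindenstraussConstant s^2}\cdot\tfrac{4s^2}{\xi^2}=k$, whence Assumption \ref{JLAssumption} applies and, via the reduction above, the flip probability is at most $\delta_0=3e^{-\xi^2 k/(4\johnsonLindenstraussConstant s^2)}$. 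The main obstacle here is not a difficult step but keeping the constants aligned — the factor $4$ in the exponent is exactly what reconciles the JL sample-size requirement with the choice $\epsilon=\xi/(2s)$, the factor $3$ is the bound $q\le3$ on the auxiliary point set needed for polarization, and the reduction to unit vectors genuinely relies on \emph{linearity} of $\randomProjection$ (i.e.\ $\randomProjection(cx)=c\,\randomProjection(x)$ and $\randomProjection(-\hat x)=-\randomProjection(\hat x)$), not merely on the norm-preservation statement of Assumption \ref{JLAssumption}.
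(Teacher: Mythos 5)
Your proof is correct and takes essentially the same route as the paper's: set $\epsilon=\xi/(2s)$, apply Assumption~\ref{JLAssumption} to the three-point set $\{w_\circ,\pm x/\|x\|\}$, and use polarization together with the parallelogram law $\|w_\circ+\hat x\|^2+\|w_\circ-\hat x\|^2=4$ to turn distance preservation into additive inner-product preservation within $\epsilon\|x\|\le\xi/2$. The only cosmetic difference is that you pass through the intermediate claim ``$|\randomProjection(w_\circ)^\top\randomProjection(x)-w_\circ^\top x|<\xi$ implies sign agreement'' while the paper shows $\randomProjection(w_\circ)^\top\randomProjection(x)>t_\circ$ directly on the good event; your explicit remarks about the $x=0$, $\delta_0\ge1$, and $w_\circ=\pm\hat x$ edge cases and about the role of linearity are accurate but not a change of method.
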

\begin{proof} Take $\epsilon:=\xi/(2s)$ and define an event  ${E}_{x,k} \subseteq \setOfRandomProjections$ by
{\small
\begin{align*}
{E}_{x,k}:= &\bigcap_{z \in \{ \pm {x}/\|x\|\}}\left\lbrace \randomProjection \in \setOfRandomProjections\hspace{1mm}:\hspace{1mm}(1-\epsilon) \cdot \left\| w_\circ-z\right\|^2 \leq \left\| \randomProjection(w_\circ)-\randomProjection(z)\right\| \leq (1+\epsilon) \cdot \left\|w_\circ- z\right\|^2  \right\rbrace.
\end{align*}}
Take $A \in {E}_{x,k}$ and suppose that $w_\circ^{\top}x-t_\circ >0$. Taking $\tilde{x}=x/\|x\|_2$,
\begin{align*}
\randomProjection(w_{\circ})^{\top}\randomProjection(x) &= \frac{\|x\|}{4} \left(\|\randomProjection(w_{\circ})+\randomProjection\left(\tilde{x}\right)\|_2^2-\|\randomProjection(w_{\circ})-\randomProjection(\tilde{x})\|_2^2\right)\\
&\geq \frac{\|x\|}{4} \left((1-\epsilon)\|w_{\circ}+\tilde{x}\|_2^2-(1+\epsilon)\|w_{\circ}-\tilde{x}\|_2^2\right)\\
& \geq w_\circ^\top x -\epsilon \cdot \|x\| > t_\circ+ \xi -\epsilon \cdot s \geq t_\circ,
\end{align*}
so $\sign( \randomProjection(w_\circ)^{\top}\randomProjection(x)-t_\circ) = \sign( w_\circ^{\top}x-t_\circ)$. By symmetry we also have  $\sign( \randomProjection(w_\circ)^{\top}\randomProjection(x)-t_\circ) = \sign( w_\circ^{\top}x-t_\circ)$ when $A \in {E}_{x,k}$ and $w_\circ^{\top}x-t_\circ <0$. Moreover, by Assumption \ref{JLAssumption} we have $\randomProjectionMeasure({E}_{x,k}) \geq 1-3e^{-\epsilon^2k/\johnsonLindenstraussConstant}$ so result follows.

\end{proof}

Next we deduce Proposition \ref{prop:compressibilityTermForGeometricMarginThm}  from Lemma \ref{lemma:ClassificationPreservationForSinglePointAwayFromMargin}.

\begin{proof}[Proof of Proposition \ref{prop:compressibilityTermForGeometricMarginThm}] Take $(w_{\circ},t_{\circ}) \in \X \times \R$ with $\|w_{\circ}\|=1$ with the properties guaranteed by Assumption \ref{geometricMarginAssumption} and let ${\classifier_{\circ}}(x):=\sign(w_{\circ}^{\top}x -t_{\circ})$ be the associated linear classifier. Take $\xi \in (0,1]$ and $s  \in [1,\infty)$ and let $\mathcal{U}:= \bigl\{ x \in  \overline{B_s(\bm{0})} \backslash \nearDecisionBoundary :\phi_\circ(x) = \oracleFunction(x) \bigr\}$. By Assumptions \ref{geometricMarginAssumption} and \ref{momentAssumption}, for any $\phi \in \measurableMaps(\X,\actionSpace)$, we have
\begin{align*}
\excessRiskArg_{\zeroOneLoss,\probDistribution}\left( \phi\right) &\leq  \int_{\mathcal{U}} |2\eta(x)-1| \cdot \one\left\lbrace \phi(x) \neq {\classifier_{\circ}}(x)\right\rbrace d\probDistribution_X(x)+\int_{\X \backslash \mathcal{U}}|2\eta(x)-1|\probDistribution_X(x)\\
&\leq \probDistribution_X(\{ x \in \mathcal{U}:\phi(x) \neq {\classifier_{\circ}}(x)\})+\geometricMarginConstant \cdot \xi^{\geometricMarginExponent}+\momentConstant \cdot s^{-\momentExponent}+\approxError.
\end{align*}
Next for each $\randomProjection \in \setOfRandomProjections$ let $f_\randomProjection \in \lowDimensionalFunctionClass$ denote the map $f_\randomProjection(z) = \sign(A(w_\circ)^\top z-t_\circ)$. Hence, by Lemma \ref{lemma:ClassificationPreservationForSinglePointAwayFromMargin} for each $x \in \mathcal{U}$ we have $\randomProjectionMeasure(\{\randomProjection \in \setOfRandomProjections: f_\randomProjection\circ \randomProjection(x) \neq {\classifier_{\circ}}(x)\})\leq  3e^{-\xi^2k/(4\johnsonLindenstraussConstant s^2)}$. Thus, by Fubini's theorem we have
\begin{align*}
\compressibilityFunction(k)&\leq \int_{\setOfRandomProjections}  \excessRiskArg_{\zeroOneLoss,\probDistribution}\left(f_\randomProjection \circ \randomProjection\right) d\randomProjectionMeasure(\randomProjection)\\
&\leq \int_{\setOfRandomProjections} \probDistribution_X(\{ x \in \mathcal{U}:f_\randomProjection\circ \randomProjection(x) \neq {\classifier_{\circ}}(x)\})d\randomProjectionMeasure(A)+\geometricMarginConstant \cdot \xi^{\geometricMarginExponent}+\momentConstant \cdot s^{-\momentExponent}+\approxError\\
&\leq 3e^{-\xi^2k/(4\johnsonLindenstraussConstant s^2)}+\geometricMarginConstant \cdot \xi^{\geometricMarginExponent}+\momentConstant \cdot s^{-\momentExponent}+\approxError
\end{align*}
To complete the proof we take $\xi= 1 \wedge \left(4\johnsonLindenstraussConstant\log(3k)/k\right)^{\frac{\momentExponent}{2(\geometricMarginExponent+\momentExponent)}}$ and $s = \xi^{-\geometricMarginExponent/\momentExponent}$ to yield the required bound.
\end{proof}

Our next result (Theorem \ref{clRLBtext}) is a minimax lower bound for the class of distributions introduced in Definition \ref{classificationMeasureClass}. In conjunction with Theorem \ref{thm:geometricMarginThm}, this result shows that compressive ensembles achieve the minimax optimal rate, up to logarithmic factors.

\begin{theorem}[Minimax lower bound] \label{clRLBtext}
Let $\zeroOneLoss$ be the zero-one loss function and take $ \Gamma=(  (\bernsteinExponent, \tysbakovNoiseConstant),
(\geometricMarginConstant, \geometricMarginExponent),(\momentConstant,\momentExponent))$ where $\bernsteinConstant$, $\momentConstant\geq 1$, $\geometricMarginConstant \ge  {2}^{\geometricMarginExponent/2}$, $\bernsteinExponent \in [0,1)$, $\geometricMarginExponent$, $\momentExponent \in (0, \infty)$. Given $n \in \N$, let $\X$ be a Hilbert space of dimension at least $n$, and take $\approxError \in [0,1]$. There exists a constant $c_{\ref{clRLBtext}}>0$ depending only upon $\Gamma$, such that for any empirical classifier  $\hat{\phi}:(\X\times\Y)^n\times \X\rightarrow \Y$, there exists a distribution $\mathrm{P} \in \setOfMeasures_{\text{0,1}}(\Gamma,\approxError)$ with
\begin{align}\label{eq:mainClaimClassificationLB}
\E_{\sampleXY\sim \probDistribution^{\otimes n}}[\excessRiskArg_{\zeroOneLoss,\probDistribution}(\hat{\phi})]
\ge c_{\ref{clRLBtext}} \cdot \left\{
n^{-\frac{\momentExponent\geometricMarginExponent}{2(\momentExponent+\geometricMarginExponent)+\momentExponent\geometricMarginExponent(2-\bernsteinExponent)}} +\approxError \right\}.
\end{align}
\end{theorem}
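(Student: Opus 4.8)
The plan is to construct a family of "hard" distributions in $\setOfMeasures_{\text{0,1}}(\Gamma,\approxError)$ indexed by bit-strings and apply a standard information-theoretic reduction (Assouad's lemma or the Birgé/Fano–Varshamov–Gilbert method). First I would reduce to the case $\approxError=0$: any lower bound of order $n^{-\mu}$ (with $\mu:=\tfrac{\momentExponent\geometricMarginExponent}{2(\momentExponent+\geometricMarginExponent)+\momentExponent\geometricMarginExponent(2-\bernsteinExponent)}$) plus an additive $\approxError$ can be obtained by taking a hard instance with a nonempty "flat'' region on which $\regressionFunction\equiv$ (something making the Bayes-optimal linear classifier incur exactly $\approxError$ excess error) and superimposing the perturbation construction on the complement; the two contributions add. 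So the core is a lower bound of the form $\E[\excessRiskArg_{\zeroOneLoss,\probDistribution}(\hat\phi)]\gtrsim n^{-\mu}$.

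For the core construction I fix the Bayes-optimal hyperplane, say $w_\circ=e_1$, $t_\circ=0$, and place probability mass in a region where the regression function $\regressionFunction(x)$ straddles $1/2$. Concretely I would take a grid of $M$ "bumps'' along directions $e_2,\dots,e_{M+1}$ (orthogonal to $e_1$ and to each other — this is where we use $\dim\X\ge n$), each bump at signed distance $\approx r$ from the hyperplane $\{x_1=0\}$, each carrying mass $\approx q$, and on bump $\ell$ set $\regressionFunction = \tfrac12 + \sigma_\ell\cdot w$ for a bit $\sigma_\ell\in\{\pm1\}$ and a small "noise gap'' $w$. Flipping $\sigma_\ell$ flips which side of the hyperplane the Bayes label favours on that bump, so any classifier pays $\gtrsim q\cdot w$ per mis-set bit, giving (by Assouad) $\E[\excessRiskArg]\gtrsim M q w\cdot(1-\sqrt{n\cdot\mathrm{KL}})$ where $\mathrm{KL}\asymp q w^2$ is the per-bump, per-sample KL between the two sign choices; requiring $n M q w^2\lesssim M$, i.e. $q w^2\lesssim 1/n$, gives the constraint. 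The three distributional assumptions then pin down the free parameters $M,q,r,w$: the geometric margin condition (Assumption \ref{geometricMarginAssumption}(ii)) forces $\int_{\nearDecisionBoundary}|2\regressionFunction-1|\,d\probDistribution_X\le\geometricMarginConstant\xi^{\geometricMarginExponent}$, and since the bumps sit at distance $r$ and contribute total weighted mass $\asymp M q w$, this demands $Mqw\lesssim r^{\geometricMarginExponent}$; the moment condition (Assumption \ref{momentAssumption}) with bumps placed at norm $\asymp R$ in the orthogonal directions demands $Mqw\lesssim R^{-\momentExponent}$, which — balancing $r$ against $R$ — trades off as $r\asymp R$ up to the exponent relation, effectively giving $Mqw\lesssim \rho^{\geometricMarginExponent\momentExponent/(\geometricMarginExponent+\momentExponent)}$ for a common scale $\rho$; and the Tsybakov noise condition (Assumption \ref{tsybakovNoiseAssumption}) forces $w$ to be not too small relative to the mass at noise level $\le w$, i.e. $Mq\lesssim w^{\bernsteinExponent/(1-\bernsteinExponent)}$ (with the remaining mass placed far from the boundary at a "clean'' value). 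One also needs the marginal to be a genuine probability measure, so a reservoir of mass $1-Mq$ is placed on a clean, large-margin, bounded-norm region.

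The remaining step is the optimisation: maximise the lower bound $Mqw$ subject to $q w^2\le c/n$, $Mq\le c\,w^{\bernsteinExponent/(1-\bernsteinExponent)}$, and $Mqw\le c\,\rho^{\geometricMarginExponent\momentExponent/(\geometricMarginExponent+\momentExponent)}$ (with $\rho$ itself a free scale tying $r$ and $R$ through the two geometric conditions, so the last constraint is effectively active and determines the achievable rate once we also use that $r$ must exceed the bump spread $\asymp w$... actually $r\gtrsim$ bump half-width). Solving this linear program in the exponents of $n$ is a routine but slightly delicate computation; I expect it to yield $Mqw\asymp n^{-\mu}$ with exactly the exponent $\mu=\tfrac{\momentExponent\geometricMarginExponent}{2(\momentExponent+\geometricMarginExponent)+\momentExponent\geometricMarginExponent(2-\bernsteinExponent)}$ claimed. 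The constant $\geometricMarginConstant\ge 2^{\geometricMarginExponent/2}$ hypothesis is presumably needed so that the constructed $\probDistribution$ genuinely lies in $\setOfMeasures_{\text{0,1}}(\Gamma,\approxError)$ with the prescribed constants rather than merely up to a constant factor — I would verify Assumptions \ref{geometricMarginAssumption}–\ref{tsybakovNoiseAssumption} hold with the \emph{exact} constants $\geometricMarginConstant,\momentConstant,\tysbakovNoiseConstant$ for small enough $c_{\ref{clRLBtext}}$. The main obstacle is making all four constraints simultaneously consistent with a valid probability distribution while respecting the exact constants: the interplay between the geometric margin condition (which lives on the \emph{original} coordinate $x_1$) and the moment condition (which lives on the \emph{orthogonal} coordinates) must be arranged so that the $\geometricMarginExponent$ and $\momentExponent$ exponents combine harmonically into $\geometricMarginExponent\momentExponent/(\geometricMarginExponent+\momentExponent)$ as the $\logBar(n)$-free twin of Proposition \ref{prop:compressibilityTermForGeometricMarginThm}; getting this bookkeeping right, together with checking Assouad's separation/KL bounds for the specific bump shapes, is the technical heart of the argument.
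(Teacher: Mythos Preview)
Your overall architecture (Assouad's lemma, a hypercube of distributions obtained by flipping the sign of $\regressionFunction-1/2$ on orthogonally placed atoms, then optimising the free parameters subject to the three distributional constraints and the KL budget) is exactly the paper's, and your optimisation heuristics land on the right exponent. However, there is a genuine gap in the construction itself.

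The problem is your claim to \emph{fix} the reference hyperplane as $w_\circ=e_1$ while simultaneously reducing to $\approxError=0$. For $\probDistribution^\sigma\in\setOfMeasures_{\text{0,1}}(\Gamma,0)$ you need \emph{some} linear classifier with zero excess error, i.e.\ one that agrees with the Bayes rule $\probDistribution_X$-a.e. But on bump $\ell$ the Bayes label is $\sigma_\ell$, so any such linear classifier must encode the whole bit-string $\sigma$. A fixed $w_\circ=e_1$ cannot do this: either the bumps sit at fixed locations and $\sign(x_1)$ mislabels roughly half of them (so the construction only lies in $\setOfMeasures_{\text{0,1}}(\Gamma,\approxError)$ for $\approxError\gtrsim Mqw\asymp n^{-\mu}$, which is useless when $\approxError<n^{-\mu}$), or you move the bumps with $\sigma$ (so the marginals differ across $\sigma$ and the $\chi^2$/KL bound blows up). The paper's resolution is to take a $\sigma$-\emph{dependent} direction $w_\circ(\sigma)=\tfrac{1}{\sqrt{2}}\bigl(e_0+q^{-1/2}\sum_{\ell}\sigma_\ell e_\ell\bigr)$ and place the $\ell$th atom at $x_\ell=r\,e_\ell$ (with a single reservoir point $x_0=e_0$). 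This simultaneously (i) makes $\phi_\circ^\sigma$ exactly Bayes-optimal for every $\sigma$, (ii) keeps the marginal $\mu$ identical across all $\sigma$ so that neighbouring $\probDistribution^\sigma$ differ only in one conditional, and (iii) forces the margin to be $|w_\circ^\top x_\ell|=r/\sqrt{2q}$ while the norm is $\|x_\ell\|=r$. It is precisely this $r/\sqrt{q}$ versus $r$ tension---coming from a single scale $r$ and the number of atoms $q$, not from two separate scales ``$r$'' and ``$R$'' to be balanced---that produces the harmonic exponent $\geometricMarginExponent\momentExponent/(\geometricMarginExponent+\momentExponent)$ when you equate $(r/\sqrt{q})^{\geometricMarginExponent}$ with $r^{-\momentExponent}$. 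Your extra parameter $r$ (distance to the fixed hyperplane) is a red herring: once you allow $w_\circ$ to depend on $\sigma$, the construction collapses to the paper's with your $R\leftrightarrow r$ and $M\leftrightarrow q$.

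Two smaller points. First, the $\approxError$ term is not obtained by ``superimposing a flat region'' onto the hypercube construction; the paper instead does a case split, using an entirely separate mixture argument (Lemma~\ref{prop:mixtureLB}) when $\approxError>n^{-\mu}$. Second, the condition $\geometricMarginConstant\ge 2^{\geometricMarginExponent/2}$ is used only to handle the trivial regime $\xi\ge 2^{-1/2}$ in verifying Assumption~\ref{geometricMarginAssumption}(ii), since the reservoir point $x_0=e_0$ has $|w_\circ^\top x_0|=2^{-1/2}$.
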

The proof of Theorem \ref{clRLBtext} is given in Section \ref{Proofs4LB}. The proof 
involves the construction of a family of distributions within the class which are, simultaneously, sufficiently close that they are hard to tell apart based on a sample of size $n$, and sufficiently far apart that failing to do so must incur a large error.

\subsection{Compressive regression with a fast decaying covariance spectrum}\label{sec:reg}
\newcommand{\constantForOLSThm}{C_{\ref{thm:OLSThm}}}
\newcommand{\exponentForOLSThm}{c_{\ref{thm:OLSThm}}}
\newcommand{\constantForOLSLemma}{\tilde{C}_{\ref{thm:OLSThm}}}
\newcommand{\exponentForOLSLemma}{\tilde{c}_{\ref{thm:OLSThm}}}
\newcommand{\weightConstant}{W_{max}}
\newcommand{\spectralConstant}{C_{sp}}
\newcommand{\tspectralConstant}{\tilde{C}_{sp}}
\newcommand{\spectralExponent}{\omega}
\newcommand{\XX}{\mathbb{X}}
\newcommand{\strongConvexityConstant}{\mathrm{H}}
\newcommand{\smoothnessConstant}{\mathrm{H_U}}
\newcommand{\goodX}{(\pgood)_X}
\newcommand{\badX}{(\pbad)_X}
\newcommand{\Wmax}{W_{\max}}

We now turn our attention to strongly convex losses such as the squared loss and logistic loss discussed earlier in the Examples \ref{e2}-\ref{e3}. Throughout the section we take $\X$ to be a separable Hilbert space, $\Y=\R$, $\actionSpace=[-\hypothesisClassBound,\hypothesisClassBound]$, and consider the bounded linear function class $\lowDimensionalFunctionClass\equiv \lowDimensionalFunctionClass^{\text{bl}}=\left\lbrace \bm{u} \mapsto \max\left( \min\left( \bm{w} \cdot \bm{u}-t,\hypothesisClassBound\right),-\hypothesisClassBound\right):\hspace{2mm}\bm{w}\in \R^k,\hspace{1mm}t \in \R\right\rbrace \subseteq \measurableMaps_{\hypothesisClassBound}\left(\R^k\right)$. A function $\varphi:[a,b]\rightarrow \R$ is said to be $\strongConvexityConstant$-strongly convex on $[a,b]$ if 
\begin{align}\label{eq:strongConvexDef}
\varphi((1-t)v_0+tv_1) \leq (1-t) \varphi(v_0)+t\varphi(v_1) - \frac{\strongConvexityConstant}{2}\cdot t(1-t)\cdot (v_0-v_1)^2,
\end{align}
for all $v_0$, $v_1 \in [a,b]$ and $t \in [0,1]$. We say that $\varphi:[a,b]\rightarrow \R$ is  $\strongConvexityConstant$-strongly mid-point convex on $[a,b]$ if \eqref{eq:strongConvexDef} holds for all $v_0$, $v_1 \in [a,b]$ and $t =1/2$. Note that any twice differentiable function $\varphi$ with strictly positive second derivative $\varphi'' \geq \strongConvexityConstant$ is $\strongConvexityConstant$-strongly convex (Lemma \ref{lemma:striclyPositiveSecondDerivativeImpliesStronglyConvex}), and hence $\strongConvexityConstant$-strongly mid-point convex.

\begin{assumption}[Strongly Convex loss]\label{stronglyConvexLossAssumption} We shall say that the loss function $\lossFunction: [-\hypothesisClassBound,\hypothesisClassBound]\times \Y \rightarrow [0,\lossFunctionBound]$ is strongly mid-point convex with constant $\strongConvexityConstant >0$ if for all $y \in \Y$, the function $v \mapsto \lossFunction(v,y)$ is $\strongConvexityConstant$-strongly mid-point convex on $[-\hypothesisClassBound,\hypothesisClassBound]$.
\end{assumption}

Lemma \ref{lemma:BernsteinConditionForLipschitzStronglyConvexLosses} shows that any loss function satisfying Assumption \ref{stronglyConvexLossAssumption} also satisfies the Bernstein-Tsybakov condition with exponent $\bernsteinExponent =1$, and constant $\bernsteinConstant = {4\lipschitzConstantLoss^2}/{\strongConvexityConstant}$. We shall introduce two further distributional assumptions. Firstly, Assumption \ref{ass:linearAproximationCondition} concerns the existence of a linear predictor of bounded norm with low excess error. Secondly, Assumption \ref{spectraldecay} requires that the eigenvalues of the underlying covariance matrix decay at an exponential rate.

\begin{assumption}[Linear approximation condition]\label{ass:linearAproximationCondition} We shall say that $\probDistribution$ on $\X \times \Y$ satisfies the linear approximation condition with loss $\lossFunction:[-\hypothesisClassBound,\hypothesisClassBound] \times \Y \rightarrow [0,\lossFunctionBound]$ and constants $\Wmax \geq 1$, $\approxError \in [0,1]$ if $\X$ is a Hilbert space and there exists $(w_\circ,t_\circ) \in \X \times \R$ with and $\|w_\circ\|_2 \leq \Wmax$ such that the predictor $\phi_\circ:\X\rightarrow \R$ defined by $\phi_\circ(x):=\min\{\hypothesisClassBound,\max\{-\hypothesisClassBound,w_\circ^\top x+t_\circ\}\}$ has excess risk $\excessRisk(\phi_\circ)\leq \approxError$.
\end{assumption}

\newcommand{\spectralDecay}{\omega}

Furthermore, we assume a fast decay on the eigen-spectrum of the covariance operator of the marginal distribution.

\begin{assumption}[Spectral decay condition]\label{spectraldecay}
We say that $\probDistribution$ satisfies the spectral decay condition with constant $\spectralConstant\geq 1$, decay $\spectralDecay\in (0,1)$ if
$\probDistribution_X$ has covariance operator $\Upsilon$  with singular values
$\lambda_r(\Upsilon) \le \spectralConstant \cdot 
\spectralDecay^r$ for all $r\in\N$.
\end{assumption}

We define the following class of distributions.

\begin{defn}[Compressive regression measure class]\label{regressionMeasureClass} Given a bounded loss function  $\lossFunction: [-\hypothesisClassBound,\hypothesisClassBound]\times \Y \rightarrow [0,\lossFunctionBound]$ and parameters $ \Gamma=(\Wmax,\spectralConstant,\spectralDecay) \in [1,\infty) \times \left([1,\infty)\times (0,1)\right)$ we let $\setOfMeasures_{\lossFunction}(\Gamma,\approxError)$ denote the set of all Borel probability distributions $\probDistribution$ on $\X \times \Y$, for which  Assumption \ref{ass:linearAproximationCondition} is satisfied with loss function $\lossFunction$ and parameters $(\Wmax,\approxError)$ and Assumption \ref{spectraldecay} is satisfied with parameters $(\spectralConstant,\spectralDecay)$.
\end{defn}

\begin{theorem}[Compressive regression upper bound]\label{thm:OLSThm}
Let $\lossFunction: [-\hypothesisClassBound,\hypothesisClassBound]\times \Y \rightarrow [0,\lossFunctionBound]$ be a loss function satisfying Assumptions \ref{boundedLossFunctionAssumption}, \ref{lipschitzLossFunctionAssumption}, \ref{stronglyConvexLossAssumption} with parameters $\Omega:=(\hypothesisClassBound,\lossFunctionBound,\lipschitzConstantLoss,\strongConvexityConstant) \in [1,\infty)^2 \times (0,\infty)$ and take distributional parameters 
$\Gamma=(\Wmax,\spectralConstant,\spectralDecay) \in [1,\infty)^2\times (0,1)$ and $\approxError \in [0,1]$. Suppose further that $(\randomProjectionMeasure)_{k \in \N}$ satisfy the Johnson Lindenstrauss property with constant  $\johnsonLindenstraussConstant\geq 1$. There exist constants $\constantForOLSThm \ge 1$, $\exponentForOLSThm \in (0,1)$
depending only on $\Gamma$, $\Omega$ and $\johnsonLindenstraussConstant$, such that for any $n,k,m \in \N$, $\approxError \in [0,1]$ 
and $\delta \in (0,1)$, for any probability distribution $\mathrm{\probDistribution} \in {\mathcal P}_{\lossFunction}(\Gamma,\approxError)$, 
with probability at least $1-\delta$ 
\begin{align*}
\excessRiskArg_{\lossFunction,\mathrm{\probDistribution}}\left(\functionEstimate_{n,k,m}\right)\leq \constantForOLSThm \left\lbrace \exp(-\exponentForOLSThm \cdot k)+   
\frac{k \cdot \logBar(n)+\logBar(1/\delta)}{ n} 
+{\frac{\log(1/\delta)}{ m}} +\approxError
\right\rbrace.
\end{align*}
Moreover, with $k=\lceil \logBar(n)\rceil$ with probability at least $1-\delta$ we have
\begin{align}
\excessRiskArg_{\lossFunction,\mathrm{\probDistribution}}\left(\functionEstimate_{n,k,m}\right)\leq \constantForOLSThm \left\lbrace  
\frac{\logBar^2(n)+\logBar(1/\delta)}{ n}
+{\frac{\log(1/\delta)}{ m}} +\approxError
\right\rbrace.  \label{regUB}
\end{align}
\end{theorem}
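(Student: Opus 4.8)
The proof will follow the same two-step pattern as Theorem~\ref{thm:geometricMarginThm}: instantiate the general bound of Theorem~\ref{ThmMainResult} and then control the compressibility function $\compressibilityFunction_{\probDistribution}$ for the bounded linear class. First I would check the hypotheses of Theorem~\ref{ThmMainResult}. Assumptions~\ref{boundedLossFunctionAssumption} and~\ref{lipschitzLossFunctionAssumption} are assumed; Assumption~\ref{quasiConvexityAssumption} holds with $\quasiConvexityConstant=1$ and arithmetic-mean averaging, since a $\strongConvexityConstant$-strongly mid-point convex loss that is also Lipschitz in its first argument is continuous and mid-point convex, hence convex, so Lemma~\ref{convexImpliesQuasiConvex} applies (the mean of points in $[-\hypothesisClassBound,\hypothesisClassBound]$ again lies in $[-\hypothesisClassBound,\hypothesisClassBound]$); Assumption~\ref{logarithmicCoveringNumbersAssumption} for $\lowDimensionalFunctionClass^{\text{bl}}$ follows from $\pseudoDimension(\lowDimensionalFunctionClass^{\text{bl}})\le k+1$ (Appendix~\ref{verifying}); and Assumption~\ref{bernsteinTysbakovMarginTypeAssumption} holds with $\bernsteinExponent=1$, $\bernsteinConstant=4\lipschitzConstantLoss^2/\strongConvexityConstant$ by Lemma~\ref{lemma:BernsteinConditionForLipschitzStronglyConvexLosses}. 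Setting $\bernsteinExponent=1$ in Theorem~\ref{ThmMainResult} makes the exponent $\tfrac{1}{2-\bernsteinExponent}$ equal to $1$, which already produces the shape $\excessRisk(\functionEstimate_{n,k,m})\le C\{\compressibilityFunction_{\probDistribution}(k)+\tfrac{k\logBar(n)+\logBar(1/\delta)}{n}+\tfrac{\logBar(1/\delta)}{m}\}$ with $C$ depending only on the parameters in $\Omega$, $\Gamma$ and $\johnsonLindenstraussConstant$.

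It then remains to prove the regression analogue of Proposition~\ref{prop:compressibilityTermForGeometricMarginThm}: under Assumptions~\ref{ass:linearAproximationCondition} and~\ref{spectraldecay} there exist constants $\constantForOLSLemma\ge1$, $\exponentForOLSLemma\in(0,1)$ with $\compressibilityFunction_{\probDistribution}(k)\le\constantForOLSLemma\exp(-\exponentForOLSLemma k)+\approxError$ for all $k$. Fix $(w_\circ,t_\circ)$ from Assumption~\ref{ass:linearAproximationCondition} with $\|w_\circ\|\le\Wmax$ and $\excessRisk(\phi_\circ)\le\approxError$, and for $\tilde w\in\R^k$ put $f_{\tilde w}(z):=\mathrm{clip}_{\hypothesisClassBound}(\tilde w^{\!\top}z+t_\circ)\in\lowDimensionalFunctionClass^{\text{bl}}$. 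Since the loss is $\lipschitzConstantLoss$-Lipschitz and clipping is $1$-Lipschitz, $\excessRisk(f_{\tilde w}\circ\randomProjection)\le\lipschitzConstantLoss\,\E_X|\tilde w^{\!\top}\randomProjection(X)-w_\circ^{\!\top}X|+\approxError$ for every $\randomProjection\in\setOfRandomProjections$. Let $\Pi_r$ be the orthogonal projection of $\X$ onto a subspace of dimension at most $r+2$ spanned by the top $r$ eigenvectors of the covariance operator $\Upsilon$ of $\probDistribution_X$, together with $\E[X]$ and $w_\circ$. The spectral decay condition gives $\E\|(I-\Pi_r)X\|^2=\E\|(I-\Pi_r)(X-\E X)\|^2\le\sum_{j>r}\lambda_j(\Upsilon)\le\spectralConstant\spectralDecay^{r+1}/(1-\spectralDecay)$, so the ``tail'' $(I-\Pi_r)X$ is exponentially small in $L^2(\probDistribution_X)$.

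The key step — and the one I expect to be the main obstacle — is to avoid relying on an $\epsilon$-accurate embedding of $\mathrm{range}(\Pi_r)$ with $\epsilon\to0$, since a careful accounting shows that route only delivers the polynomial rate $\compressibilityFunction_{\probDistribution}(k)=O(\sqrt{\logBar(k)/k})$. Instead, a net argument over the unit sphere of the finite-dimensional subspace $\mathrm{range}(\Pi_r)$ combined with Assumption~\ref{JLAssumption} shows that, as soon as $k\ge c_0\johnsonLindenstraussConstant(r+\log(1/\delta'))$, with probability at least $1-\delta'$ the map $\randomProjection$ restricted to $\mathrm{range}(\Pi_r)$ is a $\tfrac12$-embedding, hence \emph{injective} there with a left inverse $L$ of norm $\|L\|\le\sqrt2$. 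Setting $\tilde w:=L^{\!\top}\Pi_r w_\circ$ (so $\|\tilde w\|\le\sqrt2\,\Wmax$) makes $\tilde w^{\!\top}\randomProjection(z)=w_\circ^{\!\top}z$ hold \emph{exactly} for every $z\in\mathrm{range}(\Pi_r)$, so that $\tilde w^{\!\top}\randomProjection(X)-w_\circ^{\!\top}X=\tilde w^{\!\top}\randomProjection((I-\Pi_r)X)-w_\circ^{\!\top}(I-\Pi_r)X$, which by Cauchy--Schwarz and JL norm preservation applied to the single point $(I-\Pi_r)X$ is at most $(\sqrt3+1)\Wmax\|(I-\Pi_r)X\|$ on an event of $\randomProjection$-probability at least $1-\delta'$. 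Interchanging $\E_\randomProjection$ and $\E_X$ by Fubini, bounding the complementary event by the trivial $\lossFunctionBound$ of Assumption~\ref{boundedLossFunctionAssumption}, and taking $\E_X$ yields $\compressibilityFunction_{\probDistribution}(k)\le\lipschitzConstantLoss(\sqrt3+1)\Wmax\sqrt{\spectralConstant/(1-\spectralDecay)}\,\spectralDecay^{(r+1)/2}+\lossFunctionBound\,\delta'+\approxError$. Choosing $r\asymp k/\johnsonLindenstraussConstant$ and $\delta'\asymp\exp(-k/(c_0\johnsonLindenstraussConstant))$ balances the first two terms and gives $\compressibilityFunction_{\probDistribution}(k)\le\constantForOLSLemma\exp(-\exponentForOLSLemma k)+\approxError$, with $\exponentForOLSLemma$ a positive constant of order $\min\{1,\log(1/\spectralDecay)\}/\johnsonLindenstraussConstant$.

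Combining this compressibility bound with the specialised form of Theorem~\ref{ThmMainResult} above gives the first displayed inequality of Theorem~\ref{thm:OLSThm}. For the second inequality one takes $k$ of order $\logBar(n)$ — concretely $k=\lceil(1/\exponentForOLSThm)\logBar(n)\rceil$, so that $\exp(-\exponentForOLSThm k)\le n^{-1}$ — which renders the compressibility contribution $O(n^{-1})$ while the statistical term becomes $k\logBar(n)/n=O(\logBar^2(n)/n)$, the terms $\logBar(1/\delta)/n$, $\log(1/\delta)/m$ and $\approxError$ being unchanged. The technical care is thus concentrated in the compressibility proposition: (i) checking that the exact-recovery construction is compatible with clipping and with the unbounded support of $\probDistribution_X$ (handled via the $L^2$ tail bound and the boundedness fallback), and (ii) confirming that a constant-accuracy JL embedding of the $O(r)$-dimensional head subspace suffices — it is precisely this that permits $r$ to be taken proportional to $k$, converting the geometric spectral decay into an \emph{exponential} (rather than merely polynomial) decay of $\compressibilityFunction_{\probDistribution}(k)$.
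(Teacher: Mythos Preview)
Your overall architecture is correct and matches the paper: invoke Theorem~\ref{ThmMainResult} with $\bernsteinExponent=1$ (via Lemma~\ref{lemma:BernsteinConditionForLipschitzStronglyConvexLosses}), verify Assumptions~\ref{quasiConvexityAssumption} and~\ref{logarithmicCoveringNumbersAssumption} for $\lowDimensionalFunctionClass^{\text{bl}}$, and then bound $\compressibilityFunction_{\probDistribution}(k)$ by $\constantForOLSLemma\exp(-\exponentForOLSLemma k)+\approxError$. The substantive difference lies in how the compressibility bound is obtained. The paper's Proposition~\ref{prop:compressibilityTermForOLSThm} proceeds indirectly through a ghost sample $\bm{x}_{1:q}\in\X^q$ with $q\asymp e^{ck}$: by the probabilistic method it fixes one on which (i) the empirical-to-population deviation of $(f\circ A-\phi_\circ)^2$ is uniformly small across $A$ (a Rademacher bound integrated over $\randomProjectionMeasure$ via Fubini) and (ii) the tail eigenvalues of the empirical covariance are controlled; it then applies the fixed-design Lemma~\ref{LSlemma}---itself built on the subspace-embedding Lemma~\ref{baraniuk}---to that finite sample and transfers back to the population via (i). You instead work directly at the population level: project onto the span of the top $r$ eigenvectors of $\Upsilon$ together with $w_\circ$ and $\E[X]$, use a constant-distortion JL embedding of this $(r{+}2)$-dimensional head to construct an exact left inverse (so that $\tilde w^\top A$ agrees with $w_\circ^\top$ on the head), and control the tail $(I-\Pi_r)X$ in $L^2$ via spectral decay. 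Both routes rest on the same core observation---a \emph{constant}-accuracy subspace embedding succeeds with probability $1-e^{-\Omega(k)}$ once $r\asymp k$, which is precisely what upgrades polynomial to exponential decay of $\compressibilityFunction_{\probDistribution}$---but your packaging avoids the ghost-sample and symmetrisation machinery, at the price of not isolating a reusable fixed-design lemma in the style of \cite{Slawski}. One minor remark: on the bad JL event for the single tail point $(I-\Pi_r)X$ you should invoke the clip bound $|f_{\tilde w}(AX)-\phi_\circ(X)|\le 2\hypothesisClassBound$ (and then multiply by $\lipschitzConstantLoss$, or fall back to $\lossFunctionBound$ on the loss scale), since $\|A((I-\Pi_r)X)\|$ is not otherwise controlled there; and your choice $k=\lceil(1/\exponentForOLSThm)\logBar(n)\rceil$ is indeed what makes the second display go through, with the extra factor absorbed into $\constantForOLSThm$.
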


Theorem \ref{thm:OLSThm} complements recent work by \cite{Slawski}, which gives an expectation bound for compressive ordinary least squares (OLS) regression in the fixed design setting and highlights a parallel with Principal Component Regression (PCR) through the role of spectral decay. 
Our result shows that a similar effect holds in the in random design setting, for a larger class of loss functions, and our guarantees hold with high probability rather than just in expectation w.r.t. the training sample. {We should mention that guarantees that do not require a spectral decay condition are known for a form of compressive ridge-regularised kernel regression \citep{yang2017}. However, a comparison would be difficult as the setting is very different. In particular, the goal in that work was to preserve the optimality of a ridge-regularised model when compressing the kernel for computational speedup, whereas our analysis is aimed to bring out the regularisation effect of a compressive feature representation itself, highlighting its ability to transform an arbitrary high dimensional problem that is not learnable from a small sample into a low dimensional problem that is nearly optimally learnable.} Theorem \ref{thm:OLSThm} is a consequence of Theorem \ref{ThmMainResult} combined with Lemma \ref{lemma:BernsteinConditionForLipschitzStronglyConvexLosses} and Proposition \ref{prop:compressibilityTermForOLSThm}. 

\begin{prop}[Compressibility of generalised linear regression]\label{prop:compressibilityTermForOLSThm} Take $\hypothesisClassBound$, $\lipschitzConstantLoss \in [1,\infty)$, $\johnsonLindenstraussConstant \in [1,\infty)$ and $\Gamma=(\Wmax,\spectralConstant,\spectralDecay) \in [1,\infty)^2\times (0,1)$. Let $\lossFunction: [-\hypothesisClassBound,\hypothesisClassBound]\times \Y \rightarrow [0,\infty)$ be a loss function satisfying Assumption \ref{lipschitzLossFunctionAssumption} with parameter $\lipschitzConstantLoss$. Suppose that $(\randomProjectionMeasure)_{k \in \N}$ satisfy the Johnson Lindenstrauss property with constant  $\johnsonLindenstraussConstant\geq 1$. 
There exist constants $\constantForOLSLemma,\exponentForOLSLemma>0$  depending only on $\hypothesisClassBound$, $\lipschitzConstantLoss$, $\Gamma$ and $\johnsonLindenstraussConstant$ such that for all $k \in \N$, $\approxError \in [0,1]$, and all $\probDistribution\in \setOfMeasures_{\lossFunction}(\Gamma,\approxError)$,
\begin{align}
 \compressibilityFunction_\probDistribution(k)\leq 
\constantForOLSLemma  \cdot  \exp(-\exponentForOLSLemma\cdot k)+\approxError.
\end{align}
\end{prop}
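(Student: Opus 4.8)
The plan is to compare the compressive predictor against the near-optimal bounded linear predictor $\phi_\circ$ furnished by Assumption \ref{ass:linearAproximationCondition}: for a fixed random projection $\randomProjection$ I will exhibit a single clipped-linear $f_\randomProjection\in\lowDimensionalFunctionClass$ with $f_\randomProjection\circ\randomProjection$ close to $\phi_\circ$, and then bound $\E_\randomProjection\bigl[\excessRisk(f_\randomProjection\circ\randomProjection)\bigr]$, which upper bounds $\compressibilityFunction_\probDistribution(k)$. Writing $\Pi(t):=\max\{-\hypothesisClassBound,\min\{\hypothesisClassBound,t\}\}$, we have $\phi_\circ(x)=\Pi(w_\circ^\top x+t_\circ)$ with $\|w_\circ\|\le\Wmax$ and $\excessRisk(\phi_\circ)\le\approxError$. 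Because $\lossFunction$ is $\lipschitzConstantLoss$-Lipschitz in its first argument and both $\phi_\circ$ and every member of $\lowDimensionalFunctionClass$ take values in $[-\hypothesisClassBound,\hypothesisClassBound]$, for any $f\in\lowDimensionalFunctionClass$ one gets $\excessRisk(f\circ\randomProjection)\le\approxError+\lipschitzConstantLoss\,\E_X\!\bigl[\min\{2\hypothesisClassBound,\,|f(\randomProjection(X))-\phi_\circ(X)|\}\bigr]$, and in particular $\excessRisk(f\circ\randomProjection)\le\approxError+2\lipschitzConstantLoss\hypothesisClassBound$ unconditionally. Finally, $\mu:=\E[X]$ exists (Assumption \ref{spectraldecay} posits a covariance operator $\Upsilon$ for $\probDistribution_X$), and centring by $\mu$ merely shifts the intercept of $\phi_\circ$, so it suffices to approximate $\xi\mapsto\Pi(w_\circ^\top\xi+s_\circ)$ where $\xi:=X-\mu$ has mean zero and covariance $\Upsilon$.

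The crux is a split into a low-dimensional principal part and an exponentially small spectral residual. Let $V_r$ be the span of the top $r$ eigenvectors of $\Upsilon$, let $P_r$ be the orthogonal projection onto $V_r$, $Q_r:=I-P_r$, and set $r:=\lfloor c_1k\rfloor$ with a small constant $c_1=c_1(\johnsonLindenstraussConstant)>0$. Assumption \ref{spectraldecay} yields $\E\|Q_r\xi\|^2=\sum_{j>r}\lambda_j(\Upsilon)\le\spectralConstant\spectralDecay^{r+1}/(1-\spectralDecay)$ and $\E\langle Q_rw_\circ,\xi\rangle^2=(Q_rw_\circ)^\top\Upsilon Q_rw_\circ\le\Wmax^2\spectralConstant\spectralDecay^{r+1}$, both $e^{-\Omega(k)}$ since $r\asymp k$. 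On the projection side, applying Assumption \ref{JLAssumption} with a fixed constant distortion to a $\tfrac1{10}$-net of the unit sphere of $V_r$ (of cardinality $e^{O(r)}$) together with the standard ``pull-through-the-net'' argument shows that, with probability at least $1-\delta_{\mathrm{JL}}$ where $\delta_{\mathrm{JL}}=e^{-c_2k}$, the linear map $\randomProjection$ restricted to $V_r$ is a $(1\pm\epsilon_1)$-embedding for a fixed $\epsilon_1\in(0,1)$; on this event $\randomProjection|_{V_r}$ is injective, its pseudo-inverse $\randomProjection^+$ satisfies $\randomProjection^+\randomProjection v=v$ for all $v\in V_r$, and $\|\randomProjection^+\|_{\mathrm{op}}\le(1-\epsilon_1)^{-1/2}$. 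The constant $c_1$ is chosen small enough that the net has few enough points for Assumption \ref{JLAssumption} to apply at the given $k$; any $r=\Theta(k)$ small enough does, with $c_1,c_2,\epsilon_1$ depending only on $\johnsonLindenstraussConstant$.

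On the good event I take $f_\randomProjection(z):=\Pi\bigl(\langle(\randomProjection^+)^{*}P_rw_\circ,\,z-\randomProjection(\mu)\rangle+s_\circ\bigr)$, which has the required clipped-linear form and so lies in $\lowDimensionalFunctionClass$ (on the bad event, take any fixed element). Using linearity of $\randomProjection$, the identity $\randomProjection^+\randomProjection|_{V_r}=\mathrm{id}$, the $1$-Lipschitzness of $\Pi$, and the orthogonal decomposition $\langle w_\circ,\xi\rangle=\langle P_rw_\circ,P_r\xi\rangle+\langle Q_rw_\circ,Q_r\xi\rangle$, on the good event
\[
|f_\randomProjection(\randomProjection(X))-\phi_\circ(X)|\le(1-\epsilon_1)^{-1/2}\Wmax\,\|\randomProjection(Q_r\xi)\|+|\langle Q_rw_\circ,Q_r\xi\rangle|.
\]
The second term contributes $\E_X|\langle Q_rw_\circ,Q_r\xi\rangle|\le\Wmax(\spectralConstant\spectralDecay^{r+1})^{1/2}$. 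For the first, condition on $\xi$ and apply Assumption \ref{JLAssumption} to the two-point set $\{0,Q_r\xi\}$ with $\epsilon=\tfrac12$: then $\Prob\bigl(\|\randomProjection(Q_r\xi)\|^2>\tfrac32\|Q_r\xi\|^2\,\big|\,\xi\bigr)\le2e^{-k/(4\johnsonLindenstraussConstant)}$, and since the quantity is truncated at $2\hypothesisClassBound$ the uncontrolled tail costs only $O(\hypothesisClassBound)\,e^{-k/(4\johnsonLindenstraussConstant)}$, so $\E_\randomProjection\bigl[\min\{2\hypothesisClassBound,(1-\epsilon_1)^{-1/2}\Wmax\|\randomProjection(Q_r\xi)\|\}\,\big|\,\xi\bigr]\le(1-\epsilon_1)^{-1/2}\Wmax\sqrt{3/2}\,\|Q_r\xi\|+O(\hypothesisClassBound)\,e^{-k/(4\johnsonLindenstraussConstant)}$; integrating over $\xi$ with $\E\|Q_r\xi\|\le(\E\|Q_r\xi\|^2)^{1/2}$ keeps this $e^{-\Omega(k)}$. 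Adding the bad-event contribution $\delta_{\mathrm{JL}}(2\lipschitzConstantLoss\hypothesisClassBound+\approxError)$ and collecting the geometric factors ($\spectralDecay^{r}$, $e^{-k/(4\johnsonLindenstraussConstant)}$, $e^{-c_2k}$, each $e^{-\Theta(k)}$) gives $\compressibilityFunction_\probDistribution(k)\le\E_\randomProjection[\excessRisk(f_\randomProjection\circ\randomProjection)]\le\constantForOLSLemma e^{-\exponentForOLSLemma k}+\approxError$; for the boundedly many $k$ below the threshold at which the net argument applies, the trivial bound $\excessRisk\le2\lipschitzConstantLoss\hypothesisClassBound+\approxError$ is absorbed into $\constantForOLSLemma$.

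The hard part is that Assumption \ref{JLAssumption} controls only finitely many points, whereas here the features live in an infinite-dimensional space with unbounded norm, so neither the pointwise inner-product preservation of Lemma \ref{lemma:ClassificationPreservationForSinglePointAwayFromMargin} nor a crude net over the whole ball is available. The resolution is the $r\asymp k$ balance: taking $r=\Theta(k)$ makes $V_r$ embeddable into $\R^k$ with \emph{constant} distortion — hence an \emph{exact} pseudo-inverse identity on $V_r$, which is precisely what lets the target be written as a genuine clipped-linear function of $\randomProjection(x)$ — and at the same time makes the spectral residual $\sum_{j>r}\lambda_j(\Upsilon)=O(\spectralDecay^{\,\Theta(k)})$ exponentially small in $k$; the boundedness of $\actionSpace$ (the truncation at $2\hypothesisClassBound$) then tames the single place — $\|\randomProjection(Q_r\xi)\|$ on the rare event that the two-point embedding fails — where Assumption \ref{JLAssumption} offers no tail control. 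A minor technical caveat is that the subspace-embedding step uses linearity of $\randomProjection$, which holds for all Johnson--Lindenstrauss maps considered in this section.
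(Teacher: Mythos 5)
Your proof is correct, but it takes a genuinely different route from the paper's. The paper's proof works at the \emph{empirical} level: it uses the probabilistic method (via Markov's inequality on two separate events $E_{\mathrm{rd}}$ and $E_{\mathrm{sp}}$) to exhibit a single $q$-tuple $\bm{x}_{1:q}$ on which both (i) the empirical spectrum is controlled by the population spectrum and (ii) the integrated empirical-to-population deviation $\Delta_\randomProjection$ of the squared-difference class is small, then applies a fixed-design compressive-OLS bound (Lemma \ref{LSlemma}, derived from \cite{Slawski}) to the resulting finite matrix $\XX$ and transfers from empirical to population error via a symmetrization/Rademacher bound (Lemma \ref{logCoverNumbersImplySmallLocalRademacherBoundLemma}). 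You instead construct the approximant directly at the \emph{population} level: centre by $\mu$, split $w_\circ^\top\xi$ across the top-$r$ eigenspace $V_r$ of $\Upsilon$ and its complement, use an $r$-dimensional subspace embedding (the $\tfrac1{10}$-net argument, which is exactly the content of the Baraniuk et al.\ result the paper also invokes as Lemma \ref{baraniuk}) to pull $P_rw_\circ$ through $\randomProjection$ exactly on a good event, and charge the residual $\langle Q_rw_\circ,Q_r\xi\rangle$ and the distorted $\randomProjection Q_r\xi$ to the exponential spectral tail. Your route avoids the Rademacher symmetrization and the finite-design intermediary entirely and is arguably more transparent about \emph{why} the argument works (the $r\asymp k$ balance); it also handles the mean $\mu\neq0$ explicitly, a detail the paper's use of $\Sigma=\E[XX^\top]$ in $E_{\mathrm{sp}}$ (versus the covariance operator $\Upsilon$ of Assumption \ref{spectraldecay}) glosses over, though this is innocuous because a rank-one shift cannot slow exponential eigendecay. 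The paper's approach, on the other hand, slots in a pre-existing fixed-design lemma and makes the connection to PCR/compressive OLS of \cite{Slawski} explicit. Both arguments rely on the linearity of the random compressor, which you correctly flag.

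Two small places in your write-up worth tightening if incorporated: when you truncate $\min\{2\hypothesisClassBound,\cdot\}$ across the sum of two terms, you are implicitly using the subadditivity $\min\{c,a+b\}\le\min\{c,a\}+\min\{c,b\}$ for $a,b,c\ge0$, which is correct but worth stating; and the events ``good subspace embedding'' and ``two-point preservation of $\{0,Q_r\xi\}$'' are both measurable in $\randomProjection$ and need not be independent, so a union bound (rather than nested conditioning) is the cleanest way to combine their failure probabilities. Neither affects the exponential rate.
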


To prove Proposition \ref{prop:compressibilityTermForOLSThm} we use the following consequence of \cite[Theorem 1]{Slawski}. For completeness, we give a short alternative proof of Lemma \ref{LSlemma} in Appendix \ref{A:Slawski}.

\newcommand{\tU}{\tilde{U}}
\newcommand{\tL}{\tilde{\Lambda}}
\newcommand{\johnsonLindenstraussConstantSlawski}{c_{S}}

\begin{lemma}[Excess risk of compressive OLS with fixed design]\label{LSlemma} Given $q$, $d$, $k$ $\in \N$, take a vector $w_{\diamond} \in \R^d$, a $d \times q$ matrix $\XX$ and let $A$ a random $k \times d$ matrix satisfying Assumption \ref{JLAssumption} with constant $\johnsonLindenstraussConstant$. There exists a constant $\johnsonLindenstraussConstantSlawski \in (0,1)$, that depends only on $\johnsonLindenstraussConstant$, such that for any $r <\min\{q,k\}$ the following holds with probability at least $1-(24^r+2q)  e^{-\johnsonLindenstraussConstantSlawski k}$,
\begin{align}\label{eq:conclusionOfSlawskiTypeLemma}
\inf_{w\in \R^k}\|w^\top\randomProjection \XX - w_{\diamond}^\top \XX\|_2^2
\le 18\|w_{\diamond}\|_2^2~\sum_{j \ge r+1} \lambda_j(\XX\XX^\top).
\end{align}
\end{lemma}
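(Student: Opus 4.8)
The plan is to approximate $w_\diamond^\top \XX$ by its projection onto the top-$r$ left singular subspace of $\XX$, and then argue that a Johnson-Lindenstrauss map $\randomProjection$ approximately preserves both the relevant inner products and norms involved in the least-squares problem. Write the (thin) SVD $\XX = \sum_{j} \sigma_j u_j g_j^\top$ with $\sigma_j^2 = \lambda_j(\XX\XX^\top)$, let $U_r = [u_1,\dots,u_r]$ and let $w_\diamond = U_r U_r^\top w_\diamond + w_\diamond^\perp$ be the corresponding orthogonal decomposition. The quantity $\sum_{j\ge r+1}\lambda_j(\XX\XX^\top)$ is exactly $\|(w_\diamond^\perp)^\top \XX\|_2^2$ when $w_\diamond^\perp$ has unit norm, and in general is controlled by $\|w_\diamond\|_2^2$ times the tail sum; so the "ideal" (uncompressed) predictor using only the top-$r$ directions already achieves the target bound up to a constant. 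The task is to show a compressed predictor $w \in \R^k$ does essentially as well.

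The key steps, in order: (i) Set $V = \randomProjection U_r \in \R^{k\times r}$. Apply Assumption \ref{JLAssumption} to the point set consisting of the $2r$ points $\{\pm u_j\}_{j\in[r]}$ together with pairwise sums/differences (or directly invoke a subspace-embedding consequence of JL via a standard net argument over the unit sphere of the $r$-dimensional subspace): this costs a net of size $\le 24^r$, which is the source of the $24^r$ term in the failure probability, and yields that with probability $\ge 1 - 24^r e^{-\johnsonLindenstraussConstantSlawski k}$ the map $\randomProjection$ acts as a $(1\pm 1/2)$-isometry on $\mathrm{span}(u_1,\dots,u_r)$, hence $V$ has full column rank with well-conditioned pseudoinverse. (ii) Apply Assumption \ref{JLAssumption} to the $q$ columns of $\XX$ paired with $u_1,\dots,u_r$ (and with $0$), costing the $2q$ term, to control the cross terms $\langle \randomProjection \XX g_\ell, \randomProjection u_j\rangle$ versus $\langle \XX g_\ell, u_j\rangle = \sigma_\ell \mathbf{1}\{\ell=j\}$, and also to control $\|\randomProjection w_\diamond^\perp{}^\top\XX\|$-type quantities. (iii) Choose the explicit candidate $w = (V^\top V)^{-1} V^\top (\randomProjection \XX \XX^\top w_\diamond)/\|\cdot\|$ — i.e., the coefficient vector so that $w^\top \randomProjection$ best matches $w_\diamond^\top$ restricted to the top-$r$ subspace after compression — and expand $\|w^\top \randomProjection \XX - w_\diamond^\top \XX\|_2^2$. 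Split the error into the bias from discarding $w_\diamond^\perp$ (bounded by $\|w_\diamond\|_2^2 \sum_{j\ge r+1}\lambda_j$ up to a JL distortion factor) plus the distortion incurred on the retained top-$r$ part (which vanishes in the uncompressed case and is $O(\|w_\diamond\|^2 \sum_{j\ge r+1}\lambda_j)$ — or even smaller — once the $(1\pm 1/2)$-isometry and cross-term bounds from (i)–(ii) are in force). Collecting constants yields the factor $18$.

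The main obstacle I expect is step (iii): carefully bookkeeping the error decomposition so that the distortion on the \emph{retained} subspace is genuinely absorbed into $\|w_\diamond\|_2^2\sum_{j\ge r+1}\lambda_j(\XX\XX^\top)$ rather than producing a stray additive term depending on $\sum_{j\le r}\lambda_j$ (which would be useless, as it need not decay). The trick is that $w^\top \randomProjection$ is being asked only to reproduce a vector lying in a subspace that $\randomProjection$ embeds almost isometrically, so the residual there is controlled by how badly $\randomProjection$ mixes the top-$r$ directions \emph{into each other and into the tail} — and both effects are $O(\epsilon)$-small with $\epsilon$ a fixed constant once $k \gtrsim r$, so they multiply the already-small tail energy rather than the large head energy. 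A secondary nuisance is handling the union bound cleanly so the failure probability reads exactly $(24^r + 2q)e^{-\johnsonLindenstraussConstantSlawski k}$; this just requires choosing $\epsilon$ to be an absolute constant (e.g. $1/2$) inside Assumption \ref{JLAssumption} and letting $\johnsonLindenstraussConstantSlawski$ depend only on $\johnsonLindenstraussConstant$ through that choice. Normalising $w_\diamond$ to unit norm at the outset (and rescaling at the end) removes the $\|w_\diamond\|_2^2$ factor from the intermediate computations and makes the constant-tracking transparent.
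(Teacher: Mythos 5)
Your overall architecture -- subspace-embed the top-$r$ left singular subspace of $\XX$, handle the tail separately, choose a candidate $w$ and split the error -- matches the paper's. But there are two concrete gaps in the details that, as written, prevent the argument from closing.

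First, your candidate $w = (V^\top V)^{-1}V^\top(\randomProjection \XX\XX^\top w_\diamond)/\|\cdot\|$ is not the right one, and the claim that its retained-subspace error can be ``absorbed into $\|w_\diamond\|_2^2\sum_{j\ge r+1}\lambda_j$'' because the mixing is $O(\epsilon)$ does not survive scrutiny: with any only-approximate match on the top-$r$ subspace, the residual on that subspace is multiplied by $\Lambda_r$, and there is no reason an $O(\epsilon)$ factor with $\epsilon$ an absolute constant should beat $\sum_{j\le r}\lambda_j$, which can be arbitrarily large. The paper avoids this entirely by taking $w^\top = w_\diamond^\top U_r(\randomProjection U_r)^+$; since $(\randomProjection U_r)^+(\randomProjection U_r) = I_{r}$ on the subspace-embedding event, the top-$r$ component of $(w^\top\randomProjection - w_\diamond^\top)U\Lambda$ is \emph{exactly} zero, and the entire error collapses to $w_\diamond^\top(U_r(\randomProjection U_r)^+\randomProjection - I)\tilde U_r\tilde\Lambda_r$, which manifestly carries only tail energy through $\tilde\Lambda_r$. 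That exact cancellation is the crux you are missing, not a bookkeeping nuisance.

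Second, your step (ii) proposes to control cross terms $\langle\randomProjection\XX g_\ell,\randomProjection u_j\rangle$ over the $q$ columns of $\XX$, but after the collapse above, what actually needs controlling is $\|\randomProjection u_\ell\|_2$ for the \emph{tail} left singular vectors $u_\ell$, $\ell>r$, so that $\|\randomProjection\tilde U_r\tilde\Lambda_r\|_{\mathrm{spec}}^2\lesssim\sum_{j\ge r+1}\lambda_j(\XX\XX^\top)$. That is where the paper spends its $(q-r+1)e^{-k/(4\johnsonLindenstraussConstant)}$ budget (the tail singular vectors together with the origin), which then combines with the $24^r e^{-\tjohnsonLindenstraussConstant k/2}$ net bound to give the stated $(24^r+2q)e^{-\johnsonLindenstraussConstantSlawski k}$. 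Once you fix the candidate $w$ and target the tail $\|\randomProjection u_\ell\|$ norms rather than data-column cross products, the rest of your plan -- $(1\pm\tfrac12)$-isometry on $\mathrm{span}(u_1,\dots,u_r)$ via a net of size $24^r$, then $\|(\randomProjection U_r)^+\|_{\mathrm{spec}}\le 2$, then $2(8+1)=18$ -- does line up with the paper's proof.
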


\begin{proof}[Proof of Proposition \ref{prop:compressibilityTermForOLSThm}] Fix $q$, $r$ $\in \N$, to be specified later. For each $\randomProjection \in \setOfRandomProjections$ define $\Delta_\randomProjection:\X^q \rightarrow \R$ by
\begin{align*}
\Delta_\randomProjection(\bm{x}_{1:q}):=\sup_{f \in \lowDimensionalFunctionClass} \left\lbrace  \int_{\X}(f\circ \randomProjection(x)-\phi_{\circ}(x))^2d\probDistribution_X(x) -\frac{1}{q}\sum_{\ell=1}^q (f\circ \randomProjection(x_\ell)-\phi_{\circ}(x_\ell))^2 \right\rbrace
\end{align*}
for $\bm{x}_{1:q}=(x_\ell)_{\ell \in [q]} \in \X^q$. Define subsets $E_{\mathrm{rd}}$ and $ E_{\mathrm{sp}}\subseteq \X^q$ by
\begin{align*}
E_{\mathrm{rd}}&:=\left\lbrace \bm{x}_{1:q} \in \X^q~:~
\int_{\setOfRandomProjections}\Delta_\randomProjection(\bm{x}_{1:q})d\randomProjectionMeasure(\randomProjection) \leq 48\hypothesisClassBound^2 \sqrt{\frac{\coveringNumberConstant \cdot k}{q}} \cdot  \logBar^{\frac{1}{2}} \left(\frac{2 \hypothesisClassBound^2 q}{ k  }\right) \right\rbrace\\
E_{\mathrm{sp}}&:=\left\lbrace \bm{x}_{1:q} \in \X^q~:~
\sum_{j\ge r+1}\lambda_j\left(\frac{1}{q}\sum_{\ell= 1}^q x_\ell x_\ell^\top\right) \leq  3\sum_{j\ge r+1}\lambda_j\left(\Sigma \right)\right\rbrace.
\end{align*}
We shall use the probabilistic method to show that $E_{\mathrm{rd}} \cap E_{\mathrm{sp}} \neq \emptyset$. By Lemma \ref{logCoverNumbersImplySmallLocalRademacherBoundLemma} with $r=(2\hypothesisClassBound)^4$, for each $\randomProjection \in \setOfRandomProjections$ we have
\begin{align}\label{eq:rademacherBoundInCompressiveRegPf}
\rademacherComplexityEmpirical\left( \left\lbrace x\mapsto  (f\circ \randomProjection(x)-\phi_{\circ}(x))^2 \right\rbrace_{f \in \lowDimensionalFunctionClass}, \bm{x}_{1:q} \right) \leq 2^3\hypothesisClassBound^2 \sqrt{\frac{\coveringNumberConstant \cdot k}{q}} \cdot  \logBar^{\frac{1}{2}} \left(\frac{ q}{ k  }\right).
\end{align}
Hence, by symmetrization (eg. \cite[Chapter 3]{mohri2012foundations}) for each $\randomProjection \in \setOfRandomProjections$ we have
\begin{align*}
\int_{\X^q} \Delta_\randomProjection(\bm{x}_{1:q}) d(\probDistribution_X)^q(\bm{x}_{1:n}) & \leq 2 \int_{\X^q} \rademacherComplexityEmpirical\left( \left\lbrace x\mapsto  (f\circ \randomProjection(x)-\phi_{\circ}(x))^2 \right\rbrace_{f \in \lowDimensionalFunctionClass}, \bm{x}_{1:q} \right) d(\probDistribution_X)^q(\bm{x}_{1:n}).
\end{align*}
By applying Fubini's theorem and combining with \eqref{eq:rademacherBoundInCompressiveRegPf} we have
\begin{align*}
\int_{\X^q} \left( \int_{\setOfRandomProjections} \Delta_\randomProjection(\bm{x}_{1:q}) d\randomProjectionMeasure(\randomProjection)\right) d(\probDistribution_X)^q(\bm{x}_{1:n})  \leq 2^4\hypothesisClassBound^2\sqrt{\frac{\coveringNumberConstant \cdot k}{q}} \cdot  \logBar^{\frac{1}{2}} \left(\frac{q}{ k }\right).
\end{align*}
Hence, by Markov's inequality we have $(\probDistribution_X)^q(E_{\mathrm{rd}} ) \geq 2/3$. Similarly, by Markov's inequality, we also have $(\probDistribution_X)^q(E_{\mathrm{sp}} ) \geq 2/3$. Combining these two bounds we conclude that  $E_{\mathrm{rd}} \cap E_{\mathrm{sp}} \neq \emptyset$. For the remainder of the proof we fix $\bm{x}_{1:q}= (x_\ell)_{\ell \in [q]} \in E_{\mathrm{rd}} \cap E_{\mathrm{sp}}$. Let $\Phi: \X \rightarrow \R^d$ be a linear map which is isometric on the $d$-dimensional linear subspace spanned by $\{x_1,\ldots,x_q, w_\circ\}$ and let $\XX \in \R^{d\times q}$ be the matrix with columns $\{q^{-1/2}\cdot \Phi(x_1),\ldots,q^{-1/2}\cdot \Phi(x_q)\}$. It follows that given a random projection ${\randomProjection} :\X \rightarrow \R^k$ with ${\randomProjection} \sim \randomProjectionMeasure$ satisfying Assumption \ref{JLAssumption} with constant $\johnsonLindenstraussConstant$, the induced random projection $\bm{\randomProjection} \Phi^{\top}: \R^d \rightarrow \R^k$ also satisfies Assumption \ref{JLAssumption} with constant $\johnsonLindenstraussConstant$. Hence, noting that $z \mapsto \min\{\hypothesisClassBound,\max\{-\hypothesisClassBound,z\}\}$ is $1$-Lipschitz and applying Lemma \ref{LSlemma} we see that with probability at least $1-(24^r+2q)  e^{-\johnsonLindenstraussConstantSlawski k}$ over $A \sim \randomProjectionMeasure$ we have
\begin{align*}
\inf_{f \in \lowDimensionalFunctionClass}\frac{1}{q}\sum_{\ell=1}^q (f\circ \randomProjection(x_\ell)-\phi_{\circ}(x_\ell))^2 & \leq \inf_{w \in \R^k,t \in \R}\frac{1}{q}\sum_{\ell=1}^q \left\lbrace  (w^\top \randomProjection x_\ell+t)- ( w_{\circ}^\top x_\ell+t_\circ)\right\rbrace^2\\
& \leq \inf_{w \in \R^k}\frac{1}{q}\sum_{\ell=1}^q \left(w^\top \randomProjection x_\ell- w_{\circ}^\top x_\ell\right)^2\\
& =\inf_{w \in \R^k}\frac{1}{q}\sum_{\ell=1}^q \left(w^\top \randomProjection \Phi^\top \Phi(x_\ell)- \Phi(w_{\circ})^\top \Phi(x_\ell)\right)^2\\
& = \inf_{w \in \R^k}\left\| w^\top \randomProjection \Phi^\top \XX- \Phi(w_{\circ})^\top \XX\right\|^2\\
&\leq 18\|\Phi(w_{\circ})\|^2 \sum_{j\ge r+1}\lambda_j\left(\XX\XX^\top\right)\\
&= 18\|w_{\circ}\|^2 \sum_{j\ge r+1}\lambda_j\left(\frac{1}{q}\sum_{\ell =1}^q x_\ell x_\ell^\top\right)\\
&\leq 54\Wmax^2 \sum_{j\ge r+1}\lambda_j\left(\Sigma\right),
\end{align*}
where we have used $\bm{x}_{1:q}\in  E_{\mathrm{sp}}$ in the final inequality. Hence, by combining with $\bm{x}_{1:q} \in E_{\mathrm{rd}}$ we deduce that 
\begin{align*}
&\int_{\setOfRandomProjections}   \inf_{f \in \lowDimensionalFunctionClass} \left\lbrace \int_{\X}\left(f\circ \randomProjection(x)-\phi_{\circ}(x)\right)^2d\probDistribution_X(x)\right\rbrace^{1/2} d\randomProjectionMeasure(\randomProjection)  \\ &\leq \int_{\setOfRandomProjections}   \left(   \inf_{f \in \lowDimensionalFunctionClass} \left\lbrace \frac{1}{q}\sum_{\ell=1}^q (f\circ \randomProjection(x_\ell)-\phi_{\circ}(x_\ell))^2\right\rbrace +\Delta_\randomProjection(\bm{x}_{1:q})\right)^{1/2} d\randomProjectionMeasure(\randomProjection) \nonumber\\
&\leq \left\{ 54\Wmax^2 \sum_{j\ge r+1}\lambda_j\left(\Sigma\right)+(2\hypothesisClassBound)^2\{24^r+2q\}  e^{-\johnsonLindenstraussConstantSlawski k}+48\hypothesisClassBound^2 \sqrt{\frac{\coveringNumberConstant \cdot k}{q}} \cdot  \logBar^{\frac{1}{2}} \left(\frac{q}{ k  }\right) \right\}^{1/2}.
\end{align*}
Taking $q=\lceil e^{\johnsonLindenstraussConstantSlawski k/2}\rceil$ and $r:=\lceil \johnsonLindenstraussConstantSlawski k/(2\log 24)\rceil$ we see that there exists $\constantForOLSLemma \geq 1$
and $\exponentForOLSLemma \in (0,1)$, both depending only upon  $\hypothesisClassBound$, $\lipschitzConstantLoss$, $\Gamma$ and $\johnsonLindenstraussConstant$ such that 
\begin{align}
\lipschitzConstantLoss
\int_{\setOfRandomProjections}   \inf_{f \in \lowDimensionalFunctionClass} \left\lbrace \int_{\X}\left(f\circ \randomProjection(x)-\phi_{\circ}(x)\right)^2d\probDistribution_X(x)\right\rbrace^{1/2} d\randomProjectionMeasure(\randomProjection) \label{eq:boundingSquareDiffFromOptLinear} \leq {\constantForOLSLemma}\exp(-\exponentForOLSLemma k),
\end{align}
By the Lipschitz property of the loss function combined with Jensen's inequality we have
\begin{align*}
\excessRisk(\functionGeneral) & \leq
\int_{\X \times \Y}\bigl\{\lossFunction(\functionGeneral(x),y) - \lossFunction(\phi_\circ(x),y)d\probDistribution(x,y)
+ \excessRisk(\phi_\circ)\bigr\}\\ &\leq \lipschitzConstantLoss\cdot 
\left\{\int_\X \left(\functionGeneral(x)-{\phi_\circ}(x)\right)^2 d\probDistribution_X(x) 
\right\}^{1/2} +\theta.
\end{align*}
Hence, by applying the bound \eqref{eq:boundingSquareDiffFromOptLinear} we obtain $\compressibilityFunction_{\probDistribution}(k)\leq {\constantForOLSLemma}\exp(-\exponentForOLSLemma k)+\approxError$, as required.
\end{proof}

Finally, we show that the upper bound in Theorem \ref{thm:OLSThm} is minimax-optimal up to logarithmic factors (Theorem \ref{regRLBtext}) for non-trivial loss functions. More precisely, we require the following additional non-degeneracy condition.

\begin{assumption}[Non-degenerate loss functions]\label{ass:nonDegenerateLoss} We shall say that the loss function $\lossFunction: \actionSpace\times \Y \rightarrow [0,\lossFunctionBound]$ is $\nonDegeneracyConstant$-non-degenerate if there exists $y_0, y_1 \in \Y$ such that 
\begin{align}\label{eq:nonDegeneracyIneqAssumption}
\inf_{v \in \actionSpace}\left\lbrace \lossFunction(v,y_0)+\lossFunction(v,y_1) \right\rbrace \geq \inf_{v \in \actionSpace}\{\lossFunction(v,y_0)\}+ \inf_{v \in \actionSpace}\{\lossFunction(v,y_1)\}+2\nonDegeneracyConstant.
\end{align}
\end{assumption}

We emphasise that Assumption \ref{ass:nonDegenerateLoss} will always hold, for an appropriately chosen $\nonDegeneracyConstant>0$, on any interesting learning problems in our setting. Indeed, if Assumption \ref{ass:nonDegenerateLoss} is violated for some loss function $\lossFunction: \actionSpace\times \Y \rightarrow [0,\lossFunctionBound]$ where $\actionSpace$ is compact and $\lossFunction$ satisfies Assumptions \ref{lipschitzLossFunctionAssumption} and \ref{stronglyConvexLossAssumption} then any function satisfying $\phi(x)\in \bigcap_{y \in \Y}\arginf_{v \in \actionSpace}\lossFunction(v,y)$ will be simultaneously Bayes optimal for all distributions $\probDistribution$ on $\X\times \Y$. Note in particular that the squared loss $\sqrLoss:[-\hypothesisClassBound,\hypothesisClassBound]^2 \rightarrow [0, (2\hypothesisClassBound)^2]$ satisfies Assumption \ref{ass:nonDegenerateLoss} with $\nonDegeneracyConstant=\hypothesisClassBound^2$ and the Kullback-Leibler loss $\klLoss:[-\hypothesisClassBound,\hypothesisClassBound]^2 \rightarrow [0, \log(1+e^\hypothesisClassBound)]$ satisfies Assumption \ref{ass:nonDegenerateLoss} with $\nonDegeneracyConstant=\log(2/(1+e^{-\hypothesisClassBound}))$.

\begin{theorem}[Minimax lower bound]\label{regRLBtext} Let $\lossFunction:[-\hypothesisClassBound,\hypothesisClassBound] \times \Y \rightarrow [0,\lossFunctionBound]$ be a loss function satisfying Assumptions \ref{boundedLossFunctionAssumption}, \ref{lipschitzLossFunctionAssumption} and \ref{ass:nonDegenerateLoss} with parameter $\nonDegeneracyConstant \in (0,\infty)$. Let $\X$ be Hilbert space containing a non-zero element, and take $\Gamma=(\Wmax,\spectralConstant,\spectralDecay) \in [1,\infty)^2\times (0,1)$ with $\Wmax \geq 2\hypothesisClassBound\cdot \spectralDecay^{-1/2}$
and $\approxError \in [0,1]$. There exists a constant $c_{\ref{regRLBtext}}>0$ depending only upon  $\nonDegeneracyConstant$ and $\lossFunctionBound$ such that for any $n\in N$, and any empirical predictor $\hat{\phi}:(\X\times\Y)^n\times \X\rightarrow \Y$, there exists a distribution $\probDistribution \in \setOfMeasures_{\lossFunction}(\Gamma,\approxError)$ such that \[\E_{\sampleXY\sim \probDistribution^{\otimes n}}[\excessRiskArg_{\lossFunction,\mathrm{\probDistribution}}(\hat{\phi})]
\ge c_{\ref{regRLBtext}} \cdot \left\lbrace n^{-1}+\approxError\right\rbrace.\]
\end{theorem}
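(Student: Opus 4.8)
The plan is to establish the two parts of the lower bound by separate constructions and then combine them via $\max\{a,b\}\ge\tfrac12(a+b)$: one family of distributions forcing excess risk $\gtrsim \nonDegeneracyConstant/n$ and one forcing $\gtrsim \nonDegeneracyConstant\approxError/\lossFunctionBound$. In both constructions the marginal $\probDistribution_X$ will be supported on a single ray $\{te:t\ge 0\}$ for a fixed unit vector $e\in\X$ (which exists by hypothesis), so the covariance operator has rank one; choosing the support to have diameter of order $\sqrt{\spectralDecay}$ makes $\lambda_1(\Upsilon)\le\spectralConstant\spectralDecay$ and $\lambda_r(\Upsilon)=0$ for $r\ge 2$, so Assumption \ref{spectraldecay} holds automatically. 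The hypothesis $\Wmax\ge 2\hypothesisClassBound\spectralDecay^{-1/2}$ is precisely what is needed so that a (clipped) affine map of slope $\le 2\hypothesisClassBound/\sqrt{\spectralDecay}$, which is enough to range over $[-\hypothesisClassBound,\hypothesisClassBound]$ across a support of diameter $\sqrt{\spectralDecay}$, is admissible in Assumption \ref{ass:linearAproximationCondition}. Since the loss is \emph{not} assumed strongly convex here, curvature is unavailable and the non-degeneracy condition (Assumption \ref{ass:nonDegenerateLoss}) plays the analogous role: it furnishes $y_0,y_1\in\Y$ with $(\lossFunction(v,y_0)-\inf_{v'}\lossFunction(v',y_0))+(\lossFunction(v,y_1)-\inf_{v'}\lossFunction(v',y_1))\ge 2\nonDegeneracyConstant$ for every $v\in\actionSpace$, i.e.\ no single action is nearly optimal for both labels.

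For the $n^{-1}$ term I would run a two-point (Le Cam) argument. Place $\probDistribution_X$ on two points $x_0,x_1$ of the ray at distance $\ell:=2\hypothesisClassBound/\Wmax\le\sqrt{\spectralDecay}$, with $\probDistribution_X(\{x_1\})=p:=1/(n+1)$, and take $Y$ deterministic given $X$: a fixed label on $x_0$ and $Y\equiv y_b$ on $x_1$, for $b\in\{0,1\}$. The two distributions $\probDistribution_0,\probDistribution_1$ share the same $X$-marginal and the same conditional on $x_0$; since there are only two support points, the Bayes predictor is realised by a clipped affine map interpolating the two Bayes actions, whose linear part has norm $\le 2\hypothesisClassBound/\ell=\Wmax$, so $\probDistribution_0,\probDistribution_1\in\setOfMeasures_{\lossFunction}(\Gamma,0)\subseteq\setOfMeasures_{\lossFunction}(\Gamma,\approxError)$. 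Conditioning on the $X$-coordinates of the sample, on the event $E$ that no sample point equals $x_1$ (which has probability $(1-p)^n\ge e^{-1}$ under both distributions) the learner's value $\hat\phi(x_1)$ is a fixed quantity, identical under $\probDistribution_0$ and $\probDistribution_1$; summing over $b$ and invoking Assumption \ref{ass:nonDegenerateLoss} gives $\sum_b\E_{\probDistribution_b}[\excessRiskArg_{\lossFunction,\probDistribution_b}(\hat\phi)]\ge p\,(1-p)^n\cdot 2\nonDegeneracyConstant$, hence $\max_b\E_{\probDistribution_b}[\excessRiskArg_{\lossFunction,\probDistribution_b}(\hat\phi)]\gtrsim\nonDegeneracyConstant/n$.

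For the $\approxError$ term I would use an oscillating regression function on a continuum. Set $\mu:=\min\{1,\approxError/\lossFunctionBound\}$ and let $\probDistribution_X$ put mass $1-\mu$ on a point $x_0$ and mass $\mu$ on $\{te:t\in[0,a]\}$ with $t$ uniform and $a\asymp\sqrt{\spectralDecay}$. Partition $[0,a]$ into $2^N$ equal blocks with $2^N\ge 2n$, and for a bit string $g\in\{0,1\}^{2^N}$ let $Y$ be deterministic: the Bayes label at $x_0$, and $y_{g(j)}$ on block $j$. The constant predictor equal to the Bayes action at $x_0$ shows that the best admissible linear predictor has excess risk at most $\mu\lossFunctionBound\le\approxError$, so every $\probDistribution_g$ lies in $\setOfMeasures_{\lossFunction}(\Gamma,\approxError)$, while the covariance and norm constraints hold as above. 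Now take $g$ uniformly at random; conditioning on the ($g$-independent) $X$-coordinates of the sample, the sample reveals $g$ only on the $\le n$ blocks it touches, so for a test point landing in any of the remaining $\ge 2^N-n$ blocks the label $y_{g(j)}$ is a fair coin independent of $\hat\phi$, and Assumption \ref{ass:nonDegenerateLoss} makes its conditional expected regret $\ge\nonDegeneracyConstant$. Integrating, $\E_g\E_{\probDistribution_g}[\excessRiskArg_{\lossFunction,\probDistribution_g}(\hat\phi)]\ge\mu(1-n/2^N)\nonDegeneracyConstant\ge\mu\nonDegeneracyConstant/2$, so some $g$ achieves $\gtrsim\nonDegeneracyConstant\approxError/\lossFunctionBound$. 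Combining the two constructions yields $\E[\excessRiskArg_{\lossFunction,\probDistribution}(\hat\phi)]\ge c_{\ref{regRLBtext}}(n^{-1}+\approxError)$ with $c_{\ref{regRLBtext}}$ depending only on $\nonDegeneracyConstant$ and $\lossFunctionBound$.

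The main obstacle I anticipate is making the second construction fully rigorous while simultaneously respecting membership in $\setOfMeasures_{\lossFunction}(\Gamma,\approxError)$: one must check that a finite sample genuinely carries no information about the label on almost every block (which is why the regression function is made to oscillate at a scale finer than any fixed affine map can track, and the continuum is diluted by $\mu$ so that the best \emph{linear} predictor — not merely the best constant one — still has excess risk at most $\approxError$), and that the rank-one covariance and the norm bound are met; a minor preliminary is to reduce to predictors $\hat\phi$ valued in $\actionSpace$, since otherwise $\risk(\hat\phi)$ need not be defined. The first construction is comparatively routine, the only care being the bookkeeping that places both two-point distributions inside $\setOfMeasures_{\lossFunction}(\Gamma,0)$.
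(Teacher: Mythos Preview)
Your plan is correct and shares the same engine as the paper's proof---using Assumption~\ref{ass:nonDegenerateLoss} to force pointwise regret $\ge\nonDegeneracyConstant$ at test points whose labels the sample cannot determine---but the organisation differs. The paper packages both terms into a single \emph{mixture lower bound} (Lemma~\ref{prop:mixtureLB}): given any $\probDistribution_0$, $\zeta\in(0,1]$, and $q\ge 2\zeta n$ distinct points off the atoms of $(\probDistribution_0)_X$, Assouad's lemma produces $\probDistribution_1$ supported on $\{x_\ell\}\times\{y_0,y_1\}$ with deterministic labels so that $\probDistribution=(1-\zeta)\probDistribution_0+\zeta\probDistribution_1$ satisfies $\E_{\probDistribution^{\otimes n}}[\excessRiskArg_{\lossFunction,\probDistribution}(\hat\phi)]\ge\zeta\nonDegeneracyConstant/4$. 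This lemma is then applied with $\probDistribution_0=\delta_{(0_\X,y_0)}$ in two regimes: $\zeta=\approxError/\lossFunctionBound$, $q=\lceil 2\zeta n\rceil$, points on $(0,\sqrt\spectralDecay\,]e_1$ when $\approxError\ge n^{-1}$; and $\zeta=1/(2n)$, $q=1$, $x_1=\sqrt\spectralDecay\,e_1$ when $\approxError<n^{-1}$. A short marginal lemma verifies membership in $\setOfMeasures_{\lossFunction}(\Gamma,\approxError)$.

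Your two constructions are essentially the $q=1$ and large-$q$ instances of this same lemma, rephrased: your Le~Cam ``no sample hits $x_1$'' argument is Assouad with $q=1$ unwound, and your random-bit-string averaging is Assouad stated as an average over cube vertices rather than a maximum. The differences in packaging are that you split by term (then combine via $\max\ge\tfrac12\text{sum}$) whereas the paper splits by the regime $\approxError\gtrless n^{-1}$, and you place the $\approxError$-construction on a continuum partitioned into $2^N$ blocks, whereas the paper uses finitely many discrete points---the latter is tidier and sidesteps your worry about ``making the second construction fully rigorous''. The paper's unified lemma also has the side benefit of being reused verbatim in the classification lower bound (Theorem~\ref{clRLBtext}).
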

Theorem \ref{regRLBtext} shows that the upper bound achieved by compressive regression in Theorem \ref{thm:OLSThm} is minimax optimal up to logarithmic factors. A proof of Theorem \ref{regRLBtext} is given in Section \ref{Proofs4LB}.

\section{Proof of the general upper bound}\label{SecProofs}
The first stage of the proof is to establish a high probability uniform upper bound on the integrated deviations of dependent empirical processes where the integration is with respect to the distribution over random projections (Section \ref{SecMainUniform},
Theorem  \ref{ThmMainUniform}).  This result may be of independent interest, and is therefore stated in a more general setting.

The second stage of the proof of Theorem \ref{ThmMainResult} is based on applying Theorem \ref{ThmMainUniform} to the ensemble of compressive ERMs that each act on a random mapping of the input space (Section \ref{SecPfMainResult}).

\subsection{A concentration inequality for the integrated empirical processes}
\label{SecMainUniform}
In this section we shall derive a concentration inequality for sequences of functions $(g_{\omega})_{\omega \in \Omega} \in \prod_{\omega \in \Omega}\G_{\omega}$ where each $\G_{\omega}\subseteq \measurableMaps(\Z,\R)$ is a function class. Later we will apply this result to settings in which each function class is associated with a random projection.

We begin by recalling the concept of empirical Rademacher complexity. Let's suppose $\G\subseteq \measurableMaps(\Z,\R)$ is a function class. We shall say that $\G$ is \emph{separable} if it is separable with respect to the topology of pointwise convergence, so there exists a countable subset $\G^{\circ} \subseteq \G$ with the property that for any $g \in \G$ there exists a sequence $(g_{\ell})_{\ell \in \N}$ such that $\lim_{\ell \rightarrow \infty}g_{\ell}(z)=g(z)$ for all $z \in \Z$. Given a separable function class $\G\subseteq \measurableMaps(\Z,\R)$ and a sequence $\zSequenceSizeN = \{z_j\}_{j \in [n]}\in \Z^n$, the corresponding \emph{empirical Rademacher complexity} is defined by 
\begin{align*}
\rademacherComplexityEmpirical\left(\G,\zSequenceSizeN\right): = \E_{\sigmaSequenceSizeN}\left( \sup_{g \in \G} \left\lbrace \frac{1}{n}\sum_{j \in [n]}\sigma_j\cdot g(z_j)\right\rbrace \right),    
\end{align*}
where the expectation is taken of independent Rademacher random variables $\bm{\sigmaSequenceSizeN} \in \{-1,+1\}^n$. Note that the assumption that $\G\subseteq \measurableMaps(\Z,\R)$ is seperable ensures that the supremum $\sup_{g \in \G} \left\lbrace \frac{1}{n}\sum_{j \in [n]}\sigma_j\cdot g(z_j)\right\rbrace=\sup_{g \in \G^{\circ}} \left\lbrace \frac{1}{n}\sum_{j \in [n]}\sigma_j\cdot g(z_j)\right\rbrace$ for a countable subset $\G^{\circ}\subseteq \G$. Consequently this supremum is a measurable function and has a well-defined expectation \cite[Chapter 11]{boucheron2013concentration}. Given a probability measure $\probDistribution$ on $\Z$ and a function $g \in \measurableMaps(\Z,\R)$ we let $\probDistribution(g)= \int g d\probDistribution$. Given a sequence $\zSequenceSizeN = \{z_j\}_{j \in [n]} \in \Z^n$ we define an empirical probability measure $\empiricalProbDistributionDeterministicZ$ by
\begin{align*}
\empiricalProbDistributionDeterministicZ(g):= \frac{1}{n}\sum_{j \in [n]}g(z_j).
\end{align*}
In particular, given a random sequence $\randomZSequenceSizeN = \{Z_j\}_{j \in [n]}$ where $Z_j$ are independent $\Z$-valued random variables, $\empiricalProbDistributionSample$ is a random probability measure.

\begin{theorem}[Local Rademacher concentration inequality for integrated deviation]\label{ThmMainUniform}
Suppose we have a set $\Omega$ along with a probability measure $\nu$ on $\Omega$. For each $\omega \in \Omega$, we have a separable class of functions $\G_{\omega}\subseteq \measurableMaps_{1}(\Z)$ and a function $\phi_{\omega}:\Z^n \times (0,\infty) \rightarrow (0,\infty)$ such that for each $\zSequenceSizeN \in \Z^n$, $r\mapsto \phi_{\omega}(\zSequenceSizeN,r)$ is non-decreasing, $r \mapsto \phi_{\omega}(\zSequenceSizeN,r)/\sqrt{r}$ is non-increasing, and for all $r>0$ and $\zSequenceSizeN = \{z_j\}_{j \in [n]} \in \Z^n$,
\begin{align*}
\rademacherComplexityEmpirical\left( \left\lbrace g_{\omega} \in \G_{\omega}: \empiricalProbDistributionDeterministicZ(g_{\omega}^2)\leq r\right\rbrace,\zSequenceSizeN\right) \leq \phi_{\omega}(\zSequenceSizeN,r).
\end{align*}
For each $\zSequenceSizeN \in \Z^n$ we choose $\rho_{\omega}^*(\zSequenceSizeN)\in (0,\infty)$ so that $\phi_{\omega}(\zSequenceSizeN,\rho_{\omega}^*(\zSequenceSizeN))={\rho^*_{\omega}(\zSequenceSizeN)}$.  Suppose we have a sequence of independent random variables $\randomZSequenceSizeN=\{Z_j\}_{j \in [n]}$  with common distribution $\probDistributionZ$. Given any $\delta \in (0,1)$, with probability at least $1-\delta$ over $\randomZSequenceSizeN$ the following holds for all sequences $(g_{\omega})_{\omega \in \Omega} \in \prod_{\omega \in \Omega}\G_{\omega}$,
\begin{align*}
\int_{\Omega} \left|\empiricalProbDistributionSample(g_{\omega})-\probDistribution(g_{\omega})\right|d\nu(\omega) \leq  \sqrt{ \int_{\Omega}\probDistribution(g_{\omega}^2)d\nu(\omega) \cdot \nDeltaDataComplexityTerm }+\nDeltaDataComplexityTerm,
\end{align*}
where $\nDeltaDataComplexityTerm=650 \cdot \int \rho_{\omega}^*(\randomZSequenceSizeN)d\nu(\omega)+{152\log(4\log(n)/\delta)}/{n}$.
\end{theorem}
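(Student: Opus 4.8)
The plan is to reduce the integrated deviation to a single high-probability event by a chaining/peeling argument applied uniformly over $\omega$, using the sub-root function structure $\phi_\omega$ to localise. First I would fix the sequence of functions $(g_\omega)_{\omega \in \Omega} \in \prod_\omega \G_\omega$ and, for each $\omega$, apply a standard local Rademacher deviation bound (e.g. in the style of Bartlett--Bousquet--Mendelson, or Theorem 2.1 of Bousquet's concentration work) to the single function class $\G_\omega \subseteq \measurableMaps_1(\Z)$. Because $r \mapsto \phi_\omega(\zSequenceSizeN,r)$ is non-decreasing and $r \mapsto \phi_\omega(\zSequenceSizeN,r)/\sqrt r$ non-increasing, the fixed point $\rho_\omega^*(\zSequenceSizeN)$ is well-defined, and the sub-root machinery gives, for each fixed $\omega$ and each $\zSequenceSizeN$, a bound of the form
\begin{align*}
\left|\empiricalProbDistributionSample(g_\omega)-\probDistribution(g_\omega)\right| \lesssim \sqrt{\probDistribution(g_\omega^2)\cdot \rho_\omega^*(\randomZSequenceSizeN)} + \rho_\omega^*(\randomZSequenceSizeN) + (\text{a concentration remainder}),
\end{align*}
valid on an event of probability $1-\delta'$. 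The subtlety is that the fixed point is defined in terms of the \emph{empirical} complexity $\rademacherComplexityEmpirical(\cdot,\zSequenceSizeN)$ evaluated at the random data, which is exactly what makes $\rho_\omega^*(\randomZSequenceSizeN)$ data-dependent and keeps the statement free of an extra expectation.

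The key difficulty, and the reason this is stated as a theorem rather than a routine corollary, is the \emph{uniformity over the uncountable index set $\Omega$ under a single failure event}: a naive union bound over $\omega$ is impossible. The way around this is to integrate the deviation bound over $\nu$ before taking probabilities. Concretely, I would introduce the random quantity $\Xi := \sup_{(g_\omega)} \int_\Omega \bigl(|\empiricalProbDistributionSample(g_\omega)-\probDistribution(g_\omega)| - \sqrt{\probDistribution(g_\omega^2)\cdot c\,\rho_\omega^*(\randomZSequenceSizeN)} - c\,\rho_\omega^*(\randomZSequenceSizeN)\bigr)_+\, d\nu(\omega)$ and show $\E[\Xi]$ is small using symmetrisation followed by the pointwise (in $\omega$) Rademacher control, then apply a bounded-difference / McDiarmid argument to $\Xi$ as a function of $Z_1,\dots,Z_n$ — each $Z_j$ changes $\empiricalProbDistributionSample(g_\omega)$ by at most $2/n$ (functions are in $\measurableMaps_1$), so the integrated object has bounded differences of order $1/n$, and the empirical fixed points $\rho_\omega^*$ shift in a controlled way. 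This yields $\Xi \le \E[\Xi] + O(\sqrt{\log(1/\delta)/n})$ with probability $1-\delta$; a peeling argument in the level $\int \probDistribution(g_\omega^2)d\nu(\omega)$ (dyadic slicing, absorbing the extra $\log\log n$ factor that appears in $\nDeltaDataComplexityTerm$) converts the additive-deficit control into the stated multiplicative form $\sqrt{\int \probDistribution(g_\omega^2)d\nu \cdot \nDeltaDataComplexityTerm} + \nDeltaDataComplexityTerm$.

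The remaining steps are bookkeeping: track the absolute constants so that the leading factor is $650$ and the remainder is $152\log(4\log(n)/\delta)/n$ (the $4\log(n)$ comes from the number of peeling layers needed to cover variance levels from $1/n$ up to $1$), verify that $r \mapsto \phi_\omega(\zSequenceSizeN,r)/\sqrt r$ non-increasing indeed forces $t \mapsto \sqrt{t\rho_\omega^*} + \rho_\omega^*$ to dominate $\phi_\omega(\zSequenceSizeN,t)$ for $t \ge \rho_\omega^*$ (standard sub-root lemma), and check measurability of all suprema via the separability hypothesis on each $\G_\omega$ so that the integrals against $\nu$ and the expectations are well-defined. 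I expect the \textbf{main obstacle} to be the bounded-difference step for the \emph{integrated} quantity together with the data-dependence of $\rho_\omega^*$: one must argue that replacing a single sample point perturbs $\int_\Omega \rho_\omega^*(\randomZSequenceSizeN)\,d\nu(\omega)$ negligibly (or absorb its fluctuation into the remainder term), which requires the monotonicity/sub-root properties of $\phi_\omega$ to translate a small change in the empirical Rademacher complexity into a small change in its fixed point, uniformly in $\omega$.
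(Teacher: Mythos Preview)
Your proposal identifies the right difficulty (no union bound over uncountable $\Omega$) but takes a route that differs substantially from the paper's and runs into the very obstacle you flag. The paper does \emph{not} apply McDiarmid or any bounded-difference argument to the integrated deviation, and in particular never needs to control how $\int_\Omega \rho_\omega^*(\randomZSequenceSizeN)\,d\nu(\omega)$ moves under a single-coordinate replacement. The sub-root structure of $\phi_\omega$ does not by itself give $O(1/n)$ stability of the empirical fixed point, so your ``main obstacle'' is a genuine gap in your plan, not just bookkeeping.

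The paper's argument avoids this entirely via a moment-generating-function trick. First it invokes the singleton case (the local Rademacher concentration inequality for one $\omega$) and then \emph{linearises} the square root: using $2\sqrt{ab}\le \lambda a + b/\lambda$ for a fixed $\lambda>0$, the per-$\omega$ bound becomes
\[
|\empiricalProbDistributionSample(g_\omega)-\probDistribution(g_\omega)| - \lambda\,\probDistribution(g_\omega^2) \;\le\; C_1(\lambda)\,\rho_\omega^*(\randomZSequenceSizeN) + C_2(\lambda)\,\frac{\log(4\log(n)/\delta)}{n}.
\]
The point of the linearisation is that the $\delta$-dependent piece is now fully decoupled from the function-dependent variance. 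Taking the supremum over $g_\omega$ and subtracting the deterministic $C_1(\lambda)\rho_\omega^*$ term yields a random variable $X_\lambda(\omega)$ with $\Pr\bigl(X_\lambda(\omega)>\kappa\log(1/\delta)\bigr)\le\delta$ for a $\kappa$ not depending on $\omega$. A short lemma (Jensen applied to the MGF: $\E[e^{\lambda'\int X_\lambda(\omega)d\nu(\omega)}]\le \int \E[e^{\lambda' X_\lambda(\omega)}]d\nu(\omega)$) then shows that the \emph{integral} $\int_\Omega X_\lambda(\omega)\,d\nu(\omega)$ inherits essentially the same sub-exponential tail, with only a factor-of-two loss in the constant. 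This is the step that replaces your bounded-difference argument and is what makes the $\rho_\omega^*$ stability issue disappear: no concentration is applied to the integrated process directly.

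Finally, the $4\log(n)$ in $\nDeltaDataComplexityTerm$ does not come from peeling over variance levels of $\int \probDistribution(g_\omega^2)d\nu(\omega)$ as you suggest. It comes from re-optimising over $\lambda$ on a dyadic grid $\Lambda(n)=\{2^{1+q}/n\}$ of cardinality at most $\log_2(n)$, followed by a union bound over that grid; a small elementary lemma converts $\min_{\lambda\in\Lambda(n)}\{\lambda x + y/\lambda\}$ back into $\sqrt{xy}$ form up to constants, recovering the square-root dependence on the integrated variance.
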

Theorem \ref{ThmMainUniform} may be viewed as a generalisation of concentration inequalities in the literature which provide similar high probability bounds for the special case in which $\Omega$ is a singleton \citep{massart2000some,massart2006risk,koltchinskii2006local,bartlett2005local,boucheron2013concentration}. Theorem \ref{ThmMainUniform} builds upon these results and implies that the tail of the integrated deviations of infinitely many (possibly dependent) processes are of the same order of those for a single process. In other words, the failure probabilities of the individual concentration guarantees do not accumulate despite there is arbitrary dependence among them. This will translate into desirable learning guarantees for randomised ensembles that use an averaging-type combination rule. 

Note that the high probability bound in Theorem \ref{ThmMainUniform} cannot be immediately deduced from the special case in which $\Omega$ is a singleton, as integrating both sides would require a union bound over all $\omega \in \Omega$, which would blow up the failure probability. On the other hand, such an approach would allow us to obtain an expectation bound by Fubini's theorem. To prove Theorem \ref{ThmMainUniform} we can apply this idea to the logarithm of the moment generating function to show that the tail of the integrated process is not much larger than the tail of the individual processes (Lemma \ref{highProbExpTailsAveragingLemma}). In order to apply this idea to obtain a local bound we must first decouple the $\delta$-dependency from the function dependent terms in our bound which requires Lemma \ref{lambdaAPlusOneOverLambdaBLemma}.

\begin{lemma}[
Ensemble tail bound]\label{highProbExpTailsAveragingLemma}Suppose that we have a sequence of real-valued random variable $\{X({\omega})\}_{\omega \in \Omega}$ such that for some constant $\kappa>0$ and all $\omega \in \Omega$, $\delta \in (0,1)$ we have $\Prob\left( X({\omega}) > \kappa \cdot \log(1/\delta)\right) \leq \delta$. Given any probability measure $\nu$ on $\Omega$ we have 
\begin{align*}
\Prob\left( \int X({\omega})d\nu(\omega) > 2\kappa \cdot \log(2/\delta)\right) \leq \delta.
\end{align*}
\end{lemma}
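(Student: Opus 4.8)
The plan is to bound the tail of $\int X(\omega)\,d\nu(\omega)$ by controlling its moment generating function via the exponential tail assumption on the individual random variables $X(\omega)$, then applying Fubini's theorem. First I would observe that the hypothesis $\Prob(X(\omega) > \kappa\log(1/\delta)) \leq \delta$ for all $\delta \in (0,1)$ is equivalent (after substituting $t = \kappa\log(1/\delta)$) to $\Prob(X(\omega) > t) \leq e^{-t/\kappa}$ for all $t > 0$. From such a subexponential-type tail, one obtains a bound on the exponential moment: for any $\lambda \in (0,1/\kappa)$,
\begin{align*}
\E\left[e^{\lambda X(\omega)}\right] = 1 + \lambda\int_0^\infty e^{\lambda t}\,\Prob(X(\omega) > t)\,dt \leq 1 + \lambda \int_0^\infty e^{\lambda t - t/\kappa}\,dt = 1 + \frac{\lambda\kappa}{1-\lambda\kappa},
\end{align*}
where I have used the layer-cake representation (noting $X(\omega)$ may take negative values, but the tail bound only controls the positive part, which suffices since $e^{\lambda X} \le 1$ on $\{X \le 0\}$). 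Taking $\lambda = 1/(2\kappa)$ gives $\E[e^{X(\omega)/(2\kappa)}] \leq 2$.

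Next I would apply Jensen's inequality with respect to $\nu$ and then Fubini: since $t \mapsto e^{t/(2\kappa)}$ is convex,
\begin{align*}
\E\left[\exp\left(\frac{1}{2\kappa}\int X(\omega)\,d\nu(\omega)\right)\right] \leq \E\left[\int \exp\left(\frac{X(\omega)}{2\kappa}\right)d\nu(\omega)\right] = \int \E\left[\exp\left(\frac{X(\omega)}{2\kappa}\right)\right]d\nu(\omega) \leq 2.
\end{align*}
The exchange of expectation and $\nu$-integral is justified by Tonelli's theorem since the integrand is non-negative (measurability of $\omega \mapsto X(\omega)$ jointly in $\omega$ and the underlying probability space is implicitly assumed, as is needed even for the statement $\int X(\omega)\,d\nu(\omega)$ to make sense). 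Finally, by Markov's inequality applied to the random variable $\exp\big(\tfrac{1}{2\kappa}\int X(\omega)\,d\nu(\omega)\big)$,
\begin{align*}
\Prob\left(\int X(\omega)\,d\nu(\omega) > 2\kappa\log(2/\delta)\right) = \Prob\left(\exp\left(\frac{1}{2\kappa}\int X(\omega)\,d\nu(\omega)\right) > \frac{2}{\delta}\right) \leq \frac{\delta}{2}\cdot \E\left[\exp\left(\frac{1}{2\kappa}\int X(\omega)\,d\nu(\omega)\right)\right] \leq \delta,
\end{align*}
which is the claim.

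The main obstacle — really the only subtle point — is getting the constant right: the choice $\lambda = 1/(2\kappa)$ (rather than something closer to $1/\kappa$) is what keeps the MGF bounded by the clean constant $2$, and this in turn is what produces the factor $2\kappa\log(2/\delta)$ rather than a larger multiple. One should double-check the layer-cake step carefully because $X(\omega)$ is not assumed non-negative; the resolution is that $\E[e^{\lambda X}] = \E[e^{\lambda X}\mathbbm{1}\{X \le 0\}] + \E[e^{\lambda X}\mathbbm{1}\{X > 0\}] \le 1 + \lambda\int_0^\infty e^{\lambda t}\Prob(X > t)\,dt$, so the bound goes through unchanged. Everything else is routine.
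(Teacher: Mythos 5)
Your proof is correct and follows essentially the same route as the paper's: bound $\E[e^{\lambda X(\omega)}]$ by $1/(1-\lambda\kappa)$ from the tail hypothesis, pass this through Jensen's inequality and Fubini to bound the MGF of the integral, then finish with Markov's inequality at $\lambda = 1/(2\kappa)$. Your handling of the layer-cake step for possibly negative $X(\omega)$ is slightly more explicit than the paper's (which in fact contains a $\max$/$\min$ typo in its integrand), but the argument and the constants are identical.
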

\begin{proof} Given any $\lambda \in (0,1/\kappa)$ and $\omega \in \Omega$ we have,
\begin{align*}
\E\left[\exp(\lambda \cdot X({\omega}))\right] &= \int_0^{\infty} \Prob\left[ X({\omega})>\frac{\log(t)}{\lambda}\right]dt=\int_0^{\infty}\max\{1,t^{-\frac{1}{\lambda\cdot \kappa}}\}dt= \frac{1}{1-\lambda\cdot \kappa}.
\end{align*}
By Jensen's inequality we have $\E\left[\exp(\lambda \cdot \int X({\omega})d\nu(\omega))\right]\leq \int \E\left[\exp(\lambda \cdot X({\omega}))\right] d\nu(\omega)\leq 1/(1-\lambda\cdot \kappa)$. Hence, for each $t>0$, $\lambda \in (0,1/\kappa)$ we have
\begin{align*}
\Prob\left( \int X({\omega})d\nu(\omega) > t\right) &\leq \E\left[ \exp\left( \lambda \cdot \int X({\omega})d\nu(\omega) \right)\right] \cdot e^{-\lambda \cdot t} = \frac{e^{-\lambda\cdot t}}{1-\lambda\cdot \kappa}.
\end{align*}
The lemma follows by taking $t=2\kappa \cdot \log(2/\delta)$ and $\lambda = 1/(2\kappa)$.
\end{proof}

\begin{lemma}\label{lambdaAPlusOneOverLambdaBLemma} Let $\Lambda(n) = \left\lbrace  \frac{2^{1+q}}{n}: {q \in \{0,\cdots,\lfloor \log_2(n)\rfloor -1\}}\right\rbrace$. Given any $x \in [0,1]$ and $y \geq 0$ we have 
\begin{align*}
\min_{\lambda \in \Lambda(n)}\left\lbrace \lambda \cdot x + \frac{y}{\lambda} \right\rbrace \leq \max\left\lbrace 3\sqrt{xy},3y,\frac{4}{n}\right\rbrace.
\end{align*}
\end{lemma}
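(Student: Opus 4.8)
The plan is to use that $\Lambda(n)$ is a geometric grid with common ratio $2$ whose smallest and largest elements are $\lambda_{\min}:=2/n$ and $\lambda_{\max}:=2^{\lfloor \log_2 n\rfloor}/n$, and that the objective $g(\lambda):=\lambda x+y/\lambda$ is convex on $(0,\infty)$ with unconstrained minimiser $\lambda^*:=\sqrt{y/x}$ and minimum value $2\sqrt{xy}$; we read $\lambda^*=+\infty$ when $x=0<y$, and the trivial input $x=y=0$ gives $g\equiv 0\le 4/n$. Throughout we use the two elementary facts $1/2<\lambda_{\max}\le 1$, which follow from $n/2<2^{\lfloor \log_2 n\rfloor}\le n$ (and which also show $\Lambda(n)$ is nonempty precisely when $\lfloor \log_2 n\rfloor\ge 1$, the only regime in which the claim is meaningful). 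I would then split into three cases according to where $\lambda^*$ sits relative to the grid endpoints.

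First I would treat $\lambda^*\le \lambda_{\min}$, i.e. $y\le 4x/n^2$: here $g$ is non-decreasing on $[\lambda_{\min},\infty)$, so take $\lambda=\lambda_{\min}=2/n$ and bound $g(2/n)=2x/n+ny/2\le 2x/n+2x/n=4x/n\le 4/n$, using $y\le 4x/n^2$ and then $x\le 1$. Second, for $\lambda^*\ge\lambda_{\max}$, i.e. $y\ge\lambda_{\max}^2x$: here $g$ is non-increasing on $(0,\lambda_{\max}]$, so take $\lambda=\lambda_{\max}$; from $\lambda_{\max}\le\lambda^*=\sqrt{y/x}$ we get $\lambda_{\max}x\le\sqrt{xy}$, and from $\lambda_{\max}>1/2$ we get $y/\lambda_{\max}<2y$, hence $g(\lambda_{\max})<\sqrt{xy}+2y$. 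A final split on $x$ versus $y$ closes this case: if $x\le y$ then $\sqrt{xy}\le y$ so $g(\lambda_{\max})<3y$, while if $x>y$ then $y<\sqrt{xy}$ so $g(\lambda_{\max})<3\sqrt{xy}$. The remaining case $\lambda_{\min}<\lambda^*<\lambda_{\max}$ (which forces $x,y>0$): take $\lambda$ to be the smallest grid point with $\lambda\ge\lambda^*$, which exists since $\lambda^*<\lambda_{\max}$ and satisfies $\lambda<2\lambda^*$ since consecutive grid points differ by a factor $2$; then $g(\lambda)<2\lambda^*x+y/\lambda^*=2\sqrt{xy}+\sqrt{xy}=3\sqrt{xy}$. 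Assembling the four estimates ($4/n$, $3\sqrt{xy}$, $3y$, $3\sqrt{xy}$) yields exactly $\max\{3\sqrt{xy},3y,4/n\}$.

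I do not expect a conceptual obstacle; the only care needed is bookkeeping. The point most worth getting right is that in the third case the chosen $\lambda$ genuinely belongs to $\Lambda(n)$ and is within a factor of $2$ of $\lambda^*$: this comes down to noting that $\lambda^*>\lambda_{\min}$ forces the index $q$ of the chosen grid point to satisfy $q\ge 1$, so that $\lambda/2$ is again a grid point and, by minimality of $\lambda$, lies strictly below $\lambda^*$, giving $\lambda<2\lambda^*$. One should also check that the degenerate inputs $x=0$ or $y=0$ are swept up by the first or second case with the identical estimates (with $x=0<y$ falling into the second case via $\lambda^*=+\infty$), and that the two endpoint points $\lambda_{\min}=2^{1+0}/n$ and $\lambda_{\max}=2^{1+(\lfloor \log_2 n\rfloor-1)}/n$ are indeed members of $\Lambda(n)$ whenever $\lfloor \log_2 n\rfloor\ge 1$.
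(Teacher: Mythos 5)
Your proof is correct and amounts to the same elementary three‑case analysis as the paper's, just reorganised around the location of the unconstrained minimiser $\lambda^*=\sqrt{y/x}$ rather than on the comparison $x\le y$; the paper gets $3y$ directly from $x\le y$ and $\lambda_{\max}\in(1/2,1]$, whereas you reach the same split via a sub‑case inside your $\lambda^*\ge\lambda_{\max}$ branch, and the remaining two cases (evaluating at $\lambda_{\min}$, and placing $\lambda^*$ between consecutive grid points) coincide.
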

\begin{proof} We consider three cases. 
\begin{enumerate}
    \item If $x\leq y$ then with $\lambda = n^{-1}\cdot 2^{\lfloor \log_2(n)\rfloor} \in \Lambda(n)$ we have $\lambda \in (1/2,1]$, so $ \lambda \cdot x + {y}/{\lambda} \leq 3y$.
    \item If $\sqrt{{y}/{x}}\leq {2}/{n}$ then with $\lambda = 2\cdot n^{-1} \in \Lambda(n)$ we use $x \leq 1$ to infer $\lambda \cdot x \leq 2\cdot n^{-1}$ and $\sqrt{{y}}\leq {2}\cdot{n}^{-1}$ so $y/ \lambda \leq 2/n$. Hence,  $ \lambda \cdot x + {y}/{\lambda} \leq 4/n$.
    \item If $\sqrt{{y}/{x}} \in (2/n,1]$ then there exists some $\lambda \in \Lambda(n)$ with $\lambda < \sqrt{{y}/{x}} \leq 2\lambda$. Hence, $\lambda \cdot x + {y}/{\lambda}\leq 3\sqrt{xy}$.
\end{enumerate}
\end{proof}

We can now complete the proof of Theorem \ref{ThmMainUniform}.

\begin{proof}[Proof of Theorem \ref{ThmMainUniform}] 
We begin by fixing $\lambda>0$. By considering a special case in which $\Omega$ is a singleton (Theorem \ref{localRademacherConcentrationInequality}, Appendix \ref{A1}) we see that for each $\omega \in \Omega$ and  $\delta \in (0,1)$, with probability at least $1-\delta$ over $\randomZSequenceSizeN$, the following holds for all $g_{\omega} \in \G_{\omega}$,
\begin{align}\label{firstClaimPfOfLocalRademacherConcentrationInequalityForEnsembles}
\left|\empiricalProbDistributionRandomZ(g_{\omega})-\probDistribution(g_{\omega})\right| &\leq    2\cdot \sqrt{\probDistribution(g_{\omega}^2)\cdot\left( 72\cdot \rho_{\omega}^*(\randomZSequenceSizeN)+\frac{ 2\log(4\log(n)/\delta)}{n}\right)} \nonumber\\&\hspace{1cm}+ \left(132 \cdot \rho_{\omega}^*(\randomZSequenceSizeN)+\frac{30\log(4\log(n)/\delta)}{n}\right)\nonumber\\
 &\leq   \lambda \cdot \probDistribution(g_{\omega}^2)+ \left(132+\frac{72}{\lambda}\right) \cdot \rho_{\omega}^*(\randomZSequenceSizeN)+\left(30+\frac{2}{\lambda}\right)\cdot\frac{\log(4\log(n)/\delta)}{n}.
\end{align}
We define random variables for each $\omega \in \Omega$ by
{\small
\begin{align*}
X_{\lambda}(\omega):= \sup_{g_{\omega} \in \G_{\omega}}\left\lbrace 
\left|\empiricalProbDistributionRandomZ(g_{\omega})-\probDistribution(g_{\omega})\right| - \lambda \cdot \probDistribution(g_{\omega}^2)\right\rbrace-\left(132+\frac{72}{\lambda}\right) \cdot \rho_{\omega}^*(\randomZSequenceSizeN)-\left(30+\frac{2}{\lambda}\right)\cdot\frac{\log(4\log(n))}{n}.
\end{align*}
}
By (\ref{firstClaimPfOfLocalRademacherConcentrationInequalityForEnsembles}) we have $\Prob\left(X_{\lambda}(\omega) > \left(30+\frac{2}{\lambda}\right)\cdot \frac{\log(1/\delta)}{n} \right) \leq \delta$ for all $\delta \in (0,1)$. Thus, by Lemma \ref{highProbExpTailsAveragingLemma} the following holds for all $\delta \in (0,1)$,
\begin{align*}
\Prob\left( \int X_{\lambda}({\omega})d\nu(\omega) > \left(60+\frac{4}{\lambda}\right)\cdot \frac{\log(2/\delta)}{n}\right) \leq \delta.
\end{align*}
Hence, for each  $\delta \in (0,1)$ the following holds with probability at least $1-\delta$,
\begin{align*}
\sup_{(g_{\omega})_{\omega \in \Omega} \in \prod_{\omega \in \Omega}\G_{\omega}} &\left\lbrace \int_{\Omega} \left|\empiricalProbDistributionRandomZ(g_{\omega})-\probDistribution(g_{\omega})\right|d\nu(\omega)-\lambda \cdot \int_{\Omega} \probDistribution(g_{\omega}^2)d\nu(\omega)\right\rbrace \nonumber\\
& \leq \int_{\Omega}\sup_{g_{\omega} \in \G_{\omega}}\left\lbrace 
\left|\empiricalProbDistributionRandomZ(g_{\omega})-\probDistribution(g_{\omega})\right| - \lambda \cdot \probDistribution(g_{\omega}^2)\right\rbrace d\nu(\omega)\nonumber\\
&\leq \left(132+\frac{72}{\lambda}\right) \cdot \int_{\Omega} \rho_{\omega}^*(\randomZSequenceSizeN)d\nu(\omega)+\left(60+\frac{4}{\lambda}\right)\cdot\frac{\log(4\log(n)/\delta)}{n}.
\end{align*}
Thus, given any $\delta \in (0,1)$ and $\lambda>0$  the following holds with probability at least $1-\delta$, for all ${(g_{\omega})_{\omega \in \Omega} \in \prod_{\omega \in \Omega}\G_{\omega}}$,
\begin{align}\label{secondClaimPfOfLocalRademacherConcentrationInequalityForEnsembles}
\int_{\Omega} \left|\empiricalProbDistributionRandomZ(g_{\omega})-\probDistribution(g_{\omega})\right|d\nu(\omega)
&\leq \lambda \cdot \int_{\Omega} \probDistribution(g_{\omega}^2)d\nu(\omega)+\frac{1}{\lambda} \cdot \left(72 \cdot \int \rho_{\omega}^*(\randomZSequenceSizeN)d\nu(\omega)+\frac{4\log(4\log(n)/\delta)}{n}\right)\nonumber\\
&\hspace{1cm}+132 \cdot \int \rho_{\omega}^*(\randomZSequenceSizeN)d\nu(\omega)+\frac{60\log(4\log(n)/\delta)}{n}.
\end{align}
Now take  $\Lambda(n) = \left\lbrace  \frac{2^{1+q}}{n}: {q \in \{0,\cdots,\lfloor \log_2(n)\rfloor -1\}}\right\rbrace$ as in Lemma \ref{lambdaAPlusOneOverLambdaBLemma}. Note that $\Lambda(n)$ has cardinality no more that $\log_2(n) \leq 2\log(n)$. By (\ref{secondClaimPfOfLocalRademacherConcentrationInequalityForEnsembles}) combined with Lemma \ref{lambdaAPlusOneOverLambdaBLemma} we see that with probability at least $1-2\log(n) \cdot \delta$, the following holds for all ${(g_{\omega})_{\omega \in \Omega} \in \prod_{\omega \in \Omega}\G_{\omega}}$,
{\small
\begin{align*}
\int_{\Omega} \left|\empiricalProbDistributionRandomZ(g_{\omega})-\probDistribution(g_{\omega})\right|d\nu(\omega)
&\leq \min_{\lambda \in \Lambda(n)}\left\lbrace \lambda \cdot \int_{\Omega} \probDistribution(g_{\omega}^2)d\nu(\omega)+\frac{1}{\lambda} \cdot \left(72 \cdot \int \rho_{\omega}^*(\randomZSequenceSizeN)d\nu(\omega)+\frac{4\log(4\log(n)/\delta)}{n}\right)\right\rbrace\\
&\hspace{1cm}+132 \cdot \int \rho_{\omega}^*(\randomZSequenceSizeN)d\nu(\omega)+\frac{60\log(4\log(n)/\delta)}{n}\\
& \leq 6 \sqrt{ \int_{\Omega}\probDistribution(g_{\omega}^2)d\nu(\omega) \cdot \left(18 \cdot \int \rho_{\omega}^*(\randomZSequenceSizeN)d\nu(\omega)+\frac{\log(4\log(n)/\delta)}{n}\right) }\\
&\hspace{1cm}+348 \cdot \int \rho_{\omega}^*(\randomZSequenceSizeN)d\nu(\omega)+\frac{72\log(4\log(n)/\delta)}{n}.
\end{align*}
}
Taking $\delta/(2\log(n))$ in place of $\delta$ we see that the following holds with probability at least $1-\delta$ over $\randomZSequenceSizeN$, for all $(g_{\omega})_{\omega \in \Omega} \in \prod_{\omega \in \Omega}\G_{\omega}$,
\begin{align*}
\int_{\Omega} \left|\empiricalProbDistributionRandomZ(g_{\omega})-\probDistribution(g_{\omega})\right|d\nu(\omega)
& \leq 6 \sqrt{ \int_{\Omega}\probDistribution(g_{\omega}^2)d\nu(\omega) \cdot \left(18 \cdot \int \rho_{\omega}^*(\randomZSequenceSizeN)d\nu(\omega)+\frac{2\log(4\log(n)/\delta)}{n}\right) }\\
&\hspace{1cm}+\left(348 \cdot \int \rho_{\omega}^*(\randomZSequenceSizeN)d\nu(\omega)+\frac{152\log(4\log(n)/\delta)}{n}\right).
\end{align*}
\end{proof}

\subsection{High probability bound on the ensemble error of compressive empirical risk minimisers}\label
{SecPfMainResult}

The second stage of the proof of Theorem \ref{ThmMainResult} is to establish Proposition \ref{infiniteEnsembleHighProbUpperBoundProp} below, which gives a high probability upper bound on the ensemble error of compressive empirical risk minimisers. First, we require some additional notation. Given a mapping $\randomProjection \in \setOfRandomProjections$ and a Borel probability distribution $\probDistribution$ on $\X\times \Y$, we define an associated function class 
\begin{align*}
\G_{\randomProjection}:= \left\lbrace (x,y)\mapsto \lossFunction\left(f(A(x)),y\right)-\lossFunction\left(\oracleFunction(x),y\right): f \in \lowDimensionalFunctionClass \right\rbrace \subseteq \measurableMaps\left( \X\times \Y, \R\right).
\end{align*}
We shall also refer to \emph{data-dependent elements of $\G_{\randomProjection}$} which are random elements of $\G_{\randomProjection}$ which implicitly depend upon the data $\sample=((X_j,Y_j))_{j\in [n]}$. We also define the compressibility of a finite ensemble of size $m\in\N$ as the $1-\delta$ upper quantile of the approximation error of the ensemble given by
\begin{align*}
\compressibilityFunction_{\probDistribution,\delta,m}(k):=\inf\left\lbrace {\bar{\psi}} \in \R:  \int_{\setOfRandomProjections^m}
\one\left\lbrace \frac{1}{m}\sum_{i=1}^m \inf_{f \in \lowDimensionalFunctionClass
}\lbrace \excessRisk\left(f \circ \randomProjection_i\right)\rbrace \ge 
\bar{\psi}\right\rbrace d\randomProjectionMeasure(A_1) \ldots d\randomProjectionMeasure(A_m)\le \delta \right\rbrace.
\end{align*}
When the distribution $\probDistribution$ is clear from context we shall denote $\compressibilityFunction_{\probDistribution,\delta,m}(k)$ by $\compressibilityFunction_{\delta,m}(k)$.

\begin{prop}[Ensemble error of compressive ERMs]\label{infiniteEnsembleHighProbUpperBoundProp}  Suppose that Assumptions \ref{boundedLossFunctionAssumption}, \ref{lipschitzLossFunctionAssumption}, \ref{logarithmicCoveringNumbersAssumption} and \ref{bernsteinTysbakovMarginTypeAssumption} hold with parameters $\lossFunctionBound$, $\hypothesisClassBound$, $\lipschitzConstantLoss$, $\coveringNumberConstant$, $\bernsteinConstant$ $\geq 1$, $\bernsteinExponent \in [0,1]$. Take $n \in \N$, $k \in \N$ and $\delta \in (0,1)$,
and for each $\randomProjection \in \setOfRandomProjections$ let $\hat{g}_{\randomProjection}$ be a data-dependent element of $\G_{\randomProjection}$. Then with probability at least $1-{2}\delta$ we have
\begin{align*}
\frac{1}{m}\sum_{i=1}^m  
 \probDistribution(\hat{g}_{\randomProjection_i})  \leq 16&\left( \bernsteinConstant   \nkDeltaComplexityTerm \right)^{\frac{1}{2-\bernsteinExponent}}+\frac{2}{m}\sum_{i=1}^m \left( 
 \empiricalProbDistributionSample(\hat{g}_{\randomProjection_i})- \inf_{g \in \G_{\randomProjection_i}} \empiricalProbDistributionSample(g) \right)+4\left(\lossFunctionBound \cdot \nkDeltaComplexityTerm+\compressibilityFunction_{\delta,m}(k)\right),
\end{align*} 
where $\nkDeltaComplexityTerm=\left((4000 \coveringNumberConstant k)  \cdot \logBar\left(\lipschitzConstantLoss \hypothesisClassBound n\right)+{152\cdot\logBar(4\log(n)/\delta)}\right)\cdot n^{-1}$.
\end{prop}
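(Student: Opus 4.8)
The plan is to apply the integrated‑deviation concentration inequality of Theorem~\ref{ThmMainUniform} with index set $\Omega=[m]$ carrying the uniform probability measure $\nu$, working \emph{conditionally on the realised projections} $\randomProjection_1,\dots,\randomProjection_m$, and assigning to $i\in[m]$ the rescaled loss–difference class $\G_{\randomProjection_i}/\lossFunctionBound\subseteq\measurableMaps_{1}(\X\times\Y)$ (rescaling by $\lossFunctionBound$ since every element of $\G_{\randomProjection}$ lies in $[-\lossFunctionBound,\lossFunctionBound]$ by Assumption~\ref{boundedLossFunctionAssumption}). To verify the hypotheses: by Assumption~\ref{lipschitzLossFunctionAssumption} the map $f\mapsto\bigl((x,y)\mapsto\lossFunction(f(\randomProjection(x)),y)-\lossFunction(\oracleFunction(x),y)\bigr)$ is $\lipschitzConstantLoss$‑Lipschitz from $\lowDimensionalFunctionClass$, with the empirical $\ell_2$ metric on the compressed sample, onto $\G_{\randomProjection}$; so Assumption~\ref{logarithmicCoveringNumbersAssumption} gives logarithmic covering numbers for $\G_{\randomProjection}$, and Lemma~\ref{logCoverNumbersImplySmallLocalRademacherBoundLemma} supplies admissible local Rademacher bounds $\phi_{\randomProjection}$ whose fixed point $\rho^*_{\randomProjection}(\zSequenceSizeN)$ is controlled \emph{uniformly in $\randomProjection\in\setOfRandomProjections$ and $\zSequenceSizeN$} by a quantity of order $\coveringNumberConstant k\logBar(\lipschitzConstantLoss\hypothesisClassBound n)/n$. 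Hence the complexity term returned by the theorem (here $\int\rho^*_\omega\,d\nu=\tfrac1m\sum_i\rho^*_{\randomProjection_i}$) is at most $\nkDeltaComplexityTerm$, and rescaling back by $\lossFunctionBound$ gives: conditionally on any realisation of the projections, with probability at least $1-\delta$ over $\sample$, simultaneously for all $(g_i)_{i\in[m]}$ with $g_i\in\G_{\randomProjection_i}$,
\begin{align*}
\frac1m\sum_{i=1}^m\bigl|\empiricalProbDistributionSample(g_i)-\probDistribution(g_i)\bigr|\;\le\;\sqrt{\nkDeltaComplexityTerm\cdot\tfrac1m\textstyle\sum_{i=1}^m\probDistribution(g_i^2)}\;+\;\lossFunctionBound\,\nkDeltaComplexityTerm .
\end{align*}
(The case $n\le k$ is trivial, as then $\nkDeltaComplexityTerm\ge 1$ and the right‑hand side of the claim exceeds $\lossFunctionBound\ge\bar R$.)

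Next I would instantiate the display with $g_i=\hat g_{\randomProjection_i}$ and set $\bar R:=\tfrac1m\sum_i\probDistribution(\hat g_{\randomProjection_i})=\tfrac1m\sum_i\excessRisk(\hat f_i\circ\randomProjection_i)\ge 0$. Writing $\probDistribution(\hat g_{\randomProjection_i})=\bigl(\empiricalProbDistributionSample(\hat g_{\randomProjection_i})-\inf_{g\in\G_{\randomProjection_i}}\empiricalProbDistributionSample(g)\bigr)+\inf_{g\in\G_{\randomProjection_i}}\empiricalProbDistributionSample(g)+\bigl(\probDistribution(\hat g_{\randomProjection_i})-\empiricalProbDistributionSample(\hat g_{\randomProjection_i})\bigr)$ and averaging over $i$, it suffices to bound the three resulting averages. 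For the deviation average, Assumption~\ref{bernsteinTysbakovMarginTypeAssumption} applied to $\phi=\hat f_i\circ\randomProjection_i$ gives $\probDistribution(\hat g_{\randomProjection_i}^2)\le\bernsteinConstant\,\probDistribution(\hat g_{\randomProjection_i})^{\bernsteinExponent}$, so by concavity of $t\mapsto t^{\bernsteinExponent}$ and Jensen's inequality $\tfrac1m\sum_i\probDistribution(\hat g_{\randomProjection_i}^2)\le\bernsteinConstant\bar R^{\bernsteinExponent}$; combining with the display and Young's inequality $\sqrt{\bar R^{\bernsteinExponent}\cdot\bernsteinConstant\nkDeltaComplexityTerm}\le\tfrac12\bar R+(\bernsteinConstant\nkDeltaComplexityTerm)^{1/(2-\bernsteinExponent)}$ (valid for all $\bernsteinExponent\in[0,1]$), the deviation average is at most $\tfrac12\bar R+(\bernsteinConstant\nkDeltaComplexityTerm)^{1/(2-\bernsteinExponent)}+\lossFunctionBound\nkDeltaComplexityTerm$.

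It remains to control $\tfrac1m\sum_i\inf_{g\in\G_{\randomProjection_i}}\empiricalProbDistributionSample(g)$. For each realised $\randomProjection_i$ pick $g^\circ_i\in\G_{\randomProjection_i}$ with $\probDistribution(g^\circ_i)\le\inf_{f\in\lowDimensionalFunctionClass}\excessRisk(f\circ\randomProjection_i)+\varepsilon$; then $\inf_g\empiricalProbDistributionSample(g)\le\empiricalProbDistributionSample(g^\circ_i)\le\probDistribution(g^\circ_i)+|\empiricalProbDistributionSample(g^\circ_i)-\probDistribution(g^\circ_i)|$. Averaging, $\tfrac1m\sum_i\probDistribution(g^\circ_i)\le\tfrac1m\sum_i\inf_f\excessRisk(f\circ\randomProjection_i)+\varepsilon$, which by the definition of $\compressibilityFunction_{\delta,m}(k)$ as a $(1-\delta)$‑upper quantile of $\tfrac1m\sum_i\inf_f\excessRisk(f\circ\randomProjection_i)$ is at most $\compressibilityFunction_{\delta,m}(k)+\varepsilon$ on an event of probability at least $1-\delta$ in the projections; and since each $g^\circ_i\in\G_{\randomProjection_i}$, the display applies a second time with $g_i=g^\circ_i$, so by Assumption~\ref{bernsteinTysbakovMarginTypeAssumption}, Jensen and Young exactly as above, $\tfrac1m\sum_i|\empiricalProbDistributionSample(g^\circ_i)-\probDistribution(g^\circ_i)|\le(\compressibilityFunction_{\delta,m}(k)+\varepsilon)+(\bernsteinConstant\nkDeltaComplexityTerm)^{1/(2-\bernsteinExponent)}+\lossFunctionBound\nkDeltaComplexityTerm$. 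Substituting the three estimates, moving $\tfrac12\bar R$ to the left, taking a union bound over the two failure events (the conditional data event and the quantile event in the projections, total failure probability $2\delta$), and letting $\varepsilon\downarrow 0$ gives $\bar R\le 2\,\tfrac1m\sum_i\bigl(\empiricalProbDistributionSample(\hat g_{\randomProjection_i})-\inf_g\empiricalProbDistributionSample(g)\bigr)+4(\bernsteinConstant\nkDeltaComplexityTerm)^{1/(2-\bernsteinExponent)}+4\bigl(\lossFunctionBound\nkDeltaComplexityTerm+\compressibilityFunction_{\delta,m}(k)\bigr)$, which implies the claimed bound with room to spare in the constants.

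The main obstacle is the first step: one must obtain a local Rademacher fixed point $\rho^*_{\randomProjection}$ bounded \emph{uniformly over all projections} $\randomProjection\in\setOfRandomProjections$ and over the data sequence, since this uniformity is precisely what prevents the $m$ individual deviation guarantees from compounding once they are averaged inside Theorem~\ref{ThmMainUniform}; beyond that, only bookkeeping is needed to carry the normalisations by $\lossFunctionBound$ and $\lipschitzConstantLoss$ through the covering‑number‑to‑Rademacher passage so that the final complexity term matches $\nkDeltaComplexityTerm$ with the stated absolute constants.
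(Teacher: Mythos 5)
Your argument is correct and follows essentially the same route as the paper: apply Theorem \ref{ThmMainUniform} to the $\lossFunctionBound$-rescaled classes conditionally on the realised projections (using the uniform local Rademacher fixed-point bound from Lemma \ref{logCoverNumbersImplySmallLocalRademacherBoundLemma}), decompose the ensemble risk, combine Bernstein--Tsybakov with Jensen, control the best-in-class term via the compressibility quantile event, and union-bound. The paper packages the final rearrangement through Lemma \ref{elementarySelfBoundingLemmaKappa} rather than Young's inequality and phrases the decomposition directly via $\probDistribution(\hat g_{\randomProjection_i})-\probDistribution(g^{*}_{\randomProjection_i})$ instead of through $\inf_{g}\empiricalProbDistributionSample(g)$, but these are cosmetic variations; your constants are in fact slightly tighter, and your explicit handling of the $n\le k$ case is a small point the paper leaves implicit.
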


We prove Proposition \ref{infiniteEnsembleHighProbUpperBoundProp} 
via Lemmas \ref{logCoverNumbersImplySmallLocalRademacherBoundLemma} and \ref{uniformConvergenceBoundForProofOfInfiniteEnsembleHighProbUpperBoundProp} below.  

\begin{lemma}[Local Rademacher complexity bound ]\label{logCoverNumbersImplySmallLocalRademacherBoundLemma} Suppose that Assumptions \ref{lipschitzLossFunctionAssumption} and \ref{logarithmicCoveringNumbersAssumption} hold  with parameters $\hypothesisClassBound$, $\lipschitzConstantLoss$, $\coveringNumberConstant$. Then for each $\randomProjection \in \setOfRandomProjections$, $\mathcal{G}_A$ is separable. Moreover, given a distribution $\probDistribution$ on $\X \times \Y$, along with $r>0$ and $\zSequenceSizeN \in \Z^n$ we have
\begin{align*}
\rademacherComplexityEmpirical\left( \left\lbrace g \in \G_{\randomProjection}: \empiricalProbDistributionSample(g^2)\leq r\right\rbrace, \zSequenceSizeN \right) \leq 2\sqrt{\frac{\coveringNumberConstant \cdot k \cdot r}{n}} \cdot  \logBar^{\frac{1}{2}} \left(\frac{\lipschitzConstantLoss \hypothesisClassBound n}{ k \sqrt{r}}\right).
\end{align*}
\end{lemma}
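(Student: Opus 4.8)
The statement has two parts — separability of $\G_{\randomProjection}$ and the quantitative local Rademacher bound — and essentially all the work is in the second. For \textbf{separability}, I would deduce it from Assumption \ref{logarithmicCoveringNumbersAssumption} together with the Lipschitz reduction below: Assumption \ref{logarithmicCoveringNumbersAssumption} forces $\lowDimensionalFunctionClass$, and hence $\G_{\randomProjection}$, to be totally bounded in every empirical $\ell_2$ metric, so fixing a countable dense subset of the separable space $\Z=\X\times\Y$, taking finite $\epsilon$-nets over the induced finite subsamples along a sequence of scales $\epsilon\downarrow 0$, and collecting them produces a countable subclass witnessing separability in the paper's sense. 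This is routine and I would keep it brief.

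\textbf{Reduction to covering numbers of $\lowDimensionalFunctionClass$.} Fix $\randomProjection\in\setOfRandomProjections$ and a distribution $\probDistribution$, and write $g_{f}(x,y):=\lossFunction(f(\randomProjection(x)),y)-\lossFunction(\oracleFunction(x),y)$. For $f_0,f_1\in\lowDimensionalFunctionClass$, Assumption \ref{lipschitzLossFunctionAssumption} gives the pointwise bound $|g_{f_0}(x,y)-g_{f_1}(x,y)|=|\lossFunction(f_0(\randomProjection(x)),y)-\lossFunction(f_1(\randomProjection(x)),y)|\leq\lipschitzConstantLoss\cdot\metricActionSpace(f_0(\randomProjection(x)),f_1(\randomProjection(x)))$. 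Hence, writing $\uSequenceSizeN:=(\randomProjection(x_j))_{j\in[n]}\in(\R^k)^n$ for $\zSequenceSizeN=((x_j,y_j))_{j\in[n]}$, the empirical $\ell_2$ metrics satisfy $\dist_{\zSequenceSizeN}(g_{f_0},g_{f_1})\leq\lipschitzConstantLoss\cdot\dist_{\uSequenceSizeN}^{\actionSpace}(f_0,f_1)$, so an $(\epsilon/\lipschitzConstantLoss)$-cover of $\lowDimensionalFunctionClass$ in $\dist_{\uSequenceSizeN}^{\actionSpace}$ pushes forward to an $\epsilon$-cover of $\G_{\randomProjection}$ in $\dist_{\zSequenceSizeN}$. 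By Assumption \ref{logarithmicCoveringNumbersAssumption} (when $n>k$), this yields $\coveringNumber(\G_{\randomProjection},\dist_{\zSequenceSizeN},\epsilon)\leq\coveringNumber(\lowDimensionalFunctionClass,\dist_{\uSequenceSizeN}^{\actionSpace},\epsilon/\lipschitzConstantLoss)\leq(\lipschitzConstantLoss\hypothesisClassBound n/\epsilon)^{\coveringNumberConstant k}$.

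\textbf{Localisation and discretisation.} Put $\mathcal H_r:=\{g\in\G_{\randomProjection}:\empiricalProbDistributionSample(g^2)\leq r\}$. Each $g\in\mathcal H_r$ has $\ell_2(\zSequenceSizeN)$-norm $\sqrt{\empiricalProbDistributionSample(g^2)}\leq\sqrt r$, so by Cauchy--Schwarz $\rademacherComplexityEmpirical(\mathcal H_r,\zSequenceSizeN)\leq\sqrt r$ and $\mathcal H_r$ has $\ell_2(\zSequenceSizeN)$-radius at most $\sqrt r$. Fix $\epsilon>0$ and pick an $\epsilon$-net $N_\epsilon\subseteq\G_{\randomProjection}$ of $\mathcal H_r$ in $\dist_{\zSequenceSizeN}$ with $|N_\epsilon|\leq(\lipschitzConstantLoss\hypothesisClassBound n/\epsilon)^{\coveringNumberConstant k}$, each point of which may be chosen within $\epsilon$ of $\mathcal H_r$, hence of $\ell_2(\zSequenceSizeN)$-norm at most $\sqrt r+\epsilon$. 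For every realisation of the Rademacher signs, Cauchy--Schwarz applied to the residual gives $\sup_{g\in\mathcal H_r}\frac1n\sum_{j\in[n]}\sigma_j g(z_j)\leq\epsilon+\max_{g'\in N_\epsilon}\frac1n\sum_{j\in[n]}\sigma_j g'(z_j)$, so Massart's finite-class lemma yields $\rademacherComplexityEmpirical(\mathcal H_r,\zSequenceSizeN)\leq\epsilon+(\sqrt r+\epsilon)\sqrt{2\coveringNumberConstant k\log(\lipschitzConstantLoss\hypothesisClassBound n/\epsilon)/n}$. Choosing $\epsilon=\sqrt{\coveringNumberConstant kr/n}$ (which is $\leq\sqrt r$ precisely when $\coveringNumberConstant k\leq n$) and simplifying, using $\logBar=\max\{\log(\cdot),1\}$ to control the resulting logarithm, gives $\rademacherComplexityEmpirical(\mathcal H_r,\zSequenceSizeN)\leq C\sqrt{\coveringNumberConstant kr/n}\cdot\logBar^{1/2}(\lipschitzConstantLoss\hypothesisClassBound n/(k\sqrt r))$; tracking the Massart constant $\sqrt2$, the factor $\sqrt r+\epsilon\leq 2\sqrt r$, and the passage from $\log(\lipschitzConstantLoss\hypothesisClassBound n/\epsilon)$ to $\logBar(\lipschitzConstantLoss\hypothesisClassBound n/(k\sqrt r))$ is what pins down the stated constant $2$. (An equivalent route runs Dudley's entropy integral against the same entropy bound truncated at $\sqrt r$; the one-step discretisation is more convenient here since the target carries a single logarithmic factor.)

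\textbf{Boundary cases and the main obstacle.} When $\coveringNumberConstant k\geq n$ — in particular whenever $n\leq k$, where Assumption \ref{logarithmicCoveringNumbersAssumption} is not even invoked — the claimed right-hand side is at least $2\sqrt{\coveringNumberConstant kr/n}\geq 2\sqrt r\geq\rademacherComplexityEmpirical(\mathcal H_r,\zSequenceSizeN)$, so there is nothing to prove; the regime in which $\sqrt r$ is large relative to $\lipschitzConstantLoss\hypothesisClassBound n$, where $\logBar^{1/2}(\lipschitzConstantLoss\hypothesisClassBound n/(k\sqrt r))=1$, is dispatched either by this trivial bound again or by estimating the un-localised $\rademacherComplexityEmpirical(\G_{\randomProjection},\zSequenceSizeN)$ through the diameter of $\actionSpace$. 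I expect the genuine difficulty to be not any single step but their combination: extracting exactly the factor $\logBar^{1/2}(\lipschitzConstantLoss\hypothesisClassBound n/(k\sqrt r))$ with constant $2$ demands a careful choice of discretisation scale, the $\logBar$ device to absorb the order-one-logarithm regime, and the separate handling of the $\coveringNumberConstant k\geq n$ and large-$r$ cases.
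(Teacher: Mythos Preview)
Your reduction from $\G_{\randomProjection}$-covering to $\lowDimensionalFunctionClass$-covering via the Lipschitz loss is exactly the paper's first step. The divergence is in how the Rademacher complexity is extracted from the entropy bound. The paper does \emph{not} use the one-step Massart discretisation; it uses the route you mentioned parenthetically and set aside, namely Dudley's entropy integral truncated at $\sqrt r$, together with an auxiliary estimate
\[
\int_0^{\Delta}\logBar^{1/2}\!\left(\tfrac{T}{\epsilon}\right)d\epsilon \;\le\; \Delta\left(\logBar^{1/2}\!\left(\tfrac{T}{\Delta}\right)+\tfrac{\sqrt\pi}{2}\right),
\]
proved separately. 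Applying this with $\Delta=\sqrt r$ and $T=\lipschitzConstantLoss\hypothesisClassBound n/k$ gives $\sqrt{\coveringNumberConstant k r/n}\bigl(\logBar^{1/2}(\lipschitzConstantLoss\hypothesisClassBound n/(k\sqrt r))+\sqrt\pi/2\bigr)$, and since $\logBar\ge 1$ and $1+\sqrt\pi/2<2$, the stated constant $2$ drops out immediately.

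Your Massart route gives the right order but your claim that it ``pins down the stated constant $2$'' is not substantiated. With $\epsilon=\sqrt{\coveringNumberConstant kr/n}$ a direct tracking already produces a leading constant of at least $1+2\sqrt 2\approx 3.83$, and the log argument is $\lipschitzConstantLoss\hypothesisClassBound n/\epsilon$ rather than the target $\lipschitzConstantLoss\hypothesisClassBound n/(k\sqrt r)$, so there is an additional $\tfrac12\log(nk/\coveringNumberConstant)$ discrepancy to absorb. None of this is fatal for downstream use, where only the order matters, but if the goal is the lemma as stated then the Dudley computation with the integral estimate is the cleaner and correct mechanism.
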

The proof of Lemma \ref{logCoverNumbersImplySmallLocalRademacherBoundLemma} is standard and is contained in Appendix \ref{A2} for completeness. We shall now deduce the following consequence of   
Lemma \ref{logCoverNumbersImplySmallLocalRademacherBoundLemma} combined with Theorem \ref{ThmMainUniform}.

\begin{lemma}\label{uniformConvergenceBoundForProofOfInfiniteEnsembleHighProbUpperBoundProp} Suppose that we  are in the setting of Proposition \ref{infiniteEnsembleHighProbUpperBoundProp}. Then with probability at least $1-\delta$ the following holds for all $(g_{\randomProjection_i})_{i=1}^m \in \prod_{i=1}^m\G_{\randomProjection_i,\mathrm{P}}$,
\begin{align}\label{eqn:uniformConvergenceBoundForProofOfInfiniteEnsembleHighProbUpperBoundProp}
\frac{1}{m}\sum_{i=1}^m \left|\empiricalProbDistributionSample(g_{\randomProjection_i})-\probDistribution(g_{\randomProjection_i})\right| \leq \sqrt{  \bernsteinConstant \cdot \nkDeltaComplexityTerm \cdot 
  \left\lbrace \frac{1}{m}\sum_{i=1}^m 
 \probDistribution\left(g_{\randomProjection_i}\right) 
 \right\rbrace^{\bernsteinExponent} }+\lossFunctionBound \cdot \nkDeltaComplexityTerm 
 .
\end{align}
\end{lemma}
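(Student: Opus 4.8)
The plan is to deduce this from Theorem \ref{ThmMainUniform}, applied conditionally on the realised projections $(\randomProjection_i)_{i\in[m]}$. Since $\sample$ is independent of $(\randomProjection_i)_{i\in[m]}$, it suffices to prove the bound for an arbitrary fixed realisation of the projections and then invoke the tower property; so I would fix such a realisation and take $\Omega=[m]$ with the uniform probability measure $\nu$, so that $\int_\Omega(\cdot)\,d\nu(\omega)=\tfrac1m\sum_{i=1}^m(\cdot)$ matches the average in the lemma. Because the elements of $\G_{\randomProjection}$ are bounded by $\lossFunctionBound$ rather than by $1$, I would rescale: set $\G_\omega:=\{g/\lossFunctionBound:g\in\G_{\randomProjection_\omega}\}\subseteq\measurableMaps_1(\X\times\Y)$, which is separable by Lemma \ref{logCoverNumbersImplySmallLocalRademacherBoundLemma}. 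Using $\lossFunctionBound\ge1$, the local Rademacher bound of Lemma \ref{logCoverNumbersImplySmallLocalRademacherBoundLemma} passes to the rescaled class with the same functional form, namely
\[\rademacherComplexityEmpirical\bigl(\{\tilde g\in\G_\omega:\empiricalProbDistributionDeterministicZ(\tilde g^{2})\le r\},\zSequenceSizeN\bigr)\le\phi(r):=2\sqrt{\frac{\coveringNumberConstant\,k\,r}{n}}\cdot\logBar^{\frac{1}{2}}\!\Bigl(\frac{\lipschitzConstantLoss\hypothesisClassBound n}{k\sqrt{r}}\Bigr).\]
This $\phi$ depends on neither $\omega$ nor $\zSequenceSizeN$; moreover $r\mapsto\phi(r)/\sqrt r$ is non-increasing, and $r\mapsto\phi(r)$ is non-decreasing (the only regime where the latter could fail is where the $\logBar$ factor would decrease, but the $\max\{\cdot,1\}$ clipping in $\logBar$ makes that factor locally constant there), so $\phi$ meets the hypotheses of Theorem \ref{ThmMainUniform}.

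Consequently the fixed point $\rho^*$ solving $\phi(\rho^*)=\rho^*$ is a deterministic constant, equal to $\rho^*=(4\coveringNumberConstant k/n)\logBar(\lipschitzConstantLoss\hypothesisClassBound n/(k\sqrt{\rho^*}))$, and the quantity $\nDeltaDataComplexityTerm$ produced by the theorem is $650\rho^*+152\log(4\log(n)/\delta)/n$. A short case split on whether $k\sqrt{\rho^*}\ge1$ — using $\rho^*\ge4\coveringNumberConstant k/n$ and $\coveringNumberConstant,k,\lipschitzConstantLoss,\hypothesisClassBound\ge1$ — bounds $\logBar(\lipschitzConstantLoss\hypothesisClassBound n/(k\sqrt{\rho^*}))$ by a small constant multiple of $\logBar(\lipschitzConstantLoss\hypothesisClassBound n)$, whence $\nDeltaDataComplexityTerm\le\nkDeltaComplexityTerm$ (the constants $4000$ and $152$ in $\nkDeltaComplexityTerm$ are chosen to absorb this). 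Theorem \ref{ThmMainUniform} then gives, with probability at least $1-\delta$ over $\sample$, simultaneously for all $(g_{\randomProjection_i})_{i\in[m]}\in\prod_{i=1}^m\G_{\randomProjection_i,\mathrm{P}}$ (after multiplying the rescaled conclusion through by $\lossFunctionBound$ and bounding $\nDeltaDataComplexityTerm\le\nkDeltaComplexityTerm$),
\[\frac1m\sum_{i=1}^{m}\bigl|\empiricalProbDistributionSample(g_{\randomProjection_i})-\probDistribution(g_{\randomProjection_i})\bigr|\le\sqrt{\Bigl(\tfrac1m\sum_{i=1}^{m}\probDistribution(g_{\randomProjection_i}^{2})\Bigr)\cdot\nkDeltaComplexityTerm}+\lossFunctionBound\cdot\nkDeltaComplexityTerm.\]

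It then remains to replace $\tfrac1m\sum_i\probDistribution(g_{\randomProjection_i}^{2})$ by the required expression in $\tfrac1m\sum_i\probDistribution(g_{\randomProjection_i})$. Writing $g_{\randomProjection_i}=\lossFunction(f_i(\randomProjection_i(\cdot)),\cdot)-\lossFunction(\oracleFunction(\cdot),\cdot)$ with $f_i\in\lowDimensionalFunctionClass$, we have $f_i\circ\randomProjection_i\in\measurableMaps(\X,\actionSpace)$, so $\probDistribution(g_{\randomProjection_i})=\excessRisk(f_i\circ\randomProjection_i)\ge0$, and the Bernstein--Tsybakov condition (Assumption \ref{bernsteinTysbakovMarginTypeAssumption}) gives $\probDistribution(g_{\randomProjection_i}^{2})\le\bernsteinConstant\,\probDistribution(g_{\randomProjection_i})^{\bernsteinExponent}$. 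Averaging over $i$ and applying Jensen's inequality to the concave map $t\mapsto t^{\bernsteinExponent}$ on $[0,\infty)$ (all terms being non-negative) yields $\tfrac1m\sum_i\probDistribution(g_{\randomProjection_i}^{2})\le\bernsteinConstant\bigl(\tfrac1m\sum_i\probDistribution(g_{\randomProjection_i})\bigr)^{\bernsteinExponent}$; substituting this into the previous display produces exactly the claimed bound.

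I do not expect a genuinely hard step here: the substantive content — that the failure probabilities of the $m$ dependent empirical processes do not accumulate, and the local-Rademacher control — is already packaged in Theorem \ref{ThmMainUniform} and Lemma \ref{logCoverNumbersImplySmallLocalRademacherBoundLemma}. What needs care is mostly bookkeeping: recognising the finite ensemble average as integration against the uniform measure on $[m]$, reducing to a fixed realisation of the projections by independence, carrying the $\lossFunctionBound$-rescaling cleanly through the sub-root machinery, and the constant-chasing that shows $\nDeltaDataComplexityTerm\le\nkDeltaComplexityTerm$.
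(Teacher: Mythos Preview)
Your proposal is correct and follows essentially the same route as the paper: condition on the realised projections, take $\Omega=[m]$ with uniform $\nu$, rescale by $\lossFunctionBound$ to land in $\measurableMaps_1$, invoke Theorem~\ref{ThmMainUniform} with the local Rademacher bound from Lemma~\ref{logCoverNumbersImplySmallLocalRademacherBoundLemma}, bound the fixed point $\rho^*$ to get $\nDeltaDataComplexityTerm\le\nkDeltaComplexityTerm$, and finish with the Bernstein--Tsybakov condition plus Jensen applied to $t\mapsto t^{\bernsteinExponent}$. The only cosmetic difference is that the paper drops the $k$ from the denominator inside the $\logBar$ when defining $\phi$ (which only loosens the bound since $k\ge1$), whereas you keep it and handle the resulting fixed-point estimate by a case split; both lead to the same $\nkDeltaComplexityTerm$.
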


\begin{proof} We introduce function classes ${\G}_{\randomProjection}^{\flat}$ for each $\randomProjection \in \setOfRandomProjections$ by ${\G}_{\randomProjection,\mathrm{P}}^{\flat}:= \{\lossFunctionBound^{-1}\cdot g\}_{g \in \G_{\randomProjection,\mathrm{P}}}$ so that each function class ${\G}_{\randomProjection,\mathrm{P}}^{\flat}\subseteq \measurableMaps_1(\Z)$, where $\Z=\X\times \Y$, which will allow us to apply Theorem \ref{ThmMainUniform}. Let $\phi:(0,\infty) \rightarrow (0,\infty)$ denote the function defined by 
\begin{align*}
\phi(r):=  2\sqrt{\frac{\coveringNumberConstant \cdot k \cdot r}{n}} \cdot  \logBar^{\frac{1}{2}} \left(\frac{\lipschitzConstantLoss \hypothesisClassBound n}{ \sqrt{r} }\right).
\end{align*}
Observe that $r\mapsto \phi(r)$ is non-decreasing, $r \mapsto \phi(r)/\sqrt{r}$ is non-increasing and by Lemma \ref{logCoverNumbersImplySmallLocalRademacherBoundLemma} we have 
\begin{align*}
\rademacherComplexityEmpirical\left( \left\lbrace  g^{\flat} \in \G_{\randomProjection}^{\flat}: \empiricalProbDistributionSample\{(g^{\flat})^2\}\leq r\right\rbrace, \zSequenceSizeN \right) = \lossFunctionBound^{-1} \cdot \rademacherComplexityEmpirical\left( \left\lbrace  g \in \G_{\randomProjection}: \empiricalProbDistributionSample(g^2)\leq \lossFunctionBound^2 \cdot r\right\rbrace, \zSequenceSizeN \right) \leq \phi(r),
\end{align*}
for all $r>0$. Choose $\rho^*$ so that $\phi(\rho^*)=\rho^*$ and observe that $\rho^* \leq (6 \coveringNumberConstant k) \cdot {n}^{-1} \cdot \logBar\left(\lipschitzConstantLoss \hypothesisClassBound n\right)$. Note that the samples $Z_j=(X_j,Y_j)$ are assumed to be independent and identically distributed. Moreover, random projections $(\randomProjection_i)_{i=1}^m$ are independent from the data $\sample=((X_j,Y_j))_{j=1}^n$ and so samples $Z_j=(X_j,Y_j)$ are also independent and identically distributed with respect to the conditional distribution. Hence, by Theorem \ref{ThmMainUniform} we have,
{\small
\begin{align*}
\Prob\left(\sup_{(g_{\randomProjection_i}^{\flat})_{i=1}^m \in \prod_{i=1}^m\G_{\randomProjection_i}^{\flat}}\left\lbrace \frac{1}{m}\sum_{i=1}^m \left|\empiricalProbDistributionSample(g_{\randomProjection_i}^{\flat})-\probDistribution(g_{\randomProjection_i}^{\flat})\right|-  \sqrt{ \frac{1}{m}\sum_{i=1}^m\probDistribution\left\lbrace (g_{\randomProjection_i}^{\flat})^2\right\rbrace \cdot \nkDeltaComplexityTerm }\right\rbrace>\nkDeltaComplexityTerm \Bigg|  (A_i)_{i=1}^m\right)\leq \delta.
\end{align*}
}
Hence, by the law of total expectation with probability at least $1-\delta$ the following holds for all $(g_{\randomProjection_i})_{i=1}^m \in \prod_{i=1}^m\G_{\randomProjection_i}$
\begin{align}\label{consequenceOfMainEnsembleConcentrationBound}
\frac{1}{m}\sum_{i=1}^m \left|\empiricalProbDistributionSample(g_{\randomProjection_i})-\probDistribution(g_{\randomProjection_i})\right|\leq   \sqrt{ \frac{1}{m}\sum_{i=1}^m\probDistribution\left\lbrace (g_{\randomProjection_i})^2\right\rbrace \cdot \nkDeltaComplexityTerm }+\lossFunctionBound \cdot \nkDeltaComplexityTerm.
\end{align}
Thus, to complete the proof it suffices to deduce \eqref{eqn:uniformConvergenceBoundForProofOfInfiniteEnsembleHighProbUpperBoundProp} from \eqref{consequenceOfMainEnsembleConcentrationBound}. Given any function ${g} \in \G_{\randomProjection} \subseteq  \measurableMaps_{\lossFunctionBound}(\Z)$ with $\randomProjection \in \setOfRandomProjections$ there exists $\functionGeneral \in \measurableMaps(\X,\actionSpace)$ with $g(x,y)=\lossFunction\left(\functionGeneral(x),y\right)-\lossFunction\left(\oracleFunction_{\mathrm{P}}(x),y\right)$ for $(x,y) \in \X\times \Y$. Since $\lossFunction$ and $\mathrm{P}$ satisfy the Bernstein condition with parameters $\bernsteinExponent$, $\bernsteinConstant$ we see by \eqref{consequenceOfMainEnsembleConcentrationBound} with probability at least $1-\delta$ the following holds for all
$(g_{\randomProjection_i})_{i=1}^m \in \prod_{i=1}^m\G_{\randomProjection_i}$,
\begin{align} 
\frac{1}{m}\sum_{i=1}^m \left|\empiricalProbDistributionSample(g_{\randomProjection_i})-\probDistribution(g_{\randomProjection_i})\right|d\randomProjectionMeasure(\randomProjection) &\leq  \sqrt{ \frac{1}{m}\sum_{i=1}^m \probDistribution(g_{\randomProjection_i}^2) \cdot \nkDeltaComplexityTerm }+\lossFunctionBound \cdot \nkDeltaComplexityTerm \nonumber\\
 &\leq  \sqrt{ \left( \bernsteinConstant \cdot \nkDeltaComplexityTerm\right) \cdot 
  \left\lbrace \frac{1}{m}\sum_{i=1}^m 
 \probDistribution\left(g_{\randomProjection_i}\right) 
 \right\rbrace^{\bernsteinExponent} }+\lossFunctionBound \cdot \nkDeltaComplexityTerm 
 .
\end{align}
where we used the Bernstein-Tsybakov condition and Jensen's inequality combined with the convexity of $z \mapsto z^{\frac{1}{\bernsteinExponent}}$.
\end{proof}

To complete the proof of Proposition \ref{infiniteEnsembleHighProbUpperBoundProp} we also require the following elementary lemma, which will be used to rearrange the bound into an additive form.

\begin{lemma}\label{elementarySelfBoundingLemmaKappa} Given $x,c,d>0$ satisfying $x \leq \sqrt{cx^{\bernsteinExponent}}+d$ and $\vartheta \in ( 1,2]$ we have
$x \leq 4 \big\{ \left(\vartheta-1\right)^{-2}\cdot c\big\}^{\frac{1}{2-\bernsteinExponent}}+\vartheta \cdot d$.
\end{lemma}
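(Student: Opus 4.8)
The plan is a two-case dichotomy according to which of the two summands dominates the right-hand side of the hypothesis $x\le\sqrt{cx^{\bernsteinExponent}}+d$. Write $\alpha:=\bernsteinExponent$, which lies in $[0,1]$ throughout the paper, fix $\vartheta\in(1,2]$, and split on whether $\sqrt{cx^{\alpha}}\le(\vartheta-1)d$ or $\sqrt{cx^{\alpha}}>(\vartheta-1)d$. The value $(\vartheta-1)d$ is chosen precisely so that the constants come out as stated.

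If $\sqrt{cx^{\alpha}}\le(\vartheta-1)d$, the hypothesis gives directly $x\le(\vartheta-1)d+d=\vartheta d$, and since $4\{(\vartheta-1)^{-2}c\}^{1/(2-\alpha)}\ge 0$ this already implies the claimed bound. If instead $\sqrt{cx^{\alpha}}>(\vartheta-1)d$, i.e.\ $d<\sqrt{cx^{\alpha}}/(\vartheta-1)$, substituting into the hypothesis yields $x<\frac{\vartheta}{\vartheta-1}\sqrt{c}\,x^{\alpha/2}$; dividing by $x^{\alpha/2}>0$ and raising both sides to the power $2/(2-\alpha)>0$ gives $x<\left(\frac{\vartheta}{\vartheta-1}\right)^{2/(2-\alpha)}c^{1/(2-\alpha)}$. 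It then remains only to simplify the prefactor: since $\alpha\in[0,1]$ we have $2/(2-\alpha)\in[1,2]$, hence $\vartheta^{2/(2-\alpha)}\le 2^{2/(2-\alpha)}\le 4$, while $(\vartheta-1)^{-2/(2-\alpha)}=\bigl((\vartheta-1)^{-2}\bigr)^{1/(2-\alpha)}$, so that $c^{1/(2-\alpha)}(\vartheta-1)^{-2/(2-\alpha)}=\{(\vartheta-1)^{-2}c\}^{1/(2-\alpha)}$. Multiplying through gives $x\le 4\{(\vartheta-1)^{-2}c\}^{1/(2-\alpha)}$; adding the nonnegative term $\vartheta d$ and combining the two cases yields the stated inequality.

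I do not anticipate a genuine obstacle here: the argument is purely elementary once the correct case split is in place. The only points requiring mild care are the manipulation of the exponent $2/(2-\alpha)$ — in particular the uniform bound $\vartheta^{2/(2-\alpha)}\le 4$ over $\alpha\in[0,1]$, $\vartheta\in(1,2]$ — and checking that the boundary case $\alpha=0$ (where $1/(2-\alpha)=1/2$) is consistent with the claimed form.
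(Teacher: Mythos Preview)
Your proof is correct. The paper states this lemma without proof, treating it as elementary, so there is no proof in the paper to compare against; your two-case split on whether $\sqrt{cx^{\alpha}}\le(\vartheta-1)d$ is a clean and standard way to obtain the stated constants, and your handling of the exponent $2/(2-\alpha)\in[1,2]$ (using $\alpha\in[0,1]$, as holds throughout the paper) to bound $\vartheta^{2/(2-\alpha)}\le 4$ is exactly what is needed.
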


We are now ready to prove Proposition \ref{infiniteEnsembleHighProbUpperBoundProp}.

\begin{proof}[Proof of Proposition \ref{infiniteEnsembleHighProbUpperBoundProp}] Fix $\epsilon>0$ and for each $\randomProjection \in \setOfRandomProjections$ choose ${g}_{\randomProjection}^* \in \G_{\randomProjection}$ so that $\probDistribution\left({g}_{\randomProjection}^*\right) \leq \inf_{g \in \G_{\randomProjection}} \{ \probDistribution(g)\}+\epsilon$. We define a pair of events $E^1_{\epsilon,\delta}$ and $E^2_{\delta}$ by
\begin{align*}
E^1_{\epsilon,\delta}&:= \left\lbrace 
\frac{1}{m}\sum_{i=1}^m
\inf_{g \in \G_{\randomProjection_i}} \probDistribution(g)
 \le \compressibilityFunction_{\probDistribution,\delta,m}(k)+\epsilon\right\rbrace,\\  
E^2_{\delta}&:=\left\lbrace \frac{1}{m}\sum_{i=1}^m \left|\empiricalProbDistributionSample(g_{\randomProjection_i})-\probDistribution(g_{\randomProjection_i})\right| \leq  \sqrt{  \bernsteinConstant \cdot \nkDeltaComplexityTerm \cdot 
  \left\lbrace \frac{1}{m}\sum_{i=1}^m 
 \probDistribution\left(g_{\randomProjection_i}\right) 
 \right\rbrace^{\bernsteinExponent} }+\lossFunctionBound \cdot \nkDeltaComplexityTerm 
  \right\rbrace. \nonumber
\end{align*}
where the intersection is over all ${(g_{\randomProjection_i})_{i=1}^m \in \prod_{i=1}^m\G_{\randomProjection_i}}$. We claim that $\Prob(E^1_{\epsilon,\delta})\geq 1-\delta$. Indeed, by the definition of $\compressibilityFunction_{\probDistribution,\delta,m}(k)$ with probability at least $1-\delta$ we have 
\begin{align*}
\compressibilityFunction_{\delta,m}(k)+\epsilon&\geq \frac{1}{m}\sum_{i=1}^m \inf_{f \in \lowDimensionalFunctionClass
}\lbrace \excessRisk\left(f \circ \randomProjection_i\right)\rbrace \\
&\geq\frac{1}{m}\sum_{i=1}^m \inf_{f \in \lowDimensionalFunctionClass
}\left\lbrace \risk(f \circ \randomProjection_i)-\risk(\oracleFunction) \right\rbrace \\
&\geq\frac{1}{m}\sum_{i=1}^m \inf_{f \in \lowDimensionalFunctionClass
} \int \left\lbrace\lossFunction\left(f(A_i(x)),y\right)-\lossFunction\left(\oracleFunction(x),y\right)\right\rbrace d\probDistribution(x,y)=\frac{1}{m}\sum_{i=1}^m
\inf_{g \in \G_{\randomProjection_i}} \probDistribution(g).
\end{align*}
In addition, by Lemma \ref{uniformConvergenceBoundForProofOfInfiniteEnsembleHighProbUpperBoundProp} we have $\Prob\left(E^2_{\delta}\right)>1-\delta$. Thus, by the union bound we have $\Prob(E^1_{\epsilon,\delta} \cap E^2_{\delta})>1-2\delta$. We write $\averageOptimisationError:=\frac{1}{m}\sum_{i=1}^m
 \empiricalProbDistributionSample(\hat{g}_{\randomProjection_i})- \inf_{g \in \G_{\randomProjection_i}} \empiricalProbDistributionSample(g)$, so on the event $E^2_{\delta}$ we have
\begin{align*}
&\frac{1}{m}\sum_{i=1}^m  
\left( \probDistribution(\hat{g}_{\randomProjection_i})-\probDistribution({g}^*_{\randomProjection_i})\right) 
\nonumber \\
&\leq 
\frac{1}{m}\sum_{i=1}^m
\left( \empiricalProbDistributionSample(\hat{g}_{\randomProjection_i})-\empiricalProbDistributionSample({g}^*_{\randomProjection_i})\right) 
+
\frac{1}{m}\sum_{i=1}^m
\left|\empiricalProbDistributionSample(\hat{g}_{\randomProjection_i})-\probDistribution(\hat{g}_{\randomProjection_i})\right|
+\frac{1}{m}\sum_{i=1}^m
\left|\empiricalProbDistributionSample(g_{\randomProjection_i}^*)-\probDistribution(g_{\randomProjection_i}^*)\right|
\nonumber\\
& \leq  \averageOptimisationError+\sqrt{ \left( \bernsteinConstant \cdot \nkDeltaComplexityTerm\right) \cdot 
  \left\lbrace \frac{1}{m}\sum_{i=1}^m 
 \probDistribution\left(\hat{g}_{\randomProjection_i}\right) 
 \right\rbrace^{\bernsteinExponent} }+\sqrt{ \left( \bernsteinConstant \cdot \nkDeltaComplexityTerm\right) \cdot 
  \left\lbrace \frac{1}{m}\sum_{i=1}^m 
 \probDistribution\left({g}^*_{\randomProjection_i}\right) 
 \right\rbrace^{\bernsteinExponent} }+\lossFunctionBound \cdot 2\nkDeltaComplexityTerm 
  \\
& \leq 2\sqrt{ \left( \bernsteinConstant \cdot \nkDeltaComplexityTerm\right) \cdot 
  \left\lbrace \frac{1}{m}\sum_{i=1}^m 
 \probDistribution\left(\hat{g}_{\randomProjection_i}\right) 
 \right\rbrace^{\bernsteinExponent} }+\lossFunctionBound \cdot 2\nkDeltaComplexityTerm 
  +\averageOptimisationError.
\end{align*} 
Note also that on the event $E^1_{\epsilon,\delta}$ we have $\frac{1}{m}\sum_{i=1}^m\probDistribution({g}^*_{\randomProjection_i}) \leq \frac{1}{m}\sum_{i=1}^m
\inf_{g \in \G_{\randomProjection_i}} \{\probDistribution(g)\}+\epsilon \leq \compressibilityFunction_{\delta,m}(k)+2\epsilon$. Hence, on the event $E^1_{\epsilon,\delta} \cap E^2_{\delta}$ we have
\begin{align*}
\frac{1}{m}\sum_{i=1}^m  
 \probDistribution(\hat{g}_{\randomProjection_i}) 
 &\leq  2\sqrt{ \left( \bernsteinConstant \cdot \nkDeltaComplexityTerm\right) \cdot 
  \left\lbrace \frac{1}{m}\sum_{i=1}^m 
 \probDistribution\left(\hat{g}_{\randomProjection_i}\right) 
 \right\rbrace^{\bernsteinExponent} }+\averageOptimisationError+2\lossFunctionBound \cdot \nkDeltaComplexityTerm 
 +\frac{1}{m}\sum_{i=1}^m\probDistribution({g}^*_{\randomProjection_i})\\
&\leq 2\sqrt{ \left( \bernsteinConstant \cdot \nkDeltaComplexityTerm\right) \cdot 
  \left\lbrace \frac{1}{m}\sum_{i=1}^m 
 \probDistribution\left(\hat{g}_{\randomProjection_i}\right) 
 \right\rbrace^{\bernsteinExponent} }+\averageOptimisationError+2\lossFunctionBound \cdot \nkDeltaComplexityTerm 
 +\compressibilityFunction_{\delta,m}(k)+2\epsilon.
\end{align*} 
Hence, by Lemma \ref{elementarySelfBoundingLemmaKappa} with $\vartheta=2$ we see that on the event $E^1_{\epsilon,\delta} \cap E^2_{\delta}$ we have
\begin{align*}
\frac{1}{m}\sum_{i=1}^m  
 \probDistribution(\hat{g}_{\randomProjection_i}) 
 \leq 4\left(4 \bernsteinConstant   \nkDeltaComplexityTerm \right)^{\frac{1}{2-\bernsteinExponent}}+2\left\lbrace \averageOptimisationError+2\lossFunctionBound \cdot \nkDeltaComplexityTerm 
 \right\rbrace+2\compressibilityFunction_{\delta,m}(k)+4\epsilon.
\end{align*} 
Since $\Prob(E^1_{\epsilon,\delta} \cap E^2_{\delta})>1-2\delta$ it follows that with probability at least $1-2\delta$ we have
\begin{align*}
\frac{1}{m}\sum_{i=1}^m  
 \probDistribution(\hat{g}_{\randomProjection_i})  \leq 16\left( \bernsteinConstant   \nkDeltaComplexityTerm \right)^{\frac{1}{2-\bernsteinExponent}}+2\averageOptimisationError+4\lossFunctionBound \cdot \nkDeltaComplexityTerm+2\compressibilityFunction_{\delta,m}(k) 
 +4\epsilon.
\end{align*} 
Letting $\epsilon \rightarrow 0$ and noting that $\averageOptimisationError \leq \compressibilityFunction_{\delta,m}(k)$ gives the required result.
\end{proof}

\subsection{Completing the proof of Theorem 
\ref{ThmMainResult}}

Before completing the proof we apply Bennet's inequality to bound $ \compressibilityFunction_{\delta,m}(k)$ in terms of $\compressibilityFunction(k)$.

\begin{lemma}\label{finitepsi} 
Given any $m\in\N,\delta \in (0,1)$ we have $ \compressibilityFunction_{\delta,m}(k)  \le 2\compressibilityFunction(k) + 3 \lossFunctionBound \log(1/\delta)/(2m)$.
\end{lemma}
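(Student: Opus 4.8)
The plan is to recognise $\compressibilityFunction_{\delta,m}(k)$ as the $(1-\delta)$-upper quantile of an empirical average of i.i.d.\ bounded random variables, and then to control this quantile by a Bennett--Bernstein concentration inequality, exploiting the fact that the variance of each summand is bounded by its mean.

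First I would set $Z_i := \inf_{f \in \lowDimensionalFunctionClass}\{\excessRisk(f\circ \randomProjection_i)\}$, where $\randomProjection_1,\dots,\randomProjection_m \sim \randomProjectionMeasure$ are i.i.d., and $S_m := \frac1m\sum_{i=1}^m Z_i$. By Assumption~\ref{boundedLossFunctionAssumption} every excess risk lies in $[0,\lossFunctionBound]$, so $Z_i \in [0,\lossFunctionBound]$ with $\E[Z_i] = \compressibilityFunction(k)=:\mu$, and hence $\mathrm{Var}(Z_i) \le \E[Z_i^2] \le \lossFunctionBound\,\E[Z_i] = \lossFunctionBound\mu$. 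Unwinding the definition, $\compressibilityFunction_{\delta,m}(k) = \inf\{\bar\psi \in \R : \Prob(S_m \ge \bar\psi) \le \delta\}$, so it suffices to exhibit a value $c \le 2\compressibilityFunction(k) + 3\lossFunctionBound\log(1/\delta)/(2m)$ with $\Prob(S_m \ge c) \le \delta$.

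Next I would apply Bennett's inequality to the centred sum $\sum_{i=1}^m (Z_i-\mu)$, whose summands are $\le \lossFunctionBound$ almost surely and have total second moment at most $m\lossFunctionBound\mu$; in its Bernstein form this yields, for every $t>0$,
\[
\Prob(S_m - \mu \ge t) \le \exp\!\left(-\frac{m t^2}{2\lossFunctionBound(\mu + t/3)}\right).
\]
Taking $t := \mu + \tfrac{3\lossFunctionBound\log(1/\delta)}{2m}$ and writing $L := \log(1/\delta)$, a short computation gives $m t^2 = m\mu^2 + 3\lossFunctionBound\mu L + \tfrac{9\lossFunctionBound^2 L^2}{4m}$ while $2\lossFunctionBound(\mu + t/3)L = \tfrac{8}{3}\lossFunctionBound\mu L + \tfrac{\lossFunctionBound^2 L^2}{m}$; comparing term by term ($3 \ge \tfrac83$, $\tfrac94 \ge 1$, and $m\mu^2 \ge 0$) shows $m t^2 \ge 2\lossFunctionBound(\mu+t/3)L$, hence $\Prob(S_m \ge \mu+t) \le e^{-L} = \delta$. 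Since $\mu+t = 2\compressibilityFunction(k) + 3\lossFunctionBound\log(1/\delta)/(2m)$, this is the required $c$ and the lemma follows.

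The only point needing care is the self-bounding step $\mathrm{Var}(Z_i) \le \lossFunctionBound\,\E[Z_i]$, which is precisely what produces the multiplicative $2\compressibilityFunction(k)$ on the right-hand side (rather than an additive $\sqrt{\compressibilityFunction(k)\cdot \log(1/\delta)/m}$ term), together with the elementary check that the chosen deviation $t$ makes the Bennett exponent at least $\log(1/\delta)$; beyond this bookkeeping I do not expect any genuine obstacle.
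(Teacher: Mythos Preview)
Your proposal is correct and is essentially the same argument as the paper's proof: the paper also observes that the $Z_i$ are i.i.d., bounded by $\lossFunctionBound$, with mean $\compressibilityFunction(k)$ and variance at most $\lossFunctionBound\cdot\compressibilityFunction(k)$, and then invokes Bennett's inequality. Your write-up simply makes explicit the choice of deviation level and the verification that the Bennett/Bernstein exponent reaches $\log(1/\delta)$, which the paper leaves implicit.
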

\begin{proof} The sequence of random variables $( \inf_{f \in \lowDimensionalFunctionClass
}\lbrace \excessRisk\left(f \circ \randomProjection_i\right)\rbrace )_{i \in [m]}$ are independent, bounded by $\lossFunctionBound$, have expectation $\compressibilityFunction(k)$, and variance no larger than $\lossFunctionBound \cdot \compressibilityFunction(k)$. Hence, the result follows from Bennett's inequality \cite[Chapter 2]{boucheron2013concentration}.
\end{proof}

\begin{proof}[Proof of Theorem \ref{ThmMainResult}] 
By Assumption \ref{quasiConvexityAssumption} and Proposition \ref{infiniteEnsembleHighProbUpperBoundProp} we have,
\begin{align}\label{workingOnTwoGoodEventsHighProbUB}
\excessRisk\left(\functionEstimate_{n,k,m}\right)&\leq \frac{\quasiConvexityConstant}{m}\sum_{i\in [m]}\excessRisk\left(\hat{f}_i \circ \randomProjection_i\right)\\
&=\frac{\quasiConvexityConstant}{m}\sum_{i\in [m]}\probDistribution(\hat{g}_{\randomProjection_i}) 
\nonumber\\
& \leq \quasiConvexityConstant\left\{
16\left( \bernsteinConstant   \nkDeltaComplexityTerm \right)^{\frac{1}{2-\bernsteinExponent}}+4\lossFunctionBound \cdot \nkDeltaComplexityTerm+4\compressibilityFunction_{\delta,m}(k)
\right\},
\end{align}
where 
$\mathfrak{C}_{n,\delta}\left(\randomZSequenceSizeN\right)=\left((4000 \coveringNumberConstant k)  \cdot \logBar\left(\lipschitzConstantLoss \hypothesisClassBound n\right)+{152\cdot\logBar(4\log(n)/\delta)}\right)\cdot n^{-1}$. By combining the above bound with Lemma \ref{finitepsi}, the result follows.
\end{proof}

\section{Proofs of minimax lower bounds}\label{Proofs4LB}
\newcommand{\qHammingDistance}{d_{\mathrm{H}}}
\newcommand{\totalVariation}{\mathrm{TV}}
\newcommand{\Ts}{\tilde{\alpha}}

This section proves the fundamental limits achievable for the distributional classes described in Definitions \ref{classificationMeasureClass} and \ref{regressionMeasureClass}. We shall begin by proving Theorem \ref{regRLBtext} in Section \ref{sec:proofOfRegressionLB} before moving onto the proof of Theorem \ref{clRLBtext}, which is given in Section \ref{sec:proofOfClassificationLB}. A key component in proving Theorem \ref{regRLBtext} is Lemma \ref{prop:mixtureLB}, which will also be applied in the proof of Theorem \ref{clRLBtext}. To prove our minimax lower bounds (Theorems \ref{clRLBtext} and \ref{regRLBtext}), we shall construct finite families of distributions for each $n \in \N$, such that (i) all distributions in the family satisfy the required conditions of the relevant distributional class, (ii) the distributions must be similar enough that they are difficult to identify based on an i.i.d. sample of size $n$, and (iii) they must be different enough so that failing to identify the generating distribution incurs a high excess risk.

Before presenting the proofs we first recall some useful terminology and results from the literature on minimax rates. For a comprehensive introduction see \cite{tsybakov2004introductiona}.  Given a convex function $f:\R \rightarrow \R$ with $f(1)=0$ and a pair of distributions $Q_0$ and $Q_1$ on a common measurable space $(\mathcal{Z},\mathcal{B})$ the $f$-divergence between $Q_0$ and $Q_1$ is  defined by
\begin{align*}
\mathrm{D}_f(Q_0,Q_1):=\int_{\mathcal{Z}} f\left(\frac{dQ_0}{dQ_1}\right)dQ_1,
\end{align*}
when $Q_0$ is absolutely continuous with respect to $Q_1$ and $\mathrm{D}_f(Q_0,Q_1)= \infty$ otherwise. The $\chi^2$-divergence $\chi^2(Q_0,Q_1)$ is the $f$-diverence with $f(z):=(z-1)^2$, the Kullback--Leibler divergence $\kullbackLeiblerDivergence(Q_0,Q_1)$ is the $f$-divergence with $f(z) := z\log z$. We shall also use the total variation distance, defined by
\begin{align*}
\totalVariation(Q_0,Q_1):=\sup_{A \in \mathcal{B}}|Q_0(A)-Q_1(A)|.
\end{align*}
The total variation, Kullback--Leibler and $\chi^2$-divergence are related as follows.

\begin{lemma}\label{lemma:boundTVWithChiSqr} Given distributions $Q_0$ and $Q_1$ on $(\mathcal{Z},\mathcal{B})$  we have 
\[2 \cdot \totalVariation^2(Q_0,Q_1)\leq \kullbackLeiblerDivergence(Q_0,Q_1) \leq {\chi^2(Q_0,Q_1)} .\]
\end{lemma}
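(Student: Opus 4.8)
The plan is to prove the two inequalities separately, since both are classical. The right-hand bound $\kullbackLeiblerDivergence \le \chi^2$ is the easy direction and follows from a pointwise estimate, while the left-hand bound is Pinsker's inequality, whose proof is the only substantive step.

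For the right-hand inequality I would first observe that $\kullbackLeiblerDivergence$ and $\chi^2$ are both $f$-divergences, so if $Q_0$ is not absolutely continuous with respect to $Q_1$ then both sides equal $+\infty$ and the claim is trivial. Otherwise set $z := dQ_0/dQ_1$, so that $\int z\,dQ_1 = 1$ and hence $\int (z-1)\,dQ_1 = 0$. The pointwise inequality $\log z \le z-1$ for $z>0$ gives $z\log z \le z(z-1) = (z-1)^2 + (z-1)$ for all $z \ge 0$ (with the convention $0\log 0 := 0$), and integrating against $Q_1$ yields
\begin{align*}
\kullbackLeiblerDivergence(Q_0,Q_1) = \int z\log z\,dQ_1 \le \int (z-1)^2\,dQ_1 + \int(z-1)\,dQ_1 = \chi^2(Q_0,Q_1).
\end{align*}

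For the left-hand inequality I would reduce to the two-point case. Choosing a common dominating measure $\mu$ (e.g.\ $\mu = Q_0 + Q_1$) with densities $p_i = dQ_i/d\mu$, set $A := \{p_0 \ge p_1\}$, so that $\totalVariation(Q_0,Q_1) = Q_0(A) - Q_1(A)$. The binary coarsening $z \mapsto \one\{z\in A\}$ sends $Q_i$ to a Bernoulli law with parameter $Q_i(A)$, preserves the total variation distance, and by the data-processing inequality for $\kullbackLeiblerDivergence$ cannot increase it; hence it suffices to establish the scalar inequality $2(p-q)^2 \le p\log(p/q) + (1-p)\log((1-p)/(1-q))$ with $p := Q_0(A)$, $q := Q_1(A)$, the case $q \in \{0,1\}$ being trivial. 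Fixing $q \in (0,1)$ and letting $g(p)$ denote the difference of the right- and left-hand sides, I would check that $g(q) = 0$, $g'(q) = 0$, and $g''(p) = 1/(p(1-p)) - 4 \ge 0$ since $p(1-p) \le 1/4$; convexity of $g$ then forces $g(p) \ge g(q) = 0$, which is the desired Bernoulli bound.

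The only real obstacle is Pinsker's inequality: both the passage from the general case to the Bernoulli case (which relies on the data-processing property of $\kullbackLeiblerDivergence$ and on the fact that $A=\{p_0 \ge p_1\}$ attains the total variation supremum) and the scalar convexity computation must be carried out with care, but both are routine; alternatively one may simply cite the standard references (e.g.\ \cite{tsybakov2004introductiona}).
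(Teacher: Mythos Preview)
Your proof is correct and self-contained; the paper, by contrast, simply cites \cite[Lemma 2.5(i) and Lemma 2.7]{tsybakov2004introductiona} without reproducing any argument. Your two steps---the pointwise bound $z\log z \le (z-1)^2 + (z-1)$ for $\kullbackLeiblerDivergence \le \chi^2$, and the reduction to the Bernoulli case via data processing followed by the convexity computation for Pinsker---are exactly the standard arguments behind those cited lemmas, so there is no substantive difference in approach, only in level of detail.
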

\begin{proof}
This follows from \cite[Lemma 2.5  (i) combined with Lemma 2.7]{tsybakov2004introductiona}.
\end{proof}
We can use the following relationships to upper bound the divergence of products.
\begin{lemma}\label{lemma:probabilityDivergencesProducts} Given distributions $Q_0$ and $Q_1$ on $(\mathcal{Z},\mathcal{B})$  we have
\begin{align*}
\kullbackLeiblerDivergence\left(Q_0^{\otimes n},Q_1^{\otimes n}\right)&=n \cdot \kullbackLeiblerDivergence(Q_0,Q_1)\hspace{1cm}\text{  \&  }\hspace{1cm}
\totalVariation\left(Q_0^{\otimes n},Q_1^{\otimes n}\right)\leq n \cdot \totalVariation(Q_0,Q_1).
\end{align*}
\end{lemma}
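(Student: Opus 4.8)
The plan is to establish the two displayed relations by standard tensorization arguments; they are classical and may be cited from \cite{tsybakov2004introductiona}, but I would include short self-contained proofs. For the Kullback--Leibler identity, first dispose of the non-dominated case: if $Q_0$ is not absolutely continuous with respect to $Q_1$, pick a $Q_1$-null set $N$ with $Q_0(N)>0$; then $N\times\mathcal{Z}^{n-1}$ is $Q_1^{\otimes n}$-null but has positive $Q_0^{\otimes n}$-measure, so $Q_0^{\otimes n}\not\ll Q_1^{\otimes n}$ and both sides equal $+\infty$. When $Q_0\ll Q_1$ with density $p:=dQ_0/dQ_1$, the product density factorises as $\frac{dQ_0^{\otimes n}}{dQ_1^{\otimes n}}(z_1,\dots,z_n)=\prod_{j\in[n]}p(z_j)$, hence
\[
\kullbackLeiblerDivergence\!\left(Q_0^{\otimes n},Q_1^{\otimes n}\right)=\int \sum_{j\in[n]}\log p(z_j)\, dQ_0^{\otimes n}(z_1,\dots,z_n)=\sum_{j\in[n]}\int \log p\, dQ_0=n\cdot\kullbackLeiblerDivergence(Q_0,Q_1),
\]
where the middle equality uses Tonelli's theorem (the negative part of $\log p$ is $Q_0$-integrable since $t\log t\ge -1/e$, so the sum and integral may be interchanged) and the fact that each one-dimensional marginal of $Q_0^{\otimes n}$ is $Q_0$.

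For the total-variation subadditivity I would use a hybrid (telescoping) argument. Put $R_j:=Q_0^{\otimes j}\otimes Q_1^{\otimes(n-j)}$ for $j\in\{0,\dots,n\}$, so that $R_0=Q_1^{\otimes n}$ and $R_n=Q_0^{\otimes n}$. The triangle inequality for $\totalVariation$ gives
\[
\totalVariation\!\left(Q_0^{\otimes n},Q_1^{\otimes n}\right)\le\sum_{j\in[n]}\totalVariation\!\left(R_{j-1},R_j\right),
\]
and $R_{j-1}$ and $R_j$ differ only in the $j$-th coordinate, being of the form $\mu\otimes\pi$ versus $\nu\otimes\pi$ with $\{\mu,\nu\}=\{Q_0,Q_1\}$ and $\pi=Q_0^{\otimes(j-1)}\otimes Q_1^{\otimes(n-j)}$. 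The elementary identity $\totalVariation(\mu\otimes\pi,\nu\otimes\pi)=\totalVariation(\mu,\nu)$, which follows from the representation $\totalVariation=\tfrac12\|\cdot\|_1$ relative to a common dominating measure together with Fubini (the ``$\le$'' direction, all that is needed here, follows even more directly by testing on sets of the form $A\times\mathcal{Z}^{n-1}$), shows each summand equals $\totalVariation(Q_0,Q_1)$, giving the claim. Equivalently one may couple: let $(X,Y)$ achieve $\Prob(X\ne Y)=\totalVariation(Q_0,Q_1)$, form the $n$-fold product of this coupling, note that it is a coupling of $(Q_0^{\otimes n},Q_1^{\otimes n})$, and apply the union bound $\Prob(\vec X\ne\vec Y)\le\sum_{j\in[n]}\Prob(X_j\ne Y_j)$.

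Neither part poses a genuine obstacle; the only points requiring any care are the measure-theoretic bookkeeping for the non-dominated KL case and the justification of $\totalVariation(\mu\otimes\pi,\nu\otimes\pi)=\totalVariation(\mu,\nu)$ (or, in the coupling approach, checking that a product of couplings is indeed a coupling of the products), both of which are routine.
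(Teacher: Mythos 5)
Your proposal is correct in substance, and it takes a different route from the paper: the paper simply cites the literature (\cite[Section 2.4]{tsybakov2004optimal} for the KL tensorization and \cite[Lemma 2.1]{sendler1975note} for the TV subadditivity), whereas you give self-contained proofs. Both of your arguments are standard and sound --- the density factorisation with the $t\log t\ge -1/e$ observation correctly handles integrability for KL, and the hybrid/telescoping decomposition with the coupling alternative for TV is clean. The one slip is in your parenthetical for the TV part: testing on rectangle sets $A\times\mathcal{Z}^{n-1}$ restricts the supremum defining $\totalVariation$, so it yields the \emph{lower} bound $\totalVariation(\mu\otimes\pi,\nu\otimes\pi)\ge\totalVariation(\mu,\nu)$, i.e.\ the ``$\ge$'' direction rather than the ``$\le$'' direction that the telescoping bound actually uses. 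That ``$\le$'' direction is exactly what the Fubini/$L^1$ argument (or, equivalently, the coupling argument) delivers, so the proof as a whole is fine; just correct the remark. Of your two routes, the coupling version is arguably the neater one here since it avoids choosing a dominating measure and works directly on the abstract measurable space $(\mathcal{Z},\mathcal{B})$, which matches the generality of the lemma's statement.
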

\begin{proof}
See \cite[Section 2.4]{tsybakov2004optimal} for the result involving $\kullbackLeiblerDivergence$. See \cite[Lemma 2.1]{sendler1975note} for the total-variation inequality.
\end{proof}

Given $q \in \N$ we let $\Sigma \equiv \Sigma_q:= \{-1,1\}^q$ denote the $q$-dimensional binary hyper-cube and let $\qHammingDistance$ denote the associated Hamming distance i.e.
\begin{align*}
\qHammingDistance(\sigma,\sigma'):=\sum_{j \in [q]}\one\{\sigma_j \neq \sigma'_j\},
\end{align*}
for $\sigma = (\sigma_j)_{j \in [q]}$, $\sigma' = (\sigma'_j)_{j \in [q]}$ $\in \Sigma_q$. We shall leverage Assouad's lemma (Lemma \ref{assouad}), which is widely used for deriving minimax lower bounds \cite{tsybakov2004introductiona,wainwright2019high}.

\begin{lemma}[Assouad's lemma]\label{assouad} Let $(\mathcal{Z},\mathcal{B}_{\mathcal{Z}})$ be a measurable space and $\{\mathrm{Q}_\sigma\}_{\sigma \in \Sigma_q}$ a collection of probability measures on $(\mathcal{Z},\mathcal{B})$ such that  $\totalVariation(\mathrm{Q}_{\sigma},\mathrm{Q}_{\sigma'})\le 1/2$ for all $\sigma, \sigma'\in \Sigma_q$ with $\qHammingDistance(\sigma,\sigma') = 1$. Then, given any $\mathcal{B}_{\mathcal{Z}}$-measurable mapping $\hat{\sigma}:\mathcal{Z}\rightarrow \Sigma_q$ we have
\begin{align*}
\max_{\sigma \in \Sigma_q}~ \E_{\mathrm{Q}_\sigma}\left[\qHammingDistance(\hat{\sigma},\sigma)\right] \geq \frac{q}{4}.
\end{align*}
\end{lemma}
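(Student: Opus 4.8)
The plan is to run the standard hyper-cube reduction underlying Assouad's lemma. First I would replace the maximum by the uniform average over $\Sigma_q$, so that
\[
\max_{\sigma \in \Sigma_q}~\E_{\mathrm{Q}_\sigma}\left[\qHammingDistance(\hat\sigma,\sigma)\right] \;\geq\; \frac{1}{2^q}\sum_{\sigma \in \Sigma_q}\E_{\mathrm{Q}_\sigma}\left[\qHammingDistance(\hat\sigma,\sigma)\right],
\]
and then decouple the Hamming distance coordinate-by-coordinate via $\qHammingDistance(\hat\sigma,\sigma)=\sum_{j\in[q]}\one\{\hat\sigma_j\neq\sigma_j\}$. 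Exchanging the order of summation, the right-hand side equals $\sum_{j\in[q]}2^{-q}\sum_{\sigma\in\Sigma_q}\mathrm{Q}_\sigma(\hat\sigma_j\neq\sigma_j)$, so it suffices to show that the inner average over $\sigma$ is at least $1/4$ for each fixed coordinate $j$.

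To bound a single coordinate I would partition $\Sigma_q$ into the $2^{q-1}$ pairs $\{\sigma,\sigma^{\oplus j}\}$, where $\sigma^{\oplus j}$ agrees with $\sigma$ except in the $j$-th entry, so that $\qHammingDistance(\sigma,\sigma^{\oplus j})=1$. For such a pair, say with $\sigma_j=-1$ and $(\sigma^{\oplus j})_j=+1$, I would treat $\psi:=\one\{\hat\sigma_j=+1\}$ as a measurable test and invoke the elementary inequality $P_0(\psi=1)+P_1(\psi=0)\geq 1-\totalVariation(P_0,P_1)$, valid for any probability measures $P_0,P_1$ on $(\mathcal{Z},\mathcal{B})$, which follows at once by applying the definition of $\totalVariation$ to the event $\{\psi=1\}$. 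Applying this with $P_0=\mathrm{Q}_\sigma$, $P_1=\mathrm{Q}_{\sigma^{\oplus j}}$ together with the hypothesis $\totalVariation(\mathrm{Q}_\sigma,\mathrm{Q}_{\sigma^{\oplus j}})\leq 1/2$ yields $\mathrm{Q}_\sigma(\hat\sigma_j\neq\sigma_j)+\mathrm{Q}_{\sigma^{\oplus j}}(\hat\sigma_j\neq(\sigma^{\oplus j})_j)\geq 1/2$. Summing over the $2^{q-1}$ pairs gives $\sum_{\sigma\in\Sigma_q}\mathrm{Q}_\sigma(\hat\sigma_j\neq\sigma_j)\geq 2^{q-2}$, hence the inner average is at least $1/4$; summing over $j\in[q]$ and combining with the first display gives the claimed bound $q/4$.

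This is a textbook computation and I do not anticipate a genuine obstacle. The only points requiring a moment's care are the measurability of the coordinate tests $\psi$ (immediate from measurability of $\hat\sigma$) and checking that the pairing of $\Sigma_q$ used for one coordinate does not interfere with the others; this is precisely why the coordinate-wise decomposition of $\qHammingDistance$ is carried out \emph{before} the pairing, so that the per-coordinate bound holds regardless of any dependence in the joint law of $(\hat\sigma_1,\dots,\hat\sigma_q)$ under $\mathrm{Q}_\sigma$.
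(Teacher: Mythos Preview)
Your argument is correct and is exactly the standard hyper-cube reduction behind Assouad's lemma; the paper itself does not give a proof but simply cites \cite[Theorem 2.12 (ii)]{tsybakov2004introductiona}, whose proof proceeds along the same lines you outline. There is nothing to add.
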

\begin{proof}
See e.g. \cite[Theorem 2.12  (ii)]{tsybakov2004introductiona}.
\end{proof}


\subsection{Proof of the regression lower bound (Theorem \ref{regRLBtext})}\label{sec:proofOfRegressionLB} 

We will begin by proving the simpler of the two lower bounds (Theorem \ref{regRLBtext}). The core of the proof is Lemma \ref{prop:mixtureLB}, which will also be applied in the proof of Theorem \ref{clRLBtext}. Lemma \ref{prop:mixtureLB} shows that given any distribution $\probDistribution_0$, any empirical predictor $\hat{\phi}$ and any mixing proportion $\zeta$, we can generate a mixture distribution $\probDistribution:=(1-\zeta) \cdot \probDistribution_0+\zeta \cdot \probDistribution_1$ with respect to which $\probDistribution$ incurs an excess error that is at least a constant multiple of the mixing proportion $\zeta$.  

\begin{lemma}[Mixture lower bound]\label{prop:mixtureLB}
Let $\lossFunction: \actionSpace \times \Y \rightarrow [0,\infty)$ be a loss function satisfying Assumptions \ref{lipschitzLossFunctionAssumption} and \ref{ass:nonDegenerateLoss} where $\actionSpace$ is a compact metric space. Fix a distribution $\probDistribution_0$ on $\X \times \Y$ and take $\zeta \in (0,1]$. Choose $\nonDegeneracyConstant>0$ and $y_0$,$y_1 \in \Y$ so that \eqref{eq:nonDegeneracyIneqAssumption} holds. Given $n \in \N$, suppose that $\X$ contains $q \geq 2\zeta n $ distinct points $\{x_1,\ldots,x_q\}\subseteq \X$ such that $ \Prob_{(X,Y) \sim \probDistribution_0}(X = x_j)=0$ for each $j \in [q]$.  Given any empirical predictor $\hat{\phi}:(\X\times\Y)^n\times \X\rightarrow \Y$, there exists a distribution $\probDistribution_1$ on $\X \times \Y$ such that;
\begin{enumerate}[label=(\alph*)]
    \item The distribution $\probDistribution_1$ is supported on $\{x_1,\ldots,x_q\}\times\{y_0,y_1\} \subseteq \X \times \Y$;
    \item For all $x \in \{x_1,\ldots,x_q\}$ there exists $y \in \{y_0,y_1\}$ with $\probDistribution_1(Y=y|X=x)=1$;
    \item If we let $\probDistribution:=(1-\zeta) \cdot \probDistribution_0+\zeta \cdot \probDistribution_1$  then $\E_{\sampleXY\sim P^{\otimes n}}[\excessRiskArg_{\lossFunction,\mathrm{\probDistribution}}(\hat{\phi})]
\ge \frac{\zeta \nonDegeneracyConstant}{4}$.
\end{enumerate}
\end{lemma}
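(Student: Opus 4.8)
The plan is an Assouad-type reduction over the binary hypercube $\Sigma_q$. For $\sigma=(\sigma_j)_{j\in[q]}\in\Sigma_q$, let $\probDistribution_{1,\sigma}$ be the distribution on $\{x_1,\dots,x_q\}\times\{y_0,y_1\}$ under which $X$ is uniform on $\{x_1,\dots,x_q\}$ and, conditionally on $X=x_j$, $Y$ equals $y_{\sigma_j}$ almost surely, where I adopt the convention $y_{+1}:=y_0$, $y_{-1}:=y_1$. Set $\probDistribution_\sigma:=(1-\zeta)\probDistribution_0+\zeta\,\probDistribution_{1,\sigma}$. Properties (a) and (b) in the statement hold for every $\sigma$ by construction, so it only remains to exhibit one $\sigma$ for which (c) holds with respect to the given empirical predictor $\hat{\phi}$.

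The first substantive step is to bound the excess risk from below by the error incurred on the points $x_1,\dots,x_q$. I would use the pointwise decomposition
\[
\excessRiskArg_{\lossFunction,\probDistribution_\sigma}(\phi)=\int_{\X}\bigl(\E_{\probDistribution_\sigma}[\lossFunction(\phi(X),Y)\mid X=x]-\inf_{v\in\actionSpace}\E_{\probDistribution_\sigma}[\lossFunction(v,Y)\mid X=x]\bigr)\,d(\probDistribution_\sigma)_X(x)
\]
into a nonnegative integrand (this relies on the existence of a Bayes-optimal predictor, Proposition~\ref{bayesOptimalExistsWhenLossContinuousActionSpaceCompact}, which applies since $\actionSpace$ is compact and $\lossFunction$ is continuous in its first argument), and discard the $(1-\zeta)(\probDistribution_0)_X$ part of the marginal. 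Because $\probDistribution_0$ assigns zero mass to each $x_j$, the conditional law of $Y$ given $X=x_j$ under $\probDistribution_\sigma$ is the point mass at $y_{\sigma_j}$, so that $\excessRiskArg_{\lossFunction,\probDistribution_\sigma}(\phi)\ge\frac{\zeta}{q}\sum_{j\in[q]}\bigl(\lossFunction(\phi(x_j),y_{\sigma_j})-\inf_{v\in\actionSpace}\lossFunction(v,y_{\sigma_j})\bigr)$ for every $\phi$.

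The key step is to convert this into a Hamming-distance estimate. Writing $\Delta_c(v):=\lossFunction(v,y_c)-\inf_{u\in\actionSpace}\lossFunction(u,y_c)\ge 0$ for $c\in\{0,1\}$, the non-degeneracy inequality~\eqref{eq:nonDegeneracyIneqAssumption} gives $\Delta_0(v)+\Delta_1(v)\ge 2\nonDegeneracyConstant$ for all $v$, hence $\max\{\Delta_0(v),\Delta_1(v)\}\ge\nonDegeneracyConstant$. Define a decoder $\hat{\sigma}:(\X\times\Y)^n\to\Sigma_q$ by $\hat{\sigma}_j:=+1$ if $\Delta_0(\hat{\phi}(\sample,x_j))\le\Delta_1(\hat{\phi}(\sample,x_j))$ and $\hat{\sigma}_j:=-1$ otherwise; this is measurable since $\hat{\phi}$ is measurable and $\lossFunction$ is continuous. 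If $\hat{\sigma}_j\neq\sigma_j$ then the true-label excess loss $\Delta_{\,\cdot\,}(\hat{\phi}(\sample,x_j))$ corresponding to $y_{\sigma_j}$ is the larger of $\Delta_0,\Delta_1$ at that point, hence at least $\nonDegeneracyConstant$; summing over $j$ and combining with the previous display yields $\excessRiskArg_{\lossFunction,\probDistribution_\sigma}(\hat{\phi})\ge\frac{\zeta\nonDegeneracyConstant}{q}\,\qHammingDistance(\hat{\sigma},\sigma)$ for all $\sigma\in\Sigma_q$.

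Finally I would apply Assouad's lemma (Lemma~\ref{assouad}) to the family $\{\probDistribution_\sigma^{\otimes n}\}_{\sigma\in\Sigma_q}$ with this fixed $\hat{\sigma}$. For adjacent $\sigma,\sigma'$ the measures $\probDistribution_\sigma,\probDistribution_{\sigma'}$ differ only by relocating mass $\zeta/q$ at a single $x_j$, so $\totalVariation(\probDistribution_\sigma,\probDistribution_{\sigma'})=\zeta/q$; by Lemma~\ref{lemma:probabilityDivergencesProducts} and the hypothesis $q\ge 2\zeta n$ this gives $\totalVariation(\probDistribution_\sigma^{\otimes n},\probDistribution_{\sigma'}^{\otimes n})\le n\zeta/q\le 1/2$. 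Assouad's lemma then produces some $\sigma^\star\in\Sigma_q$ with $\E_{\sample\sim\probDistribution_{\sigma^\star}^{\otimes n}}[\qHammingDistance(\hat{\sigma},\sigma^\star)]\ge q/4$; setting $\probDistribution_1:=\probDistribution_{1,\sigma^\star}$ and $\probDistribution:=\probDistribution_{\sigma^\star}$ yields $\E_{\sample\sim\probDistribution^{\otimes n}}[\excessRiskArg_{\lossFunction,\probDistribution}(\hat{\phi})]\ge\frac{\zeta\nonDegeneracyConstant}{q}\cdot\frac{q}{4}=\frac{\zeta\nonDegeneracyConstant}{4}$, which is (c). The main obstacle is the bookkeeping of the third step — defining the decoder, checking its measurability, and verifying the implication from the mismatch event $\{\hat{\sigma}_j\ne\sigma_j\}$ to a per-point excess loss of at least $\nonDegeneracyConstant$ — together with the measure-theoretic care needed to decompose the excess risk pointwise and to pin down the conditional law of $Y$ at the $x_j$, where the zero-mass hypothesis on $\probDistribution_0$ is essential.
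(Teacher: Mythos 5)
Your proof is correct and follows essentially the same route as the paper's: the same hypercube family $\{\probDistribution_{1,\sigma}\}$, the same total-variation bound $\totalVariation(\probDistribution_\sigma^{\otimes n},\probDistribution_{\sigma'}^{\otimes n})\le n\zeta/q\le 1/2$ feeding Assouad's lemma, the same decoder comparing per-label excess losses, and the same use of the non-degeneracy inequality to turn each Hamming mismatch into a per-point excess loss of at least $\nonDegeneracyConstant$. The only differences are cosmetic: your labelling convention $y_{+1}=y_0$ is flipped relative to the paper's $\ell_j=(\sigma_j+1)/2$, and you make the pointwise decomposition of the excess risk (and the role of the zero-mass hypothesis on $\probDistribution_0$ in pinning down the conditional law at the $x_j$) a bit more explicit, where the paper states it more tersely.
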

\begin{proof} We begin by constructing a family of measures $(\probDistribution_{1,\sigma})_{\sigma \in \Sigma}$ where $\Sigma = \{-1,1\}^q$ as follows. For each $\sigma = (\sigma_j)_{j \in [q]}\in \Sigma$ we let
\begin{align}\label{eq:P1ConstructionInMixtureProp}
\probDistribution_{1,\sigma}(\{(x_j,y_{\ell})\})=\begin{cases} \frac{1}{q} &\text{ if } \ell = \frac{\sigma_j+1}{2} \\
0 &\text{ otherwise.} 
\end{cases}
\end{align}
Note that $\probDistribution_{1,\sigma}$ satisfies (a) for each $\sigma \in \Sigma$. In addition, if $\sigma, \sigma'\in \Sigma$ satisfy $\qHammingDistance(\sigma,\sigma') = 1$ then $\totalVariation(\probDistribution_{\sigma},\probDistribution_{\sigma'})\le \frac{1}{q}$. Hence, if we define a family of mixture distributions $(\probDistribution_{\sigma})_{\sigma \in \Sigma}$ by $\probDistribution_\sigma:=(1-\zeta) \cdot \probDistribution_0+\zeta \cdot \probDistribution_{1,\sigma}$ then given any $\sigma, \sigma'\in \Sigma$ with $\qHammingDistance(\sigma,\sigma') = 1$ by Lemma \ref{lemma:probabilityDivergencesProducts} we have
\begin{align}\label{eq:tvBoundToApplyAssouadInMixtureProp}
\totalVariation\left\lbrace (\probDistribution_\sigma)^{\otimes n},(\probDistribution_{\sigma'})^{\otimes n}\right\rbrace \leq n \cdot    \totalVariation\left( \probDistribution_\sigma ,\probDistribution_{\sigma'}\right) \leq n \zeta \cdot \totalVariation\left( \probDistribution_{1,\sigma} ,\probDistribution_{1,\sigma'}\right) \leq \frac{n \zeta}{q} \leq \frac{1}{2}.
\end{align}
Moreover, given $j \in [q]$, $\sigma=(\sigma_{j'})_{j' \in [q]} \in \Sigma$,  and letting $\ell_j =(\sigma_j+1)/2$ we have,
\begin{align*}
\Prob_{(X,Y) \sim \probDistribution_{\sigma}}(Y=y_{\ell_j}|X=x_{j})=\Prob_{(X,Y) \sim \probDistribution_{1,\sigma}}(Y=y_{\ell_j}|X=x_{j})=1,    
\end{align*}
since $ \Prob_{(X,Y) \sim \probDistribution_0}(X = x_j)=0$. Hence, (b) holds for $\probDistribution_\sigma$ and $\lossFunction\left({\phi}^*_{\sigma}(x_{j}),y_{\ell_j}\right)= \inf_{v \in \actionSpace}\lossFunction\left(v,y_{\ell_j}\right)$ where we have used Assumption \ref{lipschitzLossFunctionAssumption} combined with the fact that $\actionSpace$ is compact.

To prove (c) we define an estimator $\hat{\sigma}:(\X\times \Y)^n \rightarrow \Sigma$ by
\begin{align*}
\hat{\sigma}_j:= \sign\left\lbrace \left(\lossFunction(\hat{\phi}(x_j),y_0)-\inf_{v \in \actionSpace} \lossFunction(v,y_0)\right) - \left(\lossFunction(\hat{\phi}(x_j),y_1)-\inf_{v \in \actionSpace} \lossFunction(v,y_1)\right)\right\rbrace,
\end{align*}
and letting $\hat{\sigma}=(\hat{\sigma}_j)_{j \in [q]}$, which implicitly depends upon the sample. It follows that for each $j \in [q]$ with $\hat{\sigma}_j\neq \sigma_j$ and $\ell_j =(\sigma_j+1)/2$ we have
\begin{align}\label{eq:orderingExcessLossesByDistMixLB}
\lossFunction(\hat{\phi}(x_j),y_{\ell_j})-\inf_{v \in \actionSpace} \lossFunction(v,y_{\ell_j}) \geq  \lossFunction(\hat{\phi}(x_j),y_{1-\ell_j})-\inf_{v \in \actionSpace} \lossFunction(v,y_{1-\ell_j}).
\end{align}
By Assumption \ref{eq:nonDegeneracyIneqAssumption} combined with \eqref{eq:orderingExcessLossesByDistMixLB} we deduce that for $j \in [q]$ with $\hat{\sigma}_j\neq \sigma_j$ and $\ell_j =(\sigma_j+1)/2$
\begin{align*}
\lossFunction(\hat{\phi}(x_j),y_{\ell_j})- \lossFunction(\phi^*_{\sigma}(x_j),y_{\ell_j}) =  \lossFunction(\hat{\phi}(x_j),y_{\ell_j})-\inf_{v \in \actionSpace} \lossFunction(v,y_{\ell_j}) \geq \nonDegeneracyConstant.
\end{align*}
Hence, for each $\sigma \in \Sigma$ we have
\begin{align*}
\excessRiskArg_{\lossFunction,\probDistribution_{\sigma}}(\hat{\phi})\geq \frac{\zeta}{q}\sum_{j \in [q]} \left(\lossFunction(\hat{\phi}(x_j),y_{\ell_j})- \lossFunction(\phi^*_{\probDistribution^{\sigma}}(x_j),y_{\ell_j})\right)  \geq \frac{\zeta \nonDegeneracyConstant}{q} \cdot \qHammingDistance(\hat{\sigma},\sigma).
\end{align*}
By Assoud's lemma (Lemma \ref{assouad}) combined with \eqref{eq:tvBoundToApplyAssouadInMixtureProp} there exists at least one $\sigma \in \Sigma$ with 
\begin{align*}
\E_{\sampleXY\sim (P_{\sigma})^{ n}}[\excessRiskArg_{\lossFunction,\mathrm{\probDistribution}}(\hat{\phi})] \geq \frac{\zeta \nonDegeneracyConstant}{q} \cdot \E_{\sampleXY\sim (P_{\sigma})^{ n}}[ \qHammingDistance(\hat{\sigma},\sigma)] \geq \frac{\zeta \nonDegeneracyConstant}{4},
\end{align*}
as required.
\end{proof}

\begin{lemma}\label{lemma:sufficientMarginalConditionsToBelongToRegressionClass} Let $\lossFunction:[-\hypothesisClassBound,\hypothesisClassBound] \times \Y \rightarrow [0,\lossFunctionBound]$  be a loss function satisfying Assumptions \ref{boundedLossFunctionAssumption}, \ref{lipschitzLossFunctionAssumption} and \ref{ass:nonDegenerateLoss} and take $\Gamma=(\Wmax,\spectralConstant,\spectralDecay) \in [1,\infty)^2\times (0,1)$ with $\Wmax \geq 2\hypothesisClassBound\cdot \spectralDecay^{-1/2}$
and $\approxError \in [0,1]$. Let $\X$ be a Hilbert space containing a unit vector $e_1$ and a zero vector $0_\X$. Suppose that $\probDistribution$ is a distribution on $\X \times \Y$ with marginal $\probDistribution_X$ on $\X$ satisfying both (i) $\probDistribution_X(\{{0}_\X, \sqrt{\spectralDecay} \cdot e_1\}) \geq 1-\approxError/\lossFunctionBound$ and (ii) $\probDistribution_X(\{(t\sqrt{\spectralDecay}) \cdot e_1 \}_{t \in [0, 1]})=1$ . Then $\probDistribution \in \setOfMeasures_{\lossFunction}(\Gamma,\approxError)$.
\end{lemma}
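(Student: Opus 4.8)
The plan is to verify directly that $\probDistribution$ meets the two requirements in Definition \ref{regressionMeasureClass}: the linear approximation condition (Assumption \ref{ass:linearAproximationCondition}) with parameters $(\Wmax,\approxError)$, and the spectral decay condition (Assumption \ref{spectraldecay}) with parameters $(\spectralConstant,\spectralDecay)$.

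For the spectral decay condition, I would first note that by hypothesis (ii) the marginal $\probDistribution_X$ is supported on the segment $\{(t\sqrt{\spectralDecay})\cdot e_1 : t\in[0,1]\}$, which lies in the one-dimensional subspace $\mathrm{span}(e_1)$. Writing $X = Z\cdot e_1$ with $Z:=\langle X,e_1\rangle\in[0,\sqrt{\spectralDecay}]$, the covariance operator $\Upsilon$ of $\probDistribution_X$ equals $\mathrm{Var}(Z)\cdot e_1 e_1^{\top}$, a positive semidefinite operator of rank at most one, with largest singular value $\lambda_1(\Upsilon)=\mathrm{Var}(Z)\le\E[Z^2]\le\spectralDecay$ and $\lambda_r(\Upsilon)=0$ for $r\ge2$. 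Since $\spectralConstant\ge1$ and $\spectralDecay\in(0,1)$ this gives $\lambda_r(\Upsilon)\le\spectralConstant\cdot\spectralDecay^r$ for all $r\in\N$, establishing Assumption \ref{spectraldecay}.

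For the linear approximation condition, I would take a Bayes optimal predictor $\oracleFunction$, which exists by Proposition \ref{bayesOptimalExistsWhenLossContinuousActionSpaceCompact} because $\actionSpace=[-\hypothesisClassBound,\hypothesisClassBound]$ is compact and $\lossFunction$ is continuous in its first argument (Assumption \ref{lipschitzLossFunctionAssumption}). Put $v_0:=\oracleFunction(0_\X)$ and $v_1:=\oracleFunction(\sqrt{\spectralDecay}\cdot e_1)$, both lying in $[-\hypothesisClassBound,\hypothesisClassBound]$, and set $t_\circ:=v_0$ and $w_\circ:=\big((v_1-v_0)/\sqrt{\spectralDecay}\big)\cdot e_1$. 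Then $\|w_\circ\|_2=|v_1-v_0|/\sqrt{\spectralDecay}\le 2\hypothesisClassBound\cdot\spectralDecay^{-1/2}\le\Wmax$ by the standing assumption on $\Wmax$, and since $v_0,v_1\in[-\hypothesisClassBound,\hypothesisClassBound]$ already, the truncation in $\phi_\circ(x)=\min\{\hypothesisClassBound,\max\{-\hypothesisClassBound,w_\circ^{\top}x+t_\circ\}\}$ is inert at $0_\X$ and at $\sqrt{\spectralDecay}\cdot e_1$, so that $\phi_\circ(0_\X)=t_\circ=v_0=\oracleFunction(0_\X)$ and $\phi_\circ(\sqrt{\spectralDecay}\cdot e_1)=(v_1-v_0)+t_\circ=v_1=\oracleFunction(\sqrt{\spectralDecay}\cdot e_1)$. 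Consequently the integrand $\lossFunction(\phi_\circ(x),y)-\lossFunction(\oracleFunction(x),y)$ vanishes whenever $x\in\{0_\X,\sqrt{\spectralDecay}\cdot e_1\}$, a set of $\probDistribution_X$-mass at least $1-\approxError/\lossFunctionBound$ by hypothesis (i); on the complement, of mass at most $\approxError/\lossFunctionBound$, the integrand is at most $\lossFunctionBound$ by Assumption \ref{boundedLossFunctionAssumption}. Integrating gives $\excessRisk(\phi_\circ)\le\lossFunctionBound\cdot(\approxError/\lossFunctionBound)=\approxError$, which is Assumption \ref{ass:linearAproximationCondition}; combined with the spectral bound, Definition \ref{regressionMeasureClass} yields $\probDistribution\in\setOfMeasures_{\lossFunction}(\Gamma,\approxError)$.

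The argument is elementary and I do not anticipate a genuine obstacle; the only two points needing a little care are (a) confirming $\|w_\circ\|_2\le\Wmax$, which is precisely why the hypothesis $\Wmax\ge 2\hypothesisClassBound\cdot\spectralDecay^{-1/2}$ is imposed, and (b) checking that the truncation in $\phi_\circ$ does not activate at the two distinguished points, so that $\phi_\circ$ genuinely interpolates the Bayes predictor there.
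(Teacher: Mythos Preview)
Your proposal is correct and follows essentially the same route as the paper's own proof: the paper also sets $t_\circ=\oracleFunction(0_\X)$ and $w_\circ=\spectralDecay^{-1/2}\bigl(\oracleFunction(\sqrt{\spectralDecay}\,e_1)-t_\circ\bigr)e_1$, checks $\|w_\circ\|_2\le 2\hypothesisClassBound\,\spectralDecay^{-1/2}\le\Wmax$, observes $\phi_\circ=\oracleFunction$ on $\{0_\X,\sqrt{\spectralDecay}\,e_1\}$ to bound $\excessRisk(\phi_\circ)\le\lossFunctionBound\cdot\probDistribution_X(\X\setminus\{0_\X,\sqrt{\spectralDecay}\,e_1\})\le\approxError$, and then notes that the rank-one covariance has $\lambda_1(\Upsilon)\le\spectralDecay$ and $\lambda_r(\Upsilon)=0$ for $r\ge 2$. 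Your write-up is in fact slightly more explicit than the paper's in justifying the existence of $\oracleFunction$ and in spelling out why the truncation in $\phi_\circ$ is inert at the two distinguished points.
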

\begin{proof} First choose $t_\circ:= \oracleFunction(0_\X) \in [-\hypothesisClassBound,\hypothesisClassBound]$ and $w_\circ := \spectralDecay^{-1/2}\cdot (\oracleFunction(\sqrt{\spectralDecay}\cdot e_1)-t_\circ)\cdot e_1$ so that $\|w_\circ\|_2 =\spectralDecay^{-1/2}\cdot |\oracleFunction(\sqrt{\spectralDecay}\cdot e_1)-t_\circ| \leq 2\hypothesisClassBound \cdot \spectralDecay^{-1/2}\leq \Wmax$. Moreover, if we define $\phi_\circ(x):=\min\{\hypothesisClassBound,\max\{-\hypothesisClassBound,w_\circ^\top x+t_\circ\}\}$ then for $x \in \{{0}_\X, \sqrt{\spectralDecay} \cdot e_1\}$ we have $\phi_\circ(x)=\oracleFunction(x)$. Hence, by (i) we have
\[\excessRisk(\phi_\circ)\leq \lossFunctionBound \cdot \probDistribution_X(\X \backslash \{{0}_\X, \sqrt{\spectralDecay} \cdot e_1\})  \leq \approxError.\]
Thus, $\probDistribution$ satisfies Assumption \ref{ass:linearAproximationCondition} with parameters $(\Wmax,\approxError)$.

Second, it follows from (ii) that the covariance operator $\Upsilon$ corresponding to $\probDistribution_X$ has singular values $\lambda_1(\Upsilon)\leq \spectralDecay \leq  \spectralConstant \cdot \spectralDecay$ and $\lambda_r(\Upsilon)=0$ for $r \in \N \backslash\{1\}$. Thus, Assumption \ref{spectraldecay} also holds. By combining these two conclusions we see that $P\in \setOfMeasures_{\lossFunction}(\Gamma,\approxError)$.
\end{proof}

\begin{proof}[Proof of Theorem \ref{regRLBtext}] We begin by choosing $y_0 \in \Y$ and defining a distribution $\probDistribution_0$ on $\X\times \Y$ so that $\probDistribution_0(A)=\one\{(0_\X,y_0) \in A\}$ for Borel sets $A \subseteq \X \times \Y$. Note also that since $\X$ is a Hilbert space containing a non-zero element we may choose a unit vector $e_1 \in \X$.  We now consider two cases. 

First suppose that $\approxError \geq n^{-1} >0$. Take $\zeta:=\approxError/\lossFunctionBound$, $q := \lceil 2 \zeta n\rceil$, and choose a set of $q$ distinct points $\{x_1,\ldots,x_q\} \subset \{(t\sqrt{\spectralDecay}) \cdot e_1 \}_{t \in (0, 1]}$. By Lemma \ref{prop:mixtureLB} there exists a distribution $\probDistribution_1$ with marginal $(\probDistribution_1)_X$ supported on $\{x_1,\ldots,x_q\}$ such that $\probDistribution=(1-\zeta) \probDistribution_0+\zeta \probDistribution_1$ satisfies
\begin{align*}
\E_{\sampleXY\sim P^n}[\excessRiskArg_{\lossFunction,\mathrm{\probDistribution}}(\hat{\phi})]
\ge \frac{\zeta \nonDegeneracyConstant}{4} = \frac{\approxError \nonDegeneracyConstant}{4\lossFunctionBound} \geq \frac{\nonDegeneracyConstant}{8\lossFunctionBound}\cdot (n^{-1}+\approxError).  
\end{align*}
Moreover, since (i) $\probDistribution_X(\{{0}_\X\})= 1-{\approxError}/{\lossFunctionBound}$ and (ii)  $\probDistribution_X(\{0_\X\}\cup\{x_1,\ldots,x_q\})=1$, it follows from Lemma \ref{lemma:sufficientMarginalConditionsToBelongToRegressionClass} that $\probDistribution \in \setOfMeasures_{\lossFunction}(\Gamma,\approxError)$.

Now suppose that $\approxError < n^{-1}$. We apply Lemma \ref{prop:mixtureLB} once again with $\zeta = 1/(2n)$, $q=1$ and $x_1:=\spectralDecay^{-1/2}\cdot e_1$ to obtain a a distribution $\probDistribution_1$ with marginal $(\probDistribution_1)_X$ supported on $\{x_1\}$ such that the mixture $\probDistribution=(1-\zeta) \probDistribution_0+\zeta \probDistribution_1$ satisfies
\begin{align*}
\E_{\sampleXY\sim P^n}[\excessRiskArg_{\lossFunction,\mathrm{\probDistribution}}(\hat{\phi})]
\ge \frac{\zeta \nonDegeneracyConstant}{4} = \frac{ \nonDegeneracyConstant}{8 n} \geq  \frac{ \nonDegeneracyConstant}{16 }  \cdot (n^{-1}+\approxError). 
\end{align*}
Moreover, since $\probDistribution_X(\{{0}_\X,\spectralDecay^{-1/2}\cdot e_1\})= 1$ it follows from Lemma \ref{lemma:sufficientMarginalConditionsToBelongToRegressionClass} that $\probDistribution \in \setOfMeasures_{\lossFunction}(\Gamma,\approxError)$.
\end{proof}

\subsection{Proof of the classification lower bound (Theorem \ref{clRLBtext})}\label{sec:proofOfClassificationLB}

In this section we prove Theorem \ref{clRLBtext}. In order to apply Assoud's lemma (Lemma \ref{assouad}) we shall first construct a parameterised family of distributions. We then establish sufficient conditions for these distributions to belong to the class $\setOfMeasures_{\text{0,1}}(\Gamma,\approxError)$.

First suppose that $n \in \N\backslash\{1\}$. Given $q \in \{1,\ldots,n-1\}$, $r \in [1,\sqrt{q}]$, $v\in(0,1)$, $\epsilon \in (0,1/2)$ we shall define a family of measures $(\probDistribution^{\sigma})_{\sigma \in \Sigma}$ on $\X \times \Y$ indexed by $\Sigma = \{-1,1\}^q$. Our construction will begin by first choosing a set of $q$ ``difficult to classify'' points in $\X$, with total mass $v$ and norm $r$. We will require $r \leq \sqrt{q}$ to ensure that the points can be classified with sufficiently large margin. More precisely, let $\{e_0,e_1,\ldots,e_{q}\}\subseteq \X$ be a collection of $q+1$ orthonormal vectors. We let $x_0:=e_0$ and for $\ell \in [q]$ let $x_\ell:=r\cdot e_\ell$. All of our measures $\probDistribution^\sigma$ will share a common marginal distribution $\mu$ on $\X$ supported on $\{x_0,x_1,\ldots,x_q\}$ and defined by
\begin{align}\label{def:marginalConstructionMainClassificationLB}
\mu(\{x_\ell\})\equiv \mu_{q,r,v}(\{x_\ell\}):=\begin{cases} \frac{v}{q} & \text{ if } \ell \in [q]\\
1-v &\text{ if }\ell = 0.
\end{cases}
\end{align}
For each $\sigma = (\sigma_\ell)_{\ell \in [q]}\in \Sigma$ we define an associated regression function $\regressionFunction^\sigma: \X \rightarrow $ by
\begin{align}\label{def:regressionFunctionMainClassificationLB}
\regressionFunction^\sigma(x_\ell) \equiv \regressionFunction^\sigma_{q,r,v,\epsilon}(x_\ell):= \frac{1+\epsilon\cdot \sigma_\ell}{2},
\end{align}
for each $\ell \in [q]$ and $\regressionFunction^\sigma(x) = 1$ for $x \notin \{x_1,\ldots,x_q\}$. The family of measures $(\probDistribution^{\sigma})_{\sigma \in \Sigma}$ can now be defined by taking \begin{align}\label{def:probConstructionMainClassificationLB}
\probDistribution^\sigma(\{x,y\})\equiv \probDistribution^\sigma_{q,r,v,\epsilon}(\{x,y\}):= \mu(\{x\})\cdot \eta^{\sigma}(x)^{\frac{1+y}{2}}\cdot \{1-\eta^\sigma(x)\}^{\frac{1-y}{2}},
\end{align}
for all $(x,y) \in \X\times \{-1,+1\}$.

\begin{lemma}\label{lemma:classificationMainLBBelongToClass} Suppose that $ \Gamma=(  (\bernsteinExponent, \tysbakovNoiseConstant),
(\geometricMarginConstant, \geometricMarginExponent),(\momentConstant,\momentExponent))$ where $\bernsteinConstant$, $\momentConstant\geq 1$, $\geometricMarginConstant \ge  2^{\frac{\geometricMarginExponent}{2}}$, $\bernsteinExponent \in [0,1]$, $\geometricMarginExponent$, $\momentExponent \in (0, \infty)$ and take $\approxError \in [0,1]$. Then for each $\sigma\in\Sigma$,
  \begin{enumerate}[label={(\alph*)}]
      \item $\probDistribution^{\sigma}$ satisfies Assumption \ref{geometricMarginAssumption} with parameters $(\geometricMarginConstant,\geometricMarginExponent,\approxError)$ provided that $ \epsilon \cdot v \le \geometricMarginConstant  \left({r}/{\sqrt{2q}}\right)^{\geometricMarginExponent}$;
      \item $\probDistribution^{\sigma}$ satisfies Assumption \ref{momentAssumption} with parameters $(\momentConstant,\momentExponent)$ provided that $\epsilon\cdot v \le \momentConstant \cdot r^{-\momentExponent}$;
      \item $\probDistribution^{\sigma}$ satisfies Assumption \ref{tsybakovNoiseAssumption} with parameters $(\tysbakovNoiseConstant,\bernsteinExponent)$ provided that $v\le  \tysbakovNoiseConstant\cdot  \epsilon^{\frac{\bernsteinExponent}{1-\bernsteinExponent}}$.
  \end{enumerate}
\end{lemma}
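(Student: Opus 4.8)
The plan is to verify the three conditions directly, exploiting that every $\probDistribution^{\sigma}$ shares the same finitely-supported marginal $\mu$ on $\{x_0,x_1,\dots,x_q\}$, with $\|x_0\|=1$, $\|x_\ell\|=r$ for $\ell\in[q]$, $\mu(\{x_0\})=1-v$, $\mu(\{x_\ell\})=v/q$, and $|2\regressionFunction^{\sigma}(x_0)-1|=1$, $|2\regressionFunction^{\sigma}(x_\ell)-1|=\epsilon$; so each integral or probability appearing in Assumptions \ref{geometricMarginAssumption}--\ref{tsybakovNoiseAssumption} is a finite sum and is a step function of the relevant parameter, and the whole lemma reduces to matching these step functions against the corresponding power-law bounds.

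For part (a) the key step is to exhibit a single \emph{linear} classifier that is Bayes-optimal on $\supp(\mu)$ while keeping the heavy atom $x_0$ far from its decision boundary. I would take $w_{\circ}:=\tfrac{1}{\sqrt2}\bigl(e_0+\tfrac{1}{\sqrt q}\sum_{\ell\in[q]}\sigma_\ell e_\ell\bigr)$ and $t_\circ:=0$. Orthonormality of $\{e_0,\dots,e_q\}$ gives $\|w_\circ\|=1$, $w_\circ^{\top}x_0=\tfrac{1}{\sqrt2}$ and $w_\circ^{\top}x_\ell=\tfrac{\sigma_\ell r}{\sqrt{2q}}$, and $r\in[1,\sqrt q]$ forces $0<\tfrac{r}{\sqrt{2q}}\le\tfrac1{\sqrt2}$. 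Hence the classifier $\classifier^{\circ}(x):=\sign(w_\circ^{\top}x-t_\circ)$ satisfies $\classifier^{\circ}(x_0)=+1=\sign(2\regressionFunction^{\sigma}(x_0)-1)$ and $\classifier^{\circ}(x_\ell)=\sigma_\ell=\sign(2\regressionFunction^{\sigma}(x_\ell)-1)$, so $\classifier^{\circ}$ agrees with the Bayes classifier $\mu$-a.e.\ and therefore has $\excessRiskArg_{\zeroOneLoss,\probDistribution^{\sigma}}(\classifier^{\circ})=0\le\approxError$, which is property (i). For property (ii), note $x_\ell\in\nearDecisionBoundary$ iff $\xi\ge\tfrac{r}{\sqrt{2q}}$ and $x_0\in\nearDecisionBoundary$ iff $\xi\ge\tfrac1{\sqrt2}$, so $\int_{\nearDecisionBoundary}|2\regressionFunction^{\sigma}-1|\,d\mu$ equals $0$ for $\xi<\tfrac{r}{\sqrt{2q}}$, equals $\epsilon v$ for $\tfrac{r}{\sqrt{2q}}\le\xi<\tfrac1{\sqrt2}$, and equals $(1-v)+\epsilon v\le1$ for $\xi\ge\tfrac1{\sqrt2}$. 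The first regime is trivial; on the second, $\xi^{\geometricMarginExponent}\ge(r/\sqrt{2q})^{\geometricMarginExponent}$ reduces the desired bound to the hypothesis $\epsilon v\le\geometricMarginConstant(r/\sqrt{2q})^{\geometricMarginExponent}$; on the third, $\xi^{\geometricMarginExponent}\ge2^{-\geometricMarginExponent/2}$ reduces it to $\geometricMarginConstant\ge2^{\geometricMarginExponent/2}$, which is assumed.

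Parts (b) and (c) are then short computations of the same type. For (b), using $1\le r$, one has $\int_{\|x\|>s}|2\regressionFunction^{\sigma}-1|\,d\mu=(1-v)\,\one\{s<1\}+\epsilon v\,\one\{s<r\}$; this vanishes for $s\ge r$, equals $\epsilon v\le\momentConstant r^{-\momentExponent}<\momentConstant s^{-\momentExponent}$ for $1\le s<r$ (the hypothesis of (b)), and is at most $1\le\momentConstant<\momentConstant s^{-\momentExponent}$ for $0<s<1$. For (c), $\mu(\{x:|2\regressionFunction^{\sigma}(x)-1|\le\epsilon'\})$ is $0$ for $\epsilon'<\epsilon$ and $v$ for $\epsilon\le\epsilon'<1$; since $\epsilon'\mapsto(\epsilon')^{\bernsteinExponent/(1-\bernsteinExponent)}$ is nondecreasing for $\bernsteinExponent\in[0,1)$, the tightest constraint is at $\epsilon'=\epsilon$, and it is exactly the hypothesis $v\le\tysbakovNoiseConstant\epsilon^{\bernsteinExponent/(1-\bernsteinExponent)}$.

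The one genuinely delicate point is the choice of $(w_\circ,t_\circ)$ in part (a): it must be made so that a single linear classifier is \emph{exactly} Bayes-optimal (which is what lets us conclude $\excessRiskArg_{\zeroOneLoss,\probDistribution^{\sigma}}(\classifier^{\circ})=0\le\approxError$ with no restriction on $\approxError$), while simultaneously placing $x_0$ at distance precisely $\tfrac1{\sqrt2}$ from the boundary so that its heavy contribution $(1-v)$ only enters $\nearDecisionBoundary$ at scales $\xi\ge\tfrac1{\sqrt2}$ --- exactly where $\geometricMarginConstant\ge2^{\geometricMarginExponent/2}$ can absorb it. The orthonormal frame $\{e_0,\dots,e_q\}$, the normalisation $\tfrac1{\sqrt2}$, and the constraint $r\le\sqrt q$ built into the family are precisely what make the numerology match the stated constants $(r/\sqrt{2q})^{\geometricMarginExponent}$ and $2^{\geometricMarginExponent/2}$; once they are fixed, everything else is routine bookkeeping.
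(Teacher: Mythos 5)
Your proof is correct and takes essentially the same route as the paper's: the same choice of $(w_\circ,t_\circ)$ with the $\tfrac{1}{\sqrt2}$ normalisation, the same observation that $\classifier^\circ$ agrees with the Bayes classifier on $\supp(\mu)$ so its excess risk is exactly zero, and the same three-case (resp.\ two-case) analysis of $\xi$, $s$, and $\zeta$ against the thresholds $r/\sqrt{2q}$, $1/\sqrt2$, $1$, $r$, and $\epsilon$. The only cosmetic difference is that you write the step functions explicitly (e.g.\ $(1-v)\one\{s<1\}+\epsilon v\,\one\{s<r\}$) before case-splitting, which is fine.
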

\begin{proof} We fix $\sigma = (\sigma_\ell)_{\ell\in [q]}\in \Sigma$ and consider each assumption in turn.

\noindent(a) Suppose $ \epsilon \cdot v \le \geometricMarginConstant \cdot \left({r}/{\sqrt{2q}}\right)^{\geometricMarginExponent}$. Define $w_\circ\equiv w_\circ(\sigma):=\frac{1}{\sqrt{2}}\bigl(e_0+\frac{1}{\sqrt{q}}\sum_{\ell \in [q]}\sigma_\ell \cdot e_\ell\bigr)\in \X$, take $t_\circ:=0$ and let ${\classifier^{\circ}}(x):=\sign(w_{\circ}^{\top}x -t_{\circ})$ for $x \in \X$. Observe that $\|w_\circ\|_2 =1$. Moreover, it follows from \eqref{def:regressionFunctionMainClassificationLB} that 
\begin{align*}
\oracleFunction_{\probDistribution^\sigma}(x_\ell) = \sign(2\regressionFunction^{\sigma}(x_\ell)-1) = \sigma_\ell = \sign(w_\circ^\top x_\ell+t_\circ)= \phi_\circ(x_\ell),
\end{align*}
for each $\ell \in [q]$. Similarly, $\oracleFunction_{\probDistribution^\sigma}(x_0) = 1= \phi_\circ(x_0)$. Since $\mu=(\probDistribution^{\sigma})_X$ is supported $\{x_0,x_1,\ldots,x_q\}$ it follows that $\excessRisk(\phi_\circ)= 0\leq \approxError$. To complete the proof we must take $\nearDecisionBoundary:=\{x \in \X\hspace{0.5mm}:\hspace{0.5mm} |w_{\circ}^{\top}x-t_{\circ}|\leq \xi\}$  and show that 
\begin{align}\label{eq:mainClaimInGeomMarginAssumptionPfMainLBClassification}
\int_{\nearDecisionBoundary} \left|2{{\regressionFunction}}(x)-1\right| d{\mathrm{P}}_X(x)  \leq\geometricMarginConstant \cdot \xi^{\geometricMarginExponent},
\end{align}  
for each $\xi>0$. Observe that $|w_{\circ}^{\top}x_0-t_{\circ}|=2^{-1/2}$ and $|w_{\circ}^{\top}x_\ell-t_{\circ}|=r(2q)^{-{1}/{2}} \leq 2^{-1/2}$ for $\ell \in [q]$. We shall consider three cases. If (i) $\xi \in (0,r (2q)^{-{1}/{2}})$ then $\nearDecisionBoundary \cap \supp(\mu)=\emptyset$ so \eqref{eq:mainClaimInGeomMarginAssumptionPfMainLBClassification} holds. If (ii) $\xi \in [r\cdot (2q)^{-{1}/{2}},2^{-1/2})$ then $\nearDecisionBoundary \cap \supp(\mu)=\{x_1,\ldots,x_q\}$ and so 
\begin{align*}
\int_{\nearDecisionBoundary} \left|2{{\regressionFunction}}(x)-1\right| d{\mathrm{P}}_X(x)  = \sum_{\ell \in [q]} \mu(\{x_\ell\}) \cdot \left|2{{\regressionFunction}}(x)-1\right| = v \cdot \epsilon \leq \geometricMarginConstant \cdot \left(\frac{r}{\sqrt{2q}}\right)^{\geometricMarginExponent} \leq \geometricMarginConstant \cdot \xi^{\geometricMarginExponent},
\end{align*}  
as required. Finally, if (iii) $\xi \geq 2^{-1/2}$ then \eqref{eq:mainClaimInGeomMarginAssumptionPfMainLBClassification} follows from the assumption that $\geometricMarginConstant \ge  2^{\frac{\geometricMarginExponent}{2}}$. Hence, Assumption \ref{geometricMarginAssumption} holds with parameters $(\geometricMarginConstant,\geometricMarginExponent,\approxError)$.

\noindent(b) Suppose $\epsilon\cdot v \le \momentConstant \cdot r^{-\momentExponent}$. To prove the claim it suffices to show that for all $s \in (0,\infty)$,
\begin{align}\label{eq:momentBoundToProveMainLBForClassification}
\int_{\|x\|>s} \left|2{{\regressionFunction}}(x)-1\right|d \mathrm{P}_X(x)  \leq\momentConstant \cdot s^{-\momentExponent}
\end{align}
Recall that $\supp(\mu)=\supp\{(\probDistribution^{\sigma})_X\}=\{x_0,x_1,\ldots,x_q\}$ and note that $\|x_0\|=1$ and $\|x_\ell\|=r \geq 1$ for $\ell \in [q]$. As before, we consider three cases. If (i) $s \in (0,1]$ then \eqref{eq:momentBoundToProveMainLBForClassification} follows from $\momentConstant \ge 1$. If (ii) $s \in (1,r)$ then 
\begin{align*}
\int_{\|x\|>s} \left|2{{\regressionFunction}}(x)-1\right|d \mathrm{P}_X(x)  = \sum_{\ell \in [q]} \mu(\{x_\ell\}) \cdot \left|2{{\regressionFunction}}(x)-1\right| = v \cdot \epsilon \leq  \momentConstant \cdot r^{-\momentExponent} \leq\momentConstant \cdot s^{-\momentExponent},
\end{align*}
as required. Finally, if (iii) $s \in [r,\infty)$ then $\mu(\{x \in \X:\|x\|>s\})=0$ so \eqref{eq:momentBoundToProveMainLBForClassification} holds.

\noindent(c) Now suppose that $v\le  \tysbakovNoiseConstant\cdot  \epsilon^{\frac{\bernsteinExponent}{1-\bernsteinExponent}}$. Observe that $\left|{2{\regressionFunction}}(x_0)-1\right|=1$ and $\left|{2{\regressionFunction}}(x_\ell)-1\right|=\epsilon$ for $\ell \in [q]$. To prove the claim it suffices to show that for all $\zeta \in (0,1)$ we have 
\begin{align}\label{eq:marginBoundToProveMainLBForClassification}
\mu\left(\{ x \in \X:\left|{2{\regressionFunction}}(x)-1\right|\leq \zeta\}\right) \leq \tysbakovNoiseConstant \cdot \zeta^{\frac{\bernsteinExponent}{1-\bernsteinExponent}}.
\end{align}
We consider two cases. If $\zeta  \in (0,\epsilon)$ then $ \{ x \in \supp(\mu):\left|{2{\regressionFunction}}(x)-1\right|\leq \zeta\}=\emptyset$ so \eqref{eq:marginBoundToProveMainLBForClassification} holds. On the other hand, if $\zeta  \in [\epsilon,1)$ then 
\begin{align*}
\mu\left(\{ x \in \X:\left|{2{\regressionFunction}}(x)-1\right|\leq \zeta\}\right)=\mu(\{x_1,\ldots,x_q\})=v \leq \tysbakovNoiseConstant \cdot \epsilon^{\frac{\bernsteinExponent}{1-\bernsteinExponent}} \leq \tysbakovNoiseConstant \cdot \zeta^{\frac{\bernsteinExponent}{1-\bernsteinExponent}}.
\end{align*}
\end{proof}

\begin{lemma}\label{lemma:classificationLBMainSmallHammingDistance} Given 
 $\sigma, \sigma'\in \Sigma_q$ with $\qHammingDistance(\sigma,\sigma') = 1$ we have  $\chi^2(\probDistribution^{\sigma},\probDistribution^{\sigma'}) \leq {2^4\epsilon^2v}/q$, and hence $\totalVariation\{(\probDistribution^{\sigma})^{\otimes n},(\probDistribution^{\sigma'})^{\otimes n}\} \leq 1/2$ whenever ${2^5n\epsilon^2v}\leq {q}$.
\end{lemma}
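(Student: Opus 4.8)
The plan is to reduce the computation of $\chi^2(\probDistribution^{\sigma},\probDistribution^{\sigma'})$ to a single pair of atoms and then tensorise via the standard chain of inequalities recalled above. First I would exploit that $\qHammingDistance(\sigma,\sigma')=1$ forces a unique $\ell_0\in[q]$ with $\sigma_{\ell_0}=-\sigma'_{\ell_0}$ and $\sigma_\ell=\sigma'_\ell$ for all $\ell\ne\ell_0$. By the construction \eqref{def:marginalConstructionMainClassificationLB}--\eqref{def:probConstructionMainClassificationLB}, $\probDistribution^{\sigma}$ and $\probDistribution^{\sigma'}$ share the marginal $\mu$ on $\X$, and the conditional law of $Y$ given $X=x_\ell$ agrees for every $\ell\ne\ell_0$ (at $x_0$ both are the point mass at $y=+1$, since $\regressionFunction^{\sigma}(x_0)=\regressionFunction^{\sigma'}(x_0)=1$ for all $\sigma$). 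Consequently, viewed as probability vectors on $\{x_0,\dots,x_q\}\times\{-1,+1\}$, the two measures differ only on the atoms $(x_{\ell_0},+1)$ and $(x_{\ell_0},-1)$.

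Next I would invoke the discrete identity $\chi^2(Q_0,Q_1)=\sum_z(Q_0(z)-Q_1(z))^2/Q_1(z)$, so that only the two atoms at $x_{\ell_0}$ contribute. Taking $\sigma_{\ell_0}=+1$ without loss of generality, one has $\probDistribution^{\sigma}(x_{\ell_0},\pm1)=\tfrac{v}{q}\cdot\tfrac{1\pm\epsilon}{2}$ while $\probDistribution^{\sigma'}(x_{\ell_0},\pm1)=\tfrac{v}{q}\cdot\tfrac{1\mp\epsilon}{2}$, so each of the two numerators equals $(v\epsilon/q)^2$, giving
\[
\chi^2(\probDistribution^{\sigma},\probDistribution^{\sigma'})=\frac{(v\epsilon/q)^2}{\tfrac{v}{q}\cdot\tfrac{1-\epsilon}{2}}+\frac{(v\epsilon/q)^2}{\tfrac{v}{q}\cdot\tfrac{1+\epsilon}{2}}=\frac{4\,v\epsilon^2}{q\,(1-\epsilon^2)}.
\]
Since $\epsilon\in(0,1/2)$ gives $1-\epsilon^2\ge 3/4$, this is at most $\tfrac{16}{3}\cdot\epsilon^2 v/q\le 2^4\epsilon^2 v/q$, which is the first claim.

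For the product-measure statement I would chain Lemma \ref{lemma:boundTVWithChiSqr} and Lemma \ref{lemma:probabilityDivergencesProducts}:
\[
2\,\totalVariation^2\!\bigl\{(\probDistribution^{\sigma})^{\otimes n},(\probDistribution^{\sigma'})^{\otimes n}\bigr\}\le \kullbackLeiblerDivergence\bigl((\probDistribution^{\sigma})^{\otimes n},(\probDistribution^{\sigma'})^{\otimes n}\bigr)=n\,\kullbackLeiblerDivergence(\probDistribution^{\sigma},\probDistribution^{\sigma'})\le n\,\chi^2(\probDistribution^{\sigma},\probDistribution^{\sigma'})\le \frac{2^4 n\epsilon^2 v}{q}.
\]
Hence $\totalVariation^2\{(\probDistribution^{\sigma})^{\otimes n},(\probDistribution^{\sigma'})^{\otimes n}\}\le 2^3 n\epsilon^2 v/q$, which is $\le 1/4$ whenever $2^5 n\epsilon^2 v\le q$, so that the total variation distance is $\le 1/2$, as asserted.

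The argument is essentially a bookkeeping computation and I do not anticipate a genuine obstacle; the only points that need care are verifying that the two laws really coincide off the single atom $x_{\ell_0}$ — in particular that $x_0$ contributes nothing because $\regressionFunction^{\sigma}(x_0)=1$ for every $\sigma$ — and tracking the loose constants so that they land exactly at the stated powers of two.
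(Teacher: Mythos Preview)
Your proposal is correct and follows essentially the same route as the paper: identify the single coordinate $\ell_0$ on which the two measures differ, compute the $\chi^2$-divergence from the two atoms at $x_{\ell_0}$, and then tensorise via $2\,\totalVariation^2\le\kullbackLeiblerDivergence=n\,\kullbackLeiblerDivergence\le n\,\chi^2$. The only cosmetic difference is that the paper bounds the squared ratio $\bigl(\probDistribution^{\sigma}/\probDistribution^{\sigma'}-1\bigr)^2\le 2^4\epsilon^2$ directly, whereas you compute the exact value $4v\epsilon^2/\{q(1-\epsilon^2)\}$ and then use $1-\epsilon^2\ge 3/4$; both land on the same final constant.
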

\begin{proof} Choose $\ell_0 \in [q]$ so that $\sigma_{\ell_0}\neq \sigma_{\ell_0}'$ where $\sigma = (\sigma_\ell)_{\ell \in [q]}$ and  $\sigma' = (\sigma'_\ell)_{\ell \in [q]}$. Note that since $\qHammingDistance(\sigma,\sigma') = 1$ there is exactly one such $\ell_0 \in [q]$. Observe that by construction $\mu=(\probDistribution^{\sigma})_X=(\probDistribution^{\sigma'})_X$ is supported on $\{x_0,x_1,\ldots,x_q\}$ and $\regressionFunction^\sigma(x)=\regressionFunction^{\sigma'}(x)$ for $x \in \supp\{(\probDistribution^{\sigma'})_X\}\backslash\{x_{\ell_0}\}$. In addition, we have $\probDistribution^\sigma(\{x,y\})= \mu(\{x\})\cdot \eta^{\sigma}(x)^{\frac{1+y}{2}}\cdot \{1-\eta^\sigma(x)\}^{\frac{1-y}{2}}$ for all $x \in \X$ and $y \in \{-1,1\}$. It follows that $\probDistribution^\sigma(\{(x,y)\})=\probDistribution^{\sigma'}(\{(x,y)\})$ for all $x \in \supp\{(\probDistribution^{\sigma'})_X\}\backslash\{x_{\ell_0}\}$. In addition,  since $\epsilon \in (0,1/2)$, we have
\begin{align*}
\left( \frac{\probDistribution^{\sigma}(\{(x_{\ell_0},1)\})}{\probDistribution^{\sigma'}(\{(x_{\ell_0},1)\})}-1\right)^2=\left( \frac{\eta^{\sigma}(x_{\ell_0})}{ \eta^{\sigma'}(x_{\ell_0})}-1\right)^2=\left\lbrace \frac{(1+\epsilon\cdot \sigma_{\ell_0})-(1+\epsilon\cdot \sigma'_{\ell_0})}{{1+\epsilon\cdot \sigma'_{\ell_0}}}\right\rbrace^2\leq 2^4\epsilon^2,
\end{align*}
and similarly $\left( \frac{\probDistribution^{\sigma}(\{(x_{\ell_0},-1)\})}{\probDistribution^{\sigma'}(\{(x_{\ell_0},-1)\})}-1\right)^2 \leq 2^4\epsilon^2$. Hence,
\begin{align*}
\chi^2\left(\probDistribution^{\sigma},\probDistribution^{\sigma'}\right) &= \int_{\X\times \Y} \left( \frac{d\probDistribution^{\sigma}}{d\probDistribution^{\sigma'}}\bigg|_{(x,y)}-1\right)^2 d\probDistribution^{\sigma'}(x,y)\\  &= \sum_{\ell =0}^q\sum_{y \in \{-1,1\}}\left( \frac{\probDistribution^{\sigma}(\{(x_\ell,y)\})}{\probDistribution^{\sigma'}(\{(x_\ell,y)\})}-1\right)^2 \probDistribution^{\sigma'}(\{(x_\ell,y)\})\\ 
&= \sum_{y \in \{-1,1\}}\left( \frac{\probDistribution^{\sigma}(\{(x_{\ell_0},y)\})}{\probDistribution^{\sigma'}(\{(x_{\ell_0},y)\})}-1\right)^2 \probDistribution^{\sigma'}(\{(x_{\ell_0},y)\})\\ 
&\leq 2^4\epsilon^2 \sum_{y \in \{-1,1\}}\left( \mu(\{x_{\ell_0}\})\cdot \eta^{\sigma}(x_{\ell_0})^{\frac{1+y}{2}}\cdot \{1-\eta^\sigma(x_{\ell_0})\}^{\frac{1-y}{2}}\right) =\frac{2^4\epsilon^2v}{q}.
\end{align*}
Hence, by Lemmas \ref{lemma:boundTVWithChiSqr} and \ref{lemma:probabilityDivergencesProducts} we have
\begin{align*}
2\cdot \totalVariation^2\{(\probDistribution^{\sigma})^{\otimes n},(\probDistribution^{\sigma'})^{\otimes n}\} &\leq \kullbackLeiblerDivergence\{(\probDistribution^{\sigma})^{\otimes n},(\probDistribution^{\sigma'})^{\otimes n}\} \\ &\leq n \cdot \kullbackLeiblerDivergence\left(\probDistribution^{\sigma},\probDistribution^{\sigma'}\right)  \leq n \cdot \chi^2\left(\probDistribution^{\sigma},\probDistribution^{\sigma'}\right) \leq \frac{2^4n\epsilon^2v}{q}.
\end{align*}
Consequently, $\totalVariation\{(\probDistribution^{\sigma})^{\otimes n},(\probDistribution^{\sigma'})^{\otimes n}\}\leq 1/2$ provided ${2^5n\epsilon^2v}\leq {q}$.
\end{proof}

\begin{lemma}\label{lemma:hammingLossAndExcessError} Given a classifier ${\phi}:\X \rightarrow \{-1,1\}$ and $\sigma = (\sigma_\ell)_{\ell \in [q]} \in \Sigma$ we have
\begin{align*}
\excessRiskArg_{\zeroOneLoss,\probDistribution^\sigma}({\phi}) \geq \frac{\epsilon v}{q}\sum_{\ell \in [q]}\one\left\lbrace \phi(x_\ell) \neq \sigma_\ell\right\rbrace.   
\end{align*}
\end{lemma}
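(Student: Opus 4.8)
The plan is to invoke the standard pointwise formula for the excess zero-one risk and then read off the inequality directly from the construction \eqref{def:marginalConstructionMainClassificationLB}--\eqref{def:probConstructionMainClassificationLB}. Specifically, for any distribution $\mathrm{Q}$ on $\X \times \{-1,+1\}$ with regression function $\eta_{\mathrm{Q}}(x)=\Prob_{(X,Y)\sim \mathrm{Q}}(Y=1\mid X=x)$ and Bayes classifier $\oracleFunction_{\mathrm{Q}}(x)=\sign(2\eta_{\mathrm{Q}}(x)-1)$, one has, for every classifier $\phi:\X\to\{-1,+1\}$,
\[
\excessRiskArg_{\zeroOneLoss,\mathrm{Q}}(\phi) = \E_{X \sim \mathrm{Q}_X}\left[\,\left|2\eta_{\mathrm{Q}}(X)-1\right|\cdot\one\{\phi(X)\neq \oracleFunction_{\mathrm{Q}}(X)\}\,\right].
\]
I would justify this in one line: the conditional risk at $x$ of predicting $a\in\{-1,+1\}$ is $\Prob_{\mathrm{Q}}(Y\neq a\mid X=x)$, equal to $1-\eta_{\mathrm{Q}}(x)$ for $a=+1$ and $\eta_{\mathrm{Q}}(x)$ for $a=-1$; subtracting the Bayes conditional risk $\min\{\eta_{\mathrm{Q}}(x),1-\eta_{\mathrm{Q}}(x)\}$ gives an excess conditional risk of $|2\eta_{\mathrm{Q}}(x)-1|$ when $a\neq\oracleFunction_{\mathrm{Q}}(x)$ and $0$ otherwise, and integrating against $\mathrm{Q}_X$ yields the display.

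Applying this with $\mathrm{Q}=\probDistribution^{\sigma}$, I note that the marginal $\mu$ is supported on $\{x_0,x_1,\dots,x_q\}$, so the expectation reduces to a finite sum. From \eqref{def:regressionFunctionMainClassificationLB} we have $2\regressionFunction^{\sigma}(x_\ell)-1=\epsilon\cdot\sigma_\ell$ for $\ell\in[q]$, hence $\left|2\regressionFunction^{\sigma}(x_\ell)-1\right|=\epsilon$ and, since $\epsilon>0$, $\oracleFunction_{\probDistribution^{\sigma}}(x_\ell)=\sign(\epsilon\sigma_\ell)=\sigma_\ell$; the $\ell=0$ contribution has weight $\mu(\{x_0\})\,\left|2\regressionFunction^{\sigma}(x_0)-1\right|=(1-v)\cdot 1\ge 0$ and is simply dropped. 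With $\mu(\{x_\ell\})=v/q$ for $\ell\in[q]$ this gives
\begin{align*}
\excessRiskArg_{\zeroOneLoss,\probDistribution^{\sigma}}(\phi)
&= \sum_{\ell=0}^{q}\mu(\{x_\ell\})\cdot\left|2\regressionFunction^{\sigma}(x_\ell)-1\right|\cdot\one\{\phi(x_\ell)\neq \oracleFunction_{\probDistribution^{\sigma}}(x_\ell)\}\\
&\geq \sum_{\ell\in[q]}\frac{v}{q}\cdot\epsilon\cdot\one\{\phi(x_\ell)\neq \sigma_\ell\} = \frac{\epsilon v}{q}\sum_{\ell\in[q]}\one\{\phi(x_\ell)\neq \sigma_\ell\},
\end{align*}
which is exactly the claimed bound.

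There is no real obstacle here: the lemma is an immediate specialization of the pointwise excess-risk identity to the constructed family. The only steps requiring minor care are the sign computation establishing $\oracleFunction_{\probDistribution^{\sigma}}(x_\ell)=\sigma_\ell$ (which uses $\epsilon\in(0,1/2)$, in particular $\epsilon>0$), and the observation that $\supp(\mu)=\{x_0,\dots,x_q\}$, so that the expectation is a genuine finite sum over which the harmless $x_0$ term can be discarded.
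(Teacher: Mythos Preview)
Your proposal is correct and follows essentially the same approach as the paper: both use the pointwise excess-risk identity $\excessRiskArg_{\zeroOneLoss,\probDistribution^\sigma}(\phi)=\int_\X |2\regressionFunction^\sigma(x)-1|\,\one\{\phi(x)\neq\oracleFunction_{\probDistribution^\sigma}(x)\}\,d\mu(x)$, compute $\oracleFunction_{\probDistribution^\sigma}(x_\ell)=\sigma_\ell$ and $|2\regressionFunction^\sigma(x_\ell)-1|=\epsilon$ for $\ell\in[q]$, and drop the nonnegative $x_0$ term. Your version is slightly more explicit in justifying the pointwise identity and in handling the $x_0$ contribution, but the argument is the same.
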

\begin{proof} Observe that for each $\ell \in [q]$ we have $\mu(\{x_\ell\})=v/q$, $\regressionFunction^\sigma(x_\ell)=(1+\epsilon\cdot \sigma_\ell)/2$ and hence $\probDistribution^\sigma$ has Bayes classifier $\oracleFunction_{\probDistribution^\sigma}(x_\ell)=\sigma_\ell$. Thus, we may lower bound the excess error by
\begin{align*}
\excessRiskArg_{\zeroOneLoss,\probDistribution^\sigma}({\phi}) & = \int_\X |2\regressionFunction^\sigma(x)-1| \cdot \one\{\phi(x) \neq \oracleFunction_{\probDistribution^\sigma}(x)\} \geq  \frac{\epsilon v}{q}\sum_{\ell \in [q]}\one\left\lbrace \phi(x_\ell) \neq \sigma_\ell\right\rbrace.
\end{align*}
\end{proof}

To prove Theorem \ref{clRLBtext} we shall consider three cases. The first case we consider will occur when the small approximation error $\approxError$ and large sample size $n$ regime in which the lower bound is attained by the family $(\probDistribution_{q,r,v,\epsilon}^{\sigma})_{\sigma \in \Sigma}$, for an appropriate choice of $q$,$r$, $v$, $\epsilon$. The lower bound in the second case in which both the approximation error $\approxError$ and the sample size $n$ are small will be deduced from the first. In the third case, where the approximation error $\approxError$ is large, we will deduce the lower bound from Lemma \ref{prop:mixtureLB}.

\begin{proof}[Proof of Theorem \ref{clRLBtext}]  First suppose that $n^{-\frac{\momentExponent\geometricMarginExponent}{2(\momentExponent+\geometricMarginExponent)+\momentExponent\geometricMarginExponent(2-\bernsteinExponent)}} \geq \approxError$ and
\[n \geq n_0 \equiv n_0(\bernsteinExponent,\geometricMarginExponent,\momentExponent):= \max\left\lbrace 1+2^{\frac{6\{2(\geometricMarginExponent+\momentExponent)+\geometricMarginExponent\momentExponent(2-\bernsteinExponent)\}}{\momentExponent\geometricMarginExponent(2-\bernsteinExponent)}},   2^{\frac{2(\geometricMarginExponent+\momentExponent)+\geometricMarginExponent\momentExponent(2-\bernsteinExponent)}{\momentExponent\geometricMarginExponent(1-\bernsteinExponent)}}        \right\rbrace.\]
Now define $q \in \N$, $r >0$ and $v$, $\epsilon \in (0,1)$ by 
\begin{align*}
q:=\lceil (2^5n)^{\frac{2(\geometricMarginExponent+\momentExponent)}{2(\geometricMarginExponent+\momentExponent)+\momentExponent\geometricMarginExponent (2-\bernsteinExponent)}}\rceil,\hspace{5mm} r:=q^{\frac{\geometricMarginExponent}{2(\geometricMarginExponent+\momentExponent)}},\hspace{5mm}v:=q^{-\frac{\momentExponent\geometricMarginExponent\bernsteinExponent}{2(\geometricMarginExponent+\momentExponent)}},\hspace{5mm} \epsilon:= q^{-\frac{\geometricMarginExponent\momentExponent (1- \bernsteinExponent)}{2(\geometricMarginExponent+\momentExponent)}}.
\end{align*}
Note that since $n \geq n_0$  we have $q <n$, $r \in [1,\sqrt{q}]$ and $\epsilon \in (0,1/2)$. Hence, we may apply the construction outlined in \eqref{def:marginalConstructionMainClassificationLB},
\eqref{def:regressionFunctionMainClassificationLB},
\eqref{def:probConstructionMainClassificationLB} to construct our family of distributions $(\probDistribution^{\sigma})_{\sigma \in \Sigma}$. Note that by construction (a) $\epsilon\cdot v = (r/\sqrt{q})^{\geometricMarginExponent} \leq \geometricMarginConstant  \left({r}/{\sqrt{2q}}\right)^{\geometricMarginExponent}$, (b) $\epsilon\cdot v = r^{-\momentExponent} \le \momentConstant \cdot r^{-\momentExponent}$ and (c)  $v=\epsilon^{\frac{\bernsteinExponent}{1-\bernsteinExponent}}\le  \tysbakovNoiseConstant\cdot  \epsilon^{\frac{\bernsteinExponent}{1-\bernsteinExponent}}$. Hence, by Lemma \ref{lemma:classificationMainLBBelongToClass} we have $\{\probDistribution^\sigma\}_{\sigma \in \Sigma} \subseteq \setOfMeasures_{\text{0,1}}(\Gamma,\approxError)$. Moreover, ${2^5n\epsilon^2v}\leq {q}$ so by Lemma \ref{lemma:classificationLBMainSmallHammingDistance} we have $\totalVariation\{(\probDistribution^{\sigma})^{\otimes n},(\probDistribution^{\sigma'})^{\otimes n}\} \leq 1/2$ for all $\sigma, \sigma'\in \Sigma_q$ with $\qHammingDistance(\sigma,\sigma') = 1$. Next we associate to our empirical classifier  $\hat{\phi}:(\X\times\Y)^n\times \X\rightarrow \Y$ a random binary $\hat{\sigma}= (\hat{\sigma}_\ell)_{\ell \in [q]}$, taking values in $\Sigma$, by $\hat{\sigma}_\ell = \hat{\phi}(x_\ell)$ for all $\ell \in [q]$. By Assouad's lemma (Lemma \ref{assouad}) there exists some $\sigma^\circ \in \Sigma$ with $\E_{(\probDistribution^{\sigma^\circ})^{\otimes n}}\left[\qHammingDistance(\hat{\sigma},\sigma^\circ)\right] \geq \frac{q}{4}$. Thus, by Lemma \ref{lemma:hammingLossAndExcessError},
\begin{align*}
\E_{(\probDistribution^{\sigma^{\circ}})^{\otimes n}}\left[\excessRiskArg_{\zeroOneLoss,\probDistribution^{\sigma^\circ}}({\phi}) \right]&\geq \frac{\epsilon v}{q}\cdot \E_{(\probDistribution^{\sigma^\circ})^{\otimes n}}\biggl[ \sum_{\ell \in [q]}\one\left\lbrace \phi(x_\ell) \neq \sigma^\circ_\ell\right\rbrace\biggr]= \frac{\epsilon v}{q}\cdot \E_{(\probDistribution^{\sigma^\circ})^{\otimes n}}\bigl[\qHammingDistance(\hat{\sigma},\sigma^\circ)\bigr]\\ &\geq \frac{\epsilon v}{4} =\frac{1}{4}\cdot q^{-\frac{\momentExponent\geometricMarginExponent}{2(\geometricMarginExponent+\momentExponent)}} \geq 2^{-\left(5+\frac{\momentExponent\geometricMarginExponent}{2(\geometricMarginExponent+\momentExponent)}\right)} \cdot n^{-\frac{\momentExponent\geometricMarginExponent}{2(\momentExponent+\geometricMarginExponent)+\momentExponent\geometricMarginExponent(2-\bernsteinExponent)}},
\end{align*}
provided $n \geq n_0$ and $n^{-\frac{\momentExponent\geometricMarginExponent}{2(\momentExponent+\geometricMarginExponent)+\momentExponent\geometricMarginExponent(2-\bernsteinExponent)}} \geq \approxError$. Moreover, any empirical classifier $\hat{\phi}:(\X\times\Y)^n\times \X\rightarrow \Y$ based on a sample of size $n \in \{1,\ldots,n_0\}$ may be viewed as a special case of an empirical classifier with a sample size $n_0$ which disregards a fraction  of the data. Hence, for all $n \in \N$ with $n^{-\frac{\momentExponent\geometricMarginExponent}{2(\momentExponent+\geometricMarginExponent)+\momentExponent\geometricMarginExponent(2-\bernsteinExponent)}} \geq \approxError$ there exists $\probDistribution \in \setOfMeasures_{\text{0,1}}(\Gamma,\approxError)$ with 
\begin{align}\label{eq:conclusionMainPartClassificationLB}
\E_{(\probDistribution)^{\otimes n}}\left[\excessRiskArg_{\zeroOneLoss,\probDistribution}({\phi}) \right] &\geq 2^{-\left(5+\frac{\momentExponent\geometricMarginExponent}{2(\geometricMarginExponent+\momentExponent)}\right)} \cdot (n_0 \cdot n)^{-\frac{\momentExponent\geometricMarginExponent}{2(\momentExponent+\geometricMarginExponent)+\momentExponent\geometricMarginExponent(2-\bernsteinExponent)}} \nonumber \\ &\geq 2^{-\left(6+\frac{\momentExponent\geometricMarginExponent}{2(\geometricMarginExponent+\momentExponent)}\right)} \cdot n_0^{-\frac{\momentExponent\geometricMarginExponent}{2(\momentExponent+\geometricMarginExponent)+\momentExponent\geometricMarginExponent(2-\bernsteinExponent)}}\cdot \{n^{-\frac{\momentExponent\geometricMarginExponent}{2(\momentExponent+\geometricMarginExponent)+\momentExponent\geometricMarginExponent(2-\bernsteinExponent)}}+\approxError\}.
\end{align}
Now suppose that $n^{-\frac{\momentExponent\geometricMarginExponent}{2(\momentExponent+\geometricMarginExponent)+\momentExponent\geometricMarginExponent(2-\bernsteinExponent)}} < \approxError$. Note that the zero-one loss $\zeroOneLoss$ satisfies Assumption \ref{ass:nonDegenerateLoss} with $y_0=-1$, $y_1=1$ and $\nonDegeneracyConstant =1/2$. Let $\probDistribution_0$ denote the distribution on $\X\times \Y$ defined by $\probDistribution_0(A):=\one\{(0_\X,1) \in A\}$ for Borel sets $A \subseteq \X \times \Y$. Note also that since $\X$ is a Hilbert space of dimension at least $n \geq 1$ we may choose a unit vector $e_1 \in \X$.  Now set $x_\ell:=\frac{\ell}{q}\cdot e_1$ for $\ell \in [q]$. Thus, by Lemma \ref{prop:mixtureLB} there exists a distribution $\probDistribution_1$ supported on $\{x_1,\ldots,x_q\}\times \{-1,+1\}$, with regression function $\eta_{\probDistribution_1}(x)=\Prob_{(X,Y)\sim \probDistribution_1}(Y=1|X=x)\in \{0,1\}$ for all $x \in \{x_1,\ldots,x_1\}$ such that the mixture $\probDistribution:=(1-\approxError) \cdot \probDistribution_0+\approxError \cdot \probDistribution_1$ satisfies 
\begin{align*}
\E_{\sampleXY\sim P^n}[\excessRiskArg_{\lossFunction,\mathrm{\probDistribution}}(\hat{\phi})]
\geq \frac{\approxError}{8} >  2^{-4} \cdot  \left( n^{-\frac{\momentExponent\geometricMarginExponent}{2(\momentExponent+\geometricMarginExponent)+\momentExponent\geometricMarginExponent(2-\bernsteinExponent)}}+ \approxError\right).
\end{align*}
Note that $\probDistribution(\{(0,1)\})=\approxError$ so Assumption \ref{geometricMarginAssumption} holds with $w_\circ = 0_{\X}$ and $t_\circ =1$. Moreover, Assumption \ref{momentAssumption} also holds since $\probDistribution_X$ is supported on $\{0_\X\}\cup\{x_\ell\}_{\ell \in [q]} \subseteq \{x \in \X:\|x\| \leq \}$. Finally Assumption \ref{tsybakovNoiseAssumption} holds since $\regressionFunction_{\probDistribution}(x)= \Prob_{(X,Y)\sim \probDistribution}(Y=1| X=x) \in \{0,1\}$ for all $x \in \supp(\probDistribution_X)$. By combining with \eqref{eq:conclusionMainPartClassificationLB} we see that \eqref{eq:mainClaimClassificationLB} holds with $ c_{\ref{clRLBtext}}:=2^{-\left(6+\frac{\momentExponent\geometricMarginExponent}{2(\geometricMarginExponent+\momentExponent)}\right)} \cdot n_0^{-\frac{\momentExponent\geometricMarginExponent}{2(\momentExponent+\geometricMarginExponent)+\momentExponent\geometricMarginExponent(2-\bernsteinExponent)}}$.
\end{proof}

\section{Conclusions}
We presented a general analytic framework for the theoretical study of compressive learning with arbitrary size ensembles, on arbitrary high dimensional data sets. 
This yields high probability risk guarantees that are able to take advantage of both statistical and geometric structure in the underlying data distribution. We demonstrated in our framework that new conditions can be unearthed specifically for compressive learning, which differ from those of both compressive sensing and traditional learning, and which lead to a better understanding of compressive learning. In particular, we found some distributional characteristics under which the worst-case excess risk of voting compressive ERMs nearly attain the minimax-optimal rate w.r.t. the sample size.  To our knowledge, these are the first statistical optimality guarantees for compressive ERM learning machines and ensembles thereof.
In addition, a key ingredient in the proof of our general upper bound may find applications in other areas.

Our results shed light on the question of when and why compressive learning with Johnson-Lindenstrauss compressors works well, and provides new insights that may eventually inform data pre-processing and approximate algorithm design in future work.
Several questions remain for future research: 
How to make practical use of the theoretically optimal values of $k$? How to extend the analysis to other data sketching schemes? What other predictor classes enjoy similar guarantees?

\section*{Acknowledgment}
Both authors were funded by EPSRC grant EP/P004245/1 ``FORGING: Fortuitous Geometries and
Compressive Learning" at the University of Birmingham, where a significant proportion of this project was undertaken.

\appendix

\section{Verifying assumptions for the examples}\label{verifying}

In this section we verify the properties asserted in Section \ref{statisticalSettingSec}.

\subsection{Bounded Lipschitz loss functions}\label{appendixForBoundedLipschitzAssumptionSubSec}

\begin{prop}\label{verifyingLipschitzBoundedAssumptionsProp} Examples \ref{e1}-\ref{e3}  satisfy Assumptions \ref{boundedLossFunctionAssumption} and \ref{lipschitzLossFunctionAssumption}, as follows:
\begin{enumerate}
    \item The zero-one loss $\zeroOneLoss$ satisfies Assumption \ref{boundedLossFunctionAssumption} with $\lossFunctionBound =1$ and Assumption \ref{lipschitzLossFunctionAssumption} with the standard metric $\metricActionSpace(v_0,v_1)=|v_0-v_1|$ and Lipschitz constant $\lipschitzConstantLoss = 1/2$.
    \item 
    The {squared} loss $\sqrLoss$ satisfies Assumption \ref{boundedLossFunctionAssumption} with $\lossFunctionBound =4\hypothesisClassBound^2$ and Assumption \ref{lipschitzLossFunctionAssumption} with the standard metric and $\lipschitzConstantLoss = 4 \hypothesisClassBound$.
    \item 
    The \emph{Kullback Leibler} loss function $\klLoss$ satisfies Assumption \ref{boundedLossFunctionAssumption} with $\lossFunctionBound = \hypothesisClassBound+\log(2)$, and Assumption \ref{lipschitzLossFunctionAssumption} with the standard metric on $[-\hypothesisClassBound,\hypothesisClassBound]$ and $\lipschitzConstantLoss = 1$.
\end{enumerate}
\end{prop}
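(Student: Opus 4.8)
The plan is to verify each of the three claims in Proposition \ref{verifyingLipschitzBoundedAssumptionsProp} by direct computation, treating the three loss functions one at a time. In each case the boundedness claim (Assumption \ref{boundedLossFunctionAssumption}) follows by evaluating the extreme values of the loss, and the Lipschitz claim (Assumption \ref{lipschitzLossFunctionAssumption}) follows either by inspection or by bounding the derivative of $v \mapsto \lossFunction(v,y)$ uniformly over $v \in \actionSpace$ and $y \in \Y$.

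First, for the zero-one loss $\zeroOneLoss(v,y)=\one\{v\neq y\}$ with $\actionSpace=\Y=\{-1,+1\}$: boundedness by $1$ is immediate since an indicator takes values in $\{0,1\}$. For the Lipschitz bound, note that for $v_0,v_1 \in \{-1,+1\}$ and any $y$, $|\zeroOneLoss(v_0,y)-\zeroOneLoss(v_1,y)| \leq \one\{v_0 \neq v_1\}$, and $|v_0-v_1| \in \{0,2\}$ equals $2\cdot\one\{v_0\neq v_1\}$, so $|\zeroOneLoss(v_0,y)-\zeroOneLoss(v_1,y)| \leq \tfrac12|v_0-v_1|$, giving $\lipschitzConstantLoss=1/2$. (Strictly one should note $\actionSpace$ here is a two-point set, hence trivially compact.)

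Second, for the squared loss $\sqrLoss(v,y)=(v-y)^2$ with $\actionSpace=\Y\subseteq[-\hypothesisClassBound,\hypothesisClassBound]$: since $|v-y|\leq 2\hypothesisClassBound$ we get $\sqrLoss(v,y)\leq 4\hypothesisClassBound^2$, so $\lossFunctionBound=4\hypothesisClassBound^2$. For the Lipschitz constant, $\partial_v \sqrLoss(v,y)=2(v-y)$ has absolute value at most $4\hypothesisClassBound$ on $[-\hypothesisClassBound,\hypothesisClassBound]^2$, so by the mean value theorem $|\sqrLoss(v_0,y)-\sqrLoss(v_1,y)|\leq 4\hypothesisClassBound\cdot|v_0-v_1|$, giving $\lipschitzConstantLoss=4\hypothesisClassBound$.

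Third, for the Kullback--Leibler loss $\klLoss=\mathrm{kl}\circ\pi$ where $\pi(a)=e^a/(1+e^a)$ and $\mathrm{kl}(v,y)=y\log(y/v)+(1-y)\log((1-y)/(1-v))$ with $y\in\{0,1\}$ and $v\in\actionSpace=[-\hypothesisClassBound,\hypothesisClassBound]$: here one uses the convention $0\log 0 = 0$, so for $y=1$ we have $\klLoss(v,1)=-\log\pi(v)=\log(1+e^{-v})$ and for $y=0$, $\klLoss(v,0)=-\log(1-\pi(v))=\log(1+e^{v})$. On $v\in[-\hypothesisClassBound,\hypothesisClassBound]$ each of these is at most $\log(1+e^{\hypothesisClassBound})\leq \log(2e^{\hypothesisClassBound})=\hypothesisClassBound+\log 2$, giving $\lossFunctionBound=\hypothesisClassBound+\log 2$. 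For the Lipschitz bound, differentiating: $\partial_v\log(1+e^{-v})=-\pi(-v)=\pi(v)-1$ and $\partial_v\log(1+e^{v})=\pi(v)$, both of which have absolute value at most $1$; hence $|\klLoss(v_0,y)-\klLoss(v_1,y)|\leq |v_0-v_1|$ and $\lipschitzConstantLoss=1$.

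The main obstacle, such as it is, is bookkeeping rather than mathematics: one must be careful with the $0\log 0 = 0$ convention in the KL case so that the loss is genuinely finite and continuous on all of $[-\hypothesisClassBound,\hypothesisClassBound]$ (this is why composing with $\pi$, which keeps the probability strictly inside $(0,1)$, is essential), and one should confirm that in each example the action space $\actionSpace$ is compact so that Assumption \ref{lipschitzLossFunctionAssumption} applies verbatim. Once these points are checked, the three verifications are each a one-line derivative estimate plus a one-line endpoint evaluation.
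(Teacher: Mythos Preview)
Your proposal is correct and follows essentially the same approach as the paper: the zero-one and squared-loss cases are handled identically, and for the Kullback--Leibler loss both you and the paper reduce to the expression $\klLoss(v,y)=\log(1+e^{-(2y-1)v})$ (you by treating $y=0$ and $y=1$ separately, the paper by first passing through a Jensen-type bound and then rewriting), after which the boundedness and derivative estimates are the same.
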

\begin{proof}[Proof of Proposition \ref{verifyingLipschitzBoundedAssumptionsProp}]
\begin{enumerate}
\item Since $\zeroOneLoss(v,y) \in \{0,1\}$, Assumption \ref{boundedLossFunctionAssumption} with $\lossFunctionBound =1$ is immediate. Moreover, if $\zeroOneLoss(v_0,y) \neq \zeroOneLoss(v_1,y)$ then $|v_0-v_1|=2$ so Assumption \ref{lipschitzLossFunctionAssumption} holds with $\lipschitzConstantLoss =1/2$.
\item If $v,y \in [-\hypothesisClassBound,\hypothesisClassBound]$ then $|v-y|\leq 2\hypothesisClassBound$ so $\sqrLoss(v,y) =(v-y)^2 \leq 4\hypothesisClassBound^2$. Moreover, $\frac{\partial}{\partial v}\left(\sqrLoss(v,y)\right) = 2(v-y)$ so by the mean value theorem $\sqrLoss$ is $4\hypothesisClassBound$-Lipschitz.
\item 
Given $a\in\actionSpace=[-\hypothesisClassBound,\hypothesisClassBound]$, 
\begin{align}
\klLoss(a,y) &= y\log\frac{y}{\pi(a)}+(1-y)\log\frac{1-y}{1-\pi(a)}
\le \log\left(\frac{y^2}{\pi(a)}+\frac{(1-y)^2}{1-\pi(a)} \right)\\
&\le \max\left(\log\frac{1}{\pi(a)},\log\frac{1}{1-\pi(a)}\right)
\le \max\{\log(1+\exp(-a)),\log(1+\exp(a))\}\\
&\le \hypothesisClassBound + \log(2).
\label{H}
\end{align}
This confirms Assumption \ref{boundedLossFunctionAssumption} with $\lossFunctionBound= \hypothesisClassBound+\log(2)$. 
Furthermore, 
\begin{align}
\klLoss(a,y)&=y\log\frac{y}{\pi(a)}+(1-y)\log\frac{1-y}{1-\pi(a)}
=-[ya - \log(1+\exp(a))]\\
&=\log(1+\exp(-(2y-1)a))
\end{align}
so the first derivative is bounded as  $|\frac{\partial}{\partial a}\klLoss(a,y)|= 
|y-\frac{\exp(a)}{1+\exp(a)}|\le 1$.
By the mean value theorem, this confirms Assumption \ref{lipschitzLossFunctionAssumption} with $\lipschitzConstantLoss=1$ w.r.t. the standard metric on $[-\hypothesisClassBound,\hypothesisClassBound]$.
\end{enumerate}
\end{proof}

\subsection{Quasi-convex loss functions}\label{proofsForaveragingFunctionsAndQuasiConvexityAssumptionSubSecAppendix}

\begin{lemma}\label{verifyingQuasiConvexityAssumptionsLemma} Examples \ref{e1}-\ref{e3} satisfy Assumption \ref{quasiConvexityAssumption} for any distribution $\probDistribution$: 
\begin{enumerate}
    \item The zero-one loss $\zeroOneLoss$ satisfies Assumption \ref{quasiConvexityAssumption} with $\quasiConvexityConstant = 2$, and $\Avg$ equal to the modal average: $\Avg(\{\functionGeneral_i(x)\}_{i\in[m]})=\sign(\sum_{i\in[m]}\functionGeneral_i(x))$;
    \item The {squared} loss $\sqrLoss$ satisfies Assumption \ref{quasiConvexityAssumption} with $\quasiConvexityConstant=1$, and $\Avg$ equal to the arithmetic average of the bounded-linear outputs;
    \item The Kullback Leibler loss $\klLoss$ satisfies Assumption \ref{quasiConvexityAssumption} with $\quasiConvexityConstant=1$, and $\Avg$ equal to the arithmetic average of the bounded-linear outputs.
\end{enumerate}
\end{lemma}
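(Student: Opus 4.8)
The plan is to handle separately the two convex cases, Examples~\ref{e2} and \ref{e3}, and the zero-one case, Example~\ref{e1}, the latter being the crux.

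\emph{Examples~\ref{e2} and \ref{e3}.} Here I would replay the Jensen-type argument behind Lemma~\ref{convexImpliesQuasiConvex}, observing that although $\actionSpace=[-\hypothesisClassBound,\hypothesisClassBound]$ is not a vector space it is convex, so the arithmetic mean of any finitely many outputs of $\lowDimensionalFunctionClass^{\text{bl}}$ again lies in $[-\hypothesisClassBound,\hypothesisClassBound]$ and defines an admissible predictor. For the squared loss, $v\mapsto\sqrLoss(v,y)=(v-y)^2$ is convex, so pointwise $\sqrLoss\bigl(\tfrac1m\sum_i\functionGeneral_i(x),y\bigr)\le\tfrac1m\sum_i\sqrLoss(\functionGeneral_i(x),y)$; taking expectations and subtracting the Bayes risk yields Assumption~\ref{quasiConvexityAssumption} with $\quasiConvexityConstant=1$ and $\Avg$ the arithmetic average. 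For the Kullback--Leibler loss I would first rewrite, for $y\in\{0,1\}$, $\klLoss(a,y)=\text{kl}(\pi(a),y)=\log\bigl(1+\exp(-(2y-1)a)\bigr)$, which has strictly positive second derivative in $a$ and is hence convex on $[-\hypothesisClassBound,\hypothesisClassBound]$ (it is the logistic loss in logit coordinates); the same pointwise-convexity argument then gives Assumption~\ref{quasiConvexityAssumption} with $\quasiConvexityConstant=1$ and $\Avg$ the arithmetic average of the bounded-linear logits $\functionGeneral_i$. I would add the remark that, after applying $\pi$, averaging logits corresponds up to normalisation to the geometric mean $\prod_i\pi(\functionGeneral_i)^{1/m}$ of the probabilistic outputs, i.e.\ the product-of-experts combination.

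\emph{Example~\ref{e1}.} Here convexity fails, so instead I would use the standard margin-weighted form of the zero-one excess risk (as already used in this paper, e.g.\ inside the proof of Proposition~\ref{prop:compressibilityTermForGeometricMarginThm}): for any $\phi:\X\to\{-1,+1\}$,
\[
\excessRiskArg_{\zeroOneLoss,\probDistribution}(\phi)=\int_{\X}\bigl|2\regressionFunction(x)-1\bigr|\cdot\one\{\phi(x)\neq\oracleFunction(x)\}\,d\probDistribution_X(x),\qquad \oracleFunction(x)=\sign\bigl(2\regressionFunction(x)-1\bigr).
\]
Applying this identity both to each $\functionGeneral_i$ and to $\Avg(\{\functionGeneral_i\}_{i\in[m]})$, and using $|2\regressionFunction(x)-1|\ge0$, the claim reduces to the pointwise inequality
\[
\one\bigl\{\Avg(\{\functionGeneral_i(x)\}_{i\in[m]})\neq\oracleFunction(x)\bigr\}\le\frac{2}{m}\sum_{i\in[m]}\one\{\functionGeneral_i(x)\neq\oracleFunction(x)\}
\]
for every $x\in\X$. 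This is elementary combinatorics of majority voting: if the left-hand indicator is $1$ then a strict majority of the $\functionGeneral_i(x)$ cannot equal $\oracleFunction(x)$ (otherwise $\oracleFunction(x)\cdot\sum_i\functionGeneral_i(x)>0$, forcing $\sign(\sum_i\functionGeneral_i(x))=\oracleFunction(x)$), so at least $m/2$ of them disagree and the right-hand side is at least $1$; if the left-hand indicator is $0$ there is nothing to prove. Multiplying by $|2\regressionFunction(x)-1|$ and integrating against $d\probDistribution_X$ gives Assumption~\ref{quasiConvexityAssumption} with $\quasiConvexityConstant=2$.

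The main obstacle is precisely Example~\ref{e1}: the whole purpose of the quasi-convexity formulation is to accommodate the non-convex zero-one loss, and the mechanism is the interplay between the weakened combinatorial bound for majority voting and the margin weight $|2\regressionFunction-1|$ in the excess-risk decomposition; it is the factor $2$ in that combinatorial bound that forces $\quasiConvexityConstant=2$ rather than $1$. A routine point to check throughout is that each proposed $\Avg$ indeed maps $\actionSpace^m$ into $\actionSpace$, which is immediate ($\sign$ lands in $\{-1,+1\}$, and the arithmetic mean lands in the convex set $[-\hypothesisClassBound,\hypothesisClassBound]$).
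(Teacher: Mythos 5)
Your proof is correct and follows essentially the same route as the paper: the convex cases are handled by the Jensen argument behind Lemma~\ref{convexImpliesQuasiConvex}, and the zero-one case uses the margin-weighted excess-risk identity together with the combinatorial bound $\one\{\text{majority vote}\neq\oracleFunction(x)\}\le\frac{2}{m}\sum_i\one\{\functionGeneral_i(x)\neq\oracleFunction(x)\}$. Your aside that convexity of $\actionSpace$, not vector-space structure, is what the Jensen step actually requires is a small but fair sharpening, since the paper invokes Lemma~\ref{convexImpliesQuasiConvex} for $\actionSpace=[-\hypothesisClassBound,\hypothesisClassBound]$, which is convex but not a vector space.
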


\begin{proof} To prove the first claim we  take $\eta(x)=\Prob[Y=1|X=x]$ and $\probDistribution_X(A)=\Prob(X \in A)$. So, $\oracleFunction(x) = \one(\eta(x)\ge 1/2)\cdot 2-1$ and we can write
\begin{align}\label{standardExpressionForRelativeRiskWithZeroOneLossEq}
\risk\left(h\right)- \risk\left(\oracleFunction\right)&=\int |\eta(x)-1/2|\cdot \one\left\lbrace {h(x)\neq \oracleFunction(x)}\right\rbrace d\probDistribution_X(x).
\end{align}
Given $\{\phi_i\}_{i \in [m]} \in \measurableMaps(\X,\actionSpace)^m$, if for some $x \in \X$, $\Avg\left(\{\phi_i\}_{i \in [m]}\right)(x) \neq \oracleFunction(x)$, where $\Avg$ denotes the modal average, then we have $\phi_i(x)\neq \oracleFunction(x)$ for at least $m/2$ values of $i \in [m]$. Thus,
\begin{align*}
 \one\left\lbrace {\Avg\left(\{\phi_i\}_{i \in [m]}\right)(x)\neq \oracleFunction(x)}\right\rbrace  \leq \frac{2}{m} \cdot  \sum_{i \in [m]}\one\left\lbrace {\phi_i}(x)\neq \oracleFunction\right\rbrace.
\end{align*}
Integrating over $\probDistribution_X$ and substituting into \eqref{standardExpressionForRelativeRiskWithZeroOneLossEq} proves the claim.

 Since $\sqrLoss$ is convex the second result is a consequence of Lemma \ref{convexImpliesQuasiConvex}. Similarly, the function $\klLoss(a,y)=\text{kl}(\pi(a),y)$ is convex in $a$. Hence the result again follows from Lemma \ref{convexImpliesQuasiConvex}.
\end{proof}

We note that for the $\klLoss$, the arithmetic average is equivalent to the product-of-experts combination of the nonlinear probabilistic outputs.

\subsection{Covering number assumptions}\label{appendixForCoveringNumberAssumptionSubSec}

Let us recall some useful terminology and foundational results on \emph{psuedo-dimension} which may be found in \cite{anthony2009neural}.

\begin{defn}[Pseudo-shattering]\label{pseudoShatteringDef} Take $\F \subseteq \measurableMaps(\X,\R)$. A set $\xSequenceSizeN = \{x_j\}_{j \in [n]} \in \X^n$ is said to be \emph{pseudo-shattered} by $\F$ if there exists real numbers $\bm{r}_{1:n}=\{r_j\}_{j\in [n]}$ such that for every $\sigmaSequenceSizeN = \{\sigma_j\}_{j \in [n]} \in \{-1,1\}^n$ there exists $f_{\sigmaSequenceSizeN} \in \F$ with $\sign\left(f_{\sigmaSequenceSizeN}(x_j)-r_j\right)=\sigma_j$ for every $j \in [n]$. 
\end{defn}

The pseudo-dimension is the natural analogue of the VC dimension for real-valued functions.

\begin{defn}[Pseudo-dimension]\label{pseudoDimensionDef} Take $\F \subseteq \measurableMaps(\X,\R)$. The \emph{pseudo-dimension} of $\F$, denoted by $\pseudoDimension(\F)$, refers to the maximum cardinality $n \in \N$ of a set $\xSequenceSizeN \in \X^n$ which is pseudo-shattered by $\F$. If there are sets of arbitrarily large cardinality which are pseudo-shattered by $\F$ then we say that $\F$ has infinite pseudo-dimension $\pseudoDimension(\F)=\infty$.
\end{defn}

We use the following well known results.

\begin{theorem}\label{pseudoDimensionAffineFunctionsBartlettThm} Let $\F_{\text{linear}} \subseteq \measurableMaps\left(\R^k,\R\right)$ be the set of affine functions
\begin{align*}
\F_{\text{linear}}:=\left\lbrace \bm{u} \mapsto \bm{w} \cdot \bm{u}-t:\hspace{2mm}\bm{w}\in \R^k,\hspace{1mm}t \in \R\right\rbrace.    
\end{align*}
Then $\F_{\text{linear}}$ has pseudo-dimension $\pseudoDimension\left(\F_{\text{linear}}\right) =k+1$.
\end{theorem}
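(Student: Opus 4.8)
The plan is to prove the two matching bounds $\pseudoDimension(\F_{\text{linear}}) \le k+1$ and $\pseudoDimension(\F_{\text{linear}}) \ge k+1$ separately. This is the standard identification of the pseudo-dimension of a finite-dimensional vector space of functions (and can alternatively be cited directly from \cite{anthony2009neural}), but I would include a short self-contained argument. The key structural observation is that $\F_{\text{linear}}$ is a linear subspace of $\measurableMaps(\R^k,\R)$ of dimension exactly $k+1$: it is the span of the $k$ coordinate functionals $\bm{u}\mapsto u_i$ together with the constant function $\bm{u}\mapsto 1$, and these $k+1$ functions are linearly independent (if $\bm{w}\cdot\bm{u}-t=\bm{w}'\cdot\bm{u}-t'$ for all $\bm{u}$, then $\bm{u}=\bm{0}$ gives $t=t'$ and hence $\bm{w}=\bm{w}'$).

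For the upper bound I would argue by contradiction. Suppose a set of $k+2$ points $\{x_j\}_{j \in [k+2]} \subseteq \R^k$ were pseudo-shattered by $\F_{\text{linear}}$ with witnessing reals $\{r_j\}_{j \in [k+2]}$. Consider the linear evaluation map $T:\F_{\text{linear}} \to \R^{k+2}$ given by $T(f) := (f(x_1),\dots,f(x_{k+2}))$. Since $\dim \F_{\text{linear}} = k+1 < k+2$, the image $T(\F_{\text{linear}})$ is a proper subspace of $\R^{k+2}$, so there is a nonzero vector $c = (c_1,\dots,c_{k+2})$ orthogonal to it, i.e. $\sum_{j} c_j f(x_j) = 0$ for every $f \in \F_{\text{linear}}$. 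Consequently $\sum_{j} c_j (f(x_j) - r_j) = \beta$ for the fixed constant $\beta := -\sum_j c_j r_j$, independent of $f$. Now if $\beta \le 0$, take the sign pattern $\sigma_j := \sign(c_j)$ on the indices with $c_j \ne 0$ (assigned arbitrarily elsewhere): any $f$ realising this pattern would have $f(x_j) - r_j$ strictly of sign $\sign(c_j)$ on those indices, forcing $c_j(f(x_j)-r_j) > 0$ and hence $\beta = \sum_{j : c_j \ne 0} c_j(f(x_j)-r_j) > 0$, a contradiction. If instead $\beta \ge 0$, the antipodal pattern $\sigma_j := -\sign(c_j)$ is unrealisable by the same reasoning. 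Since one of $\beta \le 0$ or $\beta \ge 0$ always holds, some sign pattern is not realised, contradicting pseudo-shattering; thus $\pseudoDimension(\F_{\text{linear}}) \le k+1$.

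For the lower bound I would exhibit an explicit pseudo-shattered set of size $k+1$. Take $x_0 := \bm{0}$ and $x_i := \bm{e}_i$ for $i \in [k]$, and set all witnesses $r_j := 0$. Given an arbitrary sign pattern $\sigma = (\sigma_0,\sigma_1,\dots,\sigma_k) \in \{-1,1\}^{k+1}$, put $t := -\sigma_0/2$ and $w_i := \sigma_i$ for $i \in [k]$. Then $\sign(\bm{w}\cdot x_0 - t) = \sign(-t) = \sigma_0$, while for $i \in [k]$ we get $\sign(\bm{w}\cdot x_i - t) = \sign(\sigma_i + \sigma_0/2) = \sigma_i$ because $|\sigma_i| = 1 > 1/2$. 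Hence every one of the $2^{k+1}$ sign patterns is realised, so the $k+1$ points are pseudo-shattered and $\pseudoDimension(\F_{\text{linear}}) \ge k+1$. Combining the two bounds yields $\pseudoDimension(\F_{\text{linear}}) = k+1$.

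I expect the lower bound and the dimension count to be entirely routine; the one step that needs care is the upper bound, where one must use that pseudo-shattering demands strict sign agreement $\sign(f(x_j)-r_j) = \sigma_j \in \{-1,+1\}$ (so $f(x_j) \ne r_j$ on the relevant indices), and that the sign of the single fixed constant $\beta$ always obstructs at least one of the two antipodal sign patterns associated with the witness vector $c$.
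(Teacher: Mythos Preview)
Your proof is correct. The paper itself does not give a self-contained argument for this statement: it simply cites \cite[Theorem 11.6]{anthony2009neural}. You effectively anticipated this by noting the result can be cited from that reference, and then went further by supplying the standard linear-algebra argument (dimension obstruction for the upper bound, explicit shattered set for the lower bound). Both approaches are valid; yours has the advantage of being self-contained, while the paper's citation is more economical since this is a textbook fact not central to the paper's contributions.
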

\begin{proof} See \cite[Theorem 11.6]{anthony2009neural}.
\end{proof}

\begin{theorem}\label{pseudoDimensionAndNonDecreasingFunctionsBartlettThm} Suppose that $\F \subseteq \measurableMaps\left(\X,\R\right)$ and let $g \in \measurableMaps\left(\R,\R\right)$ be a non-decreasing function. Then the set $g\left(\F\right):=\left\lbrace g \circ f: f \in \F \right\rbrace$ has pseudo-dimension $\pseudoDimension\left(g\left(\F\right)\right) \leq \pseudoDimension\left(\F\right)$.
\end{theorem}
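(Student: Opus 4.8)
The plan is to reduce the statement about pseudo-dimensions to a statement about pseudo-shattered sets: I would show that \emph{any} finite set pseudo-shattered by $g(\F)$ is already pseudo-shattered by $\F$, after which the inequality $\pseudoDimension(g(\F)) \leq \pseudoDimension(\F)$ follows immediately from Definition \ref{pseudoDimensionDef} (and the case $\pseudoDimension(g(\F)) = \infty$ is covered, since then $\F$ pseudo-shatters sets of every finite cardinality). So I would start by fixing a set $\xSequenceSizeN = \{x_j\}_{j \in [n]} \in \X^n$ pseudo-shattered by $g(\F)$, witnessed (in the sense of Definition \ref{pseudoShatteringDef}) by reals $\{r_j\}_{j \in [n]}$ and, for each $\sigmaSequenceSizeN \in \{-1,1\}^n$, a function $f_{\sigmaSequenceSizeN} \in \F$ with $\sign(g(f_{\sigmaSequenceSizeN}(x_j)) - r_j) = \sigma_j$ for all $j$. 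Only these $2^n$ functions will matter.

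Next, for each coordinate $j$ I would use the monotonicity of $g$ to manufacture a new threshold $s_j$ that works directly for $\F$. Writing $U_j := \{t \in \R : g(t) > r_j\}$ and $L_j := \{t \in \R : g(t) < r_j\}$, these are an up-set and a down-set respectively (because $g$ is non-decreasing) and they are disjoint. The shattering condition says exactly that $f_{\sigmaSequenceSizeN}(x_j) \in U_j$ when $\sigma_j = +1$ and $f_{\sigmaSequenceSizeN}(x_j) \in L_j$ when $\sigma_j = -1$. Setting $A_j^{+} := \{f_{\sigmaSequenceSizeN}(x_j) : \sigma_j = +1\}$ and $A_j^{-} := \{f_{\sigmaSequenceSizeN}(x_j) : \sigma_j = -1\}$ --- both nonempty finite subsets of $\R$, since each coordinate appears with both signs across the $2^n$ patterns --- I have $A_j^{+} \subseteq U_j$ and $A_j^{-} \subseteq L_j$. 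The crucial elementary fact is that every point of $L_j$ lies weakly below every point of $U_j$: if $a \in L_j$, $b \in U_j$ and $a > b$, then $g(a) \geq g(b) > r_j$ contradicts $g(a) < r_j$. Hence $\max A_j^{-} \leq \min A_j^{+}$, and equality is impossible, as it would exhibit a point of $L_j \cap U_j = \emptyset$. Therefore $\max A_j^{-} < \min A_j^{+}$, and I may choose any $s_j \in (\max A_j^{-},\, \min A_j^{+})$.

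Finally I would observe that with these thresholds, for every $\sigmaSequenceSizeN \in \{-1,1\}^n$ and every $j$ we get $f_{\sigmaSequenceSizeN}(x_j) > s_j$ when $\sigma_j = +1$ and $f_{\sigmaSequenceSizeN}(x_j) < s_j$ when $\sigma_j = -1$, i.e. $\sign(f_{\sigmaSequenceSizeN}(x_j) - s_j) = \sigma_j$, so $\xSequenceSizeN$ is pseudo-shattered by $\F$ with witnesses $\{s_j\}_{j \in [n]}$; hence $n \leq \pseudoDimension(\F)$. I do not expect any serious obstacle: the only place needing care is the bookkeeping in the middle paragraph --- ensuring $A_j^{\pm}$ are nonempty and that the separation $\max A_j^{-} < \min A_j^{+}$ is strict rather than merely weak --- and both are forced by the pseudo-shattering hypothesis together with the disjointness of $L_j$ and $U_j$.
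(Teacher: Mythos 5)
Your proof is correct, and it fills a genuine gap: the paper does not prove this lemma at all but simply cites \cite[Theorem 11.3]{anthony2009neural}, so there is nothing in the paper's argument to compare against. The route you take is the standard textbook one and is the right idea. Reducing to the level of pseudo-shattered sets is exactly what is needed, since $\pseudoDimension$ is defined as a supremum over cardinalities of such sets, and the case $\pseudoDimension(g(\F))=\infty$ is handled for free. The key monotonicity step is also clean: $U_j=\{t:g(t)>r_j\}$ is an up-set, $L_j=\{t:g(t)<r_j\}$ is a down-set, they are disjoint, and therefore the finite nonempty sets $A_j^{-}\subseteq L_j$ and $A_j^{+}\subseteq U_j$ satisfy $\max A_j^{-}<\min A_j^{+}$ strictly (weak separation because a point of $L_j$ above a point of $U_j$ would violate monotonicity; strictness because equality would place a value in $L_j\cap U_j=\emptyset$). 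With $s_j$ chosen in the open gap, the original $f_{\sigmaSequenceSizeN}$ realise every sign pattern relative to $\{s_j\}_{j\in[n]}$, so $\F$ pseudo-shatters $\xSequenceSizeN$. One cosmetic point worth noting: you implicitly take the convention in which $\sign(f_{\sigmaSequenceSizeN}(x_j)-r_j)=\sigma_j$ forces a strict inequality on the $+1$ side; if one instead uses $\sign(0)=+1$, then $U_j$ should be defined with $\geq$ in place of $>$. The argument goes through unchanged either way, because $U_j$ remains an up-set disjoint from $L_j$, which is all you use. Nothing else needs fixing.
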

\begin{proof} See \cite[Theorem 11.3]{anthony2009neural}.
\end{proof}
Given any function class $\F \subseteq \measurableMaps(\X,\R)$ and any $\xSequenceSizeN = \{x_j\}_{j \in [n]} \in \X^n$, the empirical $\ell_{\infty}$ metric $\dist_{\xSequenceSizeN}^{\infty}$ on $\F$ is defined for $f_0$, $f_1 \in \F$ by
\begin{align*}
\dist_{\xSequenceSizeN}^{\infty}(f_0,f_1):=\max_{j \in [n]}\left\lbrace \left|f_0(x_j)-f_1(x_j)\right|\right\rbrace.
\end{align*} 

\begin{theorem}\label{pseudoDimensionAndCoveringNumbersBartlettThm} Suppose $\hypothesisClassBound>0$, $d \in \N$ and $\F \subseteq \measurableMaps_{\hypothesisClassBound}\left(\X\right)$ has pseudo-dimension $\pseudoDimension\left(\F\right) \leq d$. Then for all $n\geq d$, $\xSequenceSizeN \in \X^n$ and $\epsilon>0$,
\begin{align*}
\log \left(\coveringNumber\left(\F,\dist_{\xSequenceSizeN}^{\infty},\epsilon\right)\right) \leq   d \cdot \left(\logBar \left(\frac{ \hypothesisClassBound n}{\epsilon \cdot d }\right)+2\right).
\end{align*}
\end{theorem}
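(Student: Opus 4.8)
The plan is to run the classical Sauer--Shelah argument bounding $\ell_\infty$ covering numbers by the pseudo-dimension. First I would reduce to a finite combinatorial count: fixing $\xSequenceSizeN=\{x_j\}_{j\in[n]}$, each $f\in\F$ is identified with the vector $(f(x_1),\dots,f(x_n))\in[-\hypothesisClassBound,\hypothesisClassBound]^n$, so $\coveringNumber(\F,\dist_{\xSequenceSizeN}^{\infty},\epsilon)$ is the $\ell_\infty$-covering number of the image set $V=\{(f(x_j))_{j\in[n]}:f\in\F\}\subseteq[-\hypothesisClassBound,\hypothesisClassBound]^n$. If $\epsilon\ge2\hypothesisClassBound$ then $V$ has $\ell_\infty$-diameter at most $2\hypothesisClassBound\le\epsilon$, the covering number is $1$, and the claimed inequality holds since its right-hand side is non-negative; so assume $\epsilon<2\hypothesisClassBound$. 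Partition $[-\hypothesisClassBound,\hypothesisClassBound]$ into $N:=\lceil 2\hypothesisClassBound/\epsilon\rceil\ge2$ consecutive half-open intervals, each of length at most $\epsilon$, with separating thresholds $\tau_1<\dots<\tau_{N-1}$, and record for each $f\in\F$ its \emph{pattern} $p(f)\in[N]^n$ indicating which interval contains each $f(x_j)$. Two functions with equal patterns agree on $\xSequenceSizeN$ within $\ell_\infty$-distance less than $\epsilon$, so choosing one representative per realized pattern produces an $\epsilon$-cover; hence $\coveringNumber(\F,\dist_{\xSequenceSizeN}^{\infty},\epsilon)\le\bigl|\{p(f):f\in\F\}\bigr|$.

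The combinatorial core, which I expect to be the main obstacle, is to bound the number of realized patterns by $\sum_{k=0}^d\binom{n(N-1)}{k}$. Since $p(f)$ is determined by the sign vector $\bigl(\sign(f(x_j)-\tau_i)\bigr)_{j\in[n],\,i\in[N-1]}$, it is enough to bound the number of distinct subsets $S_f:=\{(x_j,\tau_i)\in T:f(x_j)>\tau_i\}$ of the ground set $T:=\{(x_j,\tau_i):j\in[n],\,i\in[N-1]\}$, and I would do this by showing that the set system $\{S_f:f\in\F\}$ has VC dimension at most $d$. Suppose some $T'\subseteq T$ with $|T'|=d+1$ is shattered. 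No two elements of $T'$ can share a first coordinate, because $(x,\tau)\in S_f$ forces $(x,\tau')\in S_f$ whenever $\tau'<\tau$, which already precludes shattering such a pair. Writing $T'=\{(x_{j_l},\tau_{i_l})\}_{l\in[d+1]}$ with the $x_{j_l}$ distinct and letting $\{f_A\}_{A\subseteq T'}$ realize all $2^{d+1}$ traces, for each $l$ the quantity $\max\{f_A(x_{j_l}):(x_{j_l},\tau_{i_l})\notin A\}$ is at most $\tau_{i_l}$ while $\min\{f_A(x_{j_l}):(x_{j_l},\tau_{i_l})\in A\}$ is strictly greater than $\tau_{i_l}$, so a witness $r_l$ can be chosen strictly between them; then $\{x_{j_l}\}_{l\in[d+1]}$ is pseudo-shattered by $\F$ via the $f_A$, contradicting $\pseudoDimension(\F)\le d$. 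The Sauer--Shelah--Perles lemma then delivers $\bigl|\{p(f):f\in\F\}\bigr|\le\sum_{k=0}^d\binom{n(N-1)}{k}$.

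Finally I would assemble the estimate. With $\epsilon<2\hypothesisClassBound$ we have $M:=n(N-1)\le 2\hypothesisClassBound n/\epsilon$ and $M\ge n\ge d\ge1$, so the standard bound $\sum_{k=0}^d\binom{M}{k}\le(eM/d)^d$ gives
\begin{align*}
\log\coveringNumber(\F,\dist_{\xSequenceSizeN}^{\infty},\epsilon)\le d\log\frac{eM}{d}\le d\log\frac{2e\hypothesisClassBound n}{\epsilon d}=d\Bigl(\log(2e)+\log\frac{\hypothesisClassBound n}{\epsilon d}\Bigr)\le d\Bigl(\logBar\Bigl(\frac{\hypothesisClassBound n}{\epsilon d}\Bigr)+2\Bigr),
\end{align*}
where the last inequality uses $\log(2e)<2$ together with $\log x\le\logBar(x)$. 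Only the middle paragraph---checking that the dual set system $\{S_f\}$ inherits VC dimension at most the pseudo-dimension, with the monotonicity observation and the tie-breaking choice of witnesses---requires care; the reduction and the arithmetic are routine. This is exactly the argument behind \cite[Chapter 12]{anthony2009neural}, which could instead simply be cited.
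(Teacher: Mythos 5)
Your argument is correct and follows the same route as the paper, which simply invokes \cite[Theorem 12.2]{anthony2009neural} (the very pseudo-dimension-to-$\ell_\infty$-covering bound you reprove from scratch via Sauer--Shelah), then performs the identical $\log(2e)<2$ arithmetic and handles $\epsilon\ge 2\hypothesisClassBound$ by the same trivial observation. Unpacking the cited theorem is a reasonable stylistic choice, but nothing in your approach diverges from the paper's.
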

\begin{proof} By \cite[Theorem 12.2]{anthony2009neural} for $\epsilon \in [0,2\hypothesisClassBound)$ we have
\begin{align*}
\log \left(\coveringNumber\left(\F,\dist_{\xSequenceSizeN}^{\infty},\epsilon\right)\right)  \leq   d \cdot \log \left(\frac{2e \hypothesisClassBound n}{\epsilon \cdot d }\right) \leq   d \cdot \left(\logBar \left(\frac{ \hypothesisClassBound n}{\epsilon \cdot d }\right)+2\right).
\end{align*}
For $\epsilon \geq 2 \hypothesisClassBound$ the claim follows immediately from $\F \subseteq \measurableMaps_{\hypothesisClassBound}(\X)$ so $\coveringNumber\left(\F,\dist_{\xSequenceSizeN}^{\infty},\epsilon\right) \leq 1$.
\end{proof}

We now verify Assumption \ref{logarithmicCoveringNumbersAssumption} for the function classes used in our examples, $\lowDimensionalFunctionClass^{\text{0,1}}$ (as in Example \ref{e1}) and $\lowDimensionalFunctionClass^{{bl}}$ (as in Examples \ref{e2}-\ref{e3}). We note that other nonlinear transformations of the linear function class may also satisfy this assumption provided that the nonlinearity only changes the log covering number by a constant factor. 

\begin{prop}\label{verifyingLogCovering}
Examples \ref{e1}-\ref{e3} satisfy Assumption \ref{logarithmicCoveringNumbersAssumption}. More precisely:
\begin{enumerate}
    \item The class $\lowDimensionalFunctionClass^{\text{0,1}}=\left\lbrace \bm{u} \mapsto \sign\left(\bm{w} \cdot \bm{u}-t\right):\hspace{2mm}\bm{w}\in \R^k,\hspace{1mm}t \in \R\right\rbrace \subseteq \measurableMaps\left(\R^k,\{-1,+1\}\right)$ satisfies Assumption \ref{logarithmicCoveringNumbersAssumption} with constant $\coveringNumberConstant=4$ and bound $\hypothesisClassBound=1$.
    \item The class $\lowDimensionalFunctionClass^{{bl}}=\left\lbrace \bm{u} \mapsto \max\left\lbrace \min\left\lbrace \bm{w} \cdot \bm{u}-t,\hypothesisClassBound\right\rbrace,-\hypothesisClassBound\right\rbrace:\hspace{2mm}\bm{w}\in \R^k,\hspace{1mm}t \in \R\right\rbrace \subseteq \measurableMaps_{\hypothesisClassBound}\left(\R^k\right)$ satisfies Assumption \ref{logarithmicCoveringNumbersAssumption} with constant $\coveringNumberConstant=4$ and bound $\hypothesisClassBound$.
\end{enumerate}
\end{prop}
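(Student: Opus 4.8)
Both cases follow from the pseudo-dimension machinery collected in Appendix~\ref{appendixForCoveringNumberAssumptionSubSec}, because each of the two classes is obtained from the affine class $\F_{\text{linear}}\subseteq\measurableMaps(\R^k,\R)$ by post-composition with a fixed \emph{non-decreasing} scalar map: the sign map $z\mapsto\sign(z)$ in the case of $\lowDimensionalFunctionClass^{\text{0,1}}$, and the clipping map $z\mapsto\max\{\min\{z,\hypothesisClassBound\},-\hypothesisClassBound\}$ in the case of $\lowDimensionalFunctionClass^{\text{bl}}$. The plan is therefore: (i) bound the pseudo-dimension of each class by $k+1$; (ii) convert this into an $\ell_\infty$ covering-number bound; (iii) pass from the $\ell_\infty$ metric to the empirical $\ell_2$ metric ${\dist_{\uSequenceSizeN}^{\actionSpace}}$ appearing in Assumption~\ref{logarithmicCoveringNumbersAssumption}; and (iv) absorb the lower-order terms into the constant $\coveringNumberConstant=4$.

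For step (i), Theorem~\ref{pseudoDimensionAffineFunctionsBartlettThm} gives $\pseudoDimension(\F_{\text{linear}})=k+1$, and since both the sign map and the clipping map are non-decreasing, Theorem~\ref{pseudoDimensionAndNonDecreasingFunctionsBartlettThm} yields $\pseudoDimension(\lowDimensionalFunctionClass^{\text{0,1}})\le k+1$ and $\pseudoDimension(\lowDimensionalFunctionClass^{\text{bl}})\le k+1$. For step (ii), note that $\lowDimensionalFunctionClass^{\text{0,1}}\subseteq\measurableMaps_1(\R^k)$ and $\lowDimensionalFunctionClass^{\text{bl}}\subseteq\measurableMaps_{\hypothesisClassBound}(\R^k)$ by construction, so Theorem~\ref{pseudoDimensionAndCoveringNumbersBartlettThm}, applied with $d=k+1$ (admissible since $n>k$ forces $n\ge k+1=d$), gives for all $\uSequenceSizeN\in(\R^k)^n$ and $\epsilon>0$
\begin{align*}
\log\coveringNumber\!\left(\lowDimensionalFunctionClass^{\text{bl}},\dist_{\uSequenceSizeN}^{\infty},\epsilon\right)\le (k+1)\left(\logBar\!\left(\frac{\hypothesisClassBound n}{\epsilon(k+1)}\right)+2\right),
\end{align*}
together with the same bound with $\hypothesisClassBound$ replaced by $1$ for $\lowDimensionalFunctionClass^{\text{0,1}}$; here $\dist_{\uSequenceSizeN}^{\infty}$ denotes the empirical $\ell_\infty$ metric on the relevant class.

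Step (iii) is immediate: in both examples $\metricActionSpace$ is the standard metric on $\R$, so $\dist_{\uSequenceSizeN}^{\actionSpace}(f_0,f_1)=\bigl(n^{-1}\sum_{j\in[n]}|f_0(u_j)-f_1(u_j)|^2\bigr)^{1/2}\le\max_{j\in[n]}|f_0(u_j)-f_1(u_j)|=\dist_{\uSequenceSizeN}^{\infty}(f_0,f_1)$, whence any $\epsilon$-net for $\dist_{\uSequenceSizeN}^{\infty}$ is an $\epsilon$-net for $\dist_{\uSequenceSizeN}^{\actionSpace}$, and therefore $\coveringNumber(\lowDimensionalFunctionClass,\dist_{\uSequenceSizeN}^{\actionSpace},\epsilon)\le\coveringNumber(\lowDimensionalFunctionClass,\dist_{\uSequenceSizeN}^{\infty},\epsilon)$ for $\lowDimensionalFunctionClass\in\{\lowDimensionalFunctionClass^{\text{0,1}},\lowDimensionalFunctionClass^{\text{bl}}\}$.

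For step (iv), one simplifies $(k+1)\bigl(\logBar(\hypothesisClassBound n/(\epsilon(k+1)))+2\bigr)$ into the target form $\coveringNumberConstant k\log(\hypothesisClassBound n/\epsilon)$ with $\coveringNumberConstant=4$, using $k+1\le 2k$ (as $k\ge1$), monotonicity of $\logBar$ to drop the factor $k+1$ in the denominator, and $\logBar\ge1$ to absorb the additive $2$ into the logarithm; one also dispatches separately the trivial regime in which $\epsilon$ exceeds the $\ell_2$ diameter of the class (at most $2$, resp.\ $2\hypothesisClassBound$), where the covering number equals $1$, using $n\ge2$. I do not anticipate a genuine obstacle: the argument is just the three appendix theorems invoked in sequence, followed by elementary estimates, and the only mildly delicate point is the constant bookkeeping in step (iv) — the generous value $\coveringNumberConstant=4$ is chosen precisely so that this remains painless.
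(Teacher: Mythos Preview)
Your proposal is correct and follows essentially the same route as the paper: bound the pseudo-dimension via Theorems~\ref{pseudoDimensionAffineFunctionsBartlettThm} and~\ref{pseudoDimensionAndNonDecreasingFunctionsBartlettThm}, convert to an $\ell_\infty$ covering bound via Theorem~\ref{pseudoDimensionAndCoveringNumbersBartlettThm}, pass to the empirical $\ell_2$ metric using $\dist_{\uSequenceSizeN}^{\actionSpace}\le\dist_{\uSequenceSizeN}^{\infty}$, and tidy the constants. The paper's proof is slightly terser in step (iv), simplifying directly to $4k\,\logBar(\hypothesisClassBound n/(\epsilon k))$ without separately isolating the large-$\epsilon$ regime, but the substance is identical.
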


\begin{proof}[Proof of Proposition \ref{verifyingLogCovering}]
\begin{enumerate}
    \item By Theorem \ref{pseudoDimensionAffineFunctionsBartlettThm}, $\pseudoDimension\left(\F_{\text{linear}}\right)=k+1$. Moreover, the mapping $t \mapsto \sign(t)$ is non-decreasing, so by Theorem \ref{pseudoDimensionAndNonDecreasingFunctionsBartlettThm}, $\pseudoDimension\left(\lowDimensionalFunctionClass^{\text{0,1}}\right) \leq k+1$. Note that for all $\uSequenceSizeN$, and any $f_0,f_1$ we have $\dist_{\uSequenceSizeN}^{\actionSpace}(f_0,f_1) \leq \dist_{\uSequenceSizeN}^{\infty}(f_0,f_1)$, where $\dist_{\uSequenceSizeN}^{\actionSpace}$ is the empirical $\ell_2$ metric. 
    By Theorem \ref{pseudoDimensionAndCoveringNumbersBartlettThm}, for every $n, k \in \N$ with $n > k$, $\uSequenceSizeN \in (\R^k)^n$, and $\epsilon>0$,
\begin{align*}
\log \left(\coveringNumber\left(\lowDimensionalFunctionClass^{\text{0,1}},\dist_{\uSequenceSizeN}^{\actionSpace},\epsilon\right)\right) &\leq \log \left(\coveringNumber\left(\lowDimensionalFunctionClass^{\text{0,1}},\dist_{\uSequenceSizeN}^{\infty},\epsilon\right)\right) \\ & \leq (k+1) \cdot \left(\logBar \left(\frac{ n}{\epsilon \cdot (k+1) }\right)+2\right) \leq 4k\cdot \logBar \left(\frac{ n}{\epsilon \cdot k }\right).
\end{align*}
\item  Since $\pseudoDimension\left(\F_{\text{linear}}\right)=k+1$ and the mapping $t \mapsto \max\{\min\{t,\hypothesisClassBound\},-\hypothesisClassBound\}$ is non-decreasing, Theorem \ref{pseudoDimensionAndNonDecreasingFunctionsBartlettThm} implies $\pseudoDimension\left(\lowDimensionalFunctionClass^{{bl}}\right) \leq k+1$. By Theorem \ref{pseudoDimensionAndCoveringNumbersBartlettThm} for every $n, k \in \N$ with $n > k$, $\uSequenceSizeN \in \left(\R^k\right)^n$ and $\epsilon>0$,
\begin{align}\label{infNormCoveringBoundForCheckingLogCoverAsssumption}
\log \left(\coveringNumber\left(\lowDimensionalFunctionClass^{{bl}},\dist_{\uSequenceSizeN}^{\infty},\epsilon\right)\right) & \leq (k+1) \cdot \left(\logBar \left(\frac{ \hypothesisClassBound n}{\epsilon \cdot (k+1) }\right)+2\right) \leq 4k\cdot \logBar \left(\frac{ \hypothesisClassBound n}{\epsilon \cdot k }\right).
\end{align}
Hence, since $\dist_{\uSequenceSizeN}^{\actionSpace} \leq \dist_{\uSequenceSizeN}^{\infty}$ we have $\log \left(\coveringNumber\left(\lowDimensionalFunctionClass^{{bl}},\dist_{\uSequenceSizeN}^{\actionSpace},\epsilon\right)\right) \leq  4k\cdot \logBar \left(\frac{ \hypothesisClassBound n}{\epsilon \cdot k }\right)$.
\end{enumerate}
\end{proof}

\subsection{Bernstein-Tsybakov condition}\label{appendixForBernsteinSubSec}

We now consider Assumption \ref{bernsteinTysbakovMarginTypeAssumption}. For example \ref{e1}, the 0-1 loss $\zeroOneLoss$ (example \ref{e1}), the Tsybakov noise condition implies the Bernstein condition \cite[Proposition 1]{tsybakov2004optimal}. For the squared loss and the Kullback-Leibler loss we utilise the following lemma.

\begin{lemma}\label{lemma:BernsteinConditionForLipschitzStronglyConvexLosses} Suppose that $\lossFunction: [-\hypothesisClassBound,\hypothesisClassBound]\times \Y \rightarrow [0,\lossFunctionBound]$ satisfies Assumptions \ref{lipschitzLossFunctionAssumption} and \ref{stronglyConvexLossAssumption}. Then satisfy the Bernstein-Tsybakov condition with exponent $\bernsteinExponent =1$ and constant $\bernsteinConstant = {4\lipschitzConstantLoss^2}/{\strongConvexityConstant}$.
\end{lemma}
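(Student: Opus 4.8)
The plan is to establish the required inequality in two steps: first reduce the left-hand side of the Bernstein--Tsybakov bound to the squared $L^2(\probDistribution_X)$-distance between $\functionGeneral$ and $\oracleFunction$ using the Lipschitz assumption, and then control that distance by the excess risk using strong mid-point convexity together with the global optimality of $\oracleFunction$.

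For the first step, fix an arbitrary $\functionGeneral \in \measurableMaps(\X,\actionSpace)$ with $\actionSpace=[-\hypothesisClassBound,\hypothesisClassBound]$, noting that $\oracleFunction$ exists by Proposition \ref{bayesOptimalExistsWhenLossContinuousActionSpaceCompact} since the loss is continuous in its first argument (being Lipschitz). By Assumption \ref{lipschitzLossFunctionAssumption} we have $\left|\lossFunction(\functionGeneral(x),y)-\lossFunction(\oracleFunction(x),y)\right| \leq \lipschitzConstantLoss \cdot \left|\functionGeneral(x)-\oracleFunction(x)\right|$ for every $(x,y) \in \X \times \Y$, so squaring and taking expectations over $(X,Y) \sim \probDistribution$ gives
\[
\E\left[\left\{\lossFunction(\functionGeneral(X),Y)-\lossFunction(\oracleFunction(X),Y)\right\}^2\right] \leq \lipschitzConstantLoss^2 \cdot \E\left[\left(\functionGeneral(X)-\oracleFunction(X)\right)^2\right].
\]

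For the second step, I would exploit that $\actionSpace=[-\hypothesisClassBound,\hypothesisClassBound]$ is an interval, so the pointwise midpoint $\tilde\functionGeneral := \tfrac12(\functionGeneral+\oracleFunction)$ is again a measurable, $\actionSpace$-valued predictor. Applying $\strongConvexityConstant$-strong mid-point convexity (Assumption \ref{stronglyConvexLossAssumption}, i.e.\ \eqref{eq:strongConvexDef} with $t=1/2$) to $v \mapsto \lossFunction(v,y)$ at $v_0=\functionGeneral(x)$, $v_1=\oracleFunction(x)$ yields, for every $(x,y)$,
\[
\lossFunction(\tilde\functionGeneral(x),y) \leq \tfrac12\lossFunction(\functionGeneral(x),y) + \tfrac12\lossFunction(\oracleFunction(x),y) - \tfrac{\strongConvexityConstant}{8}\left(\functionGeneral(x)-\oracleFunction(x)\right)^2.
\]
Taking expectations and using that $\oracleFunction$ minimises $\risk$ over all of $\measurableMaps(\X,\actionSpace)$, so $\risk(\tilde\functionGeneral) \geq \risk(\oracleFunction)$, one gets $\risk(\oracleFunction) \leq \tfrac12\risk(\functionGeneral)+\tfrac12\risk(\oracleFunction)-\tfrac{\strongConvexityConstant}{8}\E[(\functionGeneral(X)-\oracleFunction(X))^2]$, which rearranges to $\excessRisk(\functionGeneral)=\risk(\functionGeneral)-\risk(\oracleFunction)\geq \tfrac{\strongConvexityConstant}{4}\E[(\functionGeneral(X)-\oracleFunction(X))^2]$. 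Chaining this with the bound from the first step gives $\E[\{\lossFunction(\functionGeneral(X),Y)-\lossFunction(\oracleFunction(X),Y)\}^2] \leq \tfrac{4\lipschitzConstantLoss^2}{\strongConvexityConstant}\cdot\excessRisk(\functionGeneral)$, i.e.\ Assumption \ref{bernsteinTysbakovMarginTypeAssumption} with $\bernsteinExponent=1$ and $\bernsteinConstant=4\lipschitzConstantLoss^2/\strongConvexityConstant$.

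I do not anticipate a substantial obstacle here: this is essentially the standard ``strong convexity implies the Bernstein condition'' computation. The only point that needs slight care is that the midpoint predictor $\tilde\functionGeneral$ must itself be admissible (measurable and $\actionSpace$-valued), which is automatic because $\actionSpace$ is convex — and this is precisely why mid-point convexity of the loss, rather than full convexity, suffices.
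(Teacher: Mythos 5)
Your proof is correct and takes essentially the same route as the paper's: both apply strong mid-point convexity to $v_0=\functionGeneral(x)$, $v_1=\oracleFunction(x)$, integrate, use that the midpoint predictor $\functionGeneral_{1/2}$ has nonnegative excess risk (equivalently $\risk(\functionGeneral_{1/2})\geq\risk(\oracleFunction)$), and then chain with the Lipschitz bound to get the constant $4\lipschitzConstantLoss^2/\strongConvexityConstant$. (Incidentally, the sign in the first displayed inequality of the paper's proof should be a minus, not a plus — your rearrangement has it right — but this is only a typo and the paper's subsequent lines use the correct sign.)
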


\begin{proof}
\if 0 Thx!!
\textcolor{red}{At a high level the green method works, but it requires extra conditions - 2nd deriv exists. Also the other proof doesn't need to argue by cases, so in that sense its more direct.. There's also some hidden technicalities to do with differentiating under the integral, which all work out fine here (provided its twice diff so continuously differentiable) but add hidden complexity. The black proof is very simple somehow..}
\textcolor{green}{Alternative for the first part - Is this also ok?
\begin{align*}
\int
\left(\functionGeneral(x)-\oracleFunction(x)\right)^2 dP(x)
\leq\frac{2}{\strongConvexityConstant} \int \left\lbrace\lossFunction(\functionGeneral(x),y)-\lossFunction(\oracleFunction(x),y)\right\rbrace dP(x,y)
\end{align*}
by Taylor approximation of $\lossFunction(\functionGeneral(x),y)$ around $\oracleFunction(x)$ combined with the convexity of $\int_{\Y}\lossFunction(x,y)dP^x(y)$ so the derivative term $\int_{\Y} \lossFunction'(u,y)|_{u=\phi^*(x)}dP^x(y)(\phi(x)-\phi^*(x))\ge 0$ (this is because for any $x$, the derivative term becomes 0 if $\phi(x)^*\in(-\beta,\beta)$ and at the end points if the minimum is at $\phi^*(x)=-\beta$ then $\phi(x)\ge\phi^*(x)$ with $\int_{\Y} \lossFunction'(u,y)|_{u=\phi^*(x)}dP^x(y)>0$; and likewise if the minimum is at $\phi^*(x)=+\beta$ then $\phi(x)\le\phi^*(x)$ with $\int_{\Y} \lossFunction'(u,y)|_{u=\phi^*(x)}dP^x(y)<0$.\\
If we plug this into the second part of the proof we get the constant $\bernsteinConstant=2\lipschitzConstantLoss^2/\strongConvexityConstant$.\\
}
\fi
Choose  $\functionGeneral \in \measurableMaps(\X,\actionSpace)$ and define $\functionGeneral_{1/2} \in \measurableMaps(\X,\actionSpace)$ by $\functionGeneral_{1/2}(x) = (\functionGeneral(x)+\oracleFunction(x))/2$ for $x \in \X$. By Assumption \ref{stronglyConvexLossAssumption} we have
\begin{align*}
\left(\functionGeneral(x)-\oracleFunction(x)\right)^2 \leq \frac{4}{\strongConvexityConstant} \left(\left\lbrace\lossFunction(\functionGeneral(x),y)-\lossFunction(\oracleFunction(x),y)\right\rbrace+2\left\lbrace \lossFunction\left(\functionGeneral_{1/2}(x),y\right)-\lossFunction(\oracleFunction(x),y)\right\rbrace\right)
\end{align*} 
for all $x \in \X$ and $y \in \Y$. Hence, by applying Assumption \ref{lipschitzLossFunctionAssumption} we have
\begin{align*}
\int\left\lbrace \lossFunction\left(\functionGeneral(x),y\right)-\lossFunction\left(\oracleFunction(x),y\right)\right\rbrace^2d\probDistribution(x,y) & \leq  \lipschitzConstantLoss^2 \int\left(\functionGeneral(x)-\oracleFunction(x)\right)^2d\probDistribution_X(x)\\
& \leq  \frac{4\lipschitzConstantLoss^2}{\strongConvexityConstant} \left(\excessRisk(\functionGeneral)-2\excessRisk(\functionGeneral_{1/2})\right) \\ &\leq \frac{4\lipschitzConstantLoss^2}{\strongConvexityConstant} \excessRisk(\functionGeneral),
\end{align*} 
as required.
\end{proof}

\begin{lemma}\label{lemma:striclyPositiveSecondDerivativeImpliesStronglyConvex} A twice differentiable function $\varphi:[a,b]\rightarrow \R$ with second derivative $\varphi'' \geq \strongConvexityConstant$ is  $\strongConvexityConstant$-strongly convex on $[a,b]$.
\end{lemma}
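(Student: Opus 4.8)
The plan is to reduce the claim to the elementary fact that a twice differentiable function with nonnegative second derivative is convex. First I would introduce the auxiliary function $\psi:[a,b]\rightarrow \R$ defined by $\psi(v):=\varphi(v)-\frac{\strongConvexityConstant}{2}v^2$. Since $\varphi$ is twice differentiable, so is $\psi$, with $\psi''=\varphi''-\strongConvexityConstant\geq 0$ by hypothesis. Hence $\psi$ is convex on $[a,b]$: this follows from the standard argument that $\psi'$ is non-decreasing (as $\psi''\geq 0$), which via the mean value theorem forces the chord inequality $\psi((1-t)v_0+tv_1)\leq (1-t)\psi(v_0)+t\psi(v_1)$ for all $v_0,v_1\in[a,b]$ and $t\in[0,1]$.

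Next I would substitute $\psi(v)=\varphi(v)-\frac{\strongConvexityConstant}{2}v^2$ back into this inequality and rearrange to isolate $\varphi$, obtaining
\[
\varphi((1-t)v_0+tv_1)\leq (1-t)\varphi(v_0)+t\varphi(v_1)-\frac{\strongConvexityConstant}{2}\Big[(1-t)v_0^2+tv_1^2-((1-t)v_0+tv_1)^2\Big].
\]
The remaining step is the purely algebraic identity $(1-t)v_0^2+tv_1^2-((1-t)v_0+tv_1)^2=t(1-t)(v_0-v_1)^2$, verified by expanding the square; plugging this in yields exactly \eqref{eq:strongConvexDef}, which is precisely the definition of $\strongConvexityConstant$-strong convexity on $[a,b]$.

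There is essentially no obstacle here; the only non-routine ingredient is the lemma that $\psi''\geq 0$ implies convexity of $\psi$, which is classical and could in any case be bypassed by a direct route: set $g(t):=(1-t)\varphi(v_0)+t\varphi(v_1)-\varphi((1-t)v_0+tv_1)-\frac{\strongConvexityConstant}{2}t(1-t)(v_0-v_1)^2$, note $g(0)=g(1)=0$, compute $g''(t)=-(v_1-v_0)^2\varphi''((1-t)v_0+tv_1)+\strongConvexityConstant(v_0-v_1)^2\leq 0$, and conclude $g\geq 0$ on $[0,1]$ since a concave function vanishing at both endpoints of an interval is nonnegative throughout it. Either way the proof is short, so I would present the reduction to convexity and relegate the endpoint-differentiability conventions to a parenthetical remark.
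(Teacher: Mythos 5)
Your proof is correct, and it takes a genuinely different route from the paper's. The paper proceeds by a direct second-order Taylor expansion of $\varphi$ around the interpolated point $v_t$ (using $\varphi''\geq \strongConvexityConstant$ to lower-bound the remainder term in the expansions toward $v_0$ and toward $v_1$), and then takes the $(1-t,t)$-convex combination of the two resulting inequalities; the cross terms involving $\varphi'(v_t)$ cancel and the quadratic remainders assemble into $\frac{\strongConvexityConstant}{2}t(1-t)\Delta^2$. Your argument instead subtracts the quadratic $\frac{\strongConvexityConstant}{2}v^2$ to form $\psi$, invokes the classical fact that $\psi''\geq 0$ implies convexity of $\psi$, and rewrites the chord inequality for $\psi$ as the strong-convexity inequality for $\varphi$ via the identity $(1-t)v_0^2+tv_1^2-((1-t)v_0+tv_1)^2=t(1-t)(v_0-v_1)^2$. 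This is the ``quadratic-shift'' characterization of strong convexity, and it is arguably more conceptual: it makes explicit that $\strongConvexityConstant$-strong convexity is equivalent to ordinary convexity of $\varphi-\frac{\strongConvexityConstant}{2}(\cdot)^2$, and it localizes the only nontrivial input (non-negativity of a second derivative implies convexity) in a single classical lemma. Your alternative direct route via $g(t)$ with $g(0)=g(1)=0$ and $g''\leq 0$ is also sound, and is in fact closest in spirit to the paper's Taylor-expansion argument. Either presentation would serve; the paper's version avoids introducing the auxiliary $\psi$ at the minor cost of carrying the Taylor remainder bookkeeping explicitly.
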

\begin{proof} Take $v_0$, $v_1 \in [a,b]$ with $v_0 \leq v_1$, let $v_t:=tv_0+(1-t)v_1$ and $\Delta:=v_1-v_0$. By a second order Taylor expansion we have at $x_t$ we have
\begin{align*}
\varphi(v_0) &\geq \varphi(v_t)+\varphi'(v_t)(v_0-v_t)+\frac{\strongConvexityConstant}{2}(v_0-v_t)^2=\varphi(v_t)-\varphi'(v_t)\Delta t+\frac{\strongConvexityConstant}{2}\Delta^2t^2\\
\varphi(v_1) &\geq \varphi(v_t)+\varphi'(v_t)(v_1-v_t)+\frac{\strongConvexityConstant}{2}(v_1-v_t)^2=\varphi(v_t)+\varphi'(v_t)\Delta(1-t)+\frac{\strongConvexityConstant}{2}\Delta^2(1-t)^2.
\end{align*}
By rearranging we have $\varphi(v_t) \leq (1-t) \varphi(v_0)+t\varphi(v_1) - ({\strongConvexityConstant}/{2})\cdot t(1-t)\cdot \Delta^2$, as required.
\end{proof}

\begin{corollary}\label{verifyingBernstein} The square loss $\sqrLoss$ and the Kullback Leibler loss $\klLoss$ satisfy Assumption \ref{bernsteinTysbakovMarginTypeAssumption} with exponent $\bernsteinExponent=1$.
\end{corollary}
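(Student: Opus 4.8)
The plan is to deduce the corollary directly from Lemma~\ref{lemma:BernsteinConditionForLipschitzStronglyConvexLosses}, which asserts that any loss satisfying both Assumption~\ref{lipschitzLossFunctionAssumption} (Lipschitz) and Assumption~\ref{stronglyConvexLossAssumption} (strong mid-point convexity) automatically obeys the Bernstein--Tsybakov condition (Assumption~\ref{bernsteinTysbakovMarginTypeAssumption}) with exponent $\bernsteinExponent = 1$ and constant $\bernsteinConstant = 4\lipschitzConstantLoss^2/\strongConvexityConstant$. The Lipschitz property for $\sqrLoss$ and $\klLoss$ has already been verified in Proposition~\ref{verifyingLipschitzBoundedAssumptionsProp} (with $\lipschitzConstantLoss = 4\hypothesisClassBound$ and $\lipschitzConstantLoss = 1$ respectively), so the only work left is to check Assumption~\ref{stronglyConvexLossAssumption}: for each $y$ I need a constant $\strongConvexityConstant > 0$ with $v \mapsto \lossFunction(v,y)$ being $\strongConvexityConstant$-strongly mid-point convex on $[-\hypothesisClassBound,\hypothesisClassBound]$. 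Since $\strongConvexityConstant$-strong convexity implies $\strongConvexityConstant$-strong mid-point convexity, I will invoke Lemma~\ref{lemma:striclyPositiveSecondDerivativeImpliesStronglyConvex} and simply exhibit, for each fixed $y$, a lower bound $\strongConvexityConstant$ on the second derivative of $v \mapsto \lossFunction(v,y)$ over $[-\hypothesisClassBound,\hypothesisClassBound]$.

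For the squared loss this is trivial: $\partial_v^2 \sqrLoss(v,y) = 2$ identically, so Assumption~\ref{stronglyConvexLossAssumption} holds with $\strongConvexityConstant = 2$, and Lemma~\ref{lemma:BernsteinConditionForLipschitzStronglyConvexLosses} immediately gives the Bernstein--Tsybakov condition with $\bernsteinExponent = 1$ (and $\bernsteinConstant = 32\hypothesisClassBound^2$).

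For the Kullback--Leibler loss, I will use the representation obtained in the proof of Proposition~\ref{verifyingLipschitzBoundedAssumptionsProp}, namely $\klLoss(a,y) = \log\bigl(1 + \exp(-(2y-1)a)\bigr)$ for $y \in \{0,1\}$. Differentiating twice in $a$ yields $\partial_a^2 \klLoss(a,y) = \pi(a)\bigl(1-\pi(a)\bigr) = e^{a}/(1+e^{a})^2$, which does not depend on $y$ because $(2y-1)^2 = 1$. The map $t \mapsto e^{t}/(1+e^{t})^2$ is even and strictly decreasing in $|t|$, hence on $[-\hypothesisClassBound,\hypothesisClassBound]$ it attains its minimum at the endpoints, so $\partial_a^2 \klLoss(a,y) \ge e^{\hypothesisClassBound}/(1+e^{\hypothesisClassBound})^2 =: \strongConvexityConstant > 0$. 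Therefore $\klLoss$ satisfies Assumption~\ref{stronglyConvexLossAssumption} with this $\strongConvexityConstant$, and Lemma~\ref{lemma:BernsteinConditionForLipschitzStronglyConvexLosses} again delivers the Bernstein--Tsybakov condition with $\bernsteinExponent = 1$ (here $\bernsteinConstant = 4(1+e^{\hypothesisClassBound})^2/e^{\hypothesisClassBound}$).

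I do not anticipate a genuine obstacle: the argument chains together the two preceding lemmas with one elementary second-derivative computation. The only subtlety worth flagging is that the logistic second derivative $e^{a}/(1+e^{a})^2$ degenerates to $0$ as $|a| \to \infty$, which is exactly why the compactness of the action space $\actionSpace = [-\hypothesisClassBound,\hypothesisClassBound]$ is essential to secure a strictly positive strong-convexity constant; on an unbounded action space no such $\strongConvexityConstant$ could be found.
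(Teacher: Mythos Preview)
Your proposal is correct and follows essentially the same approach as the paper: reduce via Lemma~\ref{lemma:BernsteinConditionForLipschitzStronglyConvexLosses} and Lemma~\ref{lemma:striclyPositiveSecondDerivativeImpliesStronglyConvex} to a lower bound on the second derivative, compute $\partial_v^2\sqrLoss\equiv 2$ and $\partial_a^2\klLoss(a,y)=e^a/(1+e^a)^2\ge e^{\hypothesisClassBound}/(1+e^{\hypothesisClassBound})^2$ on $[-\hypothesisClassBound,\hypothesisClassBound]$. Your additional remark about the necessity of compactness for a positive $\strongConvexityConstant$ in the KL case is a useful observation not made explicit in the paper.
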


\begin{proof} By Lemmas \ref{lemma:BernsteinConditionForLipschitzStronglyConvexLosses} and \ref{lemma:striclyPositiveSecondDerivativeImpliesStronglyConvex} it suffices to show that both $u\mapsto \sqrLoss(u,y)$ and $\klLoss(u,y)$ are twice differentiable with a uniform lower bound of $\strongConvexityConstant$. For the square loss we have $\frac{\partial^2}{\partial u^2}\{\sqrLoss(u,y)\}\equiv 2$ and for the Kullback-Leibler loss we have
\[    \frac{\partial^2}{\partial u^2}\{\klLoss(u,y)\}=\frac{\exp(u)}{(1+\exp(u))^2}\geq \frac{\exp(\hypothesisClassBound)}{(1+\exp(\hypothesisClassBound))^2},\] 
for all $u\in[-\hypothesisClassBound,\hypothesisClassBound]$ and $y\in\{0,1\}$, as required.
\end{proof}

\section{Local Rademacher concentration inequality} \label{A1}
In this section we give a local Rademacher concentration inequality for the deviation of an empirical process from its expectation (Theorem \ref{localRademacherConcentrationInequality}), which is used in the proof of Theorem \ref{ThmMainUniform}. In our context, it corresponds to the special case of Theorem \ref{ThmMainUniform} where $\Omega$ consists of a single point. We note that several similar results are available in the literature \cite{massart2000some,massart2006risk,koltchinskii2006local,bartlett2005local,boucheron2013concentration} which highlight the role of variance in obtaining tight concentration guarantees. However, we for completeness we provide a proof Theorem \ref{localRademacherConcentrationInequality} which is well-suited to our setting. We shall begin by recalling several useful results required for the proof. The first is Bartlett et al's variance dependent Rademacher bound.

\begin{theorem}[\cite{bartlett2005local}]\label{rademacherConcentrationWithVarianceBound} Suppose we have a class of functions $\G\subseteq \measurableMaps_{1}(\Z)$. Suppose $Z$ is a random variable taking values in $\Z$ with distribution $\probDistributionZ$ and take $\variance:= \sup_{g \in \G}\{\int (g-\int gd\probDistribution)^2d\probDistribution\}$. Given $n \in \N$ we let $\randomZSequenceSizeN=\{Z_j\}_{j \in [n]}$ be a sequence of independent random variables with distribution $\probDistributionZ$. Given any $\delta \in (0,1)$, the following holds with probability at least $1-\delta$ over $\randomZSequenceSizeN$ we have
\begin{align*}
\sup_{{g} \in {\G}}\left\lbrace \left|\empiricalProbDistributionRandomZ(g)-\probDistribution(g)\right|\right\rbrace \leq 6 \cdot  \rademacherComplexityEmpirical\left(\G,\randomZSequenceSizeN\right)+2\sqrt{ \frac{\variance\log(1/\delta)}{n}}+\frac{11\log(1/\delta)}{n}.
\end{align*}
\end{theorem}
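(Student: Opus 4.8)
The plan is to follow the classical three-step route for a variance-sensitive uniform deviation bound: first concentrate the supremum of the empirical process around its mean via a Talagrand/Bousquet-type inequality (this is where the variance $\variance$ enters), then control that mean by symmetrization, and finally pass from the population Rademacher complexity back to its empirical version $\rademacherComplexityEmpirical(\G,\randomZSequenceSizeN)$ by a second, elementary bounded-differences argument.

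Set $W := \sup_{g \in \G}\bigl\{\bigl|\empiricalProbDistributionRandomZ(g)-\probDistribution(g)\bigr|\bigr\}$. Since every $g \in \G \subseteq \measurableMaps_1(\Z)$ is bounded by $1$, each centred function $g - \probDistribution(g)$ is bounded by $2$ in modulus and has variance at most $\variance$, so Bousquet's sharp form of Talagrand's concentration inequality for suprema of empirical processes (see e.g. \cite[Chapter 12]{boucheron2013concentration}) yields, with probability at least $1-\delta/2$,
\[
W \le \E[W] + \sqrt{\frac{2\log(2/\delta)}{n}\bigl(\variance + 2\E[W]\bigr)} + \frac{\log(2/\delta)}{3n}.
\]
By the standard symmetrization lemma \cite[Chapter 3]{mohri2012foundations}, $\E[W] \le 2\,\E\bigl[\rademacherComplexityEmpirical(\G,\randomZSequenceSizeN)\bigr]$. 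Finally, the map $\zSequenceSizeN \mapsto \rademacherComplexityEmpirical(\G,\zSequenceSizeN)$ changes by at most $2/n$ when a single coordinate of $\zSequenceSizeN$ is altered, so McDiarmid's bounded-differences inequality gives $\E\bigl[\rademacherComplexityEmpirical(\G,\randomZSequenceSizeN)\bigr] \le \rademacherComplexityEmpirical(\G,\randomZSequenceSizeN) + \sqrt{\log(2/\delta)/(2n)}$ with probability at least $1-\delta/2$.

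A union bound over these two events places us on an event of probability at least $1-\delta$ on which all three displays hold simultaneously. It then remains to combine and simplify: using $\sqrt{a+b}\le\sqrt a+\sqrt b$ to split the Talagrand square root into a $\variance$-term and an $\E[W]$-term, and $2\sqrt{xy}\le x+y$ to absorb the cross term $\sqrt{\E[W]\log(2/\delta)/n}$ back into $\E[W]$ (hence, via symmetrization, into $\rademacherComplexityEmpirical$), one is left with an expression of the shape $C_1\,\rademacherComplexityEmpirical(\G,\randomZSequenceSizeN) + C_2\sqrt{\variance\log(2/\delta)/n} + C_3\log(2/\delta)/n$. Bounding $\log(2/\delta)$ by $2\log(1/\delta)$ on the relevant range of $\delta$ (the complementary range $\delta$ close to $1$ being handled separately, since the conclusion is then a very weak event and the trivial bound $W\le 2$ suffices), and tracking constants through the Bousquet constant, the two square-root splits, and the McDiarmid term, produces the asserted coefficients $6$, $2$, $11$. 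The only real work — and the main obstacle — is precisely this constant bookkeeping: one must apportion the cross term so that the variance coefficient stays at $2$ while every residual lower-order piece, together with the $1/n$ contributions from Bousquet and McDiarmid, fits under $11\log(1/\delta)/n$. No idea beyond these three classical tools is needed.
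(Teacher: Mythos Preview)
The paper does not prove this theorem: it is stated with attribution to \cite{bartlett2005local} and invoked as a black box in the proof of Theorem~\ref{localRademacherConcentrationInequality}, with no argument given. Your sketch --- Bousquet's version of Talagrand's inequality to concentrate the supremum around its mean, symmetrization to bound the mean by the expected Rademacher complexity, and a bounded-differences step to replace the expected by the empirical Rademacher complexity --- is exactly the standard route by which such results are obtained in the cited reference, so there is nothing in this paper to compare against; your approach matches the original source.
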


We shall also use a variant of Dudley's inequality which allows us to control the Rademacher complexity of a function class in terms of its covering numbers \cite{dudley1967sizes}. Given a function class $\G\subseteq \measurableMaps(\Z,\R)$ and a sequence $\zSequenceSizeN = \{z_j\}_{j \in [n]}\in \Z^n$ we define a data dependent metric $\dist_{\zSequenceSizeN}$ on $\G$ by $\dist_{\zSequenceSizeN}(g_0,g_1) = \sqrt{\empiricalProbDistributionDeterministicZ\left\lbrace\left(g_0-g_1\right)^2\right\rbrace}$, for $g_0$, $g_1 \in \G$. The following refinement of Dudley's inequality due to Srebro and Sridharan \cite{srebro2010note}.

\begin{theorem}[\cite{dudley1967sizes,srebro2010note}]\label{dudleysInequalityTheorem} Suppose we have a function class $\G\subseteq \measurableMaps(\Z,\R)$ and a sequence $\zSequenceSizeN = \{z_j\}_{j \in [n]}\in \Z^n$, we have the following bound,
\begin{align*}
\rademacherComplexityEmpirical\left(\G,\zSequenceSizeN\right) \leq \inf_{\epsilon>0}\left\lbrace 4\epsilon+\int_{\epsilon}^{\sup_{g \in \G}\left\lbrace \sqrt{\empiricalProbDistributionDeterministicZ\left(g^2\right)}\right\rbrace} \sqrt{\frac{\log \coveringNumber\left(\G, \dist_{\zSequenceSizeN} ,\epsilon \right) }{n}}\right\rbrace.
\end{align*}
\end{theorem}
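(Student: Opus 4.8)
The plan is to prove the bound by Dudley's chaining argument, in the refined form of \cite{srebro2010note}; nothing here relies on the other results in the excerpt. Write $\|g\|_n := \sqrt{\empiricalProbDistributionDeterministicZ(g^2)}$, so that $\dist_{\zSequenceSizeN}(g_0,g_1) = \|g_0-g_1\|_n$, and put $D := \sup_{g\in\G}\|g\|_n$. Fix $\alpha>0$. If $\alpha \ge 2D$, then for every realisation of $\sigmaSequenceSizeN$ and every $g\in\G$ the Cauchy--Schwarz inequality gives $\frac1n\sum_{j\in[n]}\sigma_j g(z_j) \le \|g\|_n \le D \le \alpha/2$, so $\rademacherComplexityEmpirical(\G,\zSequenceSizeN) \le \alpha/2 \le 4\cdot(\alpha/2)$ and the claimed inequality (with $\epsilon = \alpha/2$) holds trivially; hence I may assume $\alpha < 2D$.

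For the chaining I would fix the dyadic scales $\rho_j := 2^{1-j}D$ for $j\ge0$ — so $\rho_0 = 2D$ exceeds the $\dist_{\zSequenceSizeN}$-diameter of $\G$ and $\rho_j\downarrow0$ — and let $N$ be the largest index with $\rho_N\ge\alpha$, whence $\alpha\le\rho_N<2\alpha$ and $\rho_{N+1}=\rho_N/2\in[\alpha/2,\alpha)$. For each $j\in\{0,\dots,N\}$ choose a minimal internal $\rho_j$-net $\G_j\subseteq\G$ of $(\G,\dist_{\zSequenceSizeN})$: since any single element of $\G$ lies within $2D=\rho_0$ of every other one we may take $|\G_0|=1$, while $|\G_j| = \coveringNumber(\G,\dist_{\zSequenceSizeN},\rho_j)$ and $|\G_{j-1}|\le|\G_j|$ for $j\ge1$ because $u\mapsto\coveringNumber(\G,\dist_{\zSequenceSizeN},u)$ is non-increasing. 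Let $\pi_j:\G\to\G_j$ be a projection with $\|g-\pi_j(g)\|_n\le\rho_j$, write $g_0$ for the single element of $\G_0$, and telescope: $g - g_0 = (g-\pi_N(g)) + \sum_{j=1}^N(\pi_j(g)-\pi_{j-1}(g))$ for every $g\in\G$. Since $\rademacherComplexityEmpirical$ is unaltered by subtracting the fixed function $g_0$ from every member of $\G$ (its own Rademacher average is zero), and a supremum of a sum is at most the sum of the suprema, I would bound $\rademacherComplexityEmpirical(\G,\zSequenceSizeN)$ by the sum of a residual term $\E_{\sigmaSequenceSizeN}\sup_{g\in\G}\frac1n\sum_j\sigma_j(g(z_j)-\pi_N(g)(z_j))$ and, for each $j\in\{1,\dots,N\}$, one increment term $\E_{\sigmaSequenceSizeN}\sup_{g\in\G}\frac1n\sum_j\sigma_j(\pi_j(g)(z_j)-\pi_{j-1}(g)(z_j))$.

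By Cauchy--Schwarz the residual term is at most $\sup_{g\in\G}\|g-\pi_N(g)\|_n\le\rho_N<2\alpha$. For a fixed $j$, the increment term is a maximum over the finite set $H_j:=\{\pi_j(g)-\pi_{j-1}(g):g\in\G\}$, which has $|H_j|\le|\G_j|\,|\G_{j-1}|\le|\G_j|^2$ and, by the triangle inequality for $\|\cdot\|_n$, satisfies $\max_{h\in H_j}\|h\|_n\le\rho_j+\rho_{j-1}=3\rho_j$; the sub-Gaussian maximal inequality for a finite class (Massart's finite class lemma, e.g.\ \cite[Ch.~3]{boucheron2013concentration}) then bounds it by $3\rho_j\sqrt{2\log|H_j|/n}\le6\rho_j\sqrt{\log\coveringNumber(\G,\dist_{\zSequenceSizeN},\rho_j)/n}$. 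Finally I would compare the resulting dyadic sum with the entropy integral: since the integrand $\sqrt{\log\coveringNumber(\G,\dist_{\zSequenceSizeN},u)/n}$ is non-increasing and at least $\sqrt{\log\coveringNumber(\G,\dist_{\zSequenceSizeN},\rho_j)/n}$ throughout $[\rho_{j+1},\rho_j]$, an interval of length $\rho_j/2$, one has $\rho_j\sqrt{\log\coveringNumber(\G,\dist_{\zSequenceSizeN},\rho_j)/n}\le2\int_{\rho_{j+1}}^{\rho_j}\sqrt{\log\coveringNumber(\G,\dist_{\zSequenceSizeN},u)/n}\,du$, so summing over $j\in\{1,\dots,N\}$ and using $\rho_1=D$ and $\rho_{N+1}\ge\alpha/2$ gives $\sum_{j=1}^N6\rho_j\sqrt{\log\coveringNumber(\G,\dist_{\zSequenceSizeN},\rho_j)/n}\le12\int_{\alpha/2}^{D}\sqrt{\log\coveringNumber(\G,\dist_{\zSequenceSizeN},u)/n}\,du$. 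Collecting these estimates yields $\rademacherComplexityEmpirical(\G,\zSequenceSizeN)\le2\alpha+12\int_{\alpha/2}^{D}\sqrt{\log\coveringNumber(\G,\dist_{\zSequenceSizeN},u)/n}\,du$ for every $\alpha>0$; reparametrising $\epsilon:=\alpha/2$ turns this into $\rademacherComplexityEmpirical(\G,\zSequenceSizeN)\le4\epsilon+12\int_{\epsilon}^{D}\sqrt{\log\coveringNumber(\G,\dist_{\zSequenceSizeN},u)/n}\,du$ for every $\epsilon>0$, and taking the infimum over $\epsilon$ gives the stated bound (with the universal constant in front of the integral carried along from Massart's lemma and the Riemann-sum comparison, exactly as in the cited references).

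I expect the only real obstacle to be the bookkeeping at the two ends of the chain: arranging the coarsest scale so that it is genuinely trivial while using only \emph{internal} covers $\G_j\subseteq\G$ (hence the choice $\rho_0=2D$), and absorbing the residual together with the part of the integral lying on $(\alpha/2,\alpha)$ into the clean lower limit $\epsilon$ — this reindexing $\epsilon=\alpha/2$ is precisely what produces the coefficient $4$ in front of $\epsilon$ rather than $2$. Everything else is routine.
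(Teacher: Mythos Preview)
The paper does not prove this theorem; it is quoted as a known result from \cite{dudley1967sizes,srebro2010note} and used as a black box in the proof of Lemma~\ref{logCoverNumbersImplySmallLocalRademacherBoundLemma}. Your chaining argument is the standard proof given in those references and is correct: the telescoping decomposition, the application of Massart's finite-class lemma to each increment, and the Riemann-sum comparison are all carried out properly, and your bookkeeping at the endpoints (taking $\rho_0=2D$ to make the coarsest net a singleton, and reparametrising $\epsilon=\alpha/2$ to set the lower integration limit) is clean.

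One point worth flagging explicitly: your argument yields the bound $4\epsilon + 12\int_\epsilon^D\sqrt{\log\coveringNumber/n}$, whereas the theorem as stated in the paper has no constant in front of the integral. You acknowledge this, and indeed the Srebro--Sridharan note itself states the result with the constant $12$; the paper's statement appears to suppress this constant (and, incidentally, overloads the symbol $\epsilon$ as both the free parameter and the variable of integration). Since the paper only uses the theorem up to constants --- in the proof of Lemma~\ref{logCoverNumbersImplySmallLocalRademacherBoundLemma} the resulting bound is absorbed into the constant $2$ that appears there --- this discrepancy is immaterial for the downstream results.
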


We also utilize Talagrand's contraction inequality \cite{ledoux2013probability}.

\begin{lemma}[\cite{ledoux2013probability}]\label{talagrandsContractionInequalityLemma} Suppose that $\varphi:\R \rightarrow \R$ is a $L$-Lipschitz function and $\G \subseteq \measurableMaps(\Z)$ is a function class. Then for any $\zSequenceSizeN \in \Z^n$ we have 
$\rademacherComplexityEmpirical\left(\left\lbrace \varphi \circ g: g \in \G\right\rbrace,\zSequenceSizeN \right) \leq L \cdot \rademacherComplexityEmpirical\left( \G, \zSequenceSizeN\right)$. 
\end{lemma}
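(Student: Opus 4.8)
The plan is to reduce to the case $L=1$ and then peel off one Rademacher coordinate at a time, which is the standard argument behind the Ledoux--Talagrand contraction principle. First I would record the trivial scaling identity $\rademacherComplexityEmpirical(\{c\cdot g : g \in \G\},\zSequenceSizeN) = c\cdot\rademacherComplexityEmpirical(\G,\zSequenceSizeN)$ for $c>0$, immediate from the definition; applying this with $c = 1/L$ to the $1$-Lipschitz function $\varphi/L$ (and noting $\{(\varphi/L)\circ g\} = \{L^{-1}(\varphi\circ g)\}$) shows it suffices to prove the inequality when $\varphi$ is $1$-Lipschitz. If one wishes to respect the separability hypothesis implicit in the definition of $\rademacherComplexityEmpirical$, observe that $\varphi$ is continuous, so separability of $\G$ under pointwise convergence transfers to $\{\varphi\circ g\}$.

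For the $1$-Lipschitz case I would fix the points $z_1,\dots,z_n$ of $\zSequenceSizeN$, write $\sigma=(\sigma_1,\dots,\sigma_n)$ for the Rademacher vector, and interpolate between the two function classes: for $0\le k\le n$ set
\[
A_k := \E_{\sigma}\Bigl[\sup_{g\in\G}\Bigl\{\sum_{j=1}^{k}\sigma_j\,g(z_j) + \sum_{j=k+1}^{n}\sigma_j\,\varphi(g(z_j))\Bigr\}\Bigr],
\]
so that $A_0 = n\cdot\rademacherComplexityEmpirical(\{\varphi\circ g : g\in\G\},\zSequenceSizeN)$ and $A_n = n\cdot\rademacherComplexityEmpirical(\G,\zSequenceSizeN)$. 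The claim then follows once I establish $A_{k-1}\le A_k$ for every $k\in[n]$.

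To prove $A_{k-1}\le A_k$, I would condition on all coordinates $\sigma_j$ with $j\ne k$ and abbreviate by $h(g)$ the common sum $\sum_{j<k}\sigma_j g(z_j) + \sum_{j>k}\sigma_j\varphi(g(z_j))$, which appears unchanged on both sides. Averaging over $\sigma_k\in\{-1,+1\}$ reduces matters to
\[
\tfrac12\bigl[\sup_g\{h(g)+\varphi(g(z_k))\} + \sup_g\{h(g)-\varphi(g(z_k))\}\bigr] \;\le\; \tfrac12\bigl[\sup_g\{h(g)+g(z_k)\} + \sup_g\{h(g)-g(z_k)\}\bigr].
\]
Given $\epsilon>0$, I would pick $g_1,g_2\in\G$ attaining the two left-hand suprema up to $\epsilon$, use $\varphi(g_1(z_k))-\varphi(g_2(z_k))\le |g_1(z_k)-g_2(z_k)|$, and split on the sign of $g_1(z_k)-g_2(z_k)$: in either case one may replace the $\varphi(g_i(z_k))$ terms by $\pm g_i(z_k)$ with the signs matched to the two right-hand suprema, bounding the left side by the right side plus $\epsilon$. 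Letting $\epsilon\downarrow 0$ and taking expectation over the remaining coordinates yields $A_{k-1}\le A_k$, completing the induction and hence the lemma.

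The only nontrivial point — and the step I expect to require the most care — is this last ``de-symmetrization'': one must pair the near-maximizers $g_1,g_2$ with the correct choice of signs in the two target suprema, which is precisely where the case distinction on $\sign(g_1(z_k)-g_2(z_k))$ enters. Getting this bookkeeping right (together with the minor measurability/separability caveat above) is the substance of the argument; everything else is routine rewriting of the definition.
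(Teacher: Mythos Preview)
Your argument is the standard Ledoux--Talagrand ``peel one coordinate at a time'' proof and is correct as written; the paper itself does not supply a proof of this lemma but simply cites \cite{ledoux2013probability}, so there is nothing further to compare.
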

With these results in hand we give the following concentration inequality which adapts ideas from  \cite{massart2000some,massart2006risk,koltchinskii2006local,bartlett2005local,boucheron2013concentration} to our setting.
\begin{theorem}[Local Rademacher concentration inequality]\label{localRademacherConcentrationInequality}
Suppose we have a countable class of functions $\G\subseteq \measurableMaps_{1}(\Z)$ and a function $\phi:\Z^n \times (0,\infty) \rightarrow (0,\infty)$ such that for each $\zSequenceSizeN \in \Z^n$, $r\mapsto \phi(\zSequenceSizeN,r)$ is a non-decreasing function such that $r \mapsto \phi(\zSequenceSizeN,r)/\sqrt{r}$ is non-increasing and for all $r>0$ and $\zSequenceSizeN = \{z_j\}_{j \in [n]} \in \Z^n$,
\begin{align*}
\rademacherComplexityEmpirical\left( \left\lbrace g \in \G: {\frac{1}{n}\sum_{j \in [n]}g^2(z_j)}\leq r\right\rbrace,\zSequenceSizeN\right) \leq \phi(\zSequenceSizeN,r).
\end{align*}
For each $\zSequenceSizeN \in \Z^n$ we choose $\rho^*(\zSequenceSizeN)\in (0,\infty)$ so that $\phi(\zSequenceSizeN,\rho^*(\zSequenceSizeN)))={\rho^*(\zSequenceSizeN)}$.  Suppose we have a sequence of independent random variables $\randomZSequenceSizeN=\{Z_j\}_{j \in [n]}$  with common distribution $\probDistributionZ$. Given any $\delta \in (0,1)$, the following holds with probability at least $1-\delta$ over $\randomZSequenceSizeN$ for all $g \in \G$,
{\small
\begin{align*}
\left|\empiricalProbDistributionRandomZ(g)-\probDistribution(g)\right| &\leq    2\cdot \sqrt{\probDistribution(g^2)\cdot\left( 72\cdot \rho^*(\randomZSequenceSizeN)+\frac{ 2\log(4\log(n)/\delta)}{n}\right)}+ \left(132 \cdot \rho^*(\randomZSequenceSizeN)+\frac{30\log(4\log(n)/\delta)}{n}\right).
\end{align*}
}
\end{theorem}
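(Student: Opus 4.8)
The plan is to follow the standard "peeling" or "slicing" scheme for local Rademacher complexities, combining Bartlett--Bousquet--Mendelson's variance-dependent Rademacher bound (Theorem~\ref{rademacherConcentrationWithVarianceBound}), Dudley's chaining bound (Theorem~\ref{dudleysInequalityTheorem}), and Talagrand's contraction inequality (Lemma~\ref{talagrandsContractionInequalityLemma}), with the sub-root fixed-point structure of $\phi$ used to control the slices. First I would reduce the two-sided deviation to controlling, for each slice, the quantity $\sup_{g \in \G}\{|\empiricalProbDistributionRandomZ(g)-\probDistribution(g)|\}$ restricted to functions with $\probDistribution(g^2)\le r$; for such a sub-class the variance $v$ in Theorem~\ref{rademacherConcentrationWithVarianceBound} is at most $r$, so that theorem gives a bound of order $\rademacherComplexityEmpirical(\cdot,\randomZSequenceSizeN) + \sqrt{r\log(1/\delta)/n} + \log(1/\delta)/n$. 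The first term still involves the \emph{empirical} second moment rather than the population one, so I would pass from $\{g:\probDistribution(g^2)\le r\}$ to $\{g:\empiricalProbDistributionRandomZ(g^2)\le 2r\}$ (say) on a high-probability event — this is itself a deviation statement for the squared functions $g^2$, which are $2$-Lipschitz transforms of $g$, so Talagrand contraction reduces their Rademacher complexity back to that of $\G$, and one argues that $\empiricalProbDistributionRandomZ(g^2)\le 2\probDistribution(g^2) + O(\rho^*)$ uniformly. Once on the empirical side, the hypothesis $\rademacherComplexityEmpirical(\{g\in\G:\empiricalProbDistributionDeterministicZ(g^2)\le r\},\zSequenceSizeN)\le\phi(\zSequenceSizeN,r)$ applies directly, and the sub-root property ($r\mapsto\phi/\sqrt r$ non-increasing) together with the fixed point $\phi(\zSequenceSizeN,\rho^*)=\rho^*$ gives $\phi(\zSequenceSizeN,r)\le\sqrt{r\rho^*(\zSequenceSizeN)}$ for $r\ge\rho^*$.

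Next I would run the peeling argument: partition the range of $\probDistribution(g^2)$ into geometric bands $[\rho^*\cdot 2^{j},\rho^*\cdot 2^{j+1})$ for $j=0,1,\dots$ up to $j\approx\log_2 n$ (below $\rho^*$ one uses a single band, and above $\sim 1$ everything is trivial since $\G\subseteq\measurableMaps_1(\Z)$, so at most $O(\log n)$ bands matter, which is where the $\log(4\log(n)/\delta)$ factor comes from after a union bound over bands with failure probability $\delta/(4\log n)$ each). On band $j$, applying the slice bound with $r\approx\rho^*2^{j+1}$ yields a deviation of order $\sqrt{\rho^* r} + \sqrt{r\log(4\log n/\delta)/n} + \rho^* + \log(4\log n/\delta)/n$, and since any $g$ with population variance $\sim r$ satisfies $\probDistribution(g^2)\asymp r$, one can rewrite $\sqrt{\rho^* r}\lesssim\sqrt{\probDistribution(g^2)\,\rho^*}$ and $\sqrt{r\log(\cdot)/n}\lesssim\sqrt{\probDistribution(g^2)\log(\cdot)/n}$. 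Taking the union over bands and tracking constants carefully should produce exactly the stated bound
\[
|\empiricalProbDistributionRandomZ(g)-\probDistribution(g)| \le 2\sqrt{\probDistribution(g^2)\Bigl(72\rho^*(\randomZSequenceSizeN)+\tfrac{2\log(4\log(n)/\delta)}{n}\Bigr)} + \Bigl(132\rho^*(\randomZSequenceSizeN)+\tfrac{30\log(4\log(n)/\delta)}{n}\Bigr),
\]
with the factor $4\log(n)$ inside the logarithm accounting both for the $\approx 2\log_2 n$ bands in the peeling and for a further factor from relating empirical to population second moments on a companion event.

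The main obstacle I anticipate is the bookkeeping in the transfer between empirical and population second moments combined with the self-referential nature of $\rho^*(\randomZSequenceSizeN)$: $\rho^*$ is itself data-dependent (it is the fixed point of $r\mapsto\phi(\randomZSequenceSizeN,r)$), so one has to be careful that the slicing thresholds, the Dudley integral inside $\phi$, and the variance bounds are all compatible, and that the chain of "with probability $1-\delta/(4\log n)$" events can be intersected without the constants blowing up. A secondary subtlety is handling the lowest band $\probDistribution(g^2)<\rho^*$, where the $\sqrt{\probDistribution(g^2)\rho^*}$ term must be absorbed into the additive $\rho^*$ term. Everything else — Talagrand contraction for the squares, Dudley's bound, and the elementary inequality $\phi(\zSequenceSizeN,r)\le\sqrt{r\rho^*(\zSequenceSizeN)}$ for $r\ge\rho^*(\zSequenceSizeN)$ — is routine, and the numerical constants are simply propagated through these steps.
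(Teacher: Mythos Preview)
Your proposal is essentially correct and matches the paper's proof: geometric peeling on $\probDistribution(g^2)$, Theorem~\ref{rademacherConcentrationWithVarianceBound} on each slice (once for $g$ and once for $g^2$, the latter via Talagrand contraction since $z\mapsto z^2$ is $2$-Lipschitz on $[-1,1]$), the sub-root inequality $\phi(\cdot,r)\le\sqrt{r\,\rho^*}$ for $r\ge\rho^*$ to close a self-bounding inequality on the Rademacher term, and a union bound over $O(\log n)$ slices producing the $\log(4\log(n)/\delta)$ factor.

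The one concrete difference worth noting is precisely the obstacle you flagged. The paper does \emph{not} peel at the data-dependent levels $\rho^*(\randomZSequenceSizeN)\cdot 2^j$; it uses the deterministic levels $v_k=2^{k+1}/n$ for $k=1,\dots,\lfloor\log_2 n\rfloor$. This makes each slice $\G_k=\{g:\probDistribution(g^2)\in(v_{k-1},v_k]\}$ a fixed (non-random) class, so Theorem~\ref{rademacherConcentrationWithVarianceBound} applies directly without any measurability or conditioning issues. On the good event one obtains $\empiricalProbDistributionRandomZ(g^2)\le u_k$ for an explicit $u_k$ depending on $v_k$, $\rademacherComplexityEmpirical(\G_k,\randomZSequenceSizeN)$ and $\delta$; this yields $\rademacherComplexityEmpirical(\G_k,\randomZSequenceSizeN)\le\phi(\randomZSequenceSizeN,u_k)$, and the sub-root property turns this into a quadratic-type inequality in $\sqrt{\rademacherComplexityEmpirical(\G_k,\randomZSequenceSizeN)}$, solved via the elementary lemma $x\le A\sqrt{x}+B\Rightarrow x\le A^2+2B$. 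The passage back to $\probDistribution(g^2)$ at the end uses $v_k\le\max\{8/n,\,4\probDistribution(g^2)\}$, which simultaneously handles the bottom slice and the geometric overshoot. With this fix to the slicing scheme, your outline is exactly the paper's argument.
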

Before the proof we recall the following elementary lemma.
\begin{lemma}\label{standardBoundingByTheRootLemma} Suppose that $x$, $A$, $B$ $>0$ satisfy $x \leq A\sqrt{x}+B$. Then $x \leq A^2 +2B$.
\end{lemma}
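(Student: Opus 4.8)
The plan is to eliminate the mixed term $A\sqrt{x}$ by a single application of the arithmetic--geometric mean inequality, after which the claim falls out by rearrangement. Concretely, since $A>0$ we may write $A\sqrt{x}=\sqrt{A^{2}\cdot x}\le \tfrac{1}{2}\left(A^{2}+x\right)$, where we have used $\sqrt{uv}\le (u+v)/2$ with $u=A^{2}$ and $v=x$; this step is legitimate because $A^{2},x\ge 0$, and $A>0$ ensures $\sqrt{A^{2}}=A$ so that the left-hand side is exactly $A\sqrt{x}$.

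Substituting this bound into the hypothesis $x\le A\sqrt{x}+B$ gives
\[
x\le \tfrac{1}{2}A^{2}+\tfrac{1}{2}x+B,
\]
and subtracting $\tfrac{1}{2}x$ from both sides yields $\tfrac{1}{2}x\le \tfrac{1}{2}A^{2}+B$, i.e. $x\le A^{2}+2B$, as required.

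There is essentially no obstacle here. The only points requiring care are the positivity hypotheses, which make every quantity appearing nonnegative and justify the AM--GM step and the subsequent manipulations. An alternative route is to set $t=\sqrt{x}\ge 0$, note that the hypothesis reads $t^{2}-At-B\le 0$, deduce $t\le \tfrac{1}{2}\bigl(A+\sqrt{A^{2}+4B}\bigr)$ from the quadratic formula, and then square; expanding the square and using $0\le 4B^{2}$ one checks that this again gives exactly $x\le A^{2}+2B$. Either way the constants in the statement are recovered, but the AM--GM argument above is the cleaner of the two.
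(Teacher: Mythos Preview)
Your proof is correct. The paper does not actually give a proof of this lemma---it simply labels it an ``elementary lemma'' and uses it---so there is nothing to compare against; your AM--GM argument is a perfectly clean way to supply the omitted details.
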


\begin{proof}[Proof of Theorem \ref{localRademacherConcentrationInequality}] For each $k \in \{ 1,\cdots, \lfloor \log_2(n) \rfloor\}$ we let $v_k=2^{k+1}/n$. We let $\G_1 = \{g \in \G: \probDistribution(g^2)\leq v_1\}$ and for each $k \in  \{2,\cdots, \lfloor \log_2(n) \rfloor\}$ we let $\G_k = \{ g \in \G: v_{k-1} < \probDistribution(g^2)\leq v_k\}$. Observe that $v_{ \lfloor \log_2(n) \rfloor}\geq 1$ and since $\G \subseteq \measurableMaps_1(\Z)$ we have $\probDistribution(g^2) \leq 1$ for all $g \in \G$, so $\G \subseteq \bigcup_{k = 1}^{\lfloor \log_2(n) \rfloor}\G_k$. Fix $k \in \{1,\cdots, {\lfloor \log_2(n) \rfloor}\}$. By Theorem \ref{rademacherConcentrationWithVarianceBound} we see that with probability at least $1-\delta$, for all $g \in \G_k$ we have
\begin{align}\label{firstEmpiricalDevPfOflocalRademacherConcentrationInequality}
 \left|\empiricalProbDistributionRandomZ(g)-\probDistribution(g)\right| &\leq 6 \cdot  \rademacherComplexityEmpirical\left(\G_k,\randomZSequenceSizeN\right)+2 \sqrt{ \frac{v_k\log(1/\delta)}{n}}+\frac{11\log(1/\delta)}{n}.
\end{align} Let $\G_k^2 = \{g^2: g \in \G_k\}$. Note also that $z\mapsto z^2$ is $2$-Lipschitz so by Lemma \ref{talagrandsContractionInequalityLemma} we have $\rademacherComplexityEmpirical\left(\G_k^2,\randomZSequenceSizeN\right)\leq 2\rademacherComplexityEmpirical\left(\G_k,\randomZSequenceSizeN\right)$. Note also that for $g \in \G_k$, $\probDistribution((g^2)^2)\leq \probDistribution(g^2) \leq v_k$, since $g$ is bounded by 1, so applying Theorem \ref{rademacherConcentrationWithVarianceBound} once again, we see that with probability at least $1-\delta$, for all $g \in \G_k$ we have
\begin{align}\label{secondEmpiricalDevPfOflocalRademacherConcentrationInequality}
 \left|\empiricalProbDistributionRandomZ(g^2)-\probDistribution(g^2)\right| &\leq 12 \cdot  \rademacherComplexityEmpirical\left(\G_k,\randomZSequenceSizeN\right)+2\sqrt{ \frac{v_k\log(1/\delta)}{n}}+\frac{11\log(1/\delta)}{n}.
\end{align}
Thus, by the union bound (\ref{firstEmpiricalDevPfOflocalRademacherConcentrationInequality}) and (\ref{secondEmpiricalDevPfOflocalRademacherConcentrationInequality}) both hold for all  $k \in \{1,\cdots, {\lfloor \log_2(n) \rfloor}\}$ with probability at least $1-2\log_2(n)\cdot \delta\geq 1-4\log(n) \cdot \delta$. Observe that for all $k \in \{1,\cdots, {\lfloor \log_2(n) \rfloor}\}$  and $g \in \G_k$, $\probDistribution(g^2)\leq \variance_k$, so given  (\ref{secondEmpiricalDevPfOflocalRademacherConcentrationInequality}) we have
\begin{align*}
\empiricalProbDistributionRandomZ(g^2) &\leq \variance_k  +12 \cdot  \rademacherComplexityEmpirical\left(\G_k,\randomZSequenceSizeN\right)+2\sqrt{ \frac{v_k\log(1/\delta)}{n}}+\frac{11\log(1/\delta)}{n}.
\end{align*}
Let $u_k:= \variance_k  +12 \cdot  \rademacherComplexityEmpirical\left(\G_k,\randomZSequenceSizeN\right)+2\sqrt{ {v_k\log(1/\delta)}/{n}}+{11\log(1/\delta)}/{n}$. We deduce that $\G_k \subseteq \{g \in \G: \empiricalProbDistributionRandomZ(g^2)\leq u_k\}$ and so 
\begin{align}\label{rademacherLeqPhiBoundIneq}
\rademacherComplexityEmpirical(\G_k,\randomZSequenceSizeN) \leq     
\rademacherComplexityEmpirical\left( \left\lbrace g \in \G: {\frac{1}{n}\sum_{j \in [n]}g^2(z_j)}\leq u_k\right\rbrace,\zSequenceSizeN\right) \leq \phi\left(\randomZSequenceSizeN,u_k\right).
\end{align}
Plugging back into the previous inequality gives
\begin{align*}
u_k \leq  \variance_k + 12 \cdot \phi({\randomZSequenceSizeN},u_k)+2\sqrt{ \frac{v_k\log(1/\delta)}{n}}+\frac{11\log(1/\delta)}{n}.
\end{align*}
We claim that 
\begin{align}\label{ukClaimReturnTo}
u_k \leq  \variance_k + 12 \cdot \sqrt{\rho^*(\randomZSequenceSizeN)\cdot u_k}+2\sqrt{ \frac{v_k\log(1/\delta)}{n}}+\frac{11\log(1/\delta)}{n}.
\end{align}
Indeed, either $u_k\leq \rho^*(\randomZSequenceSizeN)$, in which case the claim (\ref{ukClaimReturnTo}) holds, or $u_k>\rho^*(\randomZSequenceSizeN)$. If the latter holds we combine $\phi(\randomZSequenceSizeN,\rho^*(\randomZSequenceSizeN))=\rho^*(\randomZSequenceSizeN)$ with the fact that $\phi(\randomZSequenceSizeN,r)/\sqrt{r}$ is non-increasing implies $\sqrt{\rho^*(\randomZSequenceSizeN)}=\phi({\randomZSequenceSizeN},\rho^*(\randomZSequenceSizeN))/\sqrt{\rho^*(\randomZSequenceSizeN)}\geq \phi({\randomZSequenceSizeN},u_k)/\sqrt{u_k}$, which by (\ref{rademacherLeqPhiBoundIneq}) yields the claim (\ref{ukClaimReturnTo}). Hence, in either case (\ref{ukClaimReturnTo}) holds. Now by plugging in the definition for $u_k$ and subtracting $\variance_k +2\sqrt{ {v_k\log(1/\delta)}/{n}}+{11\log(1/\delta)}/{n}$ from both sides we obtain
\begin{align*}
\rademacherComplexityEmpirical\left(\G_k,\randomZSequenceSizeN\right) &\leq   \sqrt{\rho^*(\randomZSequenceSizeN)\cdot \left( \variance_k  +12 \cdot  \rademacherComplexityEmpirical\left(\G_k,\randomZSequenceSizeN\right)+2\sqrt{ \frac{v_k\log(1/\delta)}{n}}+\frac{11\log(1/\delta)}{n}\right) }\\
& \leq \sqrt{(12 \rho^*(\randomZSequenceSizeN)) \cdot \rademacherComplexityEmpirical\left(\G_k,\randomZSequenceSizeN\right)}+\sqrt{\rho^*(\randomZSequenceSizeN)\cdot\left(2 v_k +\frac{12\log(1/\delta)}{n}\right)}.
\end{align*}
By Lemma \ref{standardBoundingByTheRootLemma} this implies
\begin{align*}
\rademacherComplexityEmpirical\left(\G_k,\randomZSequenceSizeN\right) \leq 12 \cdot \rho^*(\randomZSequenceSizeN)+2\sqrt{\rho^*(\randomZSequenceSizeN)\cdot\left(2 v_k +\frac{12\log(1/\delta)}{n}\right)}.
\end{align*}
Hence, by (\ref{firstEmpiricalDevPfOflocalRademacherConcentrationInequality}) with probability at least $1-4\log(n) \cdot \delta$ the following holds for all $k \in\{0,\cdot \lfloor \log_2(n)\rfloor -1\}$ and $g \in \G_k$ we have
{\small
\begin{align*}
\left|\empiricalProbDistributionRandomZ(g)-\probDistribution(g)\right| &\leq 6 \cdot  \rademacherComplexityEmpirical\left(\G_k,\randomZSequenceSizeN\right)+2 \sqrt{ \frac{v_k\log(1/\delta)}{n}}+\frac{11\log(1/\delta)}{n} \nonumber\\
  &\leq
  72 \cdot \rho^*(\randomZSequenceSizeN)+12\sqrt{\rho^*(\randomZSequenceSizeN)\cdot\left(2 v_k +\frac{12\log(1/\delta)}{n}\right)} +2 \sqrt{ \frac{v_k\log(1/\delta)}{n}}+\frac{11\log(1/\delta)}{n} \nonumber\\
   &\leq
  72 \cdot \rho^*(\randomZSequenceSizeN)+12\sqrt{\rho^*(\randomZSequenceSizeN)\cdot\left( \max\left\lbrace \frac{8}{n}, 4\probDistribution(g^2)\right\rbrace +\frac{12\log(1/\delta)}{n}\right)}\\
  &\hspace{2cm}+2 \sqrt{ \frac{ \max\left\lbrace \frac{8}{n}, 4\probDistribution(g^2)\right\rbrace\cdot \log(1/\delta)}{n}}+\frac{11\log(1/\delta)}{n} \nonumber\\
&\leq
  72 \cdot \rho^*(\randomZSequenceSizeN)+24\sqrt{\rho^*(\randomZSequenceSizeN)\cdot\left(  \probDistribution(g^2) +\frac{5\log(1/\delta)}{n}\right)}+4 \sqrt{ \frac{ \probDistribution(g^2)\cdot \log(1/\delta)}{n}}+\frac{17\log(1/\delta)}{n} \nonumber\\
&\leq2\cdot \sqrt{\probDistribution(g^2)\cdot\left( 72\cdot \rho^*(\randomZSequenceSizeN)+\frac{ 2\log(1/\delta)}{n}\right)}+ 132 \cdot \rho^*(\randomZSequenceSizeN)+\frac{30\log(1/\delta)}{n}. 
\end{align*}
}
By noting that $\G \subseteq \bigcup_{k = 0}^{\lfloor \log_2(n) \rfloor-1}\G_k$ and taking $\delta/4\log(n)$ in place of $\delta$ in the above bound, the conclusion of the theorem follows. 
\end{proof}

\section{The existence of the Bayes optimal predictor}\label{BayesExistenceAppendix}
In this section we consider the existence of Bayes optimal predictor. Recall that, given a loss function $\lossFunction: \actionSpace\times \Y \rightarrow [0,\lossFunctionBound]$ and a Borel probability distribution $\probDistribution$ on $X \times \Y$, a function $\oracleFunction \equiv \oracleFunction_{\lossFunction,\probDistribution}\in \measurableMaps(\X,\actionSpace)$ is said to be a Bayes optimal predictor if it satisfies 
\begin{align}\label{defnBayesOptimalPredictorAppendix}
 \risk(\oracleFunction) = \inf_{{\functionGeneral} \in \measurableMaps(\X,\actionSpace)}\left\lbrace \risk({\functionGeneral})\right\rbrace. 
\end{align}
In this section we shall show that a Bayes optimal predictor exists under mild conditions (see Proposition \ref{bayesOptimalExistsWhenLossContinuousActionSpaceCompact}). We require the measurable maximum theorem.

\begin{defn}[Carath\'{e}odory function] A function $g:\X \times \actionSpace \rightarrow \R$ is said to be a Carath\'{e}odory function if:
\begin{itemize}
    \item For all $x \in \X$, the map $u\mapsto g(x,u)$ is a continuous function of $u \in \actionSpace$,
    \item For all $u \in \actionSpace$, the map $x \mapsto g(x,u)$ is a measurable function of $x \in \X$.
\end{itemize}
\end{defn}
We shall utilise the following simplified version of the measurable maximum theorem.
\begin{lemma}[Measurable maximum theorem]\label{measurableMaximumThm} Suppose that $\X$ is a measurable space and $\actionSpace$ is a compact metric space. Let  $g:\X \times \actionSpace \rightarrow \R$ be a Carath\'{e}odory function. Define $m:\X \rightarrow \R$ by $m(x):=\sup_{u \in \actionSpace}\left\lbrace g(x,u)\right\rbrace$. Then $m_g$ is a measurable function and there exists a measurable function $h:\X \rightarrow \actionSpace$ with the property that $g(x,h(x))=m(x)$ for all $x \in \X$.
\end{lemma}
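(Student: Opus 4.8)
The plan is to establish the measurability of $m$ first by a countable density argument, and then to extract $h$ by applying a measurable selection theorem to the argmax correspondence. Since $\actionSpace$ is a compact metric space it is separable; fix a countable dense set $D=\{u_1,u_2,\ldots\}\subseteq\actionSpace$. For each $x\in\X$ the map $u\mapsto g(x,u)$ is continuous on the compact set $\actionSpace$, hence bounded and attaining its supremum, so $m(x)=\sup_{u\in\actionSpace}g(x,u)=\sup_{i\in\N}g(x,u_i)\in\R$ by continuity and density. Each $x\mapsto g(x,u_i)$ is measurable by the Carath\'eodory property, so $m$ is a countable supremum of real-valued measurable functions and is therefore measurable.

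Next I would study the correspondence $\Phi(x):=\{u\in\actionSpace: g(x,u)=m(x)\}$. It is nonempty (by the above), closed as the preimage of $\{0\}$ under the continuous map $u\mapsto g(x,u)-m(x)$, and compact as a closed subset of $\actionSpace$. To apply the Kuratowski--Ryll-Nardzewski measurable selection theorem I must show $\Phi$ is weakly measurable, i.e.\ $\{x\in\X:\Phi(x)\cap G\neq\emptyset\}$ is measurable for every open $G\subseteq\actionSpace$. I would do this in two stages. First, for a nonempty \emph{compact} $F\subseteq\actionSpace$, taking a countable dense $D_F\subseteq F$ and arguing as before shows $x\mapsto \max_{u\in F}g(x,u)=\sup_{u\in D_F}g(x,u)$ is measurable; moreover, since $g(x,\cdot)$ attains its maximum over $F$, one has $\Phi(x)\cap F\neq\emptyset$ iff $\max_{u\in F}g(x,u)=m(x)$, so $\{x:\Phi(x)\cap F\neq\emptyset\}$ is measurable. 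Second, for open $G$ (the cases $G=\emptyset$ and $G=\actionSpace$ being trivial) I would write $G=\bigcup_{n\geq1}\overline{G_n}$ with $G_n:=\{u\in\actionSpace:\metricActionSpace(u,\actionSpace\setminus G)>1/n\}$, noting each $\overline{G_n}$ is compact and contained in $G$; then $\{x:\Phi(x)\cap G\neq\emptyset\}=\bigcup_{n\geq1}\{x:\Phi(x)\cap\overline{G_n}\neq\emptyset\}$ is a countable union of measurable sets.

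With $\Phi$ a weakly measurable correspondence having nonempty closed values in the Polish space $\actionSpace$, the Kuratowski--Ryll-Nardzewski theorem provides a measurable selector $h:\X\to\actionSpace$ with $h(x)\in\Phi(x)$ for all $x$, which is exactly $g(x,h(x))=m(x)$, completing the proof. I expect the only genuinely delicate point to be the weak-measurability verification: the naive guess that $\{x:\Phi(x)\cap G\neq\emptyset\}$ equals $\{x:\sup_{u\in G}g(x,u)=m(x)\}$ is \emph{false} for open $G$, since the supremum over $G$ may equal $m(x)$ while the maximum is attained only on the boundary of $G$; this is why the reduction to the compact sets $\overline{G_n}$ is essential. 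Everything else is routine, modulo quoting the selection theorem.
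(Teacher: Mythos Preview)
Your proof is correct. The paper does not actually prove this lemma; it simply records it as a special case of the Measurable Maximum Theorem in Aliprantis and Border's \emph{Infinite Dimensional Analysis} (Theorem~18.19). Your argument is a self-contained proof of that special case, invoking the Kuratowski--Ryll-Nardzewski selection theorem directly rather than the packaged result. The verification of weak measurability via the compact exhaustion $G=\bigcup_{n}\overline{G_n}$ is clean, and your warning that the naive criterion $\{x:\sup_{u\in G}g(x,u)=m(x)\}$ can fail for open $G$ is exactly the subtlety one must avoid. What your route buys is transparency: it makes explicit which selection principle is doing the work and where compactness of $\actionSpace$ enters (attainment of the supremum, and compactness of each $\overline{G_n}$). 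What the paper's citation buys is brevity and generality, since the cited theorem also covers the case where the feasible set varies measurably with $x$.
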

\begin{proof}
This is a special case of \cite[Theorem 18.19]{charalambos2013infinite}.
\end{proof}
We also utilise regular conditional distributions.
\begin{lemma}[Regular conditional distributions]\label{regularConditionalProbsExistThm} Let $\probDistribution$ be a Borel probability measure on the product space $\X\times \Y$, where $\X$ and $\Y$ are complete separable metric spaces. Then there exists a family $(\probDistribution^x)_{x \in \X}$ is a family of Borel probability measures on $\Y$ such that for all measurable functions $h:\X \times \Y \rightarrow \R$ with $\int |h(x,y)| d\probDistribution(x,y)<\infty$, the mapping $x\mapsto \int_{\Y} h(x,y)d\probDistribution^x(y)$ is measurable and satisfies
\begin{align}\label{regularConditionalProbsExistEq}
\int_{\X\times \Y} h(x,y) d\probDistribution(x,y) = \int_{\X}\left(\int_{\Y} h(x,y)d\probDistribution^x(y)\right)d\probDistribution_X(x).
\end{align}
\end{lemma}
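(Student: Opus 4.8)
The statement is the classical disintegration (regular conditional distribution) theorem for Polish spaces, and the cleanest option is to cite it from a standard reference (e.g. Kallenberg, Dudley, or Aliprantis--Border). If one wants a self-contained argument, the plan is to reduce to $\Y=\R$, build the conditional distributions from a conditional cumulative distribution function, and then verify the integral identity by a monotone-class bootstrap.

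First I would reduce to the case $\Y=\R$. Since $\Y$ is a complete separable metric space, the Borel isomorphism theorem gives a Borel bijection $\iota:\Y\to B$ with Borel inverse onto a Borel subset $B\subseteq\R$; pushing $\probDistribution$ forward by $\mathrm{id}_{\X}\times\iota$, constructing the kernel for the image measure, and pulling it back by $\iota^{-1}$ reduces the general case to $\Y=\R$, a transfer I would carry out only at the very end. Working with $\Y=\R$, for each rational $q$ the set function $A\mapsto\probDistribution(A\times(-\infty,q])$ is a finite measure absolutely continuous with respect to $\probDistribution_X$, so the Radon--Nikodym theorem produces a measurable $F_q:\X\to[0,1]$, unique up to a $\probDistribution_X$-null set, with $\probDistribution(A\times(-\infty,q])=\int_A F_q\,d\probDistribution_X$ for all Borel $A\subseteq\X$.

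Next I would regularize: discarding a single $\probDistribution_X$-null set $N$ — the countable union of the null sets on which monotonicity $F_q\le F_{q'}$ for $q\le q'$, right-continuity along rationals, or the limits $F_q\to 0,1$ as $q\to\mp\infty$ fail — I can arrange that for every $x\notin N$ the map $q\mapsto F_q(x)$ extends uniquely to a genuine CDF $F(x,\cdot)$ on $\R$. For $x\notin N$ I let $\probDistribution^x$ be the associated Lebesgue--Stieltjes probability measure on $\R$, and for $x\in N$ I set $\probDistribution^x:=\delta_0$ (an arbitrary fixed choice). Then $x\mapsto\probDistribution^x((-\infty,t])=F(x,t)$ is measurable, being a pointwise limit of the $F_q$ ($q\downarrow t$ rational) off $N$, and a $\pi$--$\lambda$ argument upgrades this to measurability of $x\mapsto\probDistribution^x(B)$ for every Borel $B\subseteq\R$, hence of $x\mapsto\int h(x,y)\,d\probDistribution^x(y)$ for $h$ an indicator of a measurable rectangle, then for simple $h$, then for nonnegative measurable $h$ by monotone convergence. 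The identity \eqref{regularConditionalProbsExistEq} holds for $h=\one_{A\times(-\infty,q]}$ with $A$ Borel and $q$ rational by construction of $F_q$; the collection of measurable $E\subseteq\X\times\R$ for which $\one_E$ satisfies \eqref{regularConditionalProbsExistEq} is a $\lambda$-system containing the generating $\pi$-system of such rectangles, hence equals the whole Borel $\sigma$-algebra, and linearity plus monotone convergence extend the identity to nonnegative measurable $h$, and the decomposition $h=h^+-h^-$ to $\probDistribution$-integrable $h$. A final transfer through the Borel isomorphism of the first step completes the proof.

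The main obstacle is the regularization step: showing that a single $\probDistribution_X$-null set can be removed after which $q\mapsto F_q(x)$ is simultaneously monotone, right-continuous and has the correct limits at $\pm\infty$ for every remaining $x$, so that the Lebesgue--Stieltjes construction yields a bona fide Markov kernel. Everything else — the Borel isomorphism reduction, the measurability claims, and the passage from rectangles to general integrable $h$ — is a routine monotone-class and approximation exercise.
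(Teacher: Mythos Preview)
Your proposal is correct. The paper's own proof is exactly your first option: it simply cites a standard reference (Klenke, \emph{Probability Theory}, Theorem 8.37) together with the observation that $\X\times\Y$ is Polish and hence a Borel space, so your suggestion to invoke Kallenberg, Dudley, or Aliprantis--Border is essentially identical in spirit.

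Your self-contained sketch via Borel isomorphism to $\R$, construction of a conditional CDF from Radon--Nikodym derivatives, regularization on a countable collection of null sets, and extension by a $\pi$--$\lambda$/monotone-class argument is the classical textbook proof underlying the cited result, and is a correct and strictly more informative route than what the paper actually writes. The only thing it buys over the citation is self-containment; the paper clearly did not consider that worth the space.
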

\begin{proof}
Apply \cite[Theorem 8.37]{klenke2013probability} combined with the fact that both $\X$ and $\Y$ are complete separable metric spaces, and so $\X\times \Y$ is separable and completely meterizable, and hence Borel.
\end{proof}
We shall refer to $(\probDistribution^x)_{x \in \X}$ as the regular conditional distribution with respect to $X$. We can now give a proposition which gives sufficient conditions for the existence of a Bayes optimal predictor $\oracleFunction$.

\begin{prop}\label{bayesOptimalExistsWhenLossContinuousActionSpaceCompact} Suppose that $\actionSpace$ is a compact metric space, $\probDistribution$ is a Borel probability distribution on $\X \times \Y$ and $\lossFunction:\actionSpace \times \Y \rightarrow [0,\lossFunctionBound]$ is a bounded loss function which is continuous in its first argument. Then there exists a Bayes optimal predictor $\oracleFunction \in \measurableMaps(\X,\actionSpace)$. Moreover, any Bayes optimal predictor $\oracleFunction \in \measurableMaps(\X,\actionSpace)$ satisfies
\begin{align}\label{conclusionFromBayesOptimalExistsWhenLossContinuousActionSpaceCompact}
\oracleFunction(x) \in \argmin_{u \in \actionSpace}\left\lbrace \int_{\Y} \lossFunction(u,y)d\probDistribution^x(y)\right\rbrace
\end{align}
for 
$\probDistribution_X$ almost every $x \in \X$.
\end{prop}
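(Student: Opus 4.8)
The plan is to reduce the existence question to a pointwise minimisation over the compact action space $\actionSpace$ and then invoke a measurable selection theorem. First I would apply Lemma \ref{regularConditionalProbsExistThm} to obtain a regular conditional distribution $(\probDistribution^x)_{x \in \X}$ of $Y$ given $X$, so that the disintegration \eqref{regularConditionalProbsExistEq} holds for every jointly measurable $h$ with $\int |h|\, d\probDistribution < \infty$. Define $g:\X\times\actionSpace \to \R$ by $g(x,u):= -\int_\Y \lossFunction(u,y)\, d\probDistribution^x(y)$. I would then check that $g$ is a Carath\'{e}odory function. For fixed $u \in \actionSpace$, measurability of $x\mapsto g(x,u)$ is precisely the measurability assertion in Lemma \ref{regularConditionalProbsExistThm} applied to $h(x,y)=\lossFunction(u,y)$ (a jointly measurable function with finite integral, since $\lossFunction \le \lossFunctionBound$); here I use that $\lossFunction$, being a loss function continuous in its first argument on the separable metric space $\actionSpace$, is jointly Borel measurable. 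For fixed $x \in \X$, continuity of $u\mapsto g(x,u)$ follows from continuity of $\lossFunction$ in its first argument together with the dominated convergence theorem, with the constant $\lossFunctionBound$ serving as the dominating function.

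Next I would apply the measurable maximum theorem (Lemma \ref{measurableMaximumThm}) to $g$: since $\actionSpace$ is a compact metric space, $m(x):= \sup_{u\in\actionSpace} g(x,u) = -\inf_{u\in\actionSpace}\int_\Y \lossFunction(u,y)\, d\probDistribution^x(y)$ is measurable, and there is a measurable selector $\oracleFunction:\X\to\actionSpace$ with $g(x,\oracleFunction(x))=m(x)$ for all $x$, i.e. $\oracleFunction(x)\in\argmin_{u\in\actionSpace}\int_\Y \lossFunction(u,y)\, d\probDistribution^x(y)$ for every $x\in\X$. To see $\oracleFunction$ is Bayes optimal, note that for any $\functionGeneral\in\measurableMaps(\X,\actionSpace)$ the map $(x,y)\mapsto\lossFunction(\functionGeneral(x),y)$ is measurable (composition of $\functionGeneral$ with the jointly measurable $\lossFunction$), so \eqref{regularConditionalProbsExistEq} applies and, using the pointwise bound $\int_\Y \lossFunction(\functionGeneral(x),y)\, d\probDistribution^x(y)\ge -m(x)$,
\[
\risk(\functionGeneral) = \int_\X\Big(\int_\Y \lossFunction(\functionGeneral(x),y)\, d\probDistribution^x(y)\Big) d\probDistribution_X(x) \ \ge\ \int_\X (-m(x))\, d\probDistribution_X(x) = \risk(\oracleFunction),
\]
where the last equality is \eqref{regularConditionalProbsExistEq} applied to $\oracleFunction$. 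Taking the infimum over $\functionGeneral$ establishes \eqref{defnBayesOptimalPredictorAppendix}.

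For the ``moreover'' claim, let $\functionGeneral$ be any Bayes optimal predictor and set $F(x):= \int_\Y \lossFunction(\functionGeneral(x),y)\, d\probDistribution^x(y)$. By the disintegration, $\int_\X F\, d\probDistribution_X = \risk(\functionGeneral) = \risk(\oracleFunction) = \int_\X(-m)\, d\probDistribution_X$, while pointwise $F(x)\ge -m(x)$; two integrable functions with $F\ge -m$ everywhere and equal integrals agree $\probDistribution_X$-almost everywhere, so $F(x) = -m(x) = \inf_{u\in\actionSpace}\int_\Y \lossFunction(u,y)\, d\probDistribution^x(y)$ for $\probDistribution_X$-a.e.\ $x$, which is exactly \eqref{conclusionFromBayesOptimalExistsWhenLossContinuousActionSpaceCompact}. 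I expect the main obstacle to be the careful verification of the Carath\'{e}odory property of $g$ — in particular the joint measurability of $\lossFunction$, needed both to invoke Lemma \ref{measurableMaximumThm} and to apply the disintegration to $\lossFunction(\functionGeneral(\cdot),\cdot)$ — but this is standard once boundedness and one-sided continuity of $\lossFunction$ are in hand, and the remaining steps are direct applications of Lemmas \ref{measurableMaximumThm} and \ref{regularConditionalProbsExistThm}.
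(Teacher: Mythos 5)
Your proposal is correct and follows essentially the same route as the paper: both use the regular conditional distribution (Lemma \ref{regularConditionalProbsExistThm}) to define $g(x,u)=-\int_\Y \lossFunction(u,y)\,d\probDistribution^x(y)$, verify the Carath\'eodory property via the disintegration (measurability in $x$) and dominated convergence (continuity in $u$), invoke the measurable maximum theorem (Lemma \ref{measurableMaximumThm}) for a selector, and compare risks through \eqref{regularConditionalProbsExistEq}. The only cosmetic differences are that you justify the ``moreover'' claim directly (two integrable functions with a pointwise inequality and equal integrals agree a.e.) whereas the paper phrases the same reasoning as a contradiction, and you flag explicitly the joint measurability of $\lossFunction$ needed for the disintegration, which the paper takes for granted.
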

\begin{proof} We begin by applying Theorem \ref{regularConditionalProbsExistThm} to obtain a  regular conditional distribution  $(\probDistribution^x)_{x \in \X}$ such that given any $h:\X \times \Y \rightarrow \R$ with $\int |h(x,y)| d\probDistribution(x,y)<\infty$ the map $x\mapsto \int_{\Y} h(x,y)d\probDistribution^x(y)$ is measurable and satisfies \eqref{regularConditionalProbsExistEq}. In particular, given any $\phi \in \measurableMaps(\X,\actionSpace)$ the map $(x,y)\mapsto \lossFunction(\phi(x),y)$ is a bounded measurable map and so $x \mapsto \int_{\Y} \lossFunction(\phi(x),y)d\probDistribution^x(y)$ is a measurable map satisfying,
\begin{align}\label{riskViaConditionalDistributions}
\risk(\phi) = \int_{\X\times \Y} \lossFunction(\phi(x),y)d\probDistribution(x,y)= \int_{\X}\left(\int_{\Y} \lossFunction(\phi(x),y)d\probDistribution^x(y)\right) d\probDistribution_X(x).
\end{align}
Now define a function $g:\X \times \actionSpace \rightarrow \R$ by $g(x,u):=-\int_{\Y} \lossFunction(u,y)d\probDistribution^x(y)$. Observe that for each $u \in \actionSpace$, $x\mapsto g(x,u)$ is measurable. Moreover, for each $y \in \Y$, the function $u \mapsto -\lossFunction(u,y)$ is continuous and so by the dominated convergence theorem it follows that $u \mapsto g(x,u)$ is continuous for each $x \in \X$. Hence, we have confirmed that $g$ is a Carath\'{e}odory function. Since $\actionSpace$ is compact we can apply Theorem \ref{measurableMaximumThm} to see that there exists a function $\oracleFunction:\X \rightarrow \actionSpace$ with the property that $g(x,\oracleFunction(x))=\sup_{u \in \actionSpace}\{g(x,u)\}$ for all $x \in \X$. Hence, for all $x \in \X$, we have $\int_{\Y} \lossFunction(\oracleFunction(x),y)d\probDistribution^x(y) \leq \inf_{u \in \actionSpace}\{ \int_{\Y} \lossFunction(u,y)d\probDistribution^x(y)\}$ so \eqref{conclusionFromBayesOptimalExistsWhenLossContinuousActionSpaceCompact} holds. By \eqref{riskViaConditionalDistributions} it follows that $\oracleFunction$ satisfies \eqref{defnBayesOptimalPredictorAppendix} and so is a Bayes optimal predictor. Suppose on the other hand that $\tilde{\phi} \in \measurableMaps(\X,\actionSpace)$ is such that $\tilde{\phi}(x)\notin \arginf_{u \in \actionSpace}\left\lbrace \int_{\Y} \lossFunction(u,y)d\probDistribution^x(y)\right\rbrace$ on a set $A\subseteq \X$ of positive $\probDistribution_X$ measure. Then by \eqref{riskViaConditionalDistributions} it follows that $\risk(\tilde{\phi})>\risk(\oracleFunction)$ and so $\tilde{\phi}$ is not a Bayes optimal predictor.
\end{proof}

\section{Proof of Lemma \ref{logCoverNumbersImplySmallLocalRademacherBoundLemma} }\label{A2}

To prove Lemma \ref{logCoverNumbersImplySmallLocalRademacherBoundLemma} we require the following elementary lemma.
\begin{lemma}\label{logIntegralBoundLemma} Given any $\Delta$, $T>0$ we have
\begin{align*}
\int_0^{\Delta} \logBar^{\frac{1}{2}}\left(\frac{{T}}{\epsilon}\right)d\epsilon \leq \Delta \cdot \left(\logBar^{\frac{1}{2}}\left(\frac{{T}}{\Delta}\right)+\frac{\sqrt{\pi}}{2}\right).
\end{align*}
\end{lemma}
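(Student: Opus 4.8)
The plan is to reduce the statement to a single normalised integral inequality, which then follows from subadditivity of the square root together with the elementary evaluation $\int_0^1 \sqrt{\log(1/t)}\, dt = \Gamma(3/2) = \sqrt{\pi}/2$.

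First I would rescale the variable of integration. Substituting $\epsilon = \Delta t$ gives $d\epsilon = \Delta\, dt$, and the left-hand side becomes
\begin{align*}
\int_0^{\Delta} \logBar^{\frac{1}{2}}\left(\frac{T}{\epsilon}\right) d\epsilon = \Delta \int_0^1 \logBar^{\frac{1}{2}}\left(\frac{S}{t}\right) dt, \qquad S := \frac{T}{\Delta}.
\end{align*}
Hence it suffices to show $\int_0^1 \logBar^{\frac{1}{2}}(S/t)\, dt \le \logBar^{\frac{1}{2}}(S) + \sqrt{\pi}/2$ for every $S > 0$.

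The main step is to control the truncation built into $\logBar(\cdot) = \max\{\log(\cdot), 1\}$ uniformly in $S$, rather than splitting into cases according to whether $S$ exceeds $e$. For $t \in (0,1]$ one has the pointwise bound
\begin{align*}
\logBar\left(\frac{S}{t}\right) = \max\left\{\log S + \log(1/t),\; 1\right\} \le \logBar(S) + \log(1/t),
\end{align*}
which holds because $\log S \le \logBar(S)$, while $1 \le \logBar(S) \le \logBar(S) + \log(1/t)$ using $\log(1/t) \ge 0$ on $(0,1]$; thus each entry of the maximum is dominated by the right-hand side. Combining this with $\sqrt{a+b} \le \sqrt{a} + \sqrt{b}$ and integrating term by term yields
\begin{align*}
\int_0^1 \logBar^{\frac{1}{2}}\left(\frac{S}{t}\right) dt \le \int_0^1 \left( \logBar^{\frac{1}{2}}(S) + \sqrt{\log(1/t)} \right) dt = \logBar^{\frac{1}{2}}(S) + \int_0^1 \sqrt{\log(1/t)}\, dt.
\end{align*}

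Finally I would evaluate the residual integral exactly: the substitution $u = -\log t$ (so $t = e^{-u}$, $dt = -e^{-u}\, du$) turns $\int_0^1 \sqrt{\log(1/t)}\, dt$ into $\int_0^\infty u^{1/2} e^{-u}\, du = \Gamma(3/2) = \sqrt{\pi}/2$, and undoing the rescaling from the first step completes the proof. I do not anticipate a genuine obstacle; the only point deserving care is the handling of the $\max\{\cdot, 1\}$ in $\logBar$, which the displayed pointwise bound dispatches uniformly in $S$ and so avoids an otherwise mildly tedious case distinction.
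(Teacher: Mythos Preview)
Your proof is correct and rests on the same three ingredients as the paper's: the rescaling $\epsilon=\Delta t$, subadditivity of the square root, and the evaluation $\int_0^1\sqrt{\log(1/t)}\,dt=\Gamma(3/2)=\sqrt{\pi}/2$. The one genuine difference is how the truncation in $\logBar$ is handled. The paper argues by cases on whether $T/\Delta\ge e$: in the first case $\logBar=\log$ throughout and the bound is direct; in the second case the integral is split at $T/e$, the first case is invoked on $[0,T/e]$, and the remaining piece is estimated trivially. Your pointwise inequality $\logBar(S/t)\le \logBar(S)+\log(1/t)$ on $(0,1]$ dispatches both regimes at once and makes the argument a few lines shorter; the trade-off is that the case split is entirely routine, so the gain is one of tidiness rather than substance.
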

\begin{proof}
First assume that ${T}/\Delta\geq e$, so we have
\begin{align*}
\int_0^{\Delta} \logBar^{\frac{1}{2}}\left(\frac{{T}}{\epsilon}\right)d\epsilon &=\int_0^{\Delta} \log^{\frac{1}{2}}\left(\frac{{T}}{\epsilon}\right)d\epsilon = \Delta \cdot \int_0^1 \log^{\frac{1}{2}}\left(\frac{{T}}{\Delta \cdot z}\right)dz\\ &\leq \Delta \cdot \left( \log^{\frac{1}{2}}\left(\frac{{T}}{\Delta}\right)+\int_0^1\log^{\frac{1}{2}}(1/z)dz\right)= \Delta \cdot \left(\log^{\frac{1}{2}}\left(\frac{{T}}{\Delta}\right)+\frac{\sqrt{\pi}}{2}\right),
\end{align*}
where we use the fact that $\int_0^1 {\log^{\frac{1}{2}}(1/z)}dz = \sqrt{\pi}/2$ by a change of variables. On the other hand, if  ${T}/\Delta< e$ we have
\begin{align*}
\int_0^{\Delta} \logBar^{\frac{1}{2}}\left(\frac{{T}}{\epsilon}\right)d\epsilon & = \int_0^{\frac{{T}}{e}} \log^{\frac{1}{2}}\left(\frac{{T}}{\epsilon}\right)d\epsilon +\left(\Delta -\frac{{T}}{e}\right)\\
& = \frac{{T}}{e} \cdot \left(\log^{\frac{1}{2}}\left(\frac{{T}}{{T}/e}\right)+\frac{\sqrt{\pi}}{2}\right)+\left(\Delta -\frac{{T}}{e}\right) = \Delta+ \frac{{T}}{e} \cdot \frac{\sqrt{\pi}}{2} \\
& < \Delta \cdot \left( 1+\frac{\sqrt{\pi}}{2}\right) = \Delta \cdot \left(\logBar^{\frac{1}{2}}\left(\frac{{T}}{\Delta}\right)+\frac{\sqrt{\pi}}{2}\right),
\end{align*}
where we have applied the previous inequality with ${T}/e$ in place of $\Delta$.

\end{proof}

\begin{proof}[Proof Lemma \ref{logCoverNumbersImplySmallLocalRademacherBoundLemma}] For the purpose of the proof define $\G_{\randomProjection}(r):= \{ g \in \G_{\randomProjection}: \empiricalProbDistributionDeterministicZ(g^2)\leq r\}$ and $\uSequenceSizeN = \{u_j\}_{j \in [n]} \in \left(\R^k\right)^n$ where $u_j = \randomProjection(x_j) \in \R^k$. Given any $g_0$, $g_1 \in \G_{\randomProjection}$ we may choose $f_0$, $f_1 \in \F_k$ such that $g_0(z)=\lossFunction(f_0(\randomProjection(x)),y)-\lossFunction(\oracleFunction(x),y)$ and $g_1(z)=\lossFunction(f_1(\randomProjection(x)),y)-\lossFunction(\oracleFunction(x),y)$ for all $z=(x,y) \in \X\times \Y$. By Assumption \ref{lipschitzLossFunctionAssumption} we have
\begin{align*}
\dist_{\zSequenceSizeN}(g_0,g_1) &:= \sqrt{\empiricalProbDistributionDeterministicZ\left((g_0-g_1)^2\right)}
=\sqrt{\frac{1}{n}\sum_{j \in [n]}\left(\lossFunction\left(f_0(u_j),y_j\right)-\lossFunction\left(f_0(u_j),y_j\right)\right)^2}\\
&\leq \lipschitzConstantLoss \cdot \sqrt{\frac{1}{n}\sum_{j \in [n]}\metricActionSpace\left(f_0(u_j),f_1(u_j)\right)^2}=\lipschitzConstantLoss \cdot {\dist_{\uSequenceSizeN}^{\actionSpace}}(f_0,f_1). 
\end{align*}
Hence, by Assumption \ref{logarithmicCoveringNumbersAssumption} for all $\epsilon>0$ we have 
\begin{align*}
\log \left(\coveringNumber\left(\G_{\randomProjection}(r),\dist_{\zSequenceSizeN},{\epsilon}\right)\right)
&\leq \log \left(\coveringNumber\left(\G_{\randomProjection},\dist_{\zSequenceSizeN},{\epsilon}\right)\right)\leq 
\log \left(\coveringNumber\left(\lowDimensionalFunctionClass,{\dist_{\uSequenceSizeN}^{\actionSpace}},\frac{\epsilon}{\lipschitzConstantLoss}\right)\right)\\
&\leq \coveringNumberConstant \cdot k \cdot \logBar \left(\frac{\lipschitzConstantLoss \hypothesisClassBound n}{\epsilon k}\right).
\end{align*}
Note also that $\sup_{g \in \G_{\randomProjection}(r)}\{  \sqrt{\empiricalProbDistributionDeterministicZ\left(g^2\right)} \}\leq \sqrt{r}$. Hence, by Dudley's inequality (Theorem \ref{dudleysInequalityTheorem}) we have
\begin{align*}
\rademacherComplexityEmpirical\left(\G_{\randomProjection}(A),\zSequenceSizeN\right) &\leq \int_{0}^{\sqrt{r}} \sqrt{\frac{\log \coveringNumber\left(\G, \dist_{\zSequenceSizeN} ,\epsilon \right) }{n}} d \epsilon\leq \sqrt{\frac{\coveringNumberConstant \cdot k}{n}} \cdot \int_{0}^{\sqrt{r}} { \logBar^{\frac{1}{2}} \left(\frac{\lipschitzConstantLoss \hypothesisClassBound n}{\epsilon k}\right)}d\epsilon\\
&\leq \sqrt{\frac{\coveringNumberConstant \cdot k \cdot r}{n}} \cdot  \left(\logBar^{\frac{1}{2}} \left(\frac{\lipschitzConstantLoss \hypothesisClassBound n}{ k \sqrt{r}}\right)+\frac{\sqrt{\pi}}{2}\right) \leq 2\sqrt{\frac{\coveringNumberConstant \cdot k \cdot r}{n}} \cdot  \logBar^{\frac{1}{2}} \left(\frac{\lipschitzConstantLoss \hypothesisClassBound n}{ k \sqrt{r}}\right),
\end{align*}
where the penultimate inequality follows from Lemma \ref{logIntegralBoundLemma}.

\end{proof}

\section{Proof of Lemma \ref{LSlemma}}\label{A:Slawski}
Before proving Lemma \ref{LSlemma} we recall the following useful result due to \cite{baraniuk2009random}.

\begin{lemma}\cite[Lemma 5.1]{baraniukSimple}\label{baraniuk}
Let $A\in\R^{k\times d}$ be a matrix that satisfies Assumption \ref{JLAssumption} with constant $\johnsonLindenstraussConstant \geq 1$. There exists a constant $\tjohnsonLindenstraussConstant>0$ depending only on $\johnsonLindenstraussConstant$ such that for any $r$-dimensional linear subspace $\mathcal{U} \subset \R^d$ with $r<k$ and $\epsilon\in(0,1)$, the following holds with  probability at least $1-(12/\epsilon)^re^{-\tjohnsonLindenstraussConstant\epsilon k}$,
\begin{align}
(1-\epsilon)\|x\|_2\le \|Ax\|_2\le (1+\epsilon)\|x\|_2, 
\end{align}
for all $x\in \mathcal{U}$.
\end{lemma}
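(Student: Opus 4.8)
The plan is to prove Lemma~\ref{baraniuk} by the classical covering-net argument, using only Assumption~\ref{JLAssumption} as input. \textbf{Reduction.} Since $A$ is linear and the claimed inequality is positively homogeneous, it is enough to produce a single event of the stated probability on which $1-\epsilon\le\|Ax\|_2\le1+\epsilon$ holds simultaneously for every $x$ in the unit sphere $S:=\{x\in\mathcal{U}:\|x\|_2=1\}$. I would first record that, because $\mathcal{U}$ has finite dimension $r$, $S$ is compact and $x\mapsto\|Ax\|_2$ is continuous, so $M:=\sup_{x\in S}\|Ax\|_2<\infty$; this finite supremum is exactly what lets one transfer a bound from a finite net to all of $S$.

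\textbf{Net and JL step.} Next I would fix a resolution $\gamma$ equal to a small constant multiple of $\epsilon$ (concretely $\gamma=\epsilon/5$ works) and take $\mathcal{Q}\subseteq S$ to be a $\gamma$-net of $S$ in the Euclidean metric, so that every $x\in S$ lies within distance $\gamma$ of some $q\in\mathcal{Q}$; the standard volumetric estimate gives such a net with $|\mathcal{Q}|\le(1+2/\gamma)^r$, hence $|\mathcal{Q}\cup\{0\}|\le(12/\epsilon)^r$ for $\epsilon\in(0,1)$ and this $\gamma$. I would then apply Assumption~\ref{JLAssumption} to the finite set $\mathcal{Q}':=\mathcal{Q}\cup\{0\}\subseteq\R^d$ with distortion parameter $\tilde\epsilon:=\gamma$ and failure level $\delta_0:=|\mathcal{Q}'|\,e^{-\tilde\epsilon^2 k/\johnsonLindenstraussConstant}$, the point being that then $k=\johnsonLindenstraussConstant\log(|\mathcal{Q}'|/\delta_0)\,\tilde\epsilon^{-2}$, so the hypothesis of the assumption holds. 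On the resulting event $E$, of probability at least $1-\delta_0$, all pairwise squared distances among points of $\mathcal{Q}'$ are preserved up to a factor $1\pm\tilde\epsilon$; taking one endpoint to be $0$ and using $A0=0$ and $\|q\|_2=1$ gives $\bigl|\,\|Aq\|_2-1\,\bigr|\le\tilde\epsilon$ for every $q\in\mathcal{Q}$.

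\textbf{Propagation.} Working on $E$, for arbitrary $x\in S$ I would pick $q\in\mathcal{Q}$ with $\|x-q\|_2\le\gamma$ and bound $\|A(x-q)\|_2\le\gamma M$ by homogeneity of $A$ and the definition of $M$; the triangle inequality then yields $\|Ax\|_2\le\|Aq\|_2+\gamma M\le(1+\tilde\epsilon)+\gamma M$. Taking the supremum over $x\in S$ gives the self-bounding inequality $M\le(1+\tilde\epsilon)+\gamma M$, i.e. $M\le(1+\tilde\epsilon)/(1-\gamma)\le 1+\epsilon$ for $\gamma=\tilde\epsilon=\epsilon/5$ and $\epsilon\in(0,1)$ (an elementary check). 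That settles the upper bound. For the lower bound I would run the estimate in reverse: $\|Ax\|_2\ge\|Aq\|_2-\|A(x-q)\|_2\ge(1-\tilde\epsilon)-\gamma M\ge(1-\tilde\epsilon)-\gamma(1+\epsilon)\ge1-\epsilon$, again an elementary numerical inequality. Hence on $E$ the two-sided bound holds for all $x\in S$, and by homogeneity for all $x\in\mathcal{U}$.

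\textbf{Bookkeeping and the main difficulty.} Finally I would read off the failure probability: $E$ fails with probability at most $\delta_0=|\mathcal{Q}'|\,e^{-\tilde\epsilon^2 k/\johnsonLindenstraussConstant}\le(12/\epsilon)^r\,e^{-\tjohnsonLindenstraussConstant\epsilon k}$ for a suitable $\tjohnsonLindenstraussConstant>0$ depending only on $\johnsonLindenstraussConstant$, obtained by absorbing the numerical factors and the additive $+1$ into $\tjohnsonLindenstraussConstant$ (the argument in fact produces an exponent proportional to $\epsilon^2 k$; since $\epsilon$ is a fixed constant in every application of this lemma in the paper, only a bound of the form $(12/\epsilon)^r e^{-(\mathrm{const})\cdot k}$ is actually used). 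The argument has no genuine obstacle; the two places that require a little care are (i) coordinating the net resolution $\gamma$, the JL distortion $\tilde\epsilon$, and the target $\epsilon$ so that \emph{both} propagation inequalities close while keeping $|\mathcal{Q}\cup\{0\}|\le(12/\epsilon)^r$, and (ii) justifying the self-bounding step $M\le(1+\tilde\epsilon)+\gamma M$, which is legitimate precisely because $M<\infty$ (finite-dimensionality of $\mathcal{U}$) and $\gamma<1$.
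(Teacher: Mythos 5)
The paper does not prove this lemma itself; it simply cites \cite[Lemma 5.1]{baraniukSimple}, so there is no in-paper proof to compare against. Your argument is the standard $\epsilon$-net / covering argument, and it is essentially the same proof as in the cited reference: discretise the unit sphere of $\mathcal{U}$, apply the point-set JL guarantee to the net (together with $0$, so that $\|Aq\|$ rather than only pairwise distances is controlled), and close the argument with a self-bounding step via the finiteness of $M=\sup_{x\in S}\|Ax\|_2$. The bookkeeping (net of size $\le(1+2/\gamma)^r\le(11/\epsilon)^r$ with $\gamma=\epsilon/5$, the inclusion of $\{0\}$, the elementary inequalities closing both the upper and lower bound at $\epsilon\in(0,1)$, and the use of $M<\infty$ to justify $M\le(1+\tilde\epsilon)/(1-\gamma)$) all checks out.

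The one genuine discrepancy, which you already flag yourself, is the exponent: your argument yields a failure probability of order $(12/\epsilon)^r e^{-c\,\epsilon^2 k}$ rather than the $(12/\epsilon)^r e^{-\tjohnsonLindenstraussConstant\,\epsilon k}$ stated in the lemma. For $\epsilon\in(0,1)$ this is a strictly weaker bound, so, taken literally, what you have proved is not quite the lemma as written. This appears to be an imprecision in the paper's re-statement of \cite[Lemma 5.1]{baraniukSimple}: in the source the exponent is $c_0(\delta/2)\,n$, and for subgaussian maps $c_0(\epsilon)\asymp\epsilon^2$, so the $\epsilon^2$ dependence is what the covering argument actually produces. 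Your observation that the distinction is immaterial here is correct: in the only place the paper invokes this lemma (the event $E_{\mathrm{sp}}$ in the proof of Lemma~\ref{LSlemma}), $\epsilon$ is fixed at $1/2$, so the two forms differ only by the numerical constant absorbed into $\tjohnsonLindenstraussConstant$. It would be cleaner, however, to state the lemma with $e^{-\tjohnsonLindenstraussConstant\epsilon^2 k}$, or to restrict $\epsilon$ to a fixed range, so that the statement matches what the proof (and the cited source) actually delivers.
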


\begin{proof}[Proof of Lemma \ref{LSlemma}] Let $m\leq \min\{q,d\}$ denote be the rank of $\XX$ and take a singular value decomposition $\XX=U\Lambda V^T$ so that $U \in \R^{d\times m}$, $\Lambda \in \R^{m\times m}$ and $V \in \R^{q \times m}$ with $\Lambda$ diagonal and $U^\top U = V^\top V = I_m$. Next let ${r_\circ}:=\min\{r,m\}$ and partition $U=[u_1,\ldots,u_m]$ column-wise into matrix $U_r \in \R^{d\times {r_\circ}}$, consisting of the first ${r_\circ}$ columns of $U$ and $\tilde{U}_r\in \R^{d\times(m- {r_\circ})}$ consisting of the remaining $m-{r_\circ}$ columns (so $\tilde{U}_r$ may be empty). Similarly,  let $\Lambda_r \in \R^{{r_\circ} \times {r_\circ}}$ be the upper diagonal block of $\Lambda$ containing the top ${r_\circ}$ singular values, and $\tL_r \in \R^{(m-{r_\circ}) \times (m-{r_\circ})}$ the lower (possibly empty) diagonal block. In particular, we have $U\Lambda=[U_r\Lambda_r,\tilde{U}_r\tilde{\Lambda}_r]$. Next we define a pair of events $E_{\mathrm{sp}}$ and $E_{\mathrm{ta}}$ by
\begin{align*}
E_{\mathrm{sp}}&:=\left\lbrace  \|U_rz\|_2\le 2 \|AU_rz\|_2 \text{ for all } z \in \R^{{r_\circ}}\right\rbrace,\\
E_{\mathrm{ta}}&:=\left\lbrace \|Au_\ell\|_2\le \sqrt{2}\text{ for }\ell \in \{{r_\circ}+1,\ldots,m\}\right\rbrace.
\end{align*}
By Lemma \ref{baraniuk} $E_{\mathrm{sp}}$ holds with probability at least $1-24^re^{-\tjohnsonLindenstraussConstant k/2}$. Moreover, by Assumption \ref{JLAssumption} $E_{\mathrm{ta}}$ holds with probability at least $1-(q-r+1)e^{-k/4\johnsonLindenstraussConstant}$. Hence, by the union bound it suffices to work on the event $E_{\mathrm{sp}}\cap E_{\mathrm{ta}}$ and show that \eqref{eq:conclusionOfSlawskiTypeLemma} holds. Now on the event $E_{\mathrm{sp}}$ the $k \times {r_\circ}$ matrix $AU_r$ is of rank $r_\circ$ with singular values at least $1/2$, so $\|(AU_r)^+\|_{\text{spec}} \leq 2$ and $(AU_r)^+(AU_r)=I_{r_\circ}$ is the ${r_\circ}\times {r_\circ}$ identity matrix ($M^+$ denotes the Moore-Penrose inverse of $M$). In addition, on the event $E_{\mathrm{ta}}$ writing $(\tilde{\Lambda}_{r,j})_{j \in [m-r_\circ]}$ diagonal elements of  $\tilde{\Lambda}_r$, given any $z=(z_j)_{j \in [m-r_\circ]} \in \R^{m-r_\circ}$ we have
\begin{align*}
\|\randomProjection\tilde{U}_r\tilde{\Lambda}_rz\|_2 & =\Bigl\|\randomProjection \bigl(\sum_{j = r_\circ+1}^m \tilde{\Lambda}_{r,j}z_{j} u_{j}\bigr) \Bigr\|_2 =\Bigl\| \sum_{j = r_\circ+1}^m \tilde{\Lambda}_{r,j}z_{j} \randomProjection u_{j} \Bigr\|_2\leq \sqrt{2} \sqrt{\sum_{j = r_\circ+1}^m \tilde{\Lambda}^2_{r,j}z_{j}^2 }\\ &\leq \sqrt{2} \|z\|_2 \sqrt{ \sum_{j = r_\circ+1}^m \tilde{\Lambda}_{r,j}^2} \leq \sqrt{2} \|z\| \sqrt{ \sum_{j \geq r+1} \lambda_j(\XX\XX^\top)}, 
\end{align*}
so $\|\randomProjection\tilde{U}_r\tilde{\Lambda}_r\|_{\mathrm{spec}}^2\leq  \sqrt{2}\sum_{j = r+1}^m \lambda_j(\XX\XX^\top)$. Hence, choosing $w\equiv w(A,r):=\{w_{\diamond}^\top U_r(AU_r)^+\}^{\top}$,
\begin{align*}
\inf_{\tilde{w}\in \R^k}\|\tilde{w}^\top\randomProjection \XX - w_{\diamond}^\top \XX\|_2^2& \leq \|( w^\top\randomProjection  - w_{\diamond}^\top) \XX\|_2^2
= \|(w^\top\randomProjection  - w_{\diamond}^\top) U\Lambda\|_2^2\\
&\leq \|(w^\top\randomProjection - w_{\diamond}^\top) U_r\Lambda_r\|_2^2+\|(w^\top\randomProjection - w_{\diamond}^\top) \tilde{U}_r\tilde{\Lambda}_r\|_2^2\\
&=\|w_{\diamond}^\top U_r(AU_r)^+(\randomProjection U_r)\Lambda_r - w_{\diamond}^\top U_r\Lambda_r\|_2^2+\|(w^\top\randomProjection - w_{\diamond}^\top) \tilde{U}_r\tilde{\Lambda}_r\|_2^2\\
&=\|w_{\diamond}^\top(U_r(AU_r)^+\randomProjection - I_d)\tilde{U}_r\tilde{\Lambda}_r\|_2^2\\ &
\leq \|w_{\diamond}\|_2^2\cdot\|(U_r(AU_r)^+\randomProjection - I_d)\tilde{U}_r\tilde{\Lambda}_r\|_{\mathrm{spec}}^2\\
&\leq 2\|w_{\diamond}\|_2^2\cdot \left(\|U_r(AU_r)^+\randomProjection\tilde{U}_r\tilde{\Lambda}_r\|_{\mathrm{spec}}^2+\lambda_{r+1}(\XX\XX^\top)\right)\\
&\leq 2\|w_{\diamond}\|_2^2\cdot \left(\|(AU_r)^+\|_{\mathrm{spec}}^2\cdot \|\randomProjection\tilde{U}_r\tilde{\Lambda}_r\|_{\mathrm{spec}}^2+\lambda_{r+1}(\XX\XX^\top)\right)\\
&\leq 18\|w_{\diamond}\|_2^2~\sum_{j = r+1}^m \lambda_j(\XX\XX^\top),
\end{align*}
as required.
\end{proof}

\bibliography{mybib}

\begin{thebibliography}{}

\bibitem[Achlioptas, 2003]{Achlioptas}
Achlioptas, D. (2003).
\newblock Database-friendly random projections: {J}ohnson-{L}indenstrauss with
  binary coins.
\newblock {\em Journal of Computer and System Sciences}, 66(4):671--687.

\bibitem[Ailon and Chazelle, 2006]{ailon2006approximate}
Ailon, N. and Chazelle, B. (2006).
\newblock Approximate nearest neighbors and the fast {J}ohnson-{L}indenstrauss
  transform.
\newblock In {\em Proceedings of the thirty-eighth annual ACM symposium on
  Theory of computing}, pages 557--563.

\bibitem[Aliprantis and Border, 2013]{charalambos2013infinite}
Aliprantis, C.~D. and Border, K. (2013).
\newblock {\em Infinite Dimensional Analysis: A Hitchhiker's Guide}.
\newblock Springer-Verlag Berlin and Heidelberg GmbH \& Company KG.

\bibitem[Anthony and Bartlett, 2009]{anthony2009neural}
Anthony, M. and Bartlett, P.~L. (2009).
\newblock {\em Neural network learning: Theoretical foundations}.
\newblock Cambridge University Press.

\bibitem[Arriaga et~al., 2015]{Arriaga15}
Arriaga, R.~I., Rutter, D., Cakmak, M., and Vempala, S.~S. (2015).
\newblock Visual categorization with random projection.
\newblock {\em Neural Computation}, 27(10):2132--2147.

\bibitem[Arriaga and Vempala, 1999]{Arriaga}
Arriaga, R.~I. and Vempala, S. (1999).
\newblock An algorithmic theory of learning: Robust concepts and random
  projection.
\newblock In {\em 40th Annual Symposium on Foundations of Computer Science
  (FOCS)}, pages 616--623.

\bibitem[Baraniuk et~al., 2008]{baraniukSimple}
Baraniuk, R., Davenport, M., DeVore, R., and Wakin, M. (2008).
\newblock A simple proof of the restricted isometry property for random
  matrices.
\newblock {\em Constructive Approximations}, 28:253--263.

\bibitem[Baraniuk et~al., 2009]{baraniuk2009random}
Baraniuk, R., Davenport, M., DeVore, R., and Wakin, M. (2009).
\newblock Random projections of smooth manifolds.
\newblock {\em Foundations of computational mathematics}, 9(1):51--77.

\bibitem[Bartlett et~al., 2005]{bartlett2005local}
Bartlett, P.~L., Bousquet, O., and Mendelson, S. (2005).
\newblock Local {R}ademacher complexities.
\newblock {\em The Annals of Statistics}, 33(4):1497--1537.

\bibitem[Bentley et~al., 2019]{Bentley}
Bentley, A., Rowe, J.~E., and Dehghani, H. (2019).
\newblock Single pixel hyperspectral bioluminescence tomography based on
  compressive sensing.
\newblock {\em Biomed. Opt. Express}, 10(11):5549--5564.

\bibitem[Biau et~al., 2008]{Biau}
Biau, G., Devroye, L., and Lugosi, G. (2008).
\newblock On the performance of clustering in {H}ilbert spaces.
\newblock {\em IEEE Transactions on Information Theory}, 54(2):781--790.

\bibitem[Boucheron et~al., 2013]{boucheron2013concentration}
Boucheron, S., Lugosi, G., and Massart, P. (2013).
\newblock {\em Concentration inequalities: A nonasymptotic theory of
  independence}.
\newblock Oxford university press.

\bibitem[Boutsidis et~al., 2015]{Boutsidis}
Boutsidis, C., Zouzias, A., Mahoney, M.~W., and Drineas, P. (2015).
\newblock Randomized dimensionality reduction for k-means clustering.
\newblock {\em {IEEE} Trans. Inf. Theory}, 61(2):1045--1062.

\bibitem[B\v{a}lcan et~al., 2006]{Balcan}
B\v{a}lcan, M.~F., Blum, A., and Vempala, S. (2006).
\newblock Kernels as features: On kernels, margins, and low-dimensional
  mappings.
\newblock {\em Machine Learning}, 65:79--94.

\bibitem[Calderbank et~al., 2009]{Calderbank09}
Calderbank, R., Jafarpour, S., and Schapire, R. (2009).
\newblock Compressed learning: Universal sparse dimensionality reduction and
  learning in the measurement domain.
\newblock {T}echnical {R}eport, Rice University.

\bibitem[Cand{\`{e}}s and Tao, 2006]{Candes}
Cand{\`{e}}s, E.~J. and Tao, T. (2006).
\newblock Near-optimal signal recovery from random projections: Universal
  encoding strategies?
\newblock {\em {IEEE} Trans. Inf. Theory}, 52(12):5406--5425.

\bibitem[Cannings, 2020]{CanningsSurvey}
Cannings, T.~I. (2020).
\newblock Random projections: Data perturbation for classification problems.
\newblock {\em Wiley interdisciplinary reviews.}, pages 1--15.

\bibitem[{Cannings} and {Samworth}, 2017]{cannings2018}
{Cannings}, T.~I. and {Samworth}, R. (2017).
\newblock {Random-projection ensemble classification}.
\newblock {\em Royal Statistical Society B}, 79:959--1035.

\bibitem[Chen and Li, 2014]{Chen}
Chen, D.-R. and Li, H. (2014).
\newblock Convergence rates of learning algorithms by random projection.
\newblock {\em Applied and Computational Harmonic Analysis}, 37(1):36 -- 51.

\bibitem[Cormode, 2017]{Cormode}
Cormode, G. (2017).
\newblock Data sketching.
\newblock {\em Queue}, 15(2):49–67.

\bibitem[Dasgupta and Gupta, 2003]{DasguptaGupta}
Dasgupta, S. and Gupta, A. (2003).
\newblock An elementary proof of a theorem of {J}ohnson and {L}indenstrauss.
\newblock {\em Random Structures \& Algorithms}, 22(1):60--65.

\bibitem[Derezinski et~al., 2020]{nips20}
Derezinski, M., Liang, F.~T., Liao, Z., and Mahoney, M.~W. (2020).
\newblock Precise expressions for random projections: Low-rank approximation
  and randomized newton.
\newblock In Larochelle, H., Ranzato, M., Hadsell, R., Balcan, M.~F., and Lin,
  H., editors, {\em Advances in Neural Information Processing Systems},
  volume~33, pages 18272--18283. Curran Associates, Inc.

\bibitem[Donoho, 2006]{Donoho}
Donoho, D.~L. (2006).
\newblock Compressed sensing.
\newblock {\em IEEE Transactions on Information Theory}, 52(4):1289--1306.

\bibitem[Duarte et~al., 2008]{Durate}
Duarte, M.~F., Davenport, M.~A., Takhar, D., Laska, J.~N., Sun, T., Kelly,
  K.~F., and Baraniuk, R.~G. (2008).
\newblock Single-pixel imaging via compressive sampling.
\newblock {\em IEEE Signal Processing Magazine}, 25(2):83--91.

\bibitem[Dudley, 1967]{dudley1967sizes}
Dudley, R.~M. (1967).
\newblock The sizes of compact subsets of {H}ilbert space and continuity of
  {G}aussian processes.
\newblock {\em Journal of Functional Analysis}, 1(3):290--330.

\bibitem[Durrant and Kab\'an, 2010]{DurrantKabanKDD10}
Durrant, R.~J. and Kab\'an, A. (2010).
\newblock Compressed fisher linear discriminant analysis: Classification of
  randomly projected data.
\newblock In {\em Proc. 16th ACM SIGKDD international conference on Knowledge
  Discovery and Data Mining (KDD)}, pages 1119--1128.

\bibitem[Durrant and Kab\'an, 2015]{DurrantMLJ}
Durrant, R.~J. and Kab\'an, A. (2015).
\newblock Random projections as regularizers: learning a linear discriminant
  from fewer observations than dimensions.
\newblock {\em Machine Learning}, 99(2):257--286.

\bibitem[Fard et~al., 2012]{Fard}
Fard, M.~M., Grinberg, Y., Pineau, J., and Precup, D. (2012).
\newblock Compressed least-squares regression on sparse spaces.
\newblock In {\em AAAI}.

\bibitem[Feldman et~al., 2012]{feldman2012agnostic}
Feldman, V., Guruswami, V., Raghavendra, P., and Wu, Y. (2012).
\newblock Agnostic learning of monomials by halfspaces is hard.
\newblock {\em SIAM Journal on Computing}, 41(6):1558--1590.

\bibitem[Freksen and Larsen, 2020]{FreksenLarsen}
Freksen, C.~B. and Larsen, K.~G. (2020).
\newblock On using {T}oeplitz and circulant matrices for
  {J}ohnson–{L}indenstrauss transforms.
\newblock {\em Algorithmica}, 82:338--354.

\bibitem[Gian-Andrea~Thanei, 2017]{Thanei}
Gian-Andrea~Thanei, Christina~Heinze, N.~M. (2017).
\newblock Random projections for large-scale regression.
\newblock In S., S.~A., editor, {\em Big and Complex Data Analysis.
  Contributions to Statistics}. Springer, Cham.

\bibitem[Gibson et~al., 2020]{GibsonSurvey}
Gibson, G.~M., Johnson, S.~D., and Padgett, M.~J. (2020).
\newblock Single-pixel imaging 12 years on: a review.
\newblock {\em Optics Express}, 28(19):28190--28208.

\bibitem[Gupta et~al., 2019]{GribonvalPhaseRetrieval}
Gupta, S., Gribonval, R., Daudet, L., and Dokmani\'{c}, I. (2019).
\newblock Don\textquotesingle t take it lightly: Phasing optical random
  projections with unknown operators.
\newblock In Wallach, H., Larochelle, H., Beygelzimer, A., d\textquotesingle
  Alch\'{e}-Buc, F., Fox, E., and Garnett, R., editors, {\em Advances in Neural
  Information Processing Systems 32}, pages 14855--14865. Curran Associates,
  Inc.

\bibitem[Halko et~al., 2011]{Halko}
Halko, N., Martinsson, P.-G., and Tropp, J.~A. (2011).
\newblock Finding structure with randomness: Probabilistic algorithms for
  constructing approximate matrix decompositions.
\newblock {\em SIAM Rev}, 53(2):217--288.

\bibitem[Heinze et~al., 2016]{Heinze}
Heinze, C., McWilliams, B., and Meinshausen, N. (2016).
\newblock Dual-loco: Distributing statistical estimation using random
  projections.
\newblock In Gretton, A. and Robert, C.~C., editors, {\em Proceedings of the
  19th International Conference on Artificial Intelligence and Statistics},
  volume~51 of {\em Proceedings of Machine Learning Research}, pages 875--883,
  Cadiz, Spain. PMLR.

\bibitem[Hinton, 2002]{poe}
Hinton, G.~E. (2002).
\newblock Training products of experts by minimizing contrastive divergence.
\newblock {\em Neural Computation}, 14(8):1771--1800.

\bibitem[Ho, 1998]{Ho}
Ho, T.~K. (1998).
\newblock The random subspace method for constructing decision forests.
\newblock {\em IEEE Transactions on Pattern Analysis and Machine Intelligence},
  20(8):832--844.

\bibitem[Jiao et~al., 2019]{Jiao}
Jiao, S., Feng, J., Gao, Y., Lei, T., Xie, Z., and Yuan, X. (2019).
\newblock Optical machine learning with incoherent light and a single-pixel
  detector.
\newblock {\em Opt. Lett.}, 44(21):5186--5189.

\bibitem[Kab\'an, 2014]{kaban14}
Kab\'an, A. (2014).
\newblock New bounds on compressive linear least squares regression.
\newblock In Kaski, S. and Corander, J., editors, {\em Proceedings of Machine
  Learning Research}, volume~33, pages 448--456, Reykjavik, Iceland. PMLR.

\bibitem[Kane and Nelson, 2014]{KaneNelson}
Kane, D.~M. and Nelson, J. (2014).
\newblock Sparser {J}ohnson-{L}indenstrauss transforms.
\newblock {\em Journal of the ACM (JACM)}, 61(1):1--23.

\bibitem[Klenke, 2013]{klenke2013probability}
Klenke, A. (2013).
\newblock {\em Probability theory: a comprehensive course}.
\newblock Springer Science \& Business Media.

\bibitem[Koltchinskii, 2006]{koltchinskii2006local}
Koltchinskii, V. (2006).
\newblock Local rademacher complexities and oracle inequalities in risk
  minimization.
\newblock {\em The Annals of Statistics}, 34(6):2593--2656.

\bibitem[Larsen and Nelson, 2017]{LarsenNelson}
Larsen, K.~G. and Nelson, J. (2017).
\newblock Optimality of the {J}ohnson-{L}indenstrauss lemma.
\newblock In {\em 2017 IEEE 58th Annual Symposium on Foundations of Computer
  Science (FOCS)}, pages 633--638.

\bibitem[Ledoux and Talagrand, 2013]{ledoux2013probability}
Ledoux, M. and Talagrand, M. (2013).
\newblock {\em Probability in Banach Spaces: isoperimetry and processes}.
\newblock Springer Science \& Business Media.

\bibitem[Lopes, 2020]{Lopes}
Lopes, M.~E. (2020).
\newblock Estimating a sharp convergence bound for randomized ensembles.
\newblock {\em Journal of Statistical Planning and Inference}, 204:35--44.

\bibitem[Lustig et~al., 2007]{Lusting}
Lustig, M., Donoho, D., and Pauly, J. (2007).
\newblock Sparse {MRI}: The application of compressed sensing for rapid mr
  imaging.
\newblock {\em Magn. Reson. Med.}, 58(6):1182–95.

\bibitem[Maillard and Munos, 2012]{Maillard}
Maillard, O.-A. and Munos, R. (2012).
\newblock Linear regression with random projections.
\newblock {\em Journal of Machine Learning Research}, 13(89):2735--2772.

\bibitem[Mammen and Tsybakov, 1999]{mammen1999}
Mammen, E. and Tsybakov, A.~B. (1999).
\newblock Smooth discrimination analysis.
\newblock {\em Ann. Statist.}, 27(6):1808--1829.

\bibitem[Massart, 2000]{massart2000some}
Massart, P. (2000).
\newblock Some applications of concentration inequalities to statistics.
\newblock In {\em Annales de la Facult{\'e} des sciences de Toulouse:
  Math{\'e}matiques}, volume~9, pages 245--303.

\bibitem[Massart et~al., 2006]{massart2006risk}
Massart, P., N{\'e}d{\'e}lec, {\'E}., et~al. (2006).
\newblock Risk bounds for statistical learning.
\newblock {\em The Annals of Statistics}, 34(5):2326--2366.

\bibitem[Matou\v{s}ek, 2008]{Matousek}
Matou\v{s}ek, J. (2008).
\newblock On variants of the {J}ohnson-{L}indenstrauss lemma.
\newblock {\em Random Structures \& Algorithms}, 33(2):142--156.

\bibitem[Meintrup et~al., 2019]{nips2}
Meintrup, S., Munteanu, A., and Rohde, D. (2019).
\newblock Random projections and sampling algorithms for clustering of
  high-dimensional polygonal curves.
\newblock In Wallach, H., Larochelle, H., Beygelzimer, A., d\textquotesingle
  Alch\'{e}-Buc, F., Fox, E., and Garnett, R., editors, {\em Advances in Neural
  Information Processing Systems}, volume~32. Curran Associates, Inc.

\bibitem[Mohri et~al., 2012]{mohri2012foundations}
Mohri, M., Rostamizadeh, A., and Talwalkar, A. (2012).
\newblock {\em Foundations of machine learning}.
\newblock MIT press.

\bibitem[Nair et~al., 2016]{evoInSW}
Nair, V., Menzies, T., and Chen, J. (2016).
\newblock An (accidental) exploration of alternatives to evolutionary
  algorithms for {SBSE}.
\newblock In Sarro, F. and Deb, K., editors, {\em Search Based Software
  Engineering}, pages 96--111, Cham. Springer International Publishing.

\bibitem[Palmer et~al., 2015]{Styles}
Palmer, A.~D., Bunch, J., and Styles, I.~B. (2015).
\newblock The use of random projections for the analysis of mass spectrometry
  imaging data.
\newblock {\em J. Am. Soc. Mass Spectrom.}, 26(2):315–322.

\bibitem[Peressutti et~al., 2015]{medical}
Peressutti, D., Bai, W., Jackson, T., Sohal, M., Rinaldi, A., Rueckert, D., and
  King, A. (2015).
\newblock Prospective identification of crt super responders using a motion
  atlas and random projection ensemble learning.
\newblock In Navab, N., Hornegger, J., Wells, W.~M., and Frangi, A.~F.,
  editors, {\em Medical Image Computing and Computer-Assisted Intervention --
  MICCAI 2015}, pages 493--500, Cham. Springer International Publishing.

\bibitem[{Pilanci} and {Wainwright}, 2015]{PilanciIT}
{Pilanci}, M. and {Wainwright}, M.~J. (2015).
\newblock Randomized sketches of convex programs with sharp guarantees.
\newblock {\em IEEE Transactions on Information Theory}, 61(9):5096--5115.

\bibitem[Pilanci and Wainwright, 2016]{PilanciJMLR}
Pilanci, M. and Wainwright, M.~J. (2016).
\newblock Iterative {H}essian sketch: Fast and accurate solution approximation
  for constrained least-squares.
\newblock {\em Journal of Machine Learning Research}, 17(53):1--38.

\bibitem[Pilanci and Wainwright, 2017]{PilanciSIAM}
Pilanci, M. and Wainwright, M.~J. (2017).
\newblock Newton sketch: A near linear-time optimization algorithm with linear
  quadratic convergence.
\newblock {\em SIAM Journal on Optimization}, 27:205–245.

\bibitem[Reboredo et~al., 2013]{Calderbank2}
Reboredo, H., Renna, F., Calderbank, A.~R., and Rodrigues, M. R.~D. (2013).
\newblock Compressive classification.
\newblock In {\em 2013 IEEE International Symposium on Information Theory},
  pages 674--678.

\bibitem[Reboredo et~al., 2016]{Calderbank16}
Reboredo, H., Renna, F., Calderbank, A.~R., and Rodrigues, M. R.~D. (2016).
\newblock Bounds on the number of measurements for reliable compressive
  classification.
\newblock {\em {IEEE} Trans. Signal Processing}, 64(22):5778--5793.

\bibitem[Reeve and Brown, 2017]{Reeve}
Reeve, H. W.~J. and Brown, G. (2017).
\newblock Minimax rates for cost-sensitive learning on manifolds with
  approximate nearest neighbours.
\newblock In Hanneke, S. and Reyzin, L., editors, {\em Proceedings of Machine
  Learning Research}, volume~76, pages 11--56.

\bibitem[{Renna} et~al., 2016]{Renna16}
{Renna}, F., {Wang}, L., {Yuan}, X., {Yang}, J., {Reeves}, G., {Calderbank},
  R., {Carin}, L., and {Rodrigues}, M. R.~D. (2016).
\newblock Classification and reconstruction of high-dimensional signals from
  low-dimensional features in the presence of side information.
\newblock {\em IEEE Transactions on Information Theory}, 62(11):6459--6492.

\bibitem[Saade et~al., 2016]{OPU}
Saade, A., Caltagirone, F., Carron, I., Daudet, L., Dremeau, A., Gigan, S., and
  Krzakala, F. (2016).
\newblock Random projections through multiple optical scattering: Approximating
  kernels at the speed of light.
\newblock In {\em 2016 {IEEE} International Conference on Acoustics, Speech and
  Signal Processing, {ICASSP} 2016, Shanghai, China, March 20-25, 2016}, pages
  6215--6219. {IEEE}.

\bibitem[Sendler et~al., 1975]{sendler1975note}
Sendler, W. et~al. (1975).
\newblock A note on the proof of the zero-one law of {B}lum and {P}athak.
\newblock {\em The Annals of Probability}, 3(6):1055--1058.

\bibitem[Slawski, 2018]{Slawski}
Slawski, M. (2018).
\newblock On principal components regression, random projections, and column
  subsampling.
\newblock {\em Electron. J. Statist.}, 12(2):3673--3712.

\bibitem[Srebro and Sridharan, 2010]{srebro2010note}
Srebro, N. and Sridharan, K. (2010).
\newblock Note on refined {D}udley integral covering number bound.
\newblock {\em Unpublished results. http://ttic. uchicago. edu/karthik/dudley.
  pdf}.

\bibitem[Tian and Feng, 2021]{Tian}
Tian, Y. and Feng, Y. (2021).
\newblock Ra{SE}: Random subspace ensemble classification.
\newblock {\em Journal of Machine Learning Research}, 22(45):1--93.

\bibitem[Tsybakov, 2004a]{tsybakov2004introductiona}
Tsybakov, A.~B. (2004a).
\newblock Introduction a l’estimation non-param{\'e}trique ({I}ntroduction to
  nonparametric estimation). {M}ath{\'e}matiques \& {A}pplications (paris). 41.

\bibitem[Tsybakov, 2004b]{tsybakov2004optimal}
Tsybakov, A.~B. (2004b).
\newblock Optimal aggregation of classifiers in statistical learning.
\newblock {\em Annals of Statistics}, 32(1):135--166.

\bibitem[Wainwright, 2019]{wainwright2019high}
Wainwright, M.~J. (2019).
\newblock {\em High-dimensional statistics: A non-asymptotic viewpoint},
  volume~48.
\newblock Cambridge University Press.

\bibitem[Yang et~al., 2017]{yang2017}
Yang, Y., Pilanci, M., and Wainwright, M.~J. (2017).
\newblock Randomized sketches for kernels: Fast and optimal nonparametric
  regression.
\newblock {\em Ann. Statist.}, 45(3):991--1023.

\bibitem[Ye, 2019]{YeReview}
Ye, J.~C. (2019).
\newblock Compressed sensing {MRI}: a review from signal processing
  perspective.
\newblock {\em BMC Biomedical Engineering}, 1(8).

\end{thebibliography}
\bibliographystyle{apalike}

\end{document}